\newcommand{\subopt}{{\tt SUBOPT}}
\newcommand{\aware}{{\small\sf Aware}\xspace}
\newcommand{\erm}{{\small\sf ERM}\xspace}
\newcommand{\incsaga}{{\small\sf STRSAGA}\xspace}
\newcommand{\incsagaH}{{\large\sf STRSAGA}\xspace}
\newcommand{\dsurf}{{\small\sf DriftSurf}\xspace}
\newcommand{\dsurfH}{{\large\sf DriftSurf}\xspace}
\newcommand{\strisk}{{\cal H}}
\newcommand{\weight}{\mathbf{w}}
\newcommand{\sample}{{\cal S}}
\newcommand{\newsample}{{\cal T}}
\newcommand{\funcclass}{\cal F}
\newcommand{\expec}[2]{{\mathbb E}_{#1}[ #2]}
\newcommand{\var}[1]{{\mathbb Var}\left [ #1 \right ]}
\newcommand{\risk}{{\cal R}}
\newcommand{\ex}{\mathbf{x}}
\newcommand{\Ex}{\mathbf{X}}
\newcommand{\argmin}{\mbox{arg min}}
\newcommand{\buffer}{{\tt WaitingRoom}}
\newcommand{\prob}[1]{{\mathbb P}\left( #1 \right )}
\newcommand{\rcount}{i}
\newtheorem*{rep@theorem}{\rep@title}
\newcommand{\newreptheorem}[2]{%
\newenvironment{rep#1}[1]{%
 \def\rep@title{#2 \ref{##1}}%
 \begin{rep@theorem}}%
 {\end{rep@theorem}}}
\newtheorem{lemma}{Lemma}
\newtheorem{corollary}{Corollary}
\newtheorem{definition}{Definition}
\newcommand{\remove}[1]{}
\newcommand{\cmmnt}[1]{\ignorespaces}
\title{DriftSurf: A Risk-competitive Learning Algorithm under Concept Drift}
\author{%
 Ashraf Tahmasbi\thanks{Equal contribution}\\
 Iowa State University\\
 \texttt{tahmasbi@iastate.edu} \\
\And
 Ellango Jothimurugesan\footnotemark[1]\\
 Carnegie Mellon University\\
 \texttt{ejothimu@cs.cmu.edu} \\
\AND
 Srikanta Tirthapura\\
 Iowa State University\\
 \texttt{snt@iastate.edu} \\
\And
 Phillip B. Gibbons\\
 Carnegie Mellon University\\
 \texttt{gibbons@cs.cmu.edu} \\
}
\begin{document}

\maketitle

\begin{abstract}
When learning from streaming data, a change in the data distribution, also known as concept drift, can render a previously-learned model inaccurate and require training a new model. We present an adaptive learning algorithm that extends previous drift-detection-based methods by incorporating drift detection into a broader stable-state/reactive-state process. The advantage of our approach is that we can use aggressive drift detection in the stable state to achieve a high detection rate, but mitigate the false positive rate of standalone drift detection via a reactive state that reacts quickly to true drifts while eliminating most false positives. The algorithm is generic in its base learner and can be applied across a variety of supervised learning problems. Our theoretical analysis shows that the risk of the algorithm is competitive to an algorithm with oracle knowledge of when (abrupt) drifts occur. Experiments on synthetic and real datasets with concept drifts confirm our theoretical analysis.
\end{abstract}
\section{Introduction}
\label{sec:intro}

Learning from streaming data is an ongoing process in which a model is
continuously updated as new training data arrive. We focus on the
problem of concept drift, which refers to an unexpected change in the
distribution of data over time. The objective is high prediction
accuracy at each time step on test data from the current
distribution. To achieve this goal, a learning algorithm should adapt
quickly whenever drift occurs by focusing on the \textit{most recent
  data points} that represent the new concept, while also, in the
absence of drift, optimizing over \textit{all the past data points}
from the current distribution (for statistical accuracy).  The latter
has greater importance in the setting we consider where data points
may be stored and revisited to achieve accuracy greater than what can
be obtained in a single pass. Moreover, computational efficiency of
the learning algorithm is critical to keep pace with the continuous
arrival of new data.

In a survey from Gama et al.~\cite{gama2014survey}, concept drift
between time steps $t_0$ and $t_1$ is defined as a change in the joint
distribution of examples: $p_{t_0}(X, y) \ne p_{t_1}(X, y)$. Gama et
al. categorize drifts in several ways, distinguishing between
\emph{real drift} that is a change in $p(y|X)$ and \emph{virtual
  drift} (also known as \emph{covariate drift}) that is a change only
in $p(X)$ but not $p(y|X)$. Drift is also categorized as either
\emph{abrupt} when the change happens across one time step, or
\emph{gradual} if there is a transition period between the two
concepts.

A learning algorithm that reacts (well) to concept drift is referred
to as an \emph{adaptive algorithm}. In contrast, an \emph{oblivious
  algorithm}, which optimizes the empirical risk over all data points
observed so far under the assumption that the data are i.i.d.,
performs poorly in the presence of drift. One major class of adaptive
algorithms is drift detection, which includes DDM
\cite{gama2004learning}, EDDM \cite{baena2006early}, ADWIN
\cite{bifet2007learning}, PERM \cite{harel2014concept}, FHDDM
\cite{pesaranghader2016fast}, and MDDM
\cite{pesaranghader2018mcdiarmid}. Drift detection tests commonly work
by tracking the prediction accuracy of a model over time, and signal
that a drift has occurred whenever the accuracy degrades by more than a
significant threshold. After a drift is signaled, the
previously-learned model can be discarded and replaced with a model
trained solely on the data going forward.

There are a couple of challenges with using drift detection. Different
tests are preferred depending on whether a drift is abrupt or gradual,
and most drift detection tests have a user-defined parameter that
governs a trade-off between the detection accuracy and speed
\cite{gama2014survey}; choosing the right test and the right
parameters is hard when the types of drift that will occur are not
known in advance. There is also a significant cost in prediction
accuracy when a false positive results in the discarding of a long-trained
model and data that are still relevant. Furthermore, even when drift
is accurately detected, not all drifts require restarting with a new
model. Drift detection can trigger following a virtual drift when the
model misclassifies data points drawn from a previously unobserved
region of the feature space, but the older data still have valid
labels and should be retained. We have also encountered real drifts in
our experimental study where a model with high parameter dimension can
adapt to simultaneously fit data from both the old and new concepts,
and it is more efficient to continue updating the original model
rather than starting from scratch.

Our contribution is \dsurf, an adaptive algorithm that overcomes these
drift detection challenges. \dsurf works by maintaining two
models at each time step and incorporating drift detection into a
broader two-state process. The algorithm begins in the \emph{stable
  state} and transitions to the \emph{reactive state} based on a drift
detection trigger, and then starts a new model. During the reactive
state, the model used for prediction is greedily chosen as the best
performer over data from the immediate previous time step (each time
step corresponds to a batch of arriving data points). At the
end of the reactive state, the algorithm transitions back to
the stable state, keeping the model that was the best performer
throughout the entire reactive state. Our approach has several
advantages over standalone drift detection: (i) most false positives
will be caught by the reactive state and lead to continued use of the
original long-trained model and all the relevant past data; (ii) when
restarting with a new model does not lead to better post-drift
performance, the original model will continue to be used; and (iii)
switching to the new model for predictions happens only when it begins
outperforming the old model, accounting for potentially lower
accuracy of the new model as it warms up. Meanwhile, the addition of
this stable-state/reactive-state process does not unduly delay
the time to recover from a drift, because the switch to a new
model happens greedily within one time step of it outperforming the
old model (as opposed to switching only at the end of the reactive
state).

We present a theoretical analysis of \dsurf, showing that it is
``risk-competitive'' with \aware, an adaptive algorithm
that has oracle access to when a drift
occurs and at each time step maintains a model trained over the set of
all data since the previous drift. We also provide experimental
comparisons of \dsurf to \aware and two adaptive learning algorithms: a
state-of-the-art drift-detection-based method MDDM and a
state-of-the-art ensemble method AUE~\cite{brzezinski2013reacting}.
Our results on eight datasets with concept drifts show that \dsurf
generally outperforms both MDDM and AUE.

  \setlength{\textfloatsep}{0pt}
\section{Related Work}
\label{sec:related}

Most adaptive learning algorithms can be classified into three major categories: Window-based, drift detection, and ensembles. Window-based methods, which include the family of FLORA algorithms \cite{widmer1996learning} train models over a sliding window of the recent data in the stream. Alternatively, older data can be forgotten gradually by weighting the data points according to their age with either linear \cite{koychev2000gradual} or exponential \cite{hentschel2019online,
  klinkenberg2004learning} decay. Window-based methods are guaranteed to adapt to drifts, but at a cost in accuracy in the absence of drift.

The aforementioned drift detection methods can be further classified as either detecting degradation in prediction accuracy with respect to a given model, which include all of the tests mentioned in Section \ref{sec:intro}, or detecting change in the underlying data distribution which include tests given by \cite{kifer2004detecting, sebastiao2007change}. In this paper, we focus on the subset of concept drifts that are performance-degrading, and that can be detected by the first class of these drift detection methods.  As observed in \cite{harel2014concept}, under this narrower focus, the problem of drift detection has lower sample and computational complexity when the feature space is high-dimensional. Furthermore, this approach ignores drifts that do not require adaptation, such as changes only in features that are weakly correlated with the label.

Finally, there are ensemble methods, such as DWM \cite{kolter2007dynamic}, Learn++.NSE \cite{elwell2011incremental}, AUE \cite{brzezinski2013reacting}, DWMIL \cite{lu2017dynamic}, DTEL \cite{sun2018concept}, Diversity Pool \cite{chiu2018diversity}, and Condor \cite{zhao2020handling}. 
An ensemble is a collection of individual models, often referred to as experts, that differ in the subset of the stream they are trained over. Ensembles adapt to drift by including both older experts that perform best in the absence of drift and newer experts that perform best after drifts. The predictions of each individual expert are typically combined using a weighted vote, where the weights depend on each expert's recent prediction accuracy. Strictly speaking, \dsurf is an ensemble method, but differs from traditional ensembles by maintaining only two models and where only one model is used to make a prediction at any time step. The advantage of \dsurf is its efficiency, as the maintenance of each additional model in an ensemble comes at either a cost in additional training time, or at a cost in the accuracy of each individual model if the available training time is divided among them. The ensemble algorithm  most similar to ours is from \cite{bach2008paired}, which also maintains just two models: a long-lived model that is best-suited in the stationary case, and a newer model trained over a sliding window that is best-suited in the case of drift. Their algorithm differs from \dsurf in that instead of using a drift detection test to switch, they are essentially always in what we call the reactive state of our algorithm, where they choose to switch to a new model whenever its performance is better over a window of recent data points. Their algorithm has no theoretical guarantee, and without the stable-state/reactive-state process of our algorithm, there is no control over false switching to the newer model in the stationary case.

\section{Model and Preliminaries}
\label{prelim}

We consider a data stream setting in which the training data points
arrive over time.  For $t=1,2,\ldots,$ let $\Ex_t$ be the set of
data points arriving at time step $t$. We consider a constant arrival
rate $m = |\Ex_t| > 0$ for all $t$.  (Our discussion and results can be
readily extended to Poisson and other arrival distributions.)  Let
$\sample_{t_1,t_2} = \cup_{t=t_1}^{t_2-1} \Ex_t$ be a segment of the
stream of points arriving in time steps $t_1$ through $t_2-1$.  Let
$n_{t_1, t_2} = m (t_2 - t_1)$ be the number of data points in
$\sample_{t_1,t_2}$.
Each $\Ex_t$ consists of data points drawn from a distribution $I_t$
not known to the learning algorithm.  In the \textbf{stationary} case,
$I_t = I_{t-1}$; otherwise, a \textbf{concept drift} has occurred at
time $t$.  We seek an adaptive learning algorithm with high prediction
accuracy at each time step.

The model being trained is drawn from a class of functions
$\funcclass$. A function in this class is parameterized by a vector of
weights $\weight \in \mathbb{R}^{d}$. 
To achieve high prediction accuracy at time $t$, we want to minimize the expected risk over the distribution $I_t$. The \textit{expected risk} of function $\weight$ over a distribution $I$ is: $\risk_I(\weight) = \expec{\ex \sim I}{f_{\ex}(\weight)}$, where $f_\ex(\weight)$ is the loss of function $\weight$ on input $\ex$.
Given a stream segment $\sample_{t_1,t_2}$ of training data
points, the best we can do when the data are all drawn from the same distribution is to minimize the empirical risk over $\sample_{t_1,t_2}$.
The \textit{empirical risk} of function $\weight$ over
a sample $\sample$ of $n$ elements is: $\risk_{\sample}(\weight) =
\frac{1}{n}\sum_{\ex \in \sample} f_{\ex}(\weight)$. The optimizer of
the empirical risk is denoted as $\weight_{\sample}^*$, defined as
$\weight_{\sample}^* = \argmin_{\weight \in \funcclass}
\risk_{\sample}(\weight)$. The optimal empirical risk is
$\risk_{\sample}^* = \risk_{\sample}(\weight_{\sample}^*)$.

We assume that the expected risk over a distribution $I$ and the empirical risk over a sample $\sample$ of size $n$ drawn from $I$ are related through the following bound:
\begin{equation}
\label{eq:generalization}
\expec{}{\sup_{\weight \in \funcclass} | \risk_I(\weight) - \risk_\sample(\weight) |} \leq \strisk(n)/2
\end{equation}
where $\strisk(n)
= hn^{-\alpha}$, for a constant $h$ and $1/2 \leq \alpha \leq 1$. From this relation, $\strisk(n)$ is an upper bound on the statistical error (also known as the estimation error) over a sample of size $n$ \cite{bousquet2008tradeoffs}.

Let $\weight$ be the solution learned by an algorithm $A$ over stream
segment $\sample=\sample_{t_1,t_2}$. Following prior
work~\cite{bousquet2008tradeoffs, jothimurugesan2018variance}, we
define the difference between $A$'s empirical risk and the optimal
empirical risk over this stream segment as its sub-optimality:
$\subopt_{\sample}(A) := \risk_{\sample}(\weight) - \risk_{\sample}
(\weight_{\sample}^*)$.  Based on \cite{bousquet2008tradeoffs},
in the stationary case,
achieving a sub-optimality on the order of $\strisk(n_{t_1,t_2})$ over
stream segment $\sample_{t_1,t_2}$ asymptotically minimizes the total
(statistical + optimization) error for $\funcclass$.

However, suppose a concept drift occurs at time $t_d$ such that $t_1 <
t_d < t_2$.
We could
still define empirical risk and sub-optimality of an algorithm $A$
over stream segment $\sample_{t_1,t_2}$. But, balancing sub-optimality
with $\strisk(n_{t_1,t_2})$ does not necessarily minimize the total
error. Algorithm $A$ needs to first recover from the drift such that
the predictive model is trained only over data points drawn from the
new distribution.
We define recovery time as follows:
The\/ \textbf{recovery time} of an algorithm $A$ is the time it takes after a
drift for $A$ to provide a solution $\weight$ that is
maintained solely over data points drawn from the new distribution.

Let $t_{d_1}, t_{d_2},\ldots$ be the sequence of time steps at which a
drift occurs, and define $t_{d_0} = 1$.  The goals for an adaptive
learning algorithm $A$ are {\bf (G1)} to have a small recovery time $r_i$ at
each $t_{d_i}$ and {\bf (G2)} to achieve sub-optimality on the order of
$\strisk(n_{t_{d_i},t})$ over every stream segment
$\sample_{t_{d_i},t}$ for $t_{d_i}+r_i < t < t_{d_{i+1}}$ (i.e.,
during the stationary, recovered periods between drifts).  In
Section~\ref{sec:analysis}, we formalize the latter as $A$ being
``risk-competitive'' with an oracle algorithm \aware.  It implies that
$A$ is asymptotically optimal in terms of its total error, despite
concept drifts.

\section{\dsurfH: Adaptive Learning over Streaming Data in Presence of Drift}
\label{sec:algorithm}

We present our algorithm \dsurf for adaptively learning from streaming
data that may experience drift. Incremental learning algorithms work
by repeatedly sampling a data point from a training set $\sample$ and
using the corresponding gradient to determine an update direction. This set
$\sample$ expands as new data points arrive.
In the presence of a drift from distribution $I_1$ to $I_2$, without a
strategy to remove from $\sample$ data points from $I_1$, the model
trains over a mixture of data points from $I_1$ and $I_2$, often
resulting in poor prediction accuracy on $I_2$.
One systematic approach to mitigating this problem would be to use a
sliding window-based set $\sample$ from which further sampling is
conducted.  Old data points are removed when they fall out of the
sliding window (regardless of whether they are from the current or an
old distribution).  However, the problem with this approach is that
the sub-optimality of the model trained over $\sample$
suffers from the limited size of $\sample$. Using larger window sizes
helps with achieving a better sub-optimality, but increases the
recovery time. Smaller window sizes, on the other hand, 
provide better recovery time, but the sub-optimality of the algorithm
over $\sample$ increases. An ideal algorithm manages the set $\sample$
such that it contains as many as possible data points from the current
distribution and resets it whenever a (significant) drift happens, so
that it contains only data points from the new distribution.

As noted in Section~\ref{sec:intro}, prior
work~\cite{baena2006early, bifet2007learning, gama2004learning, 
  harel2014concept, pesaranghader2016fast, pesaranghader2018mcdiarmid}
has sought to achieve this ideal algorithm by developing better and
better drift detection tests, but with limited success due to the
challenges of balancing detection accuracy and speed, and the high
cost of false positives.  Instead, we couple aggressive drift
detection with a stable-state/reactive-state process that mitigates
the shortcomings of prior approaches.  Unlike prior drift detection
approaches, \dsurf views performance degrading as only a \textit{sign} of a
potential drift: the final decision about resetting $\sample$ and the
predictive model will not be made until the end of the reactive
state, when more evidence has been gathered and a higher
confidence decision can be made.

\begin{algorithm}[t!]
    \caption{\dsurf: Processing a set of training points $\Ex_t$ that arrives in time step $t$}
    \label{algo:SR}

\tcp{$\weight_{t-1} (\sample)$, $\weight'_{t-1} (\sample')$, $\weight''_{t-1} (\sample'')$ are respectively the parameters (stream segments for training) of the predictive, reactive, stable models}

\If{state == stable}
{
    \If (\tcp*[h]{if condition~\ref{eq:enter_reactive_first} or~\ref{eq:enter_reactive_second} holds}){Enter Reactive State}
    {
        state $\leftarrow$ reactive\;
        
        $T  \leftarrow \emptyset$ \tcp{$T$ is the segment arriving during the reactive state}
        
        $\weight'_{t-1} \leftarrow \weight_0, \sample' \leftarrow \emptyset$  \tcp{ initialize randomly a new reactive model}

        $\rcount \leftarrow 0$ \tcp{ time steps in the current reactive state}
    }
    \Else{
      $\weight_{t} \leftarrow$ Update($\weight_{t-1}, \sample, \Ex_t$), $\weight''_{t} \leftarrow$ Update($\weight''_{t-1}, \sample'', \Ex_t$) \tcp{update
        $\weight, \sample, \weight'', \sample''$}
    }
}

\If{state == reactive}
{
    Add $\Ex_t$ to $T$ 
    
    $\weight_{t} \leftarrow$ Update($\weight_{t-1}, \sample, \Ex_t$), $\weight'_{t} \leftarrow$ Update($\weight'_{t-1}, \sample', \Ex_t)$ \tcp{update
              $\weight, \sample, \weight', \sample'$}
    
    $\rcount \leftarrow \rcount + 1$\;
    
    \If(\tcp*[h]{if $\rcount == r$, the length of the reactive state}){Exit Reactive State}
    {
        state $\leftarrow$ stable\;
        
        $\weight''_{t-1} \leftarrow \weight_0, \sample'' \leftarrow \emptyset$ \tcp{initialize randomly a new stable model}
        
        \If(\tcp*[h]{if condition~\ref{eq:switch} holds}){$\risk_{T}(\weight_t) > \risk_{T}(\weight'_t)$}
        {
            $\weight_t \leftarrow \weight'_t$, $\sample \leftarrow \sample'$ \tcp{change the predictive model}
        }
    }
    \ElseIf{$\risk_{\Ex_t}(\weight'_t) < \risk_{\Ex_t}(\weight_t)$}{
    	Use $\weight'_t$ instead of $\weight_t$ for predictions at the next time step \tcp{greedy policy}
    }
}
\end{algorithm}

Our algorithm, \dsurf, is depicted in Algorithm~\ref{algo:SR}.
The algorithm starts in the stable state, and the steps are shown for processing
the batch of points arriving at time step $t$.
If \dsurf is in the stable state at time $t$, it enters the reactive state at the sign of a drift, given by the following condition: 
\begin{equation}
\label{eq:enter_reactive_first}
    \risk_{\Ex_t}(\weight_{t-1}) > \risk_b + \delta
\end{equation}
where $\weight_{t-1}$ is the parameters of the current predictive model (before updated with the current batch), $\risk_b$ is the best observed risk of this model and $\delta$ is a predetermined threshold that represents the tolerance in performance degradation.

If condition~\ref{eq:enter_reactive_first} (and condition~\ref{eq:enter_reactive_second} discussed below)
do not hold, \dsurf
assumes there was no drift in the underlying distribution and remains
in the stable state.
It calls Update, an \textit{update process} that expands $\sample$ to include the newly
arrived set of data points $\Ex_t$ and then updates the (predictive) model
parameters using $\sample$ for incremental training.
Otherwise, \dsurf enters the reactive state,
adds a new model $\weight'_{t-1}$, called the \textit{reactive model},
with randomly initialized parameters, and initializes its
sample set $\sample'$ to be empty. To save space, the growing sample set
$\sample'$ can be represented by pointers into $\sample$.

If, at time step $t$, \dsurf is in the reactive state (including the time
step that it has just entered the reactive state), \dsurf adds
$\Ex_t$ to $\sample$ and $\sample'$, sample sets of the predictive and
reactive models, and updates $\weight_{t-1}$ and
$\weight'_{t-1}$. During the reactive state, \dsurf uses for prediction
at $t$ whichever model $\weight$ or $\weight'$ performed the best
in the previous time step $t-1$.
This greedy heuristic yields 
better performance during the reactive state by switching to
the newly added model sooner in the presence of drift.

Upon exiting the reactive state (when $\rcount == r$), \dsurf chooses
the predictive model to use for the subsequent stable state.  It
switches to the reactive model $\weight'$ if the
reactive model outperforms the prior predictive model $\weight$ over the
set of data points $T$ that arrived during the $r$ time steps of the
reactive state:
\begin{equation}
\label{eq:switch}
    \risk_{T}(\weight') < \risk_{T}(\weight).
\end{equation}
Otherwise, \dsurf continues with the prior predictive model.

\textbf{Handling a corner case}. Consider the case that a drift happens
when \dsurf is in the reactive state. In
this case, no matter what predictive model \dsurf chooses at the end of the
reactive state, both the current predictive and reactive models are trained
over a mixture of data points from both the old and new distributions. This
will decrease the chance of recovering from the actual drift. To avoid
this problem, \dsurf adds a new model with parameters $\weight''$ upon
returning to the stable state. This model, which we refer to as the
\textit{stable model}, is trained over the stream segment $\sample''$ arriving after entering the stable
state. At each time step $t$, \dsurf compares the performance of the predictive and
stable models, and enters the reactive state (in addition to condition~\ref{eq:enter_reactive_first})
upon the following condition:
\begin{equation}
\label{eq:enter_reactive_second}
    \risk_{\Ex_t}(\weight_{t-1}) > \risk_{\Ex_t}(\weight''_{t-1}) + \delta'
\end{equation}
where $\delta'$ is set to be much smaller than $\delta$ (our experiments use $\delta' = \delta/2$).

Algorithm \ref{algo:SR} is generic in the individual base learner and the update 
process used for each of the parameters. For the theoretical analysis in Section 
\ref{sec:analysis} and experimental evaluation in Section \ref{sec:expt}, the 
update process we focus on is \incsaga~\cite{jothimurugesan2018variance}, which is 
a variance-reduced SGD for streaming data. Compared to SGD, \incsaga has a faster 
convergence rate and better performance under different arrival distributions. For 
any update process, we let $\rho$ denote the computational power available at each 
time step. For an SGD-based algorithm, $\rho$ is the number of gradients that can 
be computed.
The time and space complexity of \dsurf is within a constant factor of that of learning a single model.

\section{Analysis of \dsurfH}
\label{sec:analysis}

In this section, we show that \dsurf achieves goals {\bf G1} and
{\bf G2} from Section~\ref{prelim}.
As in prior
work~\cite{bousquet2008tradeoffs,jothimurugesan2018variance}, we
assume that $\strisk(n) = hn^{-\alpha}$, for a constant $h$ and
$\frac{1}{2} \leq \alpha \leq 1$, is an upper bound on the statistical
error over a set of data points of size $n$ all drawn from the same
distribution.

\textbf{\aware} is an adaptive learning algorithm with oracle knowledge of when
drifts occur. At each drift, the algorithm restarts the predictive model
to a random initial point and trains it over data points that 
arrive after the drift. The main obstacle for other adaptive learning
algorithms to compete with \aware is that they are not told exactly when drifts occur.

The \aware implementation we are comparing to uses \incsaga for
incremental training (i.e., as its update process).
At any time step $t$, the sub-optimality of this algorithm over
its training sample set $\sample$ of size $n$ is bounded as follows:

\begin{lemma}
\label{lemma: subopt}
\textsc{(Lemma 3 in \cite{jothimurugesan2018variance})}
Suppose all $f_{\ex}$ are convex and their gradients are $L$-Lipschitz continuous, and that $\risk_{\sample}$ is $\mu$-strongly convex. Also, assume that the condition number $L/\mu$ is bounded by a constant at each time step. At the end of each time step, the expected sub-optimality of \incsaga over its sample set $\sample$ of size $n$ is
  $\expec{}{\subopt_{\sample}(\incsaga)} \leq (1 + o(1))\strisk(n)$.
\end{lemma}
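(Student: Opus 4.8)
The plan is to treat this as the streaming analysis of the variance-reduced update \incsaga and to reduce it to the linear-convergence guarantee that SAGA enjoys on any fixed, $\mu$-strongly convex, $L$-smooth finite-sum objective. Concretely, I would introduce a Lyapunov potential that couples the optimization sub-optimality $\risk_{\sample}(\weight) - \risk_{\sample}^*$ with the discrepancy of the stored gradient table (the variance-reduction term), for which one SAGA update with an appropriate constant step size contracts the potential by a factor $1 - \gamma$, where $\gamma = \Theta(\min\{1/n, \mu/L\})$. Because the condition number $\kappa = L/\mu$ is assumed bounded by a constant, over one effective pass ($\Theta(n)$ updates) the potential contracts by a fixed factor independent of $n$, so the analysis reduces to counting effective passes against the growing sample.

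The argument then proceeds by induction over the time steps, with inductive hypothesis that at the end of step $t$, with sample $\sample$ of size $n = mt$, the expected sub-optimality is at most $C\,\strisk(n)$ for a constant $C$. In the inductive step, the $m$ newly arrived points are folded into the finite sum and \incsaga performs $\rho$ updates. I would first bound the perturbation of the empirical objective and of its minimizer caused by appending $m$ fresh points to a sum of $n$ terms, an $O(m/n)$-scale change controlled via strong convexity together with the uniform-convergence bound in (\ref{eq:generalization}), while simultaneously accounting for the inflation of the variance term due to the uninitialized gradient-table entries of the new points. I would then apply the per-update contraction across the $\rho$ updates to drive the potential back down to the level $C\,\strisk(n')$ at the new sample size $n' = n + m$, closing the induction.

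The crux, and the step I expect to be the main obstacle, is closing this induction against a moving target: the statistical error $\strisk$ is itself shrinking as the sample grows, so the optimization error must track a quantity that decreases by the factor $\strisk(n')/\strisk(n) = (1+1/t)^{-\alpha} = 1 - \Theta(\alpha/t)$ each step. With $\rho = \Theta(m)$ updates per step and constant $\kappa$, the per-step contraction factor is $1 - \Theta(1/t)$, so the delicate requirement is that the contraction exponent exceed $\alpha$ by enough to also absorb both the objective-perturbation and the stale-gradient terms. Verifying that a suitable compute budget $\rho$ (a constant multiple of $m$) makes this inequality hold for all $t$, and hence that geometric per-pass decay dominates the polynomial decay of $\strisk$, is the heart of the matter. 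The residual slack in this comparison is exactly what forces a $(1+o(1))$ rather than an exact constant: the warm-up transient and the accumulated perturbation contribute lower-order terms that vanish as $n \to \infty$, allowing $C$ to be taken arbitrarily close to $1$ asymptotically.
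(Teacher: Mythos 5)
This lemma is not proved in the present paper at all: it is imported verbatim as Lemma~3 of \cite{jothimurugesan2018variance}, so the only ``proof'' the paper supplies is the citation. Judged against the argument in that cited work, your sketch captures the correct skeleton---induction over time steps, the per-iteration geometric contraction of SAGA at rate $1-\min(1/n,\mu/L)$, and the central tension between a per-step contraction of $1-\Theta(1/t)$ (from $\rho=\Theta(m)$ updates against a sample of size $n=mt$ under bounded condition number) and a target $\strisk(n)$ that shrinks by a factor $1-\Theta(\alpha/t)$, with $\rho=2m$ and $\alpha\le 1$ supplying the needed slack. Two mechanical differences are worth flagging. First, \incsaga does not fold all $m$ new points into the finite sum at once and then bound the perturbation of the minimizer; it interleaves point insertions with update steps (the alternating schedule, moving one point from the waiting room per two gradient computations), precisely so that the effective sample grows no faster than the optimizer can contract---this controlled growth is the resolution of the inequality you correctly identify as the heart of the matter, not an afterthought. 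Second, the cited analysis controls the sub-optimality over the enlarged sample via the transfer inequality that this paper restates as Equation~\ref{eq:switch-cost}: the sub-optimality over a set of size $n$ is at most the sub-optimality over a prefix of size $k$ plus $\frac{n-k}{n}\strisk(k)$. That single inequality replaces both your minimizer-perturbation step and your separate appeal to Equation~\ref{eq:generalization}, and it also absorbs the uninitialized gradient-table entries of fresh points. As written, your $O(m/n)$ perturbation bound on the objective does not by itself control the gap against the \emph{new} empirical minimizer without an additional statistical-error term of exactly this $\frac{n-k}{n}\strisk(k)$ form, so your induction would need that lemma (or an equivalent) to close.
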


As a means of achieving goal \textbf{G2} (sub-optimality on the order
of $\strisk(n_{t_d,t})$ after a drift at time $t_d$), we will show that
the empirical risk of \dsurf after a drift is ``close'' to the risk of
\aware, where \textit{close} is defined formally in terms of our
notion of risk-competitiveness in
Definition~\ref{defn:competitiveness}.

\begin{definition}
\label{defn:competitiveness}
For $c \geq 1$, an adaptive learning algorithm $A$ is said to be $c$-risk-competitive to $\aware$ at time step $t > t_d$ if\/
    $\expec{}{\subopt_{\sample_{t_d,t}}(A)} \leq c \cdot (1 + o(1))\strisk(n_{t_d,t})$,
where $t_d$ is the time step of the most recent drift and $n_{t_d,t} = |\sample_{t_d,t}|$.
\end{definition}

We will analyze the risk-competitiveness of \dsurf in a
stationary environment and after a drift. Additionally, we will
provide high probability analysis of the recovery time after a drift
(goal \textbf{G1}).

Let $t_{d_1}, t_{d_2},\ldots$ be the sequence of time steps at which a drift 
occurs. For simplicity in our analysis, in the rest of this section we assume
each drift at $t_{d_i}$ is an abrupt drift.
We assume all
loss functions $f_{\ex}$ are convex and their gradients are
$L$-Lipschitz continuous, and that the empirical risk
$\risk_{\sample}$ is $\mu$-strongly convex, where $\mu$ is the regularization hyperparameter. 
In addition, we assume the
condition number $L/\mu$ is bounded by a constant at each time step, and assume the batch size $m > L/\mu r$.
Lastly, we assume $\expec{}{\subopt_\newsample(\weight)} \leq b \expec{\newsample' \sim \sample}{\subopt_{\newsample'}(\weight)}$ for $\weight$ trained over $\sample$ using \incsaga, where $\newsample$ is a suffix of $\sample$ and $\newsample'$ is a random subsample of $\sample$ where $|\newsample'| = |\newsample|$. This last assumption bounds the bias of the training loss from the order of arrivals, given enough iterations, when no drift.

\vspace{-0.1 in}
\subsection{Stationary Environment}
\label{sec:analysis-stationary}
We will show that \dsurf is competitive to \aware in the stationary environment during the time $1 < t < t_{d_1}$ before any drifts happen. By Lemma~\ref{lemma: subopt} the expected sub-optimality of \aware and \dsurf are (respectively) bounded by $(1 + o(1))\strisk(n_{1,t})$ and $(1 + o(1))\strisk(n_{t_e,t})$, where $t_e$ is the time that the current predictive model of \dsurf was initialized. To prove \dsurf is risk-competitive to \aware, we need to show that $n_{t_e,t}$, the size of the predictive model's sample set, is close to $n_{1,t}$. To achieve this, we first in Lemma~\ref{lemma: noDrift-react-first} (and similarly in Lemma~\ref{lemma: noDrift-react-second} in the Appendix) show that the probability of entering the reactive state in a stationary environment is very small. 

\begin{lemma}
\label{lemma: noDrift-react-first}
In the stationary environment for $1 < t < t_{d_1}$, the probability of entering the reactive state because of condition~\ref{eq:enter_reactive_first} is bounded by $(\strisk(m) + (2 + o(1))\strisk(n_{t_e,t}))/\delta$, where $|\sample_{t_e,t}| = n_{t_e,t}$, $\sample_{t_e,t}$ is the stream segment that the predictive model of \dsurf is trained over, and $m$ is the batch size at each time step.
\end{lemma}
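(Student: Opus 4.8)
The plan is to turn the entering condition into a tail event and bound it with Markov's inequality, then control the relevant expectation by separating batch-level statistical error (charged to the batch size $m$) from the excess risk of the predictive model (charged to the training size $n_{t_e,t}$). Since \eqref{eq:enter_reactive_first} is $\risk_{\Ex_t}(\weight_{t-1}) - \risk_b > \delta$ with $\delta > 0$, applying Markov to the non-negative random variable $\left(\risk_{\Ex_t}(\weight_{t-1}) - \risk_b\right)_+$ gives
\begin{equation*}
\prob{\risk_{\Ex_t}(\weight_{t-1}) > \risk_b + \delta} \leq \frac{1}{\delta}\,\expec{}{\left(\risk_{\Ex_t}(\weight_{t-1}) - \risk_b\right)_+},
\end{equation*}
so it suffices to show $\expec{}{\left(\risk_{\Ex_t}(\weight_{t-1}) - \risk_b\right)_+} \leq \strisk(m) + (2 + o(1))\strisk(n_{t_e,t})$.

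Next I would insert the optimal expected risk $\risk_I^* = \min_{\weight \in \funcclass} \risk_I(\weight)$ as an anchor and split via $(a+b)_+ \leq a_+ + b_+$ into $\left(\risk_{\Ex_t}(\weight_{t-1}) - \risk_I^*\right)_+ + \left(\risk_I^* - \risk_b\right)_+$. For the first term I would telescope through $\risk_I(\weight_{t-1})$ and the empirical risk $\risk_{\sample}$ over the training segment $\sample = \sample_{t_e,t}$, and bound each piece in expectation: the fresh-batch deviation $\risk_{\Ex_t}(\weight_{t-1}) - \risk_I(\weight_{t-1})$ is at most $\strisk(m)/2$ by \eqref{eq:generalization} with $n = m$ (in the stationary regime $\Ex_t$ is independent of $\weight_{t-1}$); the generalization gap $\risk_I(\weight_{t-1}) - \risk_{\sample}(\weight_{t-1})$ is at most $\strisk(n_{t_e,t})/2$ by \eqref{eq:generalization} with $n = n_{t_e,t}$ (the supremum over $\funcclass$ absorbs the dependence of $\weight_{t-1}$ on $\sample$); the sub-optimality $\risk_{\sample}(\weight_{t-1}) - \risk_{\sample}^*$ is at most $(1 + o(1))\strisk(n_{t_e,t})$ by Lemma~\ref{lemma: subopt}; and $\risk_{\sample}^* - \risk_I^*$ contributes at most $\strisk(n_{t_e,t})/2$, again by \eqref{eq:generalization} after $\risk_{\sample}^* \leq \risk_{\sample}(\weight_I^*)$ with $\weight_I^* = \argmin_{\weight} \risk_I(\weight)$. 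Summing yields $\strisk(m)/2 + (2 + o(1))\strisk(n_{t_e,t})$ from the first term.

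The second term $\expec{}{(\risk_I^* - \risk_b)_+}$ is where the remaining $\strisk(m)/2$ of batch-level error appears, and it is the main obstacle. Here $\risk_b$ is the best observed risk of the predictive model, i.e.\ a running minimum of the per-batch empirical risks $\risk_{\Ex_s}(\weight_{s-1})$ over the steps $s$ elapsed since the model was created at $t_e$. For the minimizing batch one has $\risk_b \geq \risk_{\Ex_s}^* \geq \risk_I^* - \sup_{\weight}\lvert \risk_I(\weight) - \risk_{\Ex_s}(\weight)\rvert$, which would reduce $(\risk_I^* - \risk_b)_+$ to a single-batch statistical error of order $\strisk(m)/2$. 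The delicate point is that the minimizing index is random and ranges over all $\Theta(n_{t_e,t}/m)$ elapsed steps, so a naive bound faces an expectation of a maximum rather than of one deviation; I would control this using that in the stationary regime the expected risk $\risk_I(\weight_{s-1})$ is non-increasing as the model trains, so the best observed risk cannot fall materially below $\risk_I^*$ beyond the fluctuation bounded uniformly by \eqref{eq:generalization}.

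Combining the two terms gives $\expec{}{\left(\risk_{\Ex_t}(\weight_{t-1}) - \risk_b\right)_+} \leq \strisk(m) + (2 + o(1))\strisk(n_{t_e,t})$, and dividing by $\delta$ finishes the proof. I expect the bookkeeping of which statistical-error terms are charged to $m$ versus $n_{t_e,t}$ to be routine once the anchor $\risk_I^*$ is fixed; the genuinely careful step is the uniform control of the running-minimum $\risk_b$ in the second term.
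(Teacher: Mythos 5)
Your proposal follows essentially the same route as the paper's proof: Markov's inequality on the exceedance of $\delta$, followed by a telescoping decomposition in which Equation~\ref{eq:generalization} is charged twice to the batch size $m$ and twice to $n_{t_e,t}$, and the \incsaga{} sub-optimality over $\sample_{t_e,t}$ is charged once via Lemma~\ref{lemma: subopt}. The only structural difference is cosmetic: you anchor at the population optimum $\risk_I^*$ and split the positive part in two, whereas the paper anchors at $\risk_{\Ex_b}(\weight_b^*)$, the empirical optimum of the batch attaining the best observed risk, and telescopes in a single chain; the accounting comes out identically. One remark on the step you single out as the main obstacle: the paper handles $\risk_b$ exactly by the bound you write down, $\risk_b \geq \risk_{\Ex_b}(\weight_b^*)$ followed by an application of Equation~\ref{eq:generalization} to that single batch, and it silently ignores that $\Ex_b$ is selected as an argmin over $\Theta(n_{t_e,t}/m)$ batches. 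Your instinct that this selection calls for a maximal-inequality argument is sound, but the remedy you sketch---monotonicity of $\risk_I(\weight_{s-1})$ along training---does not supply one: after using $\risk_I^* \leq \risk_I(\weight_{s-1})$ the problematic quantity is the maximum over $s$ of the per-batch deviations $\sup_{\weight}|\risk_I(\weight)-\risk_{\Ex_s}(\weight)|$, which is a statement about fluctuations of independent batches, not about the trajectory of the expected risk. So your argument is at least as rigorous as the paper's, but the extra idea you propose for the second term would not close the gap you correctly identify; to the paper's own standard, simply applying Equation~\ref{eq:generalization} to the best batch (as you already do) is what is intended.
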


In the proof (Appendix \ref{sec:stationary}), we use Equation \ref{eq:generalization} to relate the risk over a new batch $\Ex_t$ to the risk over $\sample_{t_e,t}$, where the latter is bounded by Lemma \ref{lemma: subopt}.

Besides, if \dsurf enters the reactive state, Lemma~\ref{lemma: noDrift-switch} shows that the probability of switching to the reactive model is also very small.

\begin{lemma}
\label{lemma: noDrift-switch}
In the stationary environment for $1 < t < t_{d_1}$, if \dsurf enters the reactive state, the probability of switching to the reactive model at the end of the reactive state is bounded by $be^{-(\frac{\beta}{r})}$, where $r$ is the length of the reactive state and $\beta$ is the number of time steps that the predictive model was around before entering the reactive state, i.e. $|\sample| = \beta\times m$.
\end{lemma}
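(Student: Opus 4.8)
The plan is to reduce the switching event to a comparison of the two models' sub-optimalities over the reactive-state data $T$, and then to control each side separately --- the predictive model using the bias-from-ordering assumption (which supplies the factor $b$) and the reactive model using the geometric convergence of \incsaga (which supplies the factor $e^{-\beta/r}$). Conditioned on having entered the reactive state, the switch at its end is triggered exactly by condition~\ref{eq:switch}, i.e. $\risk_T(\weight') < \risk_T(\weight)$. Since both empirical risks are measured against the common minimizer $\weight_T^*$, we have $\risk_T(\weight') - \risk_T(\weight) = \subopt_T(\weight') - \subopt_T(\weight)$, so switching is equivalent to $\subopt_T(\weight') < \subopt_T(\weight)$. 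For any threshold $\tau$, the inclusion $\{\subopt_T(\weight') < \subopt_T(\weight)\} \subseteq \{\subopt_T(\weight) > \tau\} \cup \{\subopt_T(\weight') < \tau\}$ gives $\prob{\text{switch}} \le \prob{\subopt_T(\weight) > \tau} + \prob{\subopt_T(\weight') < \tau}$. The aim is to place $\tau$ between the small sub-optimality the long-trained predictive model typically attains on $T$ and the larger sub-optimality a freshly-started model can reach in only $r$ steps, so the first term bounds the rare event that the mature model is unusually bad on $T$ and the second the rare event that the reactive model has already converged.

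For the predictive-model term, $T$ is the length-$rm$ suffix of the predictive model's training set $\sample$ (of size $\beta m$ on entry to the reactive state). Here I would invoke the bias assumption $\expec{}{\subopt_\newsample(\weight)} \le b\,\expec{\newsample' \sim \sample}{\subopt_{\newsample'}(\weight)}$ with $\newsample = T$, which replaces the order-dependent suffix by a uniformly random subsample $\newsample'$ of the same size and introduces the constant $b$. The sub-optimality of $\weight$ over a random subsample is then controlled by combining Lemma~\ref{lemma: subopt}, which bounds $\subopt_\sample(\weight)$ by $(1+o(1))\strisk(|\sample|)$, with the generalization relation~\ref{eq:generalization}, so that $\expec{}{\subopt_T(\weight)}$ is on the order of $b$ times the statistical error of the (large) predictive sample. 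Markov's inequality then bounds $\prob{\subopt_T(\weight) > \tau}$.

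For the reactive-model term, the key point is that $\weight'$ is initialized randomly and trained for only $r$ time steps, so its distance to $\weight_T^*$ is still dominated by the transient (optimization) component of \incsaga's convergence rather than by the statistical floor $\strisk(rm)$. Using the linear contraction of \incsaga for the $\mu$-strongly convex, $L$-smooth empirical risk --- together with the standing assumption $m > (L/\mu)r$, which guarantees each update makes enough progress --- I would argue that the probability that the reactive model has already contracted its sub-optimality down to the much lower threshold $\tau$ set by the far more mature predictive model decays geometrically in the ratio of the two models' training maturities, which is $\Theta(\beta/r)$: the predictive model has effectively undergone on the order of $\beta/r$ additional reactive-length rounds of contraction. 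Combining the two terms and absorbing constants yields $b\,e^{-\beta/r}$.

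The main obstacle is precisely this reactive-model term: obtaining a genuinely exponential (rather than merely polynomial-in-$\beta$) bound requires translating the maturity gap $\beta$ versus $r$ into the exponent of \incsaga's geometric convergence and then upgrading an expected-sub-optimality statement into a tail bound on $\{\subopt_T(\weight') < \tau\}$ --- the na\"ive statistical-error comparison only separates the two models by the polynomial factor $(\beta/r)^\alpha$, so the exponential must come from the optimization dynamics and the choice of $\tau$ must be calibrated against \incsaga's contraction rate. A secondary difficulty is that $\weight$ and $\weight'$ are both trained on data that includes $T$, so the two sub-optimalities are statistically coupled with the very sample used to evaluate them; the threshold decomposition above is designed to sidestep this coupling by bounding each model's behavior in isolation rather than comparing them directly on the shared sample $T$.
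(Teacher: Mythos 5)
Your plan correctly identifies the two ingredients ($b$ from the bias assumption, $e^{-\beta/r}$ from \incsaga's geometric contraction applied over the maturity gap), but the union-bound decomposition $\prob{\text{switch}} \le \prob{\subopt_T(\weight) > \tau} + \prob{\subopt_T(\weight') < \tau}$ with a \emph{fixed} threshold $\tau$ creates a term that cannot be controlled with the available machinery, and you leave it unresolved. The second term is a lower-tail (anti-concentration) bound on the sub-optimality of the freshly initialized reactive model, and nothing in the paper's assumptions or in \incsaga's guarantees --- which are upper bounds on \emph{expected} sub-optimality only --- rules out that $\weight'$ lands near $\weight_T^*$ with non-negligible probability. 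Worse, the calibration cannot work even in principle: to make the first term of order $b e^{-\beta/r}$ via Markov, you must take $\tau \gtrsim e^{\beta/r}\,\expec{}{\subopt_T(\weight)}$, and the paper's own contraction chain shows this is on the order of $\expec{}{\subopt_T(\weight')}$ itself; so the second term becomes the probability that $\subopt_T(\weight')$ falls below roughly its own mean, which is a constant, not $e^{-\beta/r}$. No fixed $\tau$ makes both terms simultaneously small.

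The paper's proof avoids ever needing a lower-tail bound by using a single Markov inequality with a \emph{data-dependent} threshold: reading the one-step contraction in Equation~\ref{eq:switch-cost} backwards gives $\subopt_T(\weight') \ge \frac{1}{\rho_{mr}}\expec{}{\subopt_T(\weight'_+)}$, so $\prob{\subopt_T(\weight) > \subopt_T(\weight')}$ reduces to $\rho_{mr}\,\expec{}{\subopt_T(\weight)}/\expec{}{\subopt_T(\weight'_+)}$ --- a \emph{ratio} of expectations rather than two separate tail events. An exchangeability step (replacing the suffix $T$ by a random subsample $\widetilde{T}$ and $\weight'_+$ by $\widetilde{\weight}$ trained on $\widetilde{T}$) puts both expectations over a common sample, the bias assumption introduces the factor $b$ in the numerator, and the per-iteration contraction chained over the $2m\beta - 1$ extra iterations the predictive model has undergone bounds the ratio by $\rho_{mr}^{2m\beta-1}$, giving $b\rho_{mr}^{2m\beta} \le b e^{-\beta/r}$ under $m > L/(\mu r)$. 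If you want to salvage your decomposition, the fix is precisely to let $\tau$ be this random proxy for $\subopt_T(\weight')$, at which point your second term vanishes and the argument collapses into the paper's.
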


In the proof (Appendix \ref{sec:stationary}), we let $T'$ be the first $r\times m$ elements of $\sample$, and define a model $\widetilde{\weight'}$ trained over $T'$, so that the expected sub-optimality of $\widetilde{\weight'}$ and $\weight'$ over $T'$ and $T$ are the same. Bounding the probability of condition~\ref{eq:switch} follows from the convergence rate of \incsaga and Markov's inequality. 

Using the above, we can bound the size of the predictive model's sample set.
Let $\theta$ be the false positive rate for entering the reactive state in a stationary environment (bounded by Lemma \ref{lemma: noDrift-react} in Appendix \ref{sec:stationary}.)

\begin{corollary}
\label{corollary:noDrift-ave age}
With probability $1 - \epsilon$, the size of the sample set $\sample$ for the predictive model in the stable state is larger than $n_{1,t}/2$ at any time step $2r \ln \left(\frac{2(br\theta)^2}{\epsilon-br\theta} \right) \leq t < t_{d_1}$, where $n_{1,t}$ is the total number of data points that arrived until time $t$ and $r$ is the length of the reactive state.
\end{corollary}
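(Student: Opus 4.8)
The plan is to reduce the claim to a statement about the most recent \emph{switch} to the reactive model, and then to bound the probability that this switch occurred in the second half of the stream. First I would observe that in the stable state the predictive model's sample set $\sample$ grows by exactly $m$ points each time step, and that the only event shrinking $\sample$ is a switch to the reactive model at the end of a reactive episode (the branch where condition~\ref{eq:switch} holds sets $\sample \leftarrow \sample'$, where $\sample'$ was reinitialized empty at the reactive-start). Consequently, if $t_e$ denotes the reactive-start time of the most recent such switch (or $t_e = 1$ if none has occurred), then $|\sample| = n_{t_e,t}$, and since $n_{t_e,t} = m(t-t_e)$ and $n_{1,t} = m(t-1)$, the target event $n_{t_e,t} > n_{1,t}/2$ is equivalent to $t_e < (t+1)/2$. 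Thus it suffices to upper bound the probability that some switch has reactive-start time in the second half $[(t+1)/2, t)$ of the stream, and to show this is at most $\epsilon$ once $t$ exceeds the stated threshold.

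Next I would bound the probability of a single switch. Entering the reactive state in a stationary environment happens with probability at most $\theta$ per time step (the false-positive rate bounded by Lemma~\ref{lemma: noDrift-react}), and, conditioned on entering with the predictive model having age $\beta$ (so $|\sample| = \beta m$), Lemma~\ref{lemma: noDrift-switch} bounds the probability of a subsequent switch by $be^{-\beta/r}$; hence a switch with reactive-start $s$ occurs with probability at most $\theta b e^{-\beta/r}$ with $\beta = s - t_e$. For the \emph{first} switch whose reactive-start lands in $[(t+1)/2, t)$, the predictive model was necessarily initialized at $t_e \le (t+1)/2$, so $\beta \ge s - (t+1)/2$. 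A union bound over admissible $s$, together with the geometric sum $\sum_{k \ge 1} e^{-k/r} \le r$, then controls the failure probability.

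The main obstacle is that the switch probability at each episode is coupled to the model's current age, which itself depends on the entire history of earlier (false-positive) reactive episodes and switches. Carrying the crude lower bound $\beta \ge s - (t+1)/2$ through the geometric sum already yields the non-vanishing term $br\theta$ (it discards all age accumulated before the window midpoint, which is exactly why this term does not decay in $t$). To recover the sharper, exponentially decaying contribution one must additionally condition on the age at the window midpoint: with overwhelming probability the model has aged about $t/2$ steps since its last reset, contributing the factor $e^{-t/(2r)}$, while the event that a recent prior switch kept it young contributes an additional factor of order $br\theta$. Combining these cases gives a bound of the form $\prob{n_{t_e,t} \le n_{1,t}/2} \le br\theta + 2(br\theta)^2 e^{-t/(2r)}$.

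Finally I would set this bound at most $\epsilon$. Rearranging $2(br\theta)^2 e^{-t/(2r)} \le \epsilon - br\theta$ gives $e^{-t/(2r)} \le (\epsilon - br\theta)/\big(2(br\theta)^2\big)$, i.e. $t \ge 2r \ln\!\big(2(br\theta)^2/(\epsilon - br\theta)\big)$, which matches the claimed threshold (and makes transparent the implicit requirement $\epsilon > br\theta$ for the statement to be nonvacuous). I expect the bulk of the technical work to lie in rigorously justifying the age-conditioning step, since the reactive episodes are neither independent nor identically distributed; a clean way to discharge it is to track the expected age-discount $\expec{}{e^{-\beta/r}}$ across time steps and show it obeys a contraction of the form $g(\tau+1) \le (e^{-1/r} + e^{-1}\theta b)\,g(\tau)$, so that it decays geometrically whenever $br\theta$ is small.
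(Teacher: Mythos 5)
Your proposal is correct and follows essentially the same route as the paper's proof of the generalized statement (Corollary~\ref{corollary:noDrift-ave age-general}): reduce to ``no switch whose reactive-start lies in the second half,'' bound each switch by $\theta \cdot b e^{-\beta/r}$, condition on the model's age at the window midpoint, and sum the age-discounts (the paper uses a Riemann-sum/integral bound where you use a geometric series, both yielding the factor $r$), arriving at the identical failure bound $br\theta + 2(br\theta)^2 e^{-t/(2r)}$ and the same threshold derivation. The only cosmetic difference is your suggested contraction argument for $\expec{}{e^{-\beta/r}}$, whereas the paper handles the midpoint-age conditioning directly via a second product-of-no-switch bound ($\gamma_i$).
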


Based on the result of Corollary~\ref{corollary:noDrift-ave age}, we will show that the predictive model of \dsurf in the stable state is $\frac{7}{4^{1-\alpha}}$-risk-competitive with \aware with probability $ 1 - \epsilon$, at any time step $2r \ln \left(\frac{2(br\theta)^2}{\epsilon-br\theta} \right) \leq t < t_{d_1}$. This is a special case of the forthcoming Lemma \ref{lemma:risk-competitive-drift} in Section \ref{sec:analysis-abrupt-drift}.

\subsection{In Presence of Abrupt Drifts}
\label{sec:analysis-abrupt-drift}

Consider the abrupt drift that occurs at time $t_{d_i}$.
Let $p$ ($p^*$)
be the probability that \dsurf enters the reactive state (switches to
the reactive model at the end of the reactive state, respectively). In
this section, we first show that w.h.p.~\dsurf has a small
recovery time (goal \textbf{G1}).

\begin{lemma}
\label{lemma:recoverytime}
With probability $1-\epsilon$, the recovery time of \dsurf is bounded by $kr + \frac{2}{p}(\ln{\frac{1}{\epsilon_1}} + k\ln{2})$, where $k < \frac{1}{p^*} + \sqrt{\frac{1-\epsilon_2}{\epsilon_2} (\frac{1-p^*}{{p^*}^2})}$ is the number of times \dsurf enters the reactive state before recovering from drift, and $\epsilon = \epsilon_1 + \epsilon_2$.
\end{lemma}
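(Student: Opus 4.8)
The plan is to model the post-drift recovery as a sequence of independent \emph{rounds}, each consisting of a stable-state waiting phase followed by one pass through the reactive state, and then to control the random number of rounds and the total waiting time separately before combining them by a union bound. Concretely, let $K$ be the number of times \dsurf enters the reactive state before it switches to a freshly-initialized (post-drift) model; since each completed reactive state independently triggers a switch with probability $p^*$ (condition~\ref{eq:switch}), $K$ is geometric with parameter $p^*$, so $\expec{}{K} = 1/p^*$ and $\var{K} = (1-p^*)/{p^*}^2$. Within round $j$, let $W_j$ be the number of stable-state time steps spent before the drift trigger fires; since the trigger fires independently with probability $p$ at each step, $W_j$ is geometric with parameter $p$. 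The recovery time is then exactly
\[
    R = \sum_{j=1}^{K}\left(W_j + r\right) = Kr + \sum_{j=1}^{K} W_j ,
\]
so it suffices to bound $K$ and $\sum_j W_j$ with high probability.

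For the number of rounds I would apply Cantelli's (one-sided Chebyshev) inequality to $K$, namely $\prob{K - \expec{}{K} \geq t} \leq \var{K}/(\var{K} + t^2)$. Setting the right-hand side equal to $\epsilon_2$ and solving for $t$ gives $t = \sqrt{\tfrac{1-\epsilon_2}{\epsilon_2}\,\var{K}}$, so with probability at least $1-\epsilon_2$ we have $K < k := \frac{1}{p^*} + \sqrt{\frac{1-\epsilon_2}{\epsilon_2}\left(\frac{1-p^*}{{p^*}^2}\right)}$, which is exactly the claimed bound on the number of reactive states.

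For the total waiting time I would use a Chernoff bound at the carefully chosen rate $s = p/2$. The moment generating function of a single geometric $W_j$ is $M(s) = pe^{s}/(1-(1-p)e^{s})$, and a short calculation shows $M(p/2) \leq 2$: after clearing the (positive) denominator this reduces to $e^{p/2}(2-p)\leq 2$, i.e.\ to the elementary inequality $e^{-p/2} \geq 1 - p/2$. Hence for any threshold $c$ we have $\prob{\sum_{j=1}^{k} W_j \geq c} \leq e^{-pc/2}M(p/2)^{k} \leq 2^{k} e^{-pc/2}$, and choosing $c = \frac{2}{p}\left(k\ln 2 + \ln\frac{1}{\epsilon_1}\right)$ makes the right-hand side exactly $\epsilon_1$. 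Working on the event $\{K \leq k\}$, where $\sum_{j=1}^{K} W_j \leq \sum_{j=1}^{k} W_j$ because the $W_j \geq 0$, and taking a union bound over the two failure events of total probability $\epsilon_1 + \epsilon_2 = \epsilon$, we obtain $R \leq kr + \frac{2}{p}\left(\ln\frac{1}{\epsilon_1} + k\ln 2\right)$ with probability $1-\epsilon$, as desired.

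The main obstacle I anticipate is the interaction between the \emph{random} number of rounds $K$ and the sum $\sum_{j=1}^{K} W_j$, which are coupled; the clean way around this is to first fix the high-probability upper bound $k$ on $K$ via Cantelli and then bound the sum of a fixed number $k$ of nonnegative waiting times, so the two tail bounds can be union-bounded independently. A secondary subtlety is justifying that the per-step trigger probability $p$ and the per-round switch probability $p^*$ may be treated as fixed and independent across rounds, which relies on the post-drift distribution being stationary and on the independence of the \incsaga sampling; once these are granted, the geometric structure used above is exact and the bound $M(p/2)\leq 2$ supplies the slack that produces the factor-of-two and the $2^{k}$ term.
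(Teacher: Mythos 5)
Your proof is correct and follows essentially the same route as the paper: Cantelli's inequality for the number of reactive-state entries, a Chernoff bound at rate $p/2$ for the sum of $k$ geometric waiting times, and a union bound over the two failure events. The only cosmetic difference is that you carry out the moment-generating-function calculation (showing $M(p/2)\leq 2$) explicitly, whereas the paper delegates that step to a cited tail bound for sums of geometric random variables with the same choice of parameter.
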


The high-level proof sketch for this Lemma (full proof in Appendix \ref{sec:abrupt-drift})
is to divide the recovery time of \dsurf into two parts: (i) time
steps spent in reactive state, and (ii) time steps spent in the stable
state before recovery.  To bound the first part, we need to bound the
number of times \dsurf enters the reactive state and multiply that by
$r$, the length of each reactive state. This can be obtained using
Cantelli's inequality. On the other hand, the second part can be
bounded by bounding the sum of $k$ independent geometric random
variables, each with distribution $\sim Ge(p)$.

We next show the risk-competitiveness of \dsurf after recovery (goal \textbf{G2}). The time period after recovery until the next drift is a stationary environment for \dsurf, in which each model is trained solely over points drawn from a single distribution, allowing for an analysis similar to the stationary environment before any drifts occurred.
\begin{lemma}
\label{lemma:risk-competitive-drift}
With probability $1 - \epsilon$, the predictive model of \dsurf in the stable state is $\frac{7}{4^{1-\alpha}}$-risk-competitive with \aware at any time step $t_{d_i} + l + \max\left(l, 2r \ln \left(\frac{2(br\theta)^2}{\epsilon_3-br\theta} \right)\right) \leq t < t_{d_{i+1}}$, where $t_{d_i}$ is the time step of the most recent drift, $l = kr + \frac{2}{p}(\ln{\frac{1}{\epsilon_1}} + k\ln{2})$ where $k < \frac{1}{p^*}+ \sqrt{\frac{1-\epsilon_2}{\epsilon_2} (\frac{1-p^*}{{p^*}^2})}$, and $\epsilon = \epsilon_1 + \epsilon_2 + \epsilon_3$.
\end{lemma}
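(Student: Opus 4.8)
The plan is to reduce the post-drift analysis to the stationary analysis of Section~\ref{sec:analysis-stationary}, using the recovery-time bound to ``erase'' the transient behavior caused by the drift. Once \dsurf has recovered, its predictive model is maintained solely over data drawn from the new (post-$t_{d_i}$) distribution, so the interval after recovery is itself a stationary environment identical in structure to the pre-drift window $1<t<t_{d_1}$ already handled by Corollary~\ref{corollary:noDrift-ave age}. The only two genuinely new issues are (a) discarding the recovery period, during which the sample set is not yet clean, and (b) comparing \dsurf's sample against that of \aware, which has been accumulating clean data since $t_{d_i}$ rather than since the (later) end of recovery.

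First I would invoke Lemma~\ref{lemma:recoverytime}: with probability at least $1-(\epsilon_1+\epsilon_2)$, \dsurf recovers within $l = kr+\frac{2}{p}(\ln\frac{1}{\epsilon_1}+k\ln 2)$ time steps, i.e.\ by $\tau := t_{d_i}+l$, at which point it is in the stable state with a predictive model whose sample $\sample$ contains only new-distribution points. Second, treating $\tau$ as the start of a fresh stationary phase (legitimate because the detection tests then see only i.i.d.\ data from the new distribution, so the analysis is translation-invariant and the constants $\theta$ and $b$ are inherited unchanged from the pre-drift case), I would apply Corollary~\ref{corollary:noDrift-ave age} with failure budget $\epsilon_3$: for every $t$ with $t-\tau \ge 2r\ln\!\big(\frac{2(br\theta)^2}{\epsilon_3-br\theta}\big)$, the predictive sample satisfies $|\sample| \ge n_{\tau,t}/2$ with probability $1-\epsilon_3$.

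Third, I would account for the offset between \dsurf's clean sample and \aware's. Since \aware restarts at $t_{d_i}$ it trains over $\sample_{t_{d_i},t}$ of size $n_{t_{d_i},t}=n_{\tau,t}+ml$, so $n_{\tau,t}\ge n_{t_{d_i},t}/2$ as soon as $t-\tau\ge l$. Combining with the previous step, whenever $t-\tau \ge \max\!\big(l,\,2r\ln(\tfrac{2(br\theta)^2}{\epsilon_3-br\theta})\big)$ --- equivalently $t \ge t_{d_i}+l+\max(l,2r\ln(\cdots))$, the stated threshold --- we obtain $|\sample|\ge n_{t_{d_i},t}/4$, and a union bound over the recovery and regrowth failure events gives the overall failure probability $\epsilon=\epsilon_1+\epsilon_2+\epsilon_3$. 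Finally, to pass from the sample-size bound to risk-competitiveness I would mirror the special case cited after Corollary~\ref{corollary:noDrift-ave age}: bound \dsurf's sub-optimality over its own (suffix) sample by Lemma~\ref{lemma: subopt}, then transfer it to a bound over \aware's segment $\sample_{t_{d_i},t}$ using the generalization inequality~(\ref{eq:generalization}) to interchange the empirical risks over $\sample$ and $\sample_{t_{d_i},t}$ with the expected risk $\risk_I$, comparing the respective optima along the way. Substituting $|\sample|\ge n_{t_{d_i},t}/4$ turns each $\strisk(|\sample|)$ into at most $4^{\alpha}\strisk(n_{t_{d_i},t})$, and summing the generalization and sub-optimality terms collapses the constant to $\frac{7}{4^{1-\alpha}}$.

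The main obstacle is step three's bookkeeping: justifying that the post-recovery phase is \emph{statistically identical} to the pre-drift stationary phase so that Corollary~\ref{corollary:noDrift-ave age} applies verbatim, and then correctly merging the two separate ``time-to-be-ready'' requirements --- the recovery offset $l$ and the sample-regrowth time $2r\ln(\cdots)$ --- into the single $\max(\cdot,\cdot)$ while tracking that the resulting sample is a \emph{quarter} (not a half) of \aware's, which is exactly what produces the $4^{\alpha}$ inside $\frac{7}{4^{1-\alpha}}$. A secondary subtlety is that \dsurf's clean sample is a suffix of $\sample_{t_{d_i},t}$, so the final transfer of sub-optimality must invoke the suffix-bias assumption (constant $b$) rather than treating $\sample$ as a uniform subsample.
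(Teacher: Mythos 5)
Your skeleton matches the paper's proof almost exactly for the first three steps: invoke Lemma~\ref{lemma:recoverytime} to get recovery by $t_{d_i}+l$ at cost $\epsilon_1+\epsilon_2$, apply the recovered-state generalization of Corollary~\ref{corollary:noDrift-ave age} (Corollary~\ref{corollary:noDrift-ave age-general}) at cost $\epsilon_3$ to get $|\sample|\ge n_{t_r,t}/2$, and then use the offset arithmetic $n_{t_{d_i},t}\le n_{\tau,t}+ml\le 2n_{\tau,t}$ once $t-\tau\ge l$ to conclude $|\sample|\ge n_{t_{d_i},t}/4$. That is the same decomposition and the same $\max(\cdot,\cdot)$ bookkeeping the paper uses, and it is correct.

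The gap is in your final transfer step. You propose to convert a sub-optimality bound over \dsurf's own sample $\sample_{t_e,t}$ into one over \aware's segment $\sample_{t_{d_i},t}$ by routing through the expected risk $\risk_I$ via Equation~\ref{eq:generalization} (and you additionally invoke the suffix-bias constant $b$). That route does not yield the stated constant: interchanging the two empirical risks through $\risk_I$ costs an additive $\strisk(n_{t_{d_i},t})$ for the outer sample plus roughly $\strisk(n_{t_e,t}) = 4^{\alpha}\strisk(n_{t_{d_i},t})$ for the inner one, on top of the $(1+o(1))\strisk(n_{t_{d_i},t}/4)$ sub-optimality term, giving a constant of order $1+2\cdot 4^{\alpha}$, which is strictly larger than $\tfrac{7}{4^{1-\alpha}}=\tfrac{7}{4}\cdot 4^{\alpha}$ for all $\alpha\in[\tfrac12,1]$. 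The paper instead uses the optimizer's subset-transfer property (Equation~\ref{eq:switch-cost} in the appendix): with $T'=\sample_{t_e,t}$ of size $k=n_{t_{d_i},t}/4$ inside $\sample_{t_{d_i},t}$ of size $n$, one gets $\expec{}{\subopt_{\sample_{t_{d_i},t}}}\le \subopt_{T'}+\tfrac{n-k}{n}\strisk(k) = (1+o(1))\strisk(n/4)+\tfrac{3}{4}\strisk(n/4)$, and $(1+\tfrac34)\strisk(n/4)=\tfrac{7}{4^{1-\alpha}}\strisk(n)$ exactly. The bias constant $b$ plays no role in this step (it is needed only in Lemma~\ref{lemma: noDrift-switch}); Lemma~\ref{lemma: subopt} already bounds \incsaga's sub-optimality over its own training suffix directly. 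So your argument as written proves risk-competitiveness with a weaker constant, not the lemma as stated.
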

At a high level, $\epsilon_1$ and $\epsilon_2$, respectively, capture the errors due to false negatives of entering the reactive state and switching to the new model at the end of the reactive state. Aggregating these two using Lemma~\ref{lemma:recoverytime} bounds the recovery time. On the other hand, $\epsilon_3$ captures the error in the false positive of switching models after recovery. Using a form of Corollary~\ref{corollary:noDrift-ave age} generalized to stationary environments between drifts (Appendix \ref{sec:stationary}), a lower bound on the size of the stream segment used for training the predictive model can be obtained. Finally, the expected sub-optimality of the predictive model can be bounded using Lemma~\ref{lemma: subopt}.
The full proof is in Appendix \ref{sec:abrupt-drift}.

\section{Experimental Results}
\label{sec:expt}

In this section, we present experimental results that (i) empirically
confirm the risk-competitiveness of \dsurf with \aware throughout a
set of experiments on datasets with drifts, and (ii) show the
effectiveness of \dsurf via comparison to two state-of-the-art
adaptive learning algorithms, the drift-detection-based method MDDM
and the ensemble method AUE.
More details on these algorithms, and additional
algorithm comparisons, are provided in Appendix~\ref{sec:expt-setup-algs}.

We use five synthetic, two semi-synthetic and three real datasets
for binary classification, chosen to include all such datasets that the authors of MDDM and AUE use in their evaluations.
Drifts
in semi-synthetic datasets are generated by rotating data points or
changing the labels of the real-world datasets that originally do not
contain any drift. We divide each dataset into equally-sized batches
that arrive over the course of the stream. More detail on the datasets
is provided in Appendix \ref{sec:expt-setup-datasets}.

In our experiments, a batch of data points arrives at each time step. We first evaluate the performance of each algorithm by measuring the misclassification rate over this batch, and then each algorithm gains access to the labeled data to update their model(s). The base learner in each algorithm is a logistic regression model trained using \incsaga. Hyperparameter settings are discussed in Appendix~\ref{sec:expt-setup-training}.
All reported results of the misclassification rates represent the median over five trials.

\begin{figure*}[t!]
\vskip -0.25 in
\centering
\begin{minipage}{.32\textwidth}
  \centering
  \includegraphics[width=\linewidth]{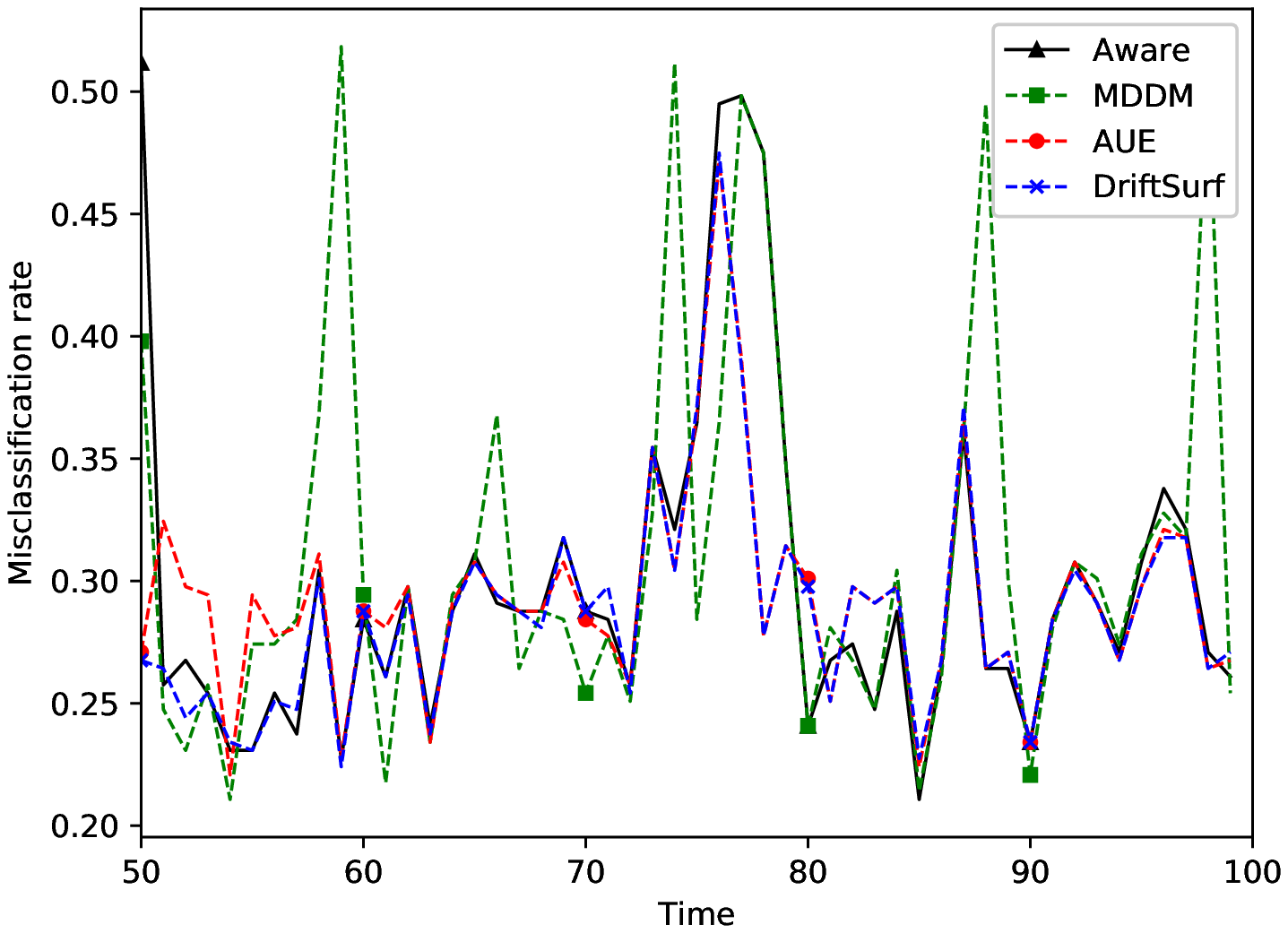}
  \caption{Misclassification rate over time for PowerSupply}
  \label{fig:power}
\end{minipage}%
\hfill
\begin{minipage}{.32\textwidth}
  \centering
  \includegraphics[width=\linewidth]{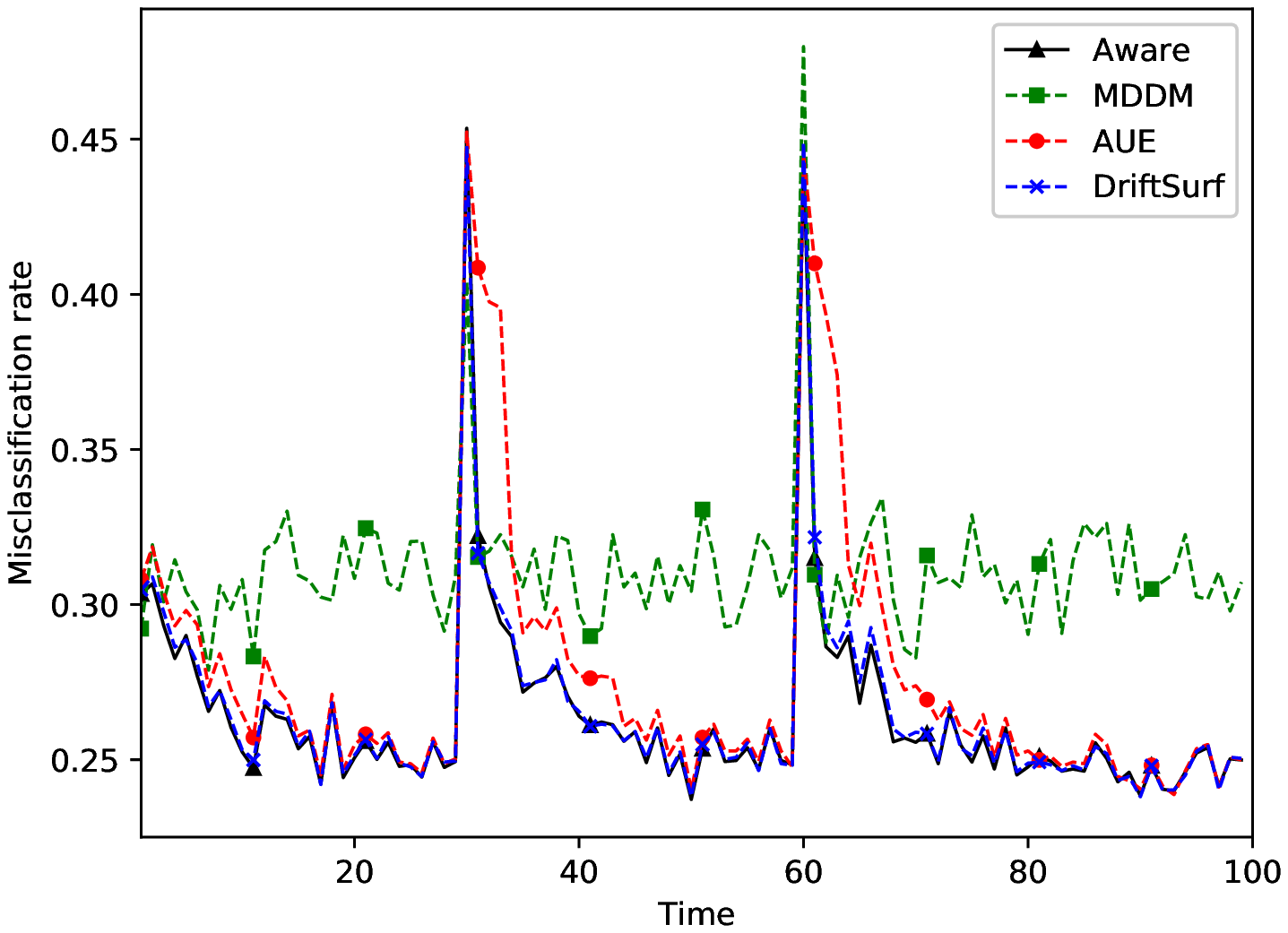}
  \caption{Misclassification rate over time for CoverType}
  \label{fig:covtype}
\end{minipage}%
\hfill
\begin{minipage}{.32\textwidth}
  \vskip 0.15 in
  \centering
  \includegraphics[width=\linewidth]{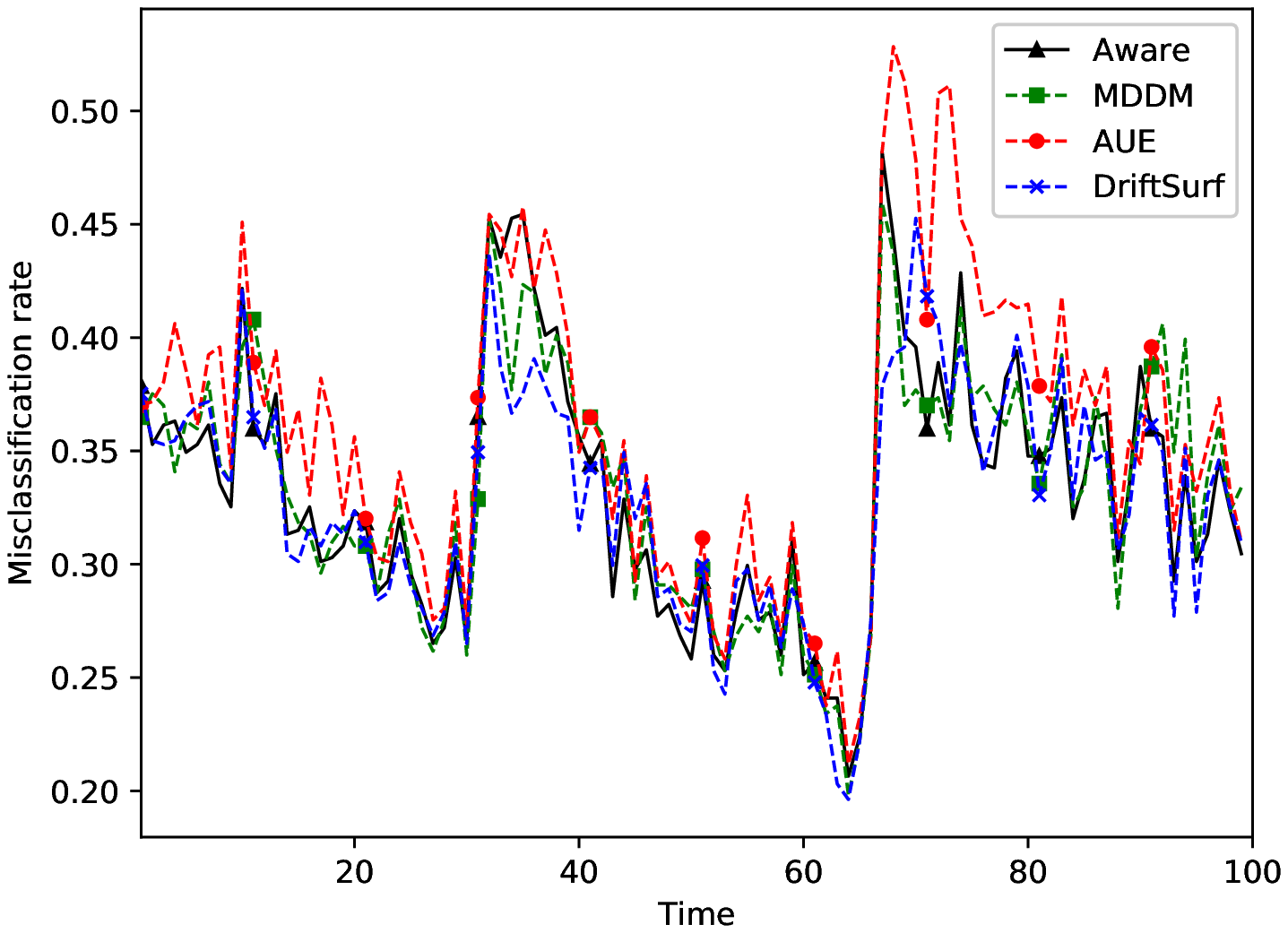}
  \caption{Misclassification rate over time for Airline ($\rho=4m$ divided among models)}
  \label{fig:air}
\end{minipage}
\end{figure*}

We present the misclassification rates at each time step in Figures \ref{fig:power} and \ref{fig:covtype} on the PowerSupply and CoverType datasets (see Appendix~\ref{sec:expt-results-equal-model} for other datasets). A drift occurs at time 76 in PowerSupply, and at times 30 and 60 in CoverType. We observe \dsurf outperforms MDDM because false positives in drift detection lead to unnecessary resetting of the predictive model in MDDM, while \dsurf avoids the performance loss by catching most false positives via the reactive state and returning to the older model. In particular, the CoverType dataset was especially problematic for MDDM, which continually signaled a drift. We also observe \dsurf adapts faster than AUE on CoverType. This is because after an abrupt drift, the predictions of \dsurf are solely from the new model, while for AUE, the predictions are a weighted average of each expert in the ensemble. Immediately after a drift, the older, inaccurate experts of AUE have reduced, but non-zero weights that negatively impact the accuracy. On both datasets, we observe the recovery time of \dsurf is within one reactive state, and confirm that \dsurf is competitive with \aware.

\begin{wraptable}{R}{0.580\textwidth}
\vspace{-0.18in}
\caption{Total average of misclassification rate}
\label{table:ave-misclassification-compact}
\vspace{-0.15in}
\begin{center}
\begin{small}
\begin{sc}
\begin{tabular}{lccccr}
\toprule
 Dataset            & \aware  & \dsurf& MDDM  & AUE   \\
\midrule
SEA0       		& 0.137 & \textbf{0.088} & \textbf{0.088} & 0.094 \\
SEA20       	& 0.264 & \textbf{0.246} & 0.293 & 0.247 \\
SEA-gradual     & 0.177 & \textbf{0.159} & 0.177 & 0.163 \\
Hyper-slow       & 0.116 & 0.117  & 0.117 & \textbf{0.112} \\
Hyper-fast         & 0.191 & 0.174 & \textbf{0.162} & 0.179 \\
SINE1               & 0.171 & 0.197 & \textbf{0.179} & 0.211 \\
Mixed               	& 0.192 & \textbf{0.204} & \textbf{0.204} & 0.211 \\
Circles             	& 0.368 & \textbf{0.371} & 0.376 & 0.380 \\
RCV               	& 0.121 & 0.135 & \textbf{0.127} & 0.167\\
CoverType           & 0.267 & \textbf{0.268} & 0.313 & 0.278 \\
Airline         & 0.338 & \textbf{0.332} & 0.348 & 0.333 \\
Electricity         & 0.315 & 0.306 & 0.341 & \textbf{0.303} \\
PowerSupply               & 0.309 & 0.300 & 0.323 & \textbf{0.299} \\
\bottomrule
\end{tabular}
\end{sc}
\end{small}
\end{center}
\vspace{-0.1in}
\end{wraptable}
Table~\ref{table:ave-misclassification-compact}
summarizes the results for all the datasets in terms of the total average of the misclassification rate over time. 
In the first two rows, we observe the stability of \dsurf in the presence of 20\% additive noise in the synthetic SEA dataset, again demonstrating the benefit of the reactive state while MDDM's performance suffers due to the increased false positives.
We observe that on a majority of the datasets in
Table~\ref{table:ave-misclassification-compact},
\dsurf is the best performer. For some datasets (Electricity, Hyper-Slow) AUE outperforms \dsurf. A factor is the different computational power (number of gradient computations per time step) used by each algorithm. AUE maintains an ensemble of ten experts, while \dsurf maintains just two, and so AUE uses five times the computation of \dsurf. To account for the varying computational efficiency of each algorithm, we did another experiment where the available computational power for each algorithm is divided equally among all of its models, shown in Figure \ref{fig:air} for the Airline dataset, with more results in Appendix \ref{sec:expt-results-equal-alg}. After normalizing for equal computational power, we observe \dsurf has better accuracy and recovers faster after drift compared to AUE.

Appendices~\ref{sec:expt-results-oblivious}--\ref{sec:expt-results-sgd}
contain additional experimental results. In Appendix
\ref{sec:expt-results-oblivious}, we report the results for
single-pass SGD and an oblivious algorithm (\incsaga with no
adaptation to drift), which are generally worse across each
dataset. One exception is that the oblivious algorithm has the best
accuracy on the Electricity dataset because the drift does not warrant
training a new model from scratch.  Appendix
\ref{sec:expt-results-no-greedy} studies the impact of \dsurf's design
choice of using greedy prediction during the reactive state, showing
that it performs similarly or better than waiting until the end of the
reactive state before deciding whether to transition to a new model.
Finally, Appendix \ref{sec:expt-results-sgd} includes results for each
algorithm when SGD is used as the update process instead of
\incsaga. We observe that using SGD results in lower accuracy for each
algorithm, and also that, relatively, AUE gains an edge because its
ensemble of ten experts mitigates the higher variance updates of SGD.

\section{Conclusion}

We presented \dsurf, an adaptive algorithm for learning from streaming
data that contains concept drifts. Our risk-competitive theoretical
analysis showed that \dsurf has high accuracy competitive with \aware
both in a stationary environment and in the presence of abrupt
drifts. Our experimental results confirmed our theoretical analysis
and also showed high accuracy in the presence of either abrupt or gradual
drift that generally outperforms state-of-the-art algorithms MDDM and
AUE. Furthermore, \dsurf maintains just two models while achieving
high accuracy, and therefore its computational efficiency is
significantly better than an ensemble method like AUE.

\begin{ack}
Funding in direct support of this work: NSF grants 1527541, 1725702, 1725663
\end{ack}

\bibliography{smf}
\bibliographystyle{abbrv}

\newpage
\onecolumn
\appendix
\section{Pseudocode of \incsagaH}

As shown in Figure~\ref{algo:SR}, \dsurf calls a function Update($\weight, \sample, \Ex_t$) that takes a model $\weight$, a sample set $\sample$, and a set of training points $\Ex_t$.
We let $\rho$ be the computational power available at each time step; for an SGD-based algorithm, $\rho$ is the number of gradients that can be computed in a time step.
The Update function performs $\rho$ updates to $\weight$ and returns the resulting model; as a side effect, it also updates $\sample$.

In this paper, we primarily use \incsaga~\cite{jothimurugesan2018variance}, shown in Algorithm~\ref{algo:strsaga}, for our Update function.
\incsaga differs from SGD (Algorithm~\ref{algo:update}) in that (i) it uses variance-reduced update steps that result in faster convergence, and (ii) it handles streaming data that do not arrive at a steady rate by controlling the rate at which its sample set grows. (In this paper, we only consider data that arrive at a fixed rate at each time step, but by using \incsaga, the results can be readily extended to Poisson and other arrival distributions.) In  \incsaga, data points are not sampled from the entire available stream segment, but instead from a separately maintained sample set. Newly arriving data are first added to a buffer (called $\buffer$), and then points are moved from $\buffer$ to the sample set at a controlled rate ``to ensure that the optimization error on the subset that has been trained is balanced with the statistical error of the effective sample size'' \cite{jothimurugesan2018variance}. The implementation of \incsaga we use in this paper uses the ``alternating schedule'' in its sampling.

\begin{algorithm}
    \caption{Update($\weight, \sample, \Ex_t$): Process of updating parameters $\weight$ using SGD, given sample set $\sample$ and newly arrived data points $\Ex_t$}
    \label{algo:update}
 
 \tcp{$\rho$ is the computational power and determines the number of update steps that can be performed.}
 
 \tcp{$\eta$ is the learning rate}
 
Add $\Ex_t$ to $\sample$ \;

\For{ $j=1$ to $\rho$}
{
    Sample a point $p$ uniformly from $\sample$\;
    
    $g \gets \nabla f_p(\weight)$ \tcp{$f_p$ is the loss function at $p$}
    
    $\weight \gets \weight- \eta \cdot g $
}

\Return $\weight$

\end{algorithm}

\begin{algorithm}
    \caption{Update($\weight, \sample, \Ex_t$): Process of updating parameters $\weight$ using \incsaga, given sample set $\sample$ and newly arrived data points $\Ex_t$}
    \label{algo:strsaga}
 
    \tcp{ $\rho$ is the computational power and determines the number of update steps that can be performed}
   
    \tcp{ $\eta$ is the learning rate}
    
    Add $\Ex_t$ to $\buffer$ \tcp{$\buffer$ is the set of training points not added to $\sample$ yet}

    \For{ $j=1$ to $\rho$ }
    {

        \If{$\buffer$ is non-empty \& $j$ is even}
        {
            Move a single point, $p$, from $\buffer$ to $\sample$
            
            $\alpha(p) \gets 0$ \tcp{$\alpha(p)$ is the prior gradient of $p$, initialized to $0$}
        
        }    
        \Else
        {    
            Sample a point $p$ uniformly from $\sample$
            
        }
        
       $A \gets \sum_{x \in \sample} \alpha(x)/|\sample|$ \tcp{$A$ is the average of all gradients and can be maintained incrementally}
        
        $g \gets \nabla f_p(\weight) $ \tcp{$f_p$ is the loss function at $p$}

        $ \weight \gets \weight - \eta (g - \alpha(p) + A) $ 
        
        $\alpha(p) \gets g$
        
    }

     \Return $\weight$

\end{algorithm}

The time complexity of Algorithms \ref{algo:update} and \ref{algo:strsaga} is on the order of $\rho$ times the cost of a gradient computation with respect to a single data point. Each gradient computation is typically $O(d)$ for model parameter dimension $d$.
The space complexity of Algorithm \ref{algo:update} is $O(D(|\sample| + |\Ex_t|) + d)$ to store the samples and model parameters, where $D$ is the dimension of the data. The space complexity of Algorithm \ref{algo:strsaga} incurs an additive $O(d(|\sample| + |\Ex_t|))$ to store the prior gradients $\alpha(p)$ for each data point (for linear models, this cost is reduced to $O(|\sample| + |\Ex_t|)$ since each gradient is a scalar multiple of the corresponding data point).

\section{Proofs from the Analysis of \dsurfH}

This section contains proof details from the analysis of \dsurf (Section~\ref{sec:analysis}). As noted in Section~\ref{sec:analysis}, we make the following assumptions throughout our analysis.  First, each drift that occurs is an abrupt drift (i.e., the distribution changes across a single time step)---this is solely for the analysis, as our algorithm more generally applies to gradual drfits, as our experimental results show.
Second, we assume that $\strisk(n) = hn^{-\alpha}$, for a constant $h$ and $1/2 \leq \alpha \leq 1$, is an upper bound on the statistical error over a set of data points of size $n$. We also assume all loss functions $f_x$ are convex and their gradients are $L$-Lipschitz continuous, and that $\risk_\sample$ is $\mu$-strongly convex for the set of training samples $\sample$. In addition, we assume that the condition number $L/\mu$ is bounded by a constant at each time, and that $m=|\Ex_t|$, the number of points arriving at each time $t$, is bounded $m > L/\mu r$, We assume that $\rho = 2m$, where $\rho$ denotes the number of gradients that can be computed at each time step $t$.
Finally, we assume $\expec{}{\subopt_\newsample(\weight)} \leq b \expec{\newsample' \sim \sample}{\subopt_{\newsample'}(\weight)}$ for $\weight$ trained over $\sample$ using \incsaga, where $\sample$ is all drawn from an identical distribution, $\newsample$ is a suffix of $\sample$, and $\newsample'$ is a random subsample of $\sample$ where $|\newsample'| = |\newsample|$. This last assumption bounds (by a factor $b$) the bias of the training loss from the order of arrivals, given enough iterations, when no drift.

Table~\ref{tbl:notation} summarizes the notation used in this section.
\begin{table}[t]
  \caption{Summary of notation used in the analysis
  \label{tbl:notation}}
\begin{center}
  \begin{tabular}{|c|l|}\hline
    $\Ex_t$ & Data points arriving at time step $t$ \\
    $m$ & = $|\Ex_t|$, the number of points arriving at each $t$ \\
    $r$ & length of the reactive state (in time steps) \\
    $\rho$ & the number of gradients computed at each time step \\
    $b$ & bias of \incsaga in sub-optimality over a suffix \\
    $\alpha$ & the exponent in the statistical error bound $\strisk(n) = h n^{-\alpha}$ \\
    $\theta$ & false positive rate a recovered \dsurf{} enters the reactive state in a stationary environment \\
    $p$ & probability \dsurf{} enters the reactive state after a given drift \\
    $p^*$ & probability \dsurf{} switches to the reactive model at end of a given reactive state \\ \hline
  \end{tabular}
  \end{center}
  \end{table}

In Section~\ref{sec:analysis}, we defined \aware to be an adaptive learning algorithm with oracle knowledge of when drifts occur. Through Lemma~\ref{lemma:risk-competitive-drift}, we showed that under certain conditions \dsurf is risk-competitive to \aware. Here, we state the consequence with regards to the total error. As stated in Section \ref{prelim}, the total error of an algorithm $A$ over the stream segment $\sample$ is the sum of the statistical and optimization errors; under uniform convergence bounds, the total error is bounded by $\strisk(|\sample|) + \expec{}{\subopt_{\sample}(A)}$ \cite{bousquet2008tradeoffs}. Empirical risk minimization (\erm), which is a process with no limit on the computational power, over a stream segment $\sample$ yields a model with total error equal to the statistical error. When \dsurf is risk-competitive with \aware, then the total error of \dsurf can be bounded relative to the error of \erm by the following lemma.

\begin{lemma}
\label{lemma:error-erm}
Suppose the last drift occurred at time step $t_d$. If \dsurf is $c$-risk-competitive to \aware at time $t > t_d$, then the total error of \dsurf is at most a $(c + 1 +o(1))$ factor of the error bound of \erm, $\strisk(\sample_{t_d,t})$.
\end{lemma}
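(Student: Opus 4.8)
The plan is to chain the total-error decomposition stated immediately above the lemma with the definition of $c$-risk-competitiveness; the statement is essentially a bookkeeping computation rather than a new technical result. First I would recall that, under the uniform convergence bound of Equation~\ref{eq:generalization}, the total error of any algorithm $A$ over the stream segment $\sample_{t_d,t}$ is at most $\strisk(n_{t_d,t}) + \expec{}{\subopt_{\sample_{t_d,t}}(A)}$, the first term being the statistical (estimation) error and the second the optimization error. Instantiating $A = \dsurf$ gives an upper bound on the quantity we wish to control, where $n_{t_d,t} = |\sample_{t_d,t}|$.

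Next I would invoke the hypothesis that \dsurf is $c$-risk-competitive to \aware at time $t$, which by Definition~\ref{defn:competitiveness} means $\expec{}{\subopt_{\sample_{t_d,t}}(\dsurf)} \leq c\,(1+o(1))\,\strisk(n_{t_d,t})$. Substituting this into the optimization-error term bounds the total error of \dsurf by $\strisk(n_{t_d,t}) + c\,(1+o(1))\,\strisk(n_{t_d,t}) = (c + 1 + o(1))\,\strisk(n_{t_d,t})$, where the bare statistical-error term contributes the additive $1$ and the $c \cdot o(1)$ slack is folded into a single $o(1)$.

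Finally I would compare against \erm. Since \erm has no limit on computational power, its sub-optimality over $\sample_{t_d,t}$ is zero, so its total error equals the statistical error $\strisk(n_{t_d,t})$ exactly; this is the stated error bound of \erm. Dividing \dsurf's total-error bound by this quantity yields the factor $(c + 1 + o(1))$, which is the claim. The only point needing care is the handling of the asymptotic terms: one checks that $c$ is a fixed constant so that $c \cdot o(1) = o(1)$, and that adding the constant $1$ from the statistical-error term does not disturb the asymptotics. Both are immediate, so I do not expect any substantive obstacle; the lemma follows directly from the total-error decomposition and the definition of risk-competitiveness.
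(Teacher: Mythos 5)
Your proposal is correct and follows essentially the same route as the paper: bound the optimization error via the definition of $c$-risk-competitiveness, add the statistical error term $\strisk(n_{t_d,t})$, and compare against the \erm{} bound to obtain the $(c+1+o(1))$ factor. The paper's own proof is exactly this two-line computation, so there is nothing to add.
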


\begin{proof}
By the definition of risk-competitiveness to \aware, $\expec{}{\subopt_{\sample_{t_d,t}}(\dsurf)} \leq c(1 + o(1)) \strisk(n_{t_d, t})$. Adding the statistical error, the total error is at most $(c(1 + o(1)) + 1)\strisk(n_{t_d, t})$.
\end{proof}
Note that although the \erm error bound, $\strisk()$, is only an upper bound, it is usually considered to be a tight bound~\cite{bousquet2008tradeoffs}.

The theoretical analysis in this paper establishes when \dsurf is risk-competitive with \aware following abrupt drifts. To give further motivation for the notion of risk-competitiveness, we briefly discuss related work on online learning algorithms built upon \emph{single-pass} online gradient descent. In both \cite{besbes2015non} and \cite{yi2016tracking}, the authors assume an adversarial sequence of loss functions  constrained to satisfy a variational budget, and give algorithms based on online gradient descent, equipped with periodic resets and/or decaying step sizes that are parameterized by the adversary's budget, that are shown to have optimal dynamic regret bounds. In \cite{wang2018minimizing}, the authors instead assume no constraint on the adversary, and give an ensemble algorithm of experts that are each trained with online gradient descent (with a modification to use a more efficiently computable surrogate loss) that is shown to have an optimal adaptive regret bound.

In contrast, in this paper we study the streaming data setting where previous data points can be stored and revisited in order to achieve a better sub-optimality beyond what can be attained in a single-pass. This additional power is evidently useful when explicitly stationary periods of the stream exist for resampling from. Thus, we restrict the analytical consideration to a simple and practical case of abrupt drifts at times $t_{d_i}$ that yield stationary stream segments $\sample_{t_{d_i}, t_{d_{i+1}}}$. In the course of our risk-competitive analysis, we estimate how large a subset of such a stream segment that \dsurf can identify to be used for empirical risk minimization, where larger subsets correspond to better generalization error over the relevant distribution.

In the remainder of this section we complete the proofs for the results in Section~\ref{sec:analysis} that establish the conditions under which \dsurf is risk-competitive with \aware  both in a stationary environment and in the presence of abrupt drifts.

\subsection{In a Stationary Environment}
\label{sec:stationary}

In Section \ref{sec:analysis-stationary}, we considered only the stationary environment during the time $1 < t < t_{d_1}$ before any drifts. In this section, we generalize the results to the stationary environment for any time $t_{d_i} + r_i \leq t < t_{d_{i+1}}$, where $r_i$ is the recovery time for the drift at $t_{d_i}$. We refer to such a time period as a \textit{recovered state}, in which each model of \dsurf is trained solely over points from the newest distribution.

\begin{lemma}
\label{lemma: noDrift-react-first-general}
(Generalized statement of Lemma \ref{lemma: noDrift-react-first}.) 
In a recovered state, at any time step $t$ the probability of entering the reactive state because of condition~\ref{eq:enter_reactive_first} is bounded by $(\strisk(m) + (2 + o(1))\strisk(n_{t_e,t}))/\delta$, where $|\sample_{t_e,t}| = n_{t_e,t}$, $\sample_{t_e,t}$ is the stream segment that the predictive model of \dsurf is trained over, and $m$ is the batch size at each time step.
\end{lemma}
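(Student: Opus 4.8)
Since this is a recovered state, the predictive model's training set $\sample := \sample_{t_e,t}$ consists solely of i.i.d.\ points drawn from the current distribution $I$. Consequently the uniform-convergence bound~\ref{eq:generalization} applies at both sizes $m$ and $n := n_{t_e,t}$, and Lemma~\ref{lemma: subopt} bounds the sub-optimality of $\weight_{t-1}$ over $\sample$, exactly as in the pre-drift setting; the argument of Lemma~\ref{lemma: noDrift-react-first} therefore carries over essentially verbatim. The plan is to pass to a first-moment estimate: the event in condition~\ref{eq:enter_reactive_first} forces $(\risk_{\Ex_t}(\weight_{t-1}) - \risk_b)_+ > \delta$, so by Markov's inequality it suffices to show
\[
\expec{}{\left(\risk_{\Ex_t}(\weight_{t-1}) - \risk_b\right)_+} \;\le\; \strisk(m) + (2+o(1))\strisk(n),
\]
after which dividing by $\delta$ gives the claimed bound.

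First I would split $\risk_{\Ex_t}(\weight_{t-1}) - \risk_b$ as
\[
\underbrace{\left[\risk_{\Ex_t}(\weight_{t-1}) - \risk_I(\weight_{t-1})\right]}_{X} + \underbrace{\left[\risk_I(\weight_{t-1}) - \risk_I^*\right]}_{Y} + \underbrace{\left[\risk_I^* - \risk_b\right]}_{Z},
\]
where $\risk_I^* := \min_{\weight\in\funcclass}\risk_I(\weight)$, and bound $\expec{}{(X+Y+Z)_+} \le \expec{}{|X|} + \expec{}{Y} + \expec{}{Z_+}$. For $X$, the fresh batch $\Ex_t$ is independent of $\weight_{t-1}$, so $\expec{}{|X|} \le \expec{}{\sup_{\weight}|\risk_{\Ex_t}(\weight)-\risk_I(\weight)|} \le \strisk(m)/2$ by~\ref{eq:generalization}. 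For $Y \ge 0$, inserting $\risk_\sample(\weight_{t-1})$ and $\risk_\sample^*$ gives $Y = [\risk_I(\weight_{t-1}) - \risk_\sample(\weight_{t-1})] + \subopt_\sample(\weight_{t-1}) + [\risk_\sample^* - \risk_I^*]$, whose three parts are in expectation at most $\strisk(n)/2$ (by~\ref{eq:generalization}), $(1+o(1))\strisk(n)$ (by Lemma~\ref{lemma: subopt}), and $\le 0$ (since $\expec{}{\risk_\sample^*}\le\risk_I^*$ by Jensen), so $\expec{}{Y} \le (3/2+o(1))\strisk(n)$.

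The remaining term $Z$, the gap between the optimal expected risk and the best \emph{observed} batch risk $\risk_b = \min_{s}\risk_{\Ex_s}(\weight_{s-1})$, is the crux. Writing $D_s := \sup_{\weight}|\risk_{\Ex_s}(\weight)-\risk_I(\weight)|$, every observed batch satisfies $\risk_{\Ex_s}(\weight_{s-1}) \ge \risk_{\Ex_s}^* \ge \risk_I^* - D_s$, so taking the minimum over $s$ yields the clean deterministic bound $\risk_b \ge \risk_I^* - \max_s D_s$, i.e.\ $Z_+ \le \max_s D_s$. The difficulty is that $\risk_b$ is a running minimum over a growing number of batches and the realizing index is selected precisely where the empirical risk dipped most below the truth, so although $\expec{}{D_s}\le\strisk(m)/2$ for each fixed $s$ by~\ref{eq:generalization}, controlling $\expec{}{\max_s D_s}$ naively incurs a penalty growing with the number of batches rather than the single-batch $\strisk(m)/2$. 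I expect this selection-over-the-best-batch step to be the main obstacle; the observation that makes it tractable is that the target carries a full factor of $2$ on $\strisk(n)$ while $X$ and $Y$ together contribute only $\strisk(m)/2 + (3/2+o(1))\strisk(n)$, leaving slack of $\strisk(m)/2 + \tfrac12\strisk(n)$ to absorb $\expec{}{\max_s D_s}$. Establishing $\expec{}{\max_s D_s} \le \strisk(m)/2 + \tfrac12\strisk(n)$ in the recovered regime, and then summing the three terms, completes the proof.
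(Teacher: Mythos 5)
Your overall skeleton (Markov's inequality followed by a telescoping decomposition controlled by Equation~\ref{eq:generalization} and Lemma~\ref{lemma: subopt}) matches the paper's, and your treatment of the terms $X$ and $Y$ is fine. The gap is exactly where you flag it: the term $Z = \risk_I^* - \risk_b$. You reduce $Z_+$ to $\max_s D_s$ with $D_s = \sup_{\weight}|\risk_{\Ex_s}(\weight)-\risk_I(\weight)|$ and then need $\expec{}{\max_s D_s} \le \strisk(m)/2 + \tfrac12\strisk(n)$, but this is not available: the maximum runs over the $t - t_e$ batches seen so far, each of fixed size $m$, so the per-batch bound $\strisk(m)/2$ is a fixed constant while the number of batches grows without bound and $\tfrac12\strisk(n) \to 0$. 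Any standard maximal inequality (union bound, sub-Gaussian maxima) pays a factor that grows with the number of batches, so the slack you identify cannot absorb this term for large $t$. As written, the proof does not close.

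The paper sidesteps the maximum entirely by choosing a different anchor for $\risk_b$. Instead of comparing $\risk_b$ to the population optimum $\risk_I^*$, it uses $\risk_b \ge \risk_{\Ex_b}(\weight_b^*)$, where $\Ex_b$ is the single batch realizing the running minimum and $\weight_b^*$ is the empirical minimizer on that batch. It then telescopes $\risk_{\sample_{t_e,t}}(\weight_{t-1}) - \risk_{\Ex_b}(\weight_b^*)$ through $\risk_{\sample_{t_e,t}}(\weight_b^*)$ and $\risk_I(\weight_b^*)$: the model-comparison piece $\risk_{\sample_{t_e,t}}(\weight_{t-1}) - \risk_{\sample_{t_e,t}}(\weight_b^*)$ is at most $\subopt_{\sample_{t_e,t}}(\weight_{t-1})$ because $\risk_{\sample_{t_e,t}}(\weight_b^*) \ge \risk_{\sample_{t_e,t}}^*$, and the remaining two pieces are single applications of Equation~\ref{eq:generalization} at sizes $n_{t_e,t}$ and $m$, contributing $\strisk(n_{t_e,t})/2 + \strisk(m)/2$ with no maximum over batches. (One could still object that $\Ex_b$ is a selected batch, so even the paper's single-batch application of Equation~\ref{eq:generalization} to $\Ex_b$ carries a mild selection effect; but that decomposition at least reduces the issue to one term over one batch rather than an explicit $\max_s$.) If you replace your $Z$-step with this anchoring through $\weight_b^*$, the rest of your argument goes through and yields the stated constant.
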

\begin{proof}
In a recovered state, each point in $\Ex_t$ and $\sample_{t_e,t}$ is drawn from the same distribution $I$. Using Markov's inequality we have $\Pr[\risk_{\Ex_t}(\weight_{t-1}) - \risk_b > \delta]$
\begin{align*}
    &\leq \frac{1}{\delta}\left(\expec{}{\risk_{\Ex_t}(\weight_{t-1}) - \risk_b}\right) \\
    &= \frac{1}{\delta}\left(\expec{}{\risk_{\Ex_t}(\weight_{t-1}) -\risk_I(\weight_{t-1}) + \risk_I(\weight_{t-1}) - \risk_{\sample_{t_e,t}}(\weight_{t-1}) + \risk_{\sample_{t_e,t}}(\weight_{t-1}) - \risk_b}\right)\\
    &\leq \frac{1}{\delta}\left(\expec{}{|\risk_{\Ex_t}(\weight_{t-1}) -\risk_I(\weight_{t-1})|} + \expec{}{|\risk_I(\weight_{t-1}) - \risk_{\sample_{t_e,t}}(\weight_{t-1})|} + \expec{}{\risk_{\sample_{t_e,t}}(\weight_{t-1}) - \risk_b}\right)\\
    &\leq \frac{1}{\delta}\left(\frac{1}{2}\strisk(m) + \frac{1}{2}\strisk(n_{t_e,t}) + \expec{}{\risk_{\sample_{t_e,t}}(\weight_{t-1}) - \risk_b}\right)
\end{align*}
where the last inequality holds by Equation \ref{eq:generalization}. On the other hand, $\risk_b \geq \risk_{\Ex_b}(\weight_b^*)$, where $\Ex_b$ is the batch corresponding to $\risk_b$ and $\weight_b^*$ minimizes its empirical risk. Applying Equation \ref{eq:generalization} twice more, we have $\expec{}{\risk_{\sample_{t_e,t}}(\weight_{t-1}) - \risk_b}$
\begin{align*}
    &\leq \expec{}{\risk_{\sample_{t_e,t}}(\weight_{t-1}) - \risk_{\Ex_b}(\weight_b^*)}\\
    &= \expec{}{\risk_{\sample_{t_e,t}}(\weight_{t-1}) - \risk_{\sample_{t_e,t}}(\weight_b^*) + \risk_{\sample_{t_e,t}}(\weight_b^*) - \risk_I(\weight_b^*) + \risk_I(\weight_b^*)- \risk_{\Ex_b}(\weight_b^*)}\\
    &\leq \expec{}{\risk_{\sample_{t_e,t}}(\weight_{t-1}) - \risk_{\sample_{t_e,t}}(\weight_b^*)} + \expec{}{|\risk_{\sample_{t_e,t}}(\weight_b^*) - \risk_I(\weight_b^*)|} + \expec{}{|\risk_I(\weight_b^*)- \risk_{\Ex_b}(\weight_b^*)|}\\
    &\leq \expec{}{\risk_{\sample_{t_e,t}}(\weight_{t-1}) - \risk_{\sample_{t_e,t}}(\weight_b^*)} + \frac{1}{2}\strisk(m) + \frac{1}{2}\strisk(n_{t_e,t}).
\end{align*}
We know $\risk_{\sample_{t_e,t}}(\weight_b^*) \geq \risk_{\sample_{t_e,t}}^*$. Therefore, 
\begin{align*}
    \Pr[\risk_{\Ex_t}(\weight_{t-1}) - \risk_b > \delta] &\leq \frac{1}{\delta}\left(\strisk(m) + \strisk(n_{t_e,t}) + \expec{}{\risk_{\sample_{t_e,t}}(\weight_{t-1}) - \risk_{\sample_{t_e,t}}(\weight_b^*)}\right)\\
    &\leq \frac{1}{\delta}\left(\strisk(m) + \strisk(n_{t_e,t}) + \expec{}{\risk_{\sample_{t_e,t}}(\weight_{t-1}) - \risk_{\sample_{t_e,t}}^*}\right) \\
    &\leq \frac{1}{\delta}\left(\strisk(m) + \strisk(n_{t_e,t}) + (1+o(1))\strisk(n_{t_e,t})\right)
\end{align*}
where the last inequality holds following Lemma~\ref{lemma: subopt}. 
\end{proof}

Before moving on to bound the probability of entering the reactive state due to condition \ref{eq:enter_reactive_second} in Lemma \ref{lemma: noDrift-react-second}, we will need the following fact. The update process 
\incsaga is a stochastic optimization method that provides the following property at each iteration $i$:
\begin{align}
\label{eq:switch-cost}
    \expec{}{\risk_{\sample}(\weight_i) - \risk_{\sample}^*} \leq \min \begin{cases}
\rho_n[\risk_{\sample}(\weight_{i-1}) - \risk_{\sample}^*]\\
\min\limits_{k < n} [(\risk_{T'}(\weight_{i}) - \risk_{T'}^*) + \dfrac{n - k}{n} \strisk(k)]
\end{cases}  
\end{align}
where $\rho_n = 1 - \min(\frac{1}{n}, \frac{\mu}{L})$, $|\sample| = n$, and $T'\subset \sample$ is of size $k$ \cite{daneshmand2016starting, jothimurugesan2018variance}.

\begin{lemma}
\label{lemma: noDrift-react-second}
In a recovered state, at any time step $t$ the probability of entering the reactive state because of condition~\ref{eq:enter_reactive_second} is bounded by $(\strisk(m) + (2 + o(1))\strisk(n_{t_e,t}) + (2 + o(1))\strisk(n_{t_s,t}))/\delta'$, where $|\sample_{t_e,t}| = n_{t_e,t}, |\sample_{t_s,t}| = n_{t_s,t}$, and $\sample_{t_e,t}$ and $\sample_{t_s,t}$ are the stream segments that the predictive model and stable model of \dsurf are trained over.
\end{lemma}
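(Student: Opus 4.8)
The plan is to mirror the proof of Lemma~\ref{lemma: noDrift-react-first-general}. Condition~\ref{eq:enter_reactive_second} is exactly the event $\risk_{\Ex_t}(\weight_{t-1}) - \risk_{\Ex_t}(\weight''_{t-1}) > \delta'$, so I would first apply Markov's inequality to its positive tail, reducing the claim to the expectation bound $\expec{}{\risk_{\Ex_t}(\weight_{t-1}) - \risk_{\Ex_t}(\weight''_{t-1})} \leq \strisk(m) + (2+o(1))\strisk(n_{t_e,t}) + (2+o(1))\strisk(n_{t_s,t})$, after which dividing by $\delta'$ gives the stated probability.

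In a recovered state $\Ex_t$, $\sample_{t_e,t}$, and $\sample_{t_s,t}$ are all drawn from a single distribution $I$; let $\weight_I^* = \argmin_{\weight} \risk_I(\weight)$ and $\risk_I^* = \risk_I(\weight_I^*)$. I would then telescope the difference of batch risks through a chain of reference quantities. On the predictive-model side I pass from $\risk_{\Ex_t}(\weight_{t-1})$ to $\risk_I(\weight_{t-1})$ to $\risk_{\sample_{t_e,t}}(\weight_{t-1})$ to the empirical optimum $\risk_{\sample_{t_e,t}}^*$; on the stable-model side I pass downward from $\risk_{\Ex_t}(\weight''_{t-1})$ to $\risk_I(\weight''_{t-1})$ to $\risk_{\sample_{t_s,t}}(\weight''_{t-1})$ to $\risk_{\sample_{t_s,t}}^*$; and I bridge the two empirical optima through $\risk_I^*$. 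Each batch-vs-population gap ($\Ex_t$ against $I$) and population-vs-empirical gap is controlled by Equation~\ref{eq:generalization}, each contributing a $\tfrac12\strisk(\cdot)$ term; the predictive model's empirical sub-optimality $\risk_{\sample_{t_e,t}}(\weight_{t-1}) - \risk_{\sample_{t_e,t}}^*$ is bounded by $(1+o(1))\strisk(n_{t_e,t})$ via Lemma~\ref{lemma: subopt}; and the stable model's empirical risk is lower-bounded trivially using $\risk_{\sample_{t_s,t}}(\weight''_{t-1}) \geq \risk_{\sample_{t_s,t}}^*$. For the bridge I would use $\risk_{\sample_{t_e,t}}^* \leq \risk_{\sample_{t_e,t}}(\weight_I^*)$ together with Equation~\ref{eq:generalization} to upper bound $\expec{}{\risk_{\sample_{t_e,t}}^*}$ by $\risk_I^* + \tfrac12\strisk(n_{t_e,t})$, and $\risk_{\sample_{t_s,t}}^* = \risk_{\sample_{t_s,t}}(\weight_{\sample_{t_s,t}}^*) \geq \risk_I(\weight_{\sample_{t_s,t}}^*) - \tfrac12\strisk(n_{t_s,t}) \geq \risk_I^* - \tfrac12\strisk(n_{t_s,t})$ in expectation to lower bound $\expec{}{\risk_{\sample_{t_s,t}}^*}$. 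Collecting the $\tfrac12\strisk(m)$ contributions from the two batch gaps, the generalization and Lemma~\ref{lemma: subopt} terms on $n_{t_e,t}$, and the generalization terms on $n_{t_s,t}$ yields the stated coefficients.

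I expect the main obstacle to be the asymmetric handling of the two models. For the predictive model I can invoke Lemma~\ref{lemma: subopt} to bound its sub-optimality on its own training set directly, but the stable model enters only through the lower bound $\risk_{\sample_{t_s,t}}(\weight''_{t-1}) \geq \risk_{\sample_{t_s,t}}^*$, so the two distinct training sets $\sample_{t_e,t}$ and $\sample_{t_s,t}$ must be linked indirectly through the common population optimum $\risk_I^*$. Keeping the inequality directions consistent across this chain---upper-bounding everything on the predictive side and lower-bounding everything on the stable side---and carefully tallying each use of Equation~\ref{eq:generalization} is the delicate bookkeeping that produces the $\strisk(m)$, $\strisk(n_{t_e,t})$, and $\strisk(n_{t_s,t})$ terms. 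The coefficient $2+o(1)$ on $\strisk(n_{t_s,t})$ is a conservative accounting, since the tightest routing through $\risk_I(\weight''_{t-1}) \geq \risk_I^*$ would in fact require no dependence on $n_{t_s,t}$ at all; the stated form is nonetheless a valid upper bound and keeps the two models' contributions symmetric.
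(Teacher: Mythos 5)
Your proof is correct, but it routes the key step differently from the paper. Both arguments agree up through the reduction via Markov's inequality and the four applications of Equation~\ref{eq:generalization} that strip off the batch-vs-population and population-vs-empirical gaps (costing $\strisk(m) + \strisk(n_{t_e,t})$, or in your variant an extra $\strisk(n_{t_s,t})$ on the stable side). They diverge on how the two models are compared: the paper puts both models on the \emph{common} sample set $\sample_{t_e,t}$, writes $|\risk_{\sample_{t_e,t}}(\weight_{t-1}) - \risk_{\sample_{t_e,t}}(\weight''_{t-1})| \leq \subopt_{\sample_{t_e,t}}(\weight_{t-1}) + \subopt_{\sample_{t_e,t}}(\weight''_{t-1})$, and then invokes the optimizer property in Equation~\ref{eq:switch-cost} to transfer the stable model's sub-optimality from its own training set $\sample_{t_s,t}$ up to the superset $\sample_{t_e,t}$, incurring the $\frac{n_{t_e,t}-n_{t_s,t}}{n_{t_e,t}}\strisk(n_{t_s,t})$ term that produces the $(2+o(1))\strisk(n_{t_s,t})$ coefficient. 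You instead keep each model paired with its own training set, bound only the predictive model's sub-optimality via Lemma~\ref{lemma: subopt}, use the trivial lower bound $\risk_{\sample_{t_s,t}}(\weight''_{t-1}) \geq \risk^*_{\sample_{t_s,t}}$ on the stable side, and bridge the two empirical optima through $\risk_I^*$ with two more applications of Equation~\ref{eq:generalization}. Your route is purely statistical—it never needs Equation~\ref{eq:switch-cost} or any property of the update process beyond Lemma~\ref{lemma: subopt} on the predictive model—and, as you note, it actually yields the slightly tighter coefficient $1$ (or even $0$, bridging at the population level) on $\strisk(n_{t_s,t})$, which still implies the stated bound. One small caution: Markov's inequality cannot be applied to $\risk_{\Ex_t}(\weight_{t-1}) - \risk_{\Ex_t}(\weight''_{t-1})$ itself, since that difference can be negative; you need to apply it to a non-negative dominating quantity, either $|{\cdot}|$ as the paper does, or the sum of absolute gaps and non-negative sub-optimalities that your telescoping chain produces—your decomposition supplies such a dominator, so this is a phrasing fix rather than a gap.
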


\begin{proof}
Applying Markov's inequality,
\begin{align*}
\Pr[\risk_{\Ex_t}(\weight_{t-1}) - \risk_{\Ex_t}(\weight_{t-1}'') > \delta'] 
&\leq \Pr[|\risk_{\Ex_t}(\weight_{t-1}) - \risk_{\Ex_t}(\weight_{t-1}'')| > \delta'] \\
&\leq \frac{1}{\delta'}\left(\expec{}{|\risk_{\Ex_t}(\weight_{t-1}) - \risk_{\Ex_t}(\weight_{t-1}'')|}\right).
\end{align*}
Similar to the proof of Lemma \ref{lemma: noDrift-react-first-general}, we denote $I$ to be the identical distribution from which $\Ex_t$ and $\sample_{t_e,t}$ are drawn. Adding and subtracting the terms $\risk(\weight_{t-1}), \risk_{\sample_{t_e,t}}(\weight_{t-1}), \risk_{\sample_{t_e,t}}(\weight_{t-1}''),$ and $\risk(\weight_{t-1}'')$, and applying Equation \ref{eq:generalization} four times, we have
\begin{align*} 
\expec{}{|\risk_{\Ex_t}(\weight_{t-1}) - \risk_{\Ex_t}(\weight_{t-1}'')|} \leq \strisk(m) + \strisk(n_{t_e,t}) + \expec{}{|\risk_{\sample_{t_e,t}}(\weight_{t-1}) - \risk_{\sample_{t_e,t}}(\weight_{t-1}'')|}.
\end{align*}
To bound the last term, we use the property in Equation \ref{eq:switch-cost} to relate the sub-optimality over $\sample_{t_e,t}$ to the sub-optimality over $\sample_{t_s,t}$, and then use Lemma \ref{lemma: subopt} to bound the sub-optimality.
\begin{align*}
&\expec{}{|\risk_{\sample_{t_e,t}}(\weight_{t-1}) - \risk_{\sample_{t_e,t}}(\weight_{t-1}'')|} \\
&\leq \expec{}{|\risk_{\sample_{t_e,t}}(\weight_{t-1}) - \risk_{\sample_{t_e,t}}^*|} + \expec{}{|\risk_{\sample_{t_e,t}}^* - \risk_{\sample_{t_e,t}}(\weight_{t-1}'')|} \\
&\leq \expec{}{\risk_{\sample_{t_e,t}}(\weight_{t-1}) - \risk_{\sample_{t_e,t}}^*} + \expec{}{\risk_{\sample_{t_s,t}}(\weight_{t-1}'') - \risk_{\sample_{t_s,t}}^*} + \frac{n_{t_e,t} - n_{t_s,t}}{n_{t_e,t}} \strisk(n_{t_s,t}). \\
&\leq (1 + o(1))\strisk(n_{t_e,t}) + (2 + o(1))\strisk(n_{t_s,t}).
\end{align*}

\end{proof}

\begin{lemma}
\label{lemma: noDrift-react}
In a recovered state, the probability $\theta$ of entering the reactive state at any time step $t$ is bounded by $(\strisk(m) + (2 + o(1))\strisk(n_{t_e,t}))/\delta + (\strisk(m) + (2 + o(1))\strisk(n_{t_e,t}) + (2 + o(1))\strisk(n_{t_s,t}))/\delta'$, where $|\sample_{t_e,t}| = n_{t_e,t}, |\sample_{t_s,t}| = n_{t_s,t}$, $\sample_{t_e,t}$ and $\sample_{t_s,t}$ are the stream segments that the predictive model and stable model of \dsurf are trained over, and $m$ is the batch size at each time step.
\end{lemma}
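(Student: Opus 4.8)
The plan is to observe that, per Algorithm~\ref{algo:SR}, a \dsurf{} instance in the stable state transitions to the reactive state exactly when condition~\ref{eq:enter_reactive_first} holds \emph{or} condition~\ref{eq:enter_reactive_second} holds. Hence the event ``\dsurf{} enters the reactive state at time $t$'' is precisely the union of the two triggering events, and $\theta$ is the probability of this union.

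First I would set $E_1$ to be the event that condition~\ref{eq:enter_reactive_first} fires and $E_2$ the event that condition~\ref{eq:enter_reactive_second} fires, and apply Boole's inequality (the union bound): $\Pr[E_1 \cup E_2] \leq \Pr[E_1] + \Pr[E_2]$. The key point is that the union bound requires no independence between $E_1$ and $E_2$, which matters here because both conditions are evaluated against the same incoming batch $\Ex_t$ and against the same predictive model $\weight_{t-1}$, so the two events are statistically dependent and a tighter independence-based argument is unavailable.

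Next I would substitute the bounds already established in the two preceding lemmas. Lemma~\ref{lemma: noDrift-react-first-general} bounds $\Pr[E_1]$ by $(\strisk(m) + (2 + o(1))\strisk(n_{t_e,t}))/\delta$, and Lemma~\ref{lemma: noDrift-react-second} bounds $\Pr[E_2]$ by $(\strisk(m) + (2 + o(1))\strisk(n_{t_e,t}) + (2 + o(1))\strisk(n_{t_s,t}))/\delta'$. Summing these two bounds gives exactly the claimed expression for $\theta$.

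Since both component lemmas are already proved and the union bound is elementary, there is no genuine technical obstacle to this step; it is essentially bookkeeping. The only subtlety worth stating carefully is that entering the reactive state is a \emph{disjunction} (not a conjunction) of the two conditions, so that the union bound is the correct tool and the two per-condition probabilities should be \emph{added} rather than multiplied.
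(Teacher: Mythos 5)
Your proposal is correct and matches the paper's proof, which simply invokes Lemma~\ref{lemma: noDrift-react-first} and Lemma~\ref{lemma: noDrift-react-second} and adds the two bounds; you have merely made the implicit union bound explicit. No further comment is needed.
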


\begin{proof}
Using Lemma~\ref{lemma: noDrift-react-first} and Lemma~\ref{lemma: noDrift-react-second}.
\end{proof}

From here on, for simplicity of analysis, we use $\theta$ as a constant upper bound independent of $t$.

In Lemma \ref{lemma:risk-competitive-drift}, we establish the risk-competitiveness of the predictive model in the stable state. The following corollary analyzes how often \dsurf is in the stable state.
\begin{corollary}
\label{corollary:fraction-reactive-state}
In a recovered state, the limit of the expected fraction of time spent in the stable state is bounded below by $\frac{1}{1 + r\theta}$ as $t \rightarrow \infty$.
\end{corollary}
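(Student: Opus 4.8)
The plan is to bound, for each horizon $t$, the expected number of time steps spent in the stable state directly by a counting argument, and then read off the limiting fraction. First I would partition the first $t$ time steps of a recovered state into the stable steps and the reactive steps: let $X_t$ be the number of stable steps and $Y_t$ the number of reactive steps, so that $X_t + Y_t = t$. The key structural observation is that the reactive state is only ever entered \emph{from} a stable step at which condition~\ref{eq:enter_reactive_first} or~\ref{eq:enter_reactive_second} fires, and that each reactive period then lasts exactly $r$ time steps. Hence, if $N_t$ denotes the number of reactive entries (equivalently, reactive periods started) up to time $t$, every reactive period contributes at most $r$ reactive steps and so $Y_t \leq r\,N_t$.

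Next I would bound $\expec{}{N_t}$ against $\expec{}{X_t}$. Writing $N_t = \sum_s \mathbf{1}\{\text{stable at }s\}\,\mathbf{1}\{\text{trigger at }s\}$ and conditioning on the history up to each stable step $s$, the conditional probability of triggering a reactive entry at that step is at most $\theta$ (Lemma~\ref{lemma: noDrift-react}, with $\theta$ taken as the constant, $t$-independent upper bound). Summing the conditional expectations over the stable steps via the tower property gives $\expec{}{N_t} \leq \theta\,\expec{}{X_t}$, and therefore $\expec{}{Y_t} \leq r\theta\,\expec{}{X_t}$. Combining this with $X_t + Y_t = t$ yields
\[
\expec{}{X_t} = t - \expec{}{Y_t} \geq t - r\theta\,\expec{}{X_t},
\]
so that $(1 + r\theta)\,\expec{}{X_t} \geq t$, i.e.\ the expected stable fraction satisfies $\expec{}{X_t}/t \geq \tfrac{1}{1+r\theta}$ for every $t$; the stated limit lower bound follows immediately. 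Using $\theta$ as an \emph{upper} bound on the per-step entering probability is the correct direction here, since a larger entering probability produces more reactive periods and hence the smallest possible stable fraction.

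The main obstacle I anticipate is justifying the step $\expec{}{N_t} \leq \theta\,\expec{}{X_t}$ cleanly, because the actual per-step probability of entering the reactive state is \emph{not} constant: it depends on $n_{t_e,t}$, which shrinks the bound as the predictive model ages, so the process is time-inhomogeneous. This is exactly why the analysis fixed $\theta$ as a uniform, history-independent upper bound, which lets me dominate each conditional trigger probability by $\theta$ and sum safely. As a cross-check I would note the equivalent renewal-reward interpretation---stable periods stochastically dominate $\mathrm{Ge}(\theta)$ with $\expec{}{S} \geq 1/\theta$, reactive periods have fixed length $r$, and the long-run stable fraction is $\expec{}{S}/(\expec{}{S}+r) \geq \tfrac{1/\theta}{1/\theta+r} = \tfrac{1}{1+r\theta}$ by monotonicity of $x\mapsto x/(x+r)$---but I would favor the direct counting argument, since it gives the bound for all $t$ and sidesteps finiteness concerns about $\expec{}{S}$ when the aging model makes stable periods arbitrarily long.
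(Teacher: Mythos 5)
Your argument is correct, but it takes a genuinely different route from the paper. The paper's proof models the stable time steps as states of a Markov chain in which each state advances by $1$ (stay stable) with probability at least $1-\theta$ or jumps ahead by $r+1$ (enter the reactive state) with probability at most $\theta$, computes the expected hitting time $h$ of time $t$ by solving the linear recurrence $x_i = (1-\theta)x_{i+1} + \theta x_{i+r+1} + 1$ with $x_t = \dots = x_{t+r} = 0$, and reads off the stable fraction as $h/t \sim 1/(1+r\theta)$. Your proof replaces this with a direct expectation-counting argument: bounding the number of reactive entries by $\expec{}{N_t} \leq \theta\,\expec{}{X_t}$ via the tower property, multiplying by the fixed reactive length $r$, and solving the resulting inequality $(1+r\theta)\expec{}{X_t} \geq t$. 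What your approach buys is twofold: the bound holds for every finite $t$ rather than only asymptotically, and the time-inhomogeneity of the entering probability (which in reality decays as the predictive model ages) is handled cleanly by dominating each conditional trigger probability by the uniform constant $\theta$ before summing---whereas the paper's recurrence implicitly treats the chain as homogeneous and would need a separate monotonicity-in-$\theta$ argument to justify substituting the upper bound. One shared bookkeeping convention to be aware of: both your accounting and the paper's treat the step at which the trigger fires as a stable step followed by $r$ reactive steps, whereas in Algorithm~\ref{algo:SR} the trigger step is itself processed as the first reactive step; this shifts the constant only at order $r\theta^2$ and does not affect the stated bound. Your renewal-reward cross-check is also sound, and your reason for preferring the counting argument (no need for finiteness of the expected stable-period length) is well taken.
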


\begin{proof}
After $t$ time steps, we asymptotically bound the expected number of transitions to the reactive state via the hitting time in a Markov chain. The Markov chain has states $1, 2, \dotsc, t, \dotsc, t+r$, with initial state 1 and absorbing states $t, \dotsc, t+r$, where each state corresponds to a time step in the stable state. At each state $1 \leq i < t$, $i \rightarrow i + r + 1$ with probability at most $\theta$, and $ i \rightarrow i+1$ with probability at least $1 - \theta$.

The hitting time $h$ can be decomposed into the expected number of wins and losses $h = w + \ell$, where a win corresponds to a transition to the reactive state, and $t \leq w(r+1) + (h-w) < t + r$. From $h$ we can determine the number of wins $w \sim \frac{t-h}{r}$. Furthermore, for each win excluding the last one, $r$ time steps are spent in the reactive state. Thus, the fraction of time spent in the reactive state after $t$ time steps is asymptotically $wr/t \sim 1 - h/t$.

The hitting time can be found solving the following linear recurrence with constant coefficients, where $h = x_1$.
\begin{align*}
x_i &= (1-\theta)x_{i+1} + \theta x_{i+r+1} + 1 \\
x_{t}, x_{t+1}, \cdots, x_{t+r} &= 0.
\end{align*}
Solving the recurrence, we find $h \sim t/(1 + r\theta)$.

\end{proof}

\begin{lemma}
\label{lemma: noDrift-switch-general}
(Generalized statement of Lemma \ref{lemma: noDrift-switch}.)
In a recovered state, if \dsurf enters the reactive state, the probability of switching to the reactive model at the end of the reactive state is bounded by $be^{-(\frac{\beta}{r})}$, where $r$ is the length of the reactive state and $\beta$ is the number of time steps that the predictive model was around before entering the reactive state, i.e. $|\sample| = \beta\times m$.
\end{lemma}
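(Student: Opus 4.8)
The plan is to bound the probability of the switching condition~(\ref{eq:switch}), namely $\prob{\risk_T(\weight') < \risk_T(\weight)}$, by showing that in a recovered state the long-trained predictive model $\weight$ reliably outperforms the freshly-initialized reactive model $\weight'$ over $T$, with the failure probability decaying geometrically in the ratio $\beta/r$. First I would record that in a recovered state every point of $\sample$ and of $T$ is drawn from a common distribution $I$, and that the switch event is equivalent to $\subopt_T(\weight') < \subopt_T(\weight)$ since the common term $\risk_T^*$ cancels. The difficulty is that $\weight'$ and $\weight$ are trained on different (though overlapping) samples and with different amounts of computation, so their sub-optimalities over $T$ are not directly comparable.

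To make them comparable I would introduce the symmetrization suggested in the statement: let $T'$ denote the first $r\times m$ points of $\sample$, and let $\widetilde{\weight'}$ be a model trained over $T'$ by exactly the same \incsaga schedule (same random initialization $\weight_0$, same per-step computation) used to produce $\weight'$ over $T$. Because $T$ and $T'$ are both i.i.d.\ samples of size $rm$ from $I$ and the two training processes are identical in distribution, $\expec{}{\subopt_{T'}(\widetilde{\weight'})} = \expec{}{\subopt_T(\weight')}$. The purpose of this step is that $T' \subseteq \sample$, so $\widetilde{\weight'}$ and the predictive model $\weight$ can be placed on a common footing over the prefix $T'$, which the predictive model has also trained on.

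Next I would invoke the geometric convergence rate of \incsaga from~(\ref{eq:switch-cost}): on a sample whose size is bottlenecked by $rm$, each iteration contracts the expected sub-optimality by $\rho_{rm} = 1 - \min(\tfrac{1}{rm}, \tfrac{\mu}{L}) = 1 - \tfrac{1}{rm}$, where the last equality uses the assumption $m > L/(\mu r)$ so that $rm > L/\mu$. The predictive model $\weight$, having been present for $\beta$ time steps before the reactive state, has received on the order of $\beta m$ more gradient iterations over data from $I$ than the reactive twin $\widetilde{\weight'}$; applying the contraction over that iteration gap produces an expected-sub-optimality gap of order $(1 - \tfrac{1}{rm})^{\beta m} \approx e^{-\beta/r}$ between the two models on $T$. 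Applying Markov's inequality to the relevant nonnegative sub-optimality, together with the bias assumption $\expec{}{\subopt_\newsample(\weight)} \le b\,\expec{\newsample' \sim \sample}{\subopt_{\newsample'}(\weight)}$ (with $T$ playing the role of the suffix $\newsample$), which supplies the factor $b$ when passing from a random subsample to the suffix $T$, then converts this gap into the stated bound $\prob{\risk_T(\weight') < \risk_T(\weight)} \le b\, e^{-\beta/r}$.

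The main obstacle I anticipate is the coupling in the middle step: the predictive model $\weight$ is trained on all of $\sample$, hence correlated with both $T$ and $T'$, so the equality $\expec{}{\subopt_{T'}(\widetilde{\weight'})} = \expec{}{\subopt_T(\weight')}$ must be used only on the reactive side, while the geometric comparison of $\weight$ against $\widetilde{\weight'}$ has to be argued over the shared objective rather than by naively swapping samples. Getting the iteration-count bookkeeping right---so that the difference in training between the two models yields the clean exponent $\beta/r$ rather than a messier function of $\beta$, $r$, and $m$---is the delicate quantitative part; everything else reduces to Lemma~\ref{lemma: subopt}, the convergence property~(\ref{eq:switch-cost}), and Markov's inequality.
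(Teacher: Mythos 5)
Your proposal follows essentially the same route as the paper's proof: symmetrize the reactive model's training set to an equally-sized subsample of $\sample$ (the paper uses a random subsample $\widetilde{T}$ in the appendix and the prefix $T'$ in its own sketch, exactly as you do), compare the long-trained $\weight$ against this twin over a common set via the geometric contraction $\rho_{mr}=1-\frac{1}{mr}$ from Equation~\ref{eq:switch-cost} applied over the $\Theta(\beta m)$-iteration gap, and finish with Markov's inequality plus the bounded-bias assumption to supply the factor $b$. The iteration bookkeeping and the use of $m>L/(\mu r)$ to identify the contraction factor also match the paper, so this is the same argument.
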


\begin{proof}

The proof is through a series of reductions by rearranging of the order of the random sample of $\sample$ drawn from a single distribution in a recovered state. Recall $T$ be the set of samples that arrived during the reactive state. Let $\weight'_+$ be the reactive model with one additional iteration over $T$. Let $\widetilde{T}$ be a random sample of $mr$ elements from $\sample$, and let $\widetilde{\weight}$ be a model trained over $\widetilde{T}$, with one additional iteration.

By Equation \ref{eq:switch-cost}, $\subopt_T (\weight') \geq \frac{1}{\rho_{mr}} \expec{}{\subopt_T (\weight'_+)}$. By Markov's inequality, 
\begin{align*}
\prob{\risk_{T}(\weight) > \risk_{T}(\weight')} &= \prob{ \subopt_{T}(\weight) > \subopt_{T}(\weight') } \\
&\leq \prob{ \subopt_{T}(\weight) > \frac{1}{\rho_{mr}} \expec{}{\subopt_T (\weight'_+)} } \\
&\leq \rho_{mr} \frac{\expec{}{\subopt_T(\weight)}}{\expec{}{\subopt_T (\weight'_+)}} \\
&= \rho_{mr} \frac{\expec{}{\subopt_T(\weight)}}{\expec{}{\subopt_{\widetilde{T}} (\widetilde{\weight})}} \\
&\leq b \rho_{mr} \frac{\expec{}{\subopt_{\widetilde{T}}(\weight)}}{\expec{}{\subopt_{\widetilde{T}} (\widetilde{\weight})}} 
\end{align*}
where the last line follows from the assumption of bounded bias. Applying Equation \ref{eq:switch-cost}, $\expec{}{\subopt_{\widetilde{T}}(\weight)} \leq \rho_{mr}^{2m\beta - 1} \expec{}{\subopt_{\widetilde{T}}(\widetilde{\weight})}.$
Under the assumption that $m > L/(\mu r)$, and using $(1-\frac{1}{mr})^{mr} \leq 1/e$, the probability is bounded as $b \exp(-\beta/r)$.
\end{proof}

We can now prove Corollary \ref{corollary:noDrift-ave age-general}.
\begin{corollary}
\label{corollary:noDrift-ave age-general}
(Generalized statement of Corollary \ref{corollary:noDrift-ave age}.)
Let $t_r$ be the time step \dsurf enters a recovered state after a drift at time $t_{d_i}$. With probability $1 - \epsilon$, the size of the sample set $\sample$ for the predictive model in the stable state is larger than $n_{t_r,t}/2$ at any time step $t_r + 2r \ln \left(\frac{2(br\theta)^2}{\epsilon-br\theta} \right) \leq t < t_{d_{i+1}}$, where $n_{t_r,t}$ is the total number of data points that arrived from time $t_r$ until time $t$ and $r$ is the length of the reactive state.
\end{corollary}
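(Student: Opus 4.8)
The plan is to translate the claim about the sample-set size into a statement about switch events, and then bound the probability of a ``recent'' switch using the exponentially-decaying switch probability of Lemma~\ref{lemma: noDrift-switch-general}. Writing $t_e$ for the time step at which the current predictive model was initialized, I first observe that its sample set has size $|\sample| = n_{t_e,t} = m(t - t_e)$: every arriving batch is added to $\sample$ in both the stable and the reactive state, and $\sample$ (equivalently $t_e$) is reset only when \dsurf{} switches to the reactive model at the end of a reactive state. Hence $n_{t_e,t} > n_{t_r,t}/2$ is equivalent to $t_e < (t + t_r)/2$, so the desired event holds exactly when no switch occurs whose reactive state starts in the recent half $W = \big((t+t_r)/2,\, t\big]$ of the recovered period. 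It therefore suffices to show $\prob{\exists \text{ a switch with reactive state starting in } W} \le \epsilon$.

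Next I would bound the probability of such a recent switch. A switch at a reactive state starting at time $s$ needs two ingredients: entering the reactive state, which in a recovered state is a false positive with probability at most $\theta$ per step (Lemma~\ref{lemma: noDrift-react}), and then switching at the end, which by Lemma~\ref{lemma: noDrift-switch-general} has probability at most $b\,e^{-\beta_s/r}$, where $\beta_s$ is the age in time steps of the predictive model when that reactive state begins. Bounding the bad event by the expected number of switches in $W$ contributes a sum of terms $\theta\,b\,e^{-\beta_s/r}$ over $s \in W$. The geometric sum $\sum_{a\ge 1}\theta\,b\,e^{-a/r}\approx br\theta$ controls the total, while the threshold $t \ge t_r + 2r\ln\!\big(2(br\theta)^2/(\epsilon - br\theta)\big)$ forces $\tau = t - t_r$ to be large enough that a model which has \emph{not} switched since $t_r$ has age at least $\tau/2$ throughout $W$, so it contributes only the decaying factor $e^{-\tau/(2r)}$. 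Inverting the threshold then yields the claimed $1-\epsilon$ guarantee.

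The hard part will be that $\beta_s$ is itself random, determined by the history of prior switches, so the switch events in $W$ are not independent and Lemma~\ref{lemma: noDrift-switch-general} cannot be invoked with a fixed age. A recent switch can arise in two ways: either the model has survived since $t_r$ (age $\ge \tau/2$), in which case the exponential decay supplies the $e^{-\tau/(2r)}$ factor, or an earlier switch has already reset the model to a young age, in which case that earlier switch is itself a rare event contributing a further factor of order $br\theta$. Disentangling these cases---either by a union bound over the first switch past the midpoint together with a self-referential (recursive) bound on the expected number of switches, or by splitting $W$ at the midpoint and treating the ``old model'' and ``already-switched model'' regimes separately---is the crux, and it is what produces a floor term of order $br\theta$ together with a decaying correction of order $(br\theta)^2 e^{-\tau/(2r)}$, matching the requirement $\epsilon > br\theta$ and the logarithmic time threshold in the statement.
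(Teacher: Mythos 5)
Your proposal is correct and follows essentially the same route as the paper's proof: both reduce the claim to the event that no model switch originates in the recent half of the recovered period, bound each per-step switch probability by $\theta\cdot b e^{-\beta/r}$, handle the randomness of the age $\beta$ by conditioning on whether the model has survived from $t_r$ or was reset by an earlier (rare) switch, and sum the geometric decay to obtain the $br\theta + (br\theta)^2 e^{-t'/(2r)}$-type bound that inverts to the stated time threshold. The paper implements your ``split $W$ at the midpoint'' option concretely by conditioning on the age at the intermediate time $\tfrac{k-1}{k}t'+i$ with $k=2$, $i=t'/4$, and replacing the resulting sums by integrals, so the two arguments coincide.
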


\begin{proof}
Let $t' = t - t_r$. For $t' < t_{d_{i+1}}$, \dsurf is in a recovered state and we can apply Lemma \ref{lemma: noDrift-switch-general} as follows. $\Pr[|\sample|>\frac{mt'}{k}]$ 
\begin{align*}
    &\geq \Pr[\text{not switching in the last $t'/k$ time steps}]\\
    &\geq \Pr[\text{not switching in the last $t'/k - i$ time steps}| \beta_{(\frac{k-1}{k})t'+i} > i] \times \Pr[\beta_{(\frac{k-1}{k})t'+i} > i]\\
    &= \Pr[\beta_{(\frac{k-1}{k})t'+i} > i] \times \prod_{j \in (t' - t'/k + i, t']} \Pr[\text{not switching at time step $j$} | \beta_j > j - (\frac{k-1}{k})t'] \\
    &\geq \Pr[\beta_{(\frac{k-1}{k})t'+i} > i] \times \left(1 - \sum_{j\in(t'-t'/k+i,t']} \Pr[\text{switching at time $j$}|\beta_j > j - (\frac{k-1}{k})t']\right)
\end{align*}
where $\beta_j$ is the age of the expert at time step $j$ and last inequality holds following Weierstrass inequality. Let $ \gamma_i = \Pr[\beta_{(\frac{k-1}{k})t'+i} \geq i] $, thus, 
\begin{align*}
   \Pr[|\sample|>\frac{mt'}{k}] &\geq \gamma_i \times \left(1 - \sum_{j\in(t'-t'/k+i,t']} \Pr[\text{switching at time $j$}|\beta_j > j - (\frac{k-1}{k})t']\right)\\
    &\geq \gamma_i \times \left(1 -\sum_{\beta_j=i+1}^{\frac{t'}{k}} b\theta \exp\left(-\frac{\beta_j}{r}\right)\right) = \gamma_i \times \left(1 - b\theta \sum_{\beta_j=i+1}^{\frac{t'}{k}} \exp\left(-\frac{\beta_j}{r}\right)\right)
\end{align*}
The sum is the lower Riemann sum of the decreasing function $\exp(-\beta_j/r)$ over the partitioning of the interval $I$ = ($i,t'/k$] into unit subintervals. The lower Riemann sum is upper bounded by the area under the curve of $\exp(-\beta_j/r)$ over $I$. Continuing,
\begin{align*}
\Pr[|\sample|>\frac{mt'}{k}] &\geq \gamma_i \times \left(1 - b\theta \int_{i}^{\frac{t'}{k}} \exp(-\beta_j/r) \mathrm{~d}\beta_j \right) 
\end{align*}
On the other hand, we know that at any time step $j > r$ we have $\beta_j > r$. Thus,
\begin{align*}
    \gamma_i &= \Pr[\text{not switching between $(\frac{k-1}{k})t' + r$ and $(\frac{k-1}{k})t'+i$ time steps}] \\
    &= \prod_{j \in ((\frac{k-1}{k})t' + r, (\frac{k-1}{k})t'+i]} \Pr[\text{not switching at time step $j$} | \beta_j > j - (\frac{k-1}{k})t']\\
    &\geq 1 - \sum_{j \in (t'-t'/k + r, t'-t'/k+i]} \Pr[\text{switching at time $j$}| \beta_j > j - (\frac{k-1}{k})t']\\
    &\geq \left(1 - b \theta \int_{r}^{i} \exp(-\beta_j/r) \mathrm{~d}\beta_j \right)
\end{align*}
Therefore,
\begin{align*}
    \Pr[|\sample|>\frac{mt'}{k}] &\geq \left(1 - br\theta - (br\theta)^2\exp(-\frac{t'}{kr}) - (br\theta)^2\exp(-\frac{2i}{r})\right)
\end{align*}
For the choice $k = 2$ and $i = \frac{t'}{2k}$, we get $\Pr[|\sample|>\frac{mt'}{2}] \geq  (1 - \epsilon)$ provided $t' \geq 2r \ln \left(\frac{2(br\theta)^2}{\epsilon-br\theta} \right)$.
\end{proof}

\subsection{In Presence of Abrupt Drifts}
\label{sec:abrupt-drift}

For the case of abrupt drift, we first bound the recovery time for \dsurf through Lemmas \ref{lemma:NumOfReactiveState} and \ref{lemma:recoverytime}, and then establish risk-competitiveness after recovery in Lemma \ref{lemma:risk-competitive-drift}. To simplify the analysis, we use two parameters, $p$ and $p^*$, where $p$ represents the probability that \dsurf enters the reactive state at a time step after the drift and $p^*$ represents the probability that \dsurf switches to the reactive model at the end of the reactive state. These two parameters are hard to analytically estimate because they depend on the magnitudes and frequencies at which concept drifts occur (which in turn impact the age of the stable model and consequently its risk), for which there is no agreed upon model. Additionally, the reactive and stable models are trained over points drawn from different distributions, and so $p^*$, which is determined by the difference in risks of the two models, may depend on the different \emph{approximation errors} for each distribution (the approximation error is the optimal expected risk within the function class $\funcclass$). By parameterizing $p, p^*$, we are able to show the general results in this section without making too simplistic assumptions about concept drifts.  (Note that such concerns did not arise in the simpler setting of a stationary environment in Section~\ref{sec:stationary}.)

\begin{lemma}
\label{lemma:NumOfReactiveState}
With probability $1 - \epsilon$, the number of times \dsurf enters the reactive state before recovering from a drift is less than $\frac{1}{p^*} + \sqrt{\frac{1-\epsilon}{\epsilon} (\frac{1-p^*}{{p^*}^2})}$.
\end{lemma}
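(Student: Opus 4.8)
The plan is to recognize the number of reactive-state entries before recovery as a geometric random variable and then apply Cantelli's (one-sided Chebyshev) inequality to obtain the stated high-probability bound. First I would argue that recovery from the drift at $t_{d_i}$ happens precisely at the first reactive state whose end triggers a switch to the reactive model via condition~\ref{eq:switch}: that reactive model was initialized upon entering the reactive state after the drift and is trained solely over post-drift data, so a switch to it is exactly what \dsurf needs to satisfy the recovery definition. By the definition of $p^*$ as the probability that \dsurf switches to the reactive model at the end of a given reactive state, and modeling successive reactive states as independent Bernoulli trials each succeeding (switching) with probability $p^*$, the number of times \dsurf enters the reactive state up to and including the first switch is a geometric random variable $K \sim Ge(p^*)$ supported on $\{1, 2, \dotsc\}$.

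Next I would record the first two moments of $K$, namely $\expec{}{K} = 1/p^*$ and $\var{K} = (1-p^*)/(p^*)^2$, and apply Cantelli's inequality in the form $\prob{K - \expec{}{K} \geq \lambda} \leq \var{K}/(\var{K} + \lambda^2)$ for $\lambda > 0$. Setting the right-hand side equal to $\epsilon$ and solving gives $\lambda = \sqrt{\tfrac{1-\epsilon}{\epsilon}\,\var{K}} = \sqrt{\tfrac{1-\epsilon}{\epsilon}\bigl(\tfrac{1-p^*}{(p^*)^2}\bigr)}$. Substituting $\expec{}{K} = 1/p^*$ then yields, with probability $1-\epsilon$, the bound $K < \tfrac{1}{p^*} + \sqrt{\tfrac{1-\epsilon}{\epsilon}\bigl(\tfrac{1-p^*}{(p^*)^2}\bigr)}$, which is exactly the claim. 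The tail argument itself is a direct, verbatim application of Cantelli and requires no additional estimates.

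The main obstacle is thus not the probabilistic tail bound but justifying that the per-reactive-state switch probability may be treated as a fixed $p^*$ across independent trials. In practice the age of the competing predictive model grows with each additional stable period, so the true per-state switch probability need not be constant, as the discussion preceding the lemma already flags. I would resolve this by taking $p^*$ to be a uniform (worst-case) value for the per-state switch probability, so that the true count $K$ is stochastically dominated by $Ge(p^*)$ and the Cantelli tail bound transfers to $K$; alternatively, one simply invokes the paper's parameterization assumption that each reactive state switches with probability $p^*$ and treats the trials as i.i.d. Either route reduces the statement to the geometric-plus-Cantelli computation above.
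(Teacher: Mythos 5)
Your proposal is correct and follows essentially the same route as the paper: the paper likewise models the number of reactive-state entries before recovery as a random variable with mean $1/p^*$ and variance $(1-p^*)/{p^*}^2$ (i.e., geometric with parameter $p^*$) and applies Cantelli's inequality, choosing $\lambda$ so that the tail probability equals $\epsilon$. Your added care in justifying the i.i.d.\ Bernoulli modeling of successive reactive states (via stochastic dominance or the paper's parameterization of $p^*$) is a point the paper's own proof leaves implicit.
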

\begin{proof}
Let $X$ be a random variable denoting the number of times \dsurf enters the reactive state after a drift and before recovering from it. Using Cantelli's inequality for any real number $\lambda> 0$, we have:
\[
\Pr[X - \mu \geq \lambda] \leq \frac{\sigma^2}{\sigma^2+\lambda^2}
\]
where $\mu = \expec{}{X} = \frac{1}{p^*}$ and $\sigma^2 = \var{X} = \frac{1-p^*}{{p^*}^2}$. Let $\lambda = \frac{k}{p^*}$, therefore,
\begin{align*}
    \Pr[X \geq \frac{(k+1)}{p^*}] \leq \frac{1}{1+\frac{k^2}{1-p^*}} \leq \epsilon.
\end{align*}
\end{proof}

Using Lemma~\ref{lemma:NumOfReactiveState}, w.h.p. we can estimate the recovery time of \dsurf as follows:

\begin{lemma}
\label{lemma:GeometricBound}
Let $X = \sum_{i=1}^{k}X_i$, where $k \geq 1$ and $X_i$ for $i = 1, ..., k$, are independent geometric random variables distributed $X_i \sim Ge(p)$ and $\expec{}{X} = \frac{k}{p}$. For any $\lambda \geq 1$, we have:
\begin{align*}
    \Pr\left[X \geq \frac{\lambda k}{p}\right] \leq e^{-k(\frac{\lambda}{2} - \ln{2})}
\end{align*}
\end{lemma}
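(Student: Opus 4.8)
The plan is to use the standard Chernoff (exponential Markov) method, since $X$ is a sum of $k$ independent geometric variables. First I would recall that for $X_i \sim Ge(p)$ (number of trials to the first success, so $\expec{}{X_i} = 1/p$, consistent with $\expec{}{X} = k/p$), the moment generating function is $\expec{}{e^{sX_i}} = \frac{pe^s}{1 - (1-p)e^s}$, valid whenever $s < -\ln(1-p)$. By independence, $\expec{}{e^{sX}} = \left(\frac{pe^s}{1-(1-p)e^s}\right)^k$, so Markov's inequality applied to $e^{sX}$ gives, for any admissible $s > 0$,
\[
\Pr\left[X \geq \frac{\lambda k}{p}\right] \leq e^{-s\lambda k/p}\left(\frac{pe^s}{1-(1-p)e^s}\right)^k.
\]

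The crux is choosing $s$ so that the right-hand side collapses to the stated clean bound. Rather than optimizing exactly (which produces an unwieldy negative-binomial tail), I would take the suboptimal but convenient value $s = p/2$. This is admissible because $p/2 < p \leq -\ln(1-p)$, and it makes the leading factor exactly $e^{-s\lambda k/p} = e^{-\lambda k/2}$, which supplies the $\lambda/2$ term of the target exponent. It then only remains to control the per-term MGF evaluated at $s = p/2$.

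The key technical step is the elementary inequality $\frac{pe^{p/2}}{1-(1-p)e^{p/2}} \leq 2$ for all $p \in (0,1]$. Clearing the (positive) denominator, this is equivalent to $(2-p)e^{p/2} \leq 2$, i.e.\ to $p/2 \leq \ln\frac{2}{2-p}$. I would verify this by setting $\phi(p) := \ln\frac{2}{2-p} - \frac{p}{2}$ and noting $\phi(0) = 0$ together with $\phi'(p) = \frac{p}{2(2-p)} \geq 0$ on $[0,1]$, so $\phi \geq 0$ throughout. Substituting the two bounds back yields
\[
\Pr\left[X \geq \frac{\lambda k}{p}\right] \leq \left(e^{-\lambda/2}\cdot 2\right)^k = e^{-k(\lambda/2 - \ln 2)},
\]
as claimed. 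The only real obstacle is the guess $s = p/2$ paired with the inequality $(2-p)e^{p/2}\le 2$; everything else is the routine Chernoff template. (Consistent with its intended use for large deviations, the bound is nonvacuous only once $\lambda > 2\ln 2$.)
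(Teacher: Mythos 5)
Your proposal is correct and is essentially the paper's own argument: the paper simply cites the proof of Theorem 2.1 of Janson's tail bounds for sums of geometric variables with the parameter $t$ set to $p/2$, which is exactly the Chernoff computation you carry out (your inequality $(2-p)e^{p/2}\le 2$ and the admissibility check $p/2 < -\ln(1-p)$ are the details that citation delegates). No gap.
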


\begin{proof}
Similar to the proof of Theorem 2.1 in \cite{janson2018tail} and by setting parameter $t$ (defined in their proof) to $\frac{p}{2}$.
\end{proof}

We can now prove Lemma~\ref{lemma:recoverytime} from Section~\ref{sec:analysis-abrupt-drift}.
\begin{replemma}{lemma:recoverytime}
With probability $1-\epsilon$, the recovery time of \dsurf is bounded by $kr + \frac{2}{p}(\ln{\frac{1}{\epsilon_1}} + k\ln{2})$, where $k < \frac{1}{p^*} + \sqrt{\frac{1-\epsilon_2}{\epsilon_2} (\frac{1-p^*}{{p^*}^2})}$ is the number of times \dsurf enters the reactive state before recovering from drift, and $\epsilon = \epsilon_1 + \epsilon_2$.
\end{replemma}

\begin{proof}
Let $X = \sum_{i=1}^{k}X_i$, where $k \geq 1$ and $X_i$ for $i = 1, ..., k$, are independent geometric random variables with distributions: $X_i \sim Ge(p)$. 
Using Lemma \ref{lemma:GeometricBound} for $\lambda = 1$ we have:
\[
\Pr\left[X \geq \frac{k}{p}\right] \leq e^{-k(\frac{1}{2} - \ln{2})}
\]
Therefore, with probability $ 1 - \epsilon_1$, we have $ X < \frac{2}{p}(\ln{\frac{1}{\epsilon_1}} + k\ln{2})$. Consequently, w.h.p. the total number of time steps before recovering from the drift will be less than $kr + \frac{2}{p}(\ln{\frac{1}{\epsilon_1}} + k\ln{2})$. Besides, using Lemma~\ref{lemma:NumOfReactiveState} we have with probability $1 - \epsilon_2$, $k < \frac{1}{p^*} + \sqrt{\frac{1-\epsilon_2}{\epsilon_2} (\frac{1-p^*}{{p^*}^2})}$.
\end{proof}

With the preceding lemmas, we can now establish the risk-competitiveness of \dsurf in the stationary period between abrupt drifts at times $t_{d_i}$ and $t_{d_{i+1}}$. The full proof is given in Section \ref{sec:analysis}. Note that if two drifts occur rapidly in succession, the condition in Lemma \ref{lemma:risk-competitive-drift} of $t_{d_i} + l < t < t_{d_{i+1}}$ may correspond to an empty domain if the recovery time bound of \dsurf exceeds the gap between the drifts.

\begin{replemma}{lemma:risk-competitive-drift}
With probability $1 - \epsilon$, the predictive model of \dsurf in the stable state is $\frac{7}{4^{1-\alpha}}$-risk-competitive with \aware at any time step $t_{d_i} + l + \max\left(l, 2r \ln \left(\frac{2(br\theta)^2}{\epsilon_3-br\theta} \right) \right) \leq t < t_{d_{i+1}}$, where $t_{d_i}$ is the time step of the most recent drift, $l = kr + \frac{2}{p}(\ln{\frac{1}{\epsilon_1}} + k\ln{2})$ where $k < \frac{1}{p^*}+ \sqrt{\frac{1-\epsilon_2}{\epsilon_2} (\frac{1-p^*}{{p^*}^2})}$, and $\epsilon = \epsilon_1 + \epsilon_2 + \epsilon_3$.\end{replemma}
\begin{proof}
Based on Lemma~\ref{lemma:recoverytime}, with probability $1 - \epsilon$, \dsurf recovers from drift after $l =  kT' + \frac{2}{p}(\ln{\frac{1}{\epsilon_1}} + k\ln{2})$ time steps, where $k < \frac{1}{p^*} + \sqrt{\frac{1-\epsilon_2}{\epsilon_2} (\frac{1-p^*}{{p^*}^2})}$, and $\epsilon = \epsilon_1 + \epsilon_2$. After recovering from the drift, the situation is similar to the stationary case. Let $t_r$ be the time step that \dsurf recovers from the most recent drift at time $t_d = t_{d_i}$. Also, let $t_e$ be the time step that the current predictive model was initialized.

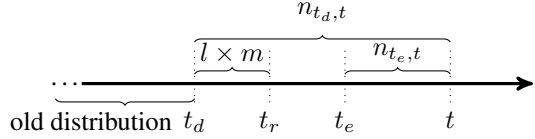
\begin{figure}[H]
\centering
\begin{tikzpicture}[
    scale=1,
    axis/.style={very thick, ->, >=stealth'},
    dashed line/.style={dashed, very thick},
    ]

\draw[axis] (0,0) -- (6,0) ;
\draw[dotted] (1.5,-0.25) -- (1.5,0.5);
\draw[dotted] (2.5,-0.25) -- (2.5,0.5);
\draw[dotted] (3.5,-0.25) -- (3.5,0.5);
\draw[dotted] (4.9,-0.25) -- (4.9,0.5);

\draw
    (-0.2,0) node[auto=false]{\ldots} (0,0)
    (1.5,-0.25) node[anchor=north] {$t_d$}
    (2.5,-0.25) node[anchor=north] {$t_r$}
    (3.5,-0.25) node[anchor=north] {$t_e$}
    (4.9,-0.25) node[anchor=north] {$t$}

	[decoration={brace, mirror, raise=0.2cm}, decorate] (-0.35,0) -- (1.5,0)
    (0.125, -0.25) node[anchor=north]{old distribution};
    
\draw 
    [decoration={brace}, decorate] (1.5,0.125)--(2.5,0.125)
    (2,0.15) node[anchor=south] {$l\times m$};

\draw
    [decoration={brace}, decorate] (3.5,0.125)--(4.9,0.125)
    (4.2,0.15) node[anchor=south] {$n_{t_e,t}$};
    
\draw 
    [decoration={brace, raise=0.2cm}, decorate] (1.5,0.35)--(4.9,0.35)
    (3.2,0.7) node[anchor=south] {$n_{t_d,t}$};

\end{tikzpicture}
\caption{A drift happens at time $t_d$. \dsurf recovers by time $t_r$. The current predictive model is initialized at time $t_e$.}
\label{fig:drift}
\end{figure}

To show \dsurf is $\frac{7}{4^{1-\alpha}}$-risk-competitive to \aware, we want to show $n_{t_e,t} \geq \frac{n_{t_d,t}}{4}$. 
Using Corollary~\ref{corollary:noDrift-ave age-general}, w.p. $1 - \epsilon_3$ we have $n_{t_e,t} \geq \frac{n_{t_r,t}}{2}$ at any time step $t$ such that $t_r + 2r \ln \left(\frac{2(br\theta)^2}{\epsilon_3-br\theta} \right) \leq t < t_{d_{i+1}}$, where $r$ is the length of reactive state and $\theta$ is the false positive rate of entering the reactive state. Therefore, $n_{t_e,t} \geq n_{t_r,t_e}$. On the other hand, we have
\begin{align*}
    n_{t_e,t} &= n_{t_d,t} - n_{t_d,t_r} - n_{t_r,t_e} \\
    &= n_{t_d,t} - l \times m - n_{t_r,t_e} \geq n_{t_d,t} - l \times m - n_{t_e,t}.
\end{align*}
Also, at any time step $t$ such that $ t - t_d \geq l + \max\left(l,2r \ln \left(\frac{2(br\theta)^2}{\epsilon_3-br\theta} \right) \right)$, we have $t - t_d \geq 2l$. Therefore, 
\begin{align*}
    2n_{t_e,t} \geq n_{t_d,t} - l\times m \geq \frac{n_{t_d,t}}{2}.
\end{align*}
It remains to bound the expected sub-optimality over $\sample_{t_d,t}$. Lemma~\ref{lemma: subopt} bounds the expected sub-optimality over $\sample_{t_e,t}$ as $(1 + o(1))\strisk(n_{t_e,t})$, and Equation~\ref{eq:switch-cost} relates the expected sub-optimality over $\sample_{t_e,t}$ to the expected sub-optimality over $\sample_{t_d,t}$:
\begin{align*}
    \expec{}{\subopt_{\sample_{t_d,t}}(\dsurf)} &\leq 
    (1 + o(1))\strisk\left(\frac{n_{t_d,t}}{4}\right) +  \frac{n_{t_d,t} - n_{t_d,t}/4}{n_{t_d,t}} \strisk\left(\frac{n_{t_d,t}}{4}\right) \\
    &\leq \frac{7}{4^{1-\alpha}} (1 + o(1))\strisk(n_{t_d,t}). \qedhere
\end{align*}
\end{proof}

Corollary \ref{corollary:noDrift-worst age} guarantees a minimum risk-competitiveness.

\begin{corollary}
\label{corollary:noDrift-worst age}
At any time step $t > r$, the size of sample set $\sample$ for the predictive model in the stable state is larger than $r\times m$, where $r$ is the length of the reactive state. Therefore, \dsurf is at worst $2(\frac{t}{r})^\alpha$-risk-competitive with \aware.
\end{corollary}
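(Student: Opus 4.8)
The plan is to prove the two assertions in turn: first a deterministic lower bound of $rm$ on the size of the predictive model's sample set for every $t>r$, and then a routine risk-competitiveness computation that feeds this worst-case size into Lemma~\ref{lemma: subopt} and Equation~\ref{eq:switch-cost}. The first assertion is the structural heart of the argument; the second is essentially mechanical and mirrors the computation already done in Lemma~\ref{lemma:risk-competitive-drift}, only with cruder bounds.

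For the lower bound I would argue by tracking the only way the predictive model's sample set $\sample$ can shrink. The single event that replaces the predictive model is a switch at the end of a reactive state (the step $\weight_t \leftarrow \weight'_t,\ \sample \leftarrow \sample'$ in Algorithm~\ref{algo:SR}), triggered by condition~\ref{eq:switch}; at every other time step $\sample$ only grows, since Update appends $\Ex_t$ to $\sample$ in both the stable and reactive branches. This motivates a two-case split. If no switch has occurred by time $t$, the predictive model is the original one, so $|\sample| = tm > rm$ because $t>r$. If a switch has occurred, the current predictive model is a former reactive model $\weight'$: it was created with $\sample' = \emptyset$ upon entering its reactive state and received exactly one batch at each of the $r$ steps of that state, so $|\sample'| = rm$ at the instant of the switch, and $\sample$ only grows thereafter. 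In either case $|\sample| \geq rm$, which is all the risk bound requires.

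For the competitiveness claim, write $n = n_{t_d,t}$ for the number of post-drift points and $k = |\sample| \geq rm$ for the predictive model's sample-set size, noting that in a recovered state $\sample \subseteq \sample_{t_d,t}$. Applying Equation~\ref{eq:switch-cost} with full set $\sample_{t_d,t}$ and subset $\sample$ gives $\expec{}{\subopt_{\sample_{t_d,t}}(\dsurf)} \leq \expec{}{\subopt_{\sample}(\weight)} + \frac{n-k}{n}\strisk(k)$. Bounding the first term by $(1+o(1))\strisk(k)$ via Lemma~\ref{lemma: subopt} and the fraction $\frac{n-k}{n}$ by $1$ yields $(2+o(1))\strisk(k)$. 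Finally I would convert $\strisk(k)$ into a multiple of $\strisk(n)$ using $\strisk(k)/\strisk(n) = (n/k)^\alpha$ together with $k \geq rm$ and $n = m(t-t_d) < mt$, so that $(n/k)^\alpha \leq (t/r)^\alpha$. Collecting terms gives $\expec{}{\subopt_{\sample_{t_d,t}}(\dsurf)} \leq 2(t/r)^\alpha (1+o(1))\strisk(n_{t_d,t})$, which by Definition~\ref{defn:competitiveness} is exactly $2(t/r)^\alpha$-risk-competitiveness.

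The main obstacle lies entirely in the first part: one must be confident that $rm$ genuinely is the worst case, i.e.\ that the predictive model can never be replaced by a set smaller than a full reactive state's worth of data. This rests on two facts about Algorithm~\ref{algo:SR}: (i) a reactive model accumulates exactly one batch per reactive time step and a switch is permitted only when $\rcount == r$, after a complete $r$-step reactive state; and (ii) no other step ever decreases $|\sample|$. Once this is secured, the second part is just a coarser version of the calculation in Lemma~\ref{lemma:risk-competitive-drift} — the only difference being that here we use $\frac{n-k}{n}\leq 1$ and $k \geq rm$ rather than the sharper $k \geq n/4$ available after recovery, which is precisely why the clean constant $\tfrac{7}{4^{1-\alpha}}$ degrades to the time-dependent worst-case factor $2(t/r)^\alpha$.
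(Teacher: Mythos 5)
Your proposal is correct and follows essentially the same route as the paper, which simply notes that the predictive model can only be replaced at the end of a full $r$-step reactive state (hence $|\sample| \geq rm$) and defers the rest to the computation in Lemma~\ref{lemma:risk-competitive-drift}. You have merely made explicit the two-case structural argument and the coarser $\frac{n-k}{n}\leq 1$, $k\geq rm$ substitution that the paper leaves implicit.
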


\begin{proof}
The proof is similar to the proof of Lemma \ref{lemma:risk-competitive-drift} and is a consequence of the algorithm's design as \dsurf may change its predictive model only at the end of the reactive state, which lasts $r$ time steps.
\end{proof}

\section{Additional Details on the Experimental Setup}

This section contains additional details on the algorithms, datasets, and training for the experimental evaluation. 

\subsection{Algorithms Evaluated}
\label{sec:expt-setup-algs}
In our experimental evaluation, we compare our algorithm \dsurf to MDDM \cite{pesaranghader2018mcdiarmid} and AUE \cite{brzezinski2013reacting}, as representatives of state-of-the-art drift-detection-based and ensemble-based algorithms, respectively. The MDDM algorithm maintains a sliding window over the prediction results, which is a binary series indicating for each data point whether the model's predicted label matches the true label. MDDM signals a drift whenever a weighted mean over the sliding window is worse than the best observed weighted mean so far by a specified threshold. Upon signaling a drift, the current model is discarded and a new model is initialized starting at the current time step. Pesaranghader et al. offer three variants of their algorithm, MDDM-A, MDDM-G, and MDDM-E, differing in the weighting scheme applied over the sliding window. Pesaranghader et al. remark that ``all three variants had comparable levels of accuracy'' across each dataset they tested and that ``the optimal shape for the weighting function is data, context and application dependent'' \cite{pesaranghader2018mcdiarmid}. Generally, we do not know the type of drifts that will occur in advance, and so in our experiments, we used the intermediate choice MDDM-G, corresponding to a geometric weighting. (We also perform a sensitivity study among all three variants.) We reused the source code for MDDM-G available in the Tornado framework from Pesaranghader et al., and we used their default parameters for their algorithm: the window size $n = 100$, the confidence level $\delta_w = 10^{-6}$, and the geometric weighting factor $r = 1.01$.

The AUE algorithm (sometimes called AUE2 to distinguish from a preliminary published version of the algorithm) manages an ensemble of $k$ experts that are incrementally trained over the stream. After each batch of arrivals, AUE updates the weight of each expert based on its prediction error, and drops the lowest weighted expert to introduce a new expert. The prediction output from the ensemble is a weighted vote by its experts. We used the parameter $k=10$ as the limit on the total number of experts, following the choice made by Brzezinski and Stefanowski in their experimental evaluation \cite{brzezinski2013reacting}.

For the implementation of our algorithm \dsurf, we used the following parameters. The length of the reactive state $r = 4$. Regarding the conditions to enter the reactive state described in Section \ref{sec:algorithm}, the threshold for condition \ref{eq:enter_reactive_first} is $\delta = 0.1$, and the threshold for condition \ref{eq:enter_reactive_second} is $\delta' = \delta/2$.

In our main experiment, on each dataset discussed below, we evaluate
\dsurf, MDDM (the MDDM-G variant), AUE, and the \aware algorithm with
oracle access to when drifts occur (discussed in
Section~\ref{sec:analysis}). We also run additional experiments for
MDDM-A, MDDM-E, single-pass SGD, and an oblivious algorithm, which
maintains a single model updated with \incsaga. The version of
\incsaga in the oblivious algorithm samples uniformly from its sample
set at each iteration and has no bias towards sampling more recent
data arrivals.

When using \incsaga or any other SGD-style optimization, we consider a
parameter $\rho$ that dictates the number of update steps
(specifically, gradient computations) that are available to train the
model. The four adaptive learning algorithms maintain a different
number of models---\dsurf uses 2, \aware and MDDM use 1, and AUE uses
10. This leads us to consider two different possibilities for training
at each time: (1) each algorithm can use $\rho$ steps per model; or
(2) each algorithm has $\rho$ steps in total that are divided equally
across its models. The second approach accounts for the varying
computational efficiency of each algorithm and lets us examine the
accuracy achieved when enforcing equal processing time.

For our evaluation under equal processing time, we also evaluate another ensemble method, Candor \cite{zhao2020handling}. Candor is a more computationally efficient ensemble method than AUE because it only trains one newly added expert at a time. Candor manages a total of $K$ experts, for which weights are updated based on observed losses with exponential decay factor $\eta$, and the prediction output is a weighted vote. After each epoch of Candor, a new model is added (deleting the oldest if the total exceeds $K$) to minimize the loss over the previous epoch plus an added biased regularization term $\frac{\mu}{2} ||\weight - \weight_p||^2$, where $\weight_p$ is the weighted linear combination of the ensemble's experts. In adapting the original point-wise Candor algorithm to our batch setting, we redefine an epoch to be the batch size of the stream for consistent comparison. We set $K=25, \eta = 0.75$ following the choice made by the authors in their experimental evaluation. Finally, we set $\mu$ to be the same regularization constant per dataset we use for L2-regularization in training the models of the other evaluated algorithms.

\begin{table}[h!]
\caption{Basic statistics of datasets }
\label{table:dataset}
\vskip 0.15in
\begin{center}
\begin{small}
\begin{sc}
\begin{tabular}{llll}
\toprule
 & Dataset & \# instance & \# Dim  \\
\midrule
\multirow{5}{*}{Synthetic}  
& \multicolumn{1}{l}{SEA} & \multicolumn{1}{l}{100000} & \multicolumn{1}{l}{3} \\\cline{2-4}
& \multicolumn{1}{l}{HyperPlane} & \multicolumn{1}{l}{100000} & \multicolumn{1}{l}{10} \\\cline{2-4}
& \multicolumn{1}{l}{SINE1} & \multicolumn{1}{l}{10000} & \multicolumn{1}{l}{2} \\\cline{2-4}
& \multicolumn{1}{l}{Mixed} & \multicolumn{1}{l}{100000} & \multicolumn{1}{l}{4} \\\cline{2-4}
& \multicolumn{1}{l}{Circles} & \multicolumn{1}{l}{10000} & \multicolumn{1}{l}{2} \\\hline

\multirow{2}{*}{\makecell{semi-\\synthetic}}  
& \multicolumn{1}{l}{RCV1} & \multicolumn{1}{l}{20242} & \multicolumn{1}{l}{47235} \\\cline{2-4}
& \multicolumn{1}{l}{Covertype} & \multicolumn{1}{l}{581012} & \multicolumn{1}{l}{54} \\\hline

\multirow{3}{*}{Real}  
& \multicolumn{1}{l}{Airline} & \multicolumn{1}{l}{5810462} & \multicolumn{1}{l}{13} \\\cline{2-4}
& \multicolumn{1}{l}{Electricity} & \multicolumn{1}{l}{45312} & \multicolumn{1}{l}{13} \\\cline{2-4}
& \multicolumn{1}{l}{PowerSupply} & \multicolumn{1}{l}{29928} & \multicolumn{1}{l}{2}  \\

\bottomrule
\end{tabular}
\end{sc}
\end{small}
\end{center}
\vskip -0.1in
\end{table}

\subsection{Datasets}
\label{sec:expt-setup-datasets}
Our experiments use the 5 synthetic, 2 semi-synthetic and 3 real-world datasets shown in Table~\ref{table:dataset} and described below. The selection of datasets included all datasets for binary classification used in the experimental evaluations by Pesaranghader et al.~on their MDDM algorithm (namely, SINE1 and Electricity) and Brzezinski and Stefanowski on their AUE algorithm (SEA10, Hyperplane-Slow, Hyperplane-Fast, Electricity, and Airlines).
\begin{itemize}
    \item SEA \cite{bifet2010moa}: This dataset is generated using the Massive Online Analysis (MOA) framework. There are three attributes in $[0, 10]$. The label is determined by $x_1 + x_2 \leq \theta_j$ where $j$ corresponds to 4 different concepts, $\theta_1 = 9, \theta_2 = 8, \theta_3 = 7, \theta_4 = 9.5$ (the third attribute $x_3$ is not correlated with the label). We synthetically generated 25000 points from each concept in the order 3, 2, 4, 1, following the example from the MOA manual. We experimented on four different datasets varying the amount of noise, SEA0, SEA10, SEA20, SEA30, corresponding to 0\%, 10\%, 20\%, and 30\% of the labels being swapped during the generation of the dataset.  SEA-gradual is generated by generating samples from two concepts (the first two concepts discussed above) during the drift period. 
        
    \item Hyperplane \cite{bifet2010moa}: This dataset is generated using the MOA framework. For each data point, the label corresponds to its half space for an underlying hyperplane, where each coordinate of the hyperplane changes by some magnitude for each point in the stream, representing a continually gradually drifting concept.  We experimented on two variations, Hyperplane-Slow and Hyperplane-Fast, corresponding to a 0.001 and a 0.1 magnitude of change. In each case, at each point in the stream, there is a 10\% probability that the direction of the change is reversed.
    
    \item SINE1~\cite{pesaranghader2016framework}: This dataset contains two attributes $(x_1, x_2)$, uniformly distributed in $[0, 1]$. Label of each data is determined using a sine curve as follows: $x_2 \leq sin(x_1)$. Labels are reversed at drift points.
    
    \item Mixed~\cite{pesaranghader2016framework}: This dataset contains four attributes $(x_1, x_2, x_3, x_4)$, where $x_1$ and $x_2$ are boolean and $x_3, x_4$ are uniformly distributed in $[0, 1]$. Label of each data is determined to be positive if two of $x_1, x_2$, and $x_4 < 0.5 + 0.3 \times \sin(3\pi x_3)$ hold.  Labels are reversed at drift points.
     
      \item Circles~\cite{pesaranghader2016framework}: This dataset contains two attributes $(x_1, x_2)$, uniformly distributed in $[0, 1]$. Label of each data is determined using a circle as the decision boundary as follows: $(x_1 - c_1)^2 + (x_2 - c_2)^2 <= r$, where $(c_1, c_2)$ and $r$ are (respectively) center and radius of the circle. Drift happens in a gradual manner where the center and radius of decision boundary changes over a period of time. We experimented on a generated dataset with $3$ gradual drift introduced at time $25, 50$, and $75$, where the transition period for each drift is $5$ time steps. 
    
    \item RCV1~\cite{lewis2004rcv1}: This real world data set contains  manually categorized newswire stories. The original order of the data set we used was randomly permuted before inserting drift. At drift points, we introduce a sharp abrupt drift by swapping each label. 
    
    \item Covertype~\cite{UCI-ML}: This real world data set contains observation of a forest area obtained from US Forest Service (USFS) Region 2 Resource Information System (RIS). Binary class labels are involved to represent the corresponding forest cover type. The original order of the data set we used was randomly permuted before inserting drift. At drift points, we introduce an abrupt drift by rotating each data point by $180^\circ$ along the 1st and 8th attributes. This particular rotation was chosen because it resulted in approximately 40\% misclassification rate with respect to the current predictive model.
    
    \item Airline(2008)~\cite{ElenaIko20:online}: This real world data set contains records of flight schedules. Binary class labels are involved to represent if a flight is delayed or not. Concept drift could appear as the result of changes in the flights schedules, e.g. changes in day, time, and the length of flights. In our experiments, we used the first 58100 points of the data set, and pre-processed the data by using one-hot encoding for categorical features and scaling numerical features to be in the range $[0, 1]$. The original dataset contains 13 features. But, after using one-hot encoding the dimension increases to $679$.

    \item Electricity~\cite{Harries99splice-2comparative}: This real world data set contains records of the New South Wales Electricity Market in Australia. Binary class labels are involved to represent the change of the price (i.e., up and down).  The concept drift may result from changes in consumption habits or unexpected events.

    \item Power Supply~\cite{dau2019ucr}: This real world data set contains records of hourly power supply of an Italy electricity company which records the power from two sources: power supply from main grid and power transformed from other grids. Binary class labels are involved to represent which time of day the current power supply belongs to (i.e. am or pm). The concept drifting in this stream may results from the change in season, weather or the differences between working days and weekend.
    
\end{itemize}

The type of drift in each dataset is detailed in Table
\ref{table:dataset-drift}. When working with real datasets, precisely
determining the time drift occurs is somewhat guesswork. Brzezinski
and Stefanowski remarked they ``cannot unequivocally state when drifts
occur or if there is any drift'' on the real datasets they
considered~\cite{brzezinski2013reacting}. Still, we had to mark the
drift times for the implementation of \aware, which resets the model
whenever drifts occur.  We chose these times by observing the
misclassification rates of an oblivious algorithm that is not designed
to adapt to drift, and noting for which time steps there was a
significant increase in misclassifications on the newly arrived batch.

\begin{table}[h!]
\caption{Details of drifts in datasets }
\label{table:dataset-drift}
\vskip 0.15in
\begin{center}
\begin{small}
\begin{sc}
\begin{tabular}{llll}
\toprule
 & Dataset & Drift Type & Drift times  \\
\midrule
\multirow{5}{*}{Synthetic} 
& \multirow{2}{*}{SEA} & Abrupt & [25, 50, 75] \\ \cline{3-4}
& & Gradual & [40-60] \\\cline{2-4}
& HyperPlane & Gradual & -\\\cline{2-4}

& SINE1 & Abrupt & [20, 40, 60, 80]  \\\cline{2-4}
& Mixed & Abrupt & [20, 40, 60, 80]  \\\cline{2-4}
& Circles & Gradual & [25-30, 50-55, 75-80] \\\hline

\multirow{2}{*}{\makecell{semi-\\synthetic}}  
& \multicolumn{1}{l}{RCV1} & \multicolumn{1}{l}{Abrupt} & \multicolumn{1}{l}{[30, 60]}\\\cline{2-4}
& \multicolumn{1}{l}{Covertype} & \multicolumn{1}{l}{Abrupt} & \multicolumn{1}{l}{[30, 60]}\\\hline

\multirow{3}{*}{Real}  
& \multicolumn{1}{l}{Airline} &\multicolumn{1}{l}{-} & \multicolumn{1}{l}{[31, 67]}\\\cline{2-4}
& \multicolumn{1}{l}{Electricity} & \multicolumn{1}{l}{-} & \multicolumn{1}{l}{[20]} \\\cline{2-4}
& \multicolumn{1}{l}{PowerSupply} & \multicolumn{1}{l}{-} & \multicolumn{1}{l}{[17, 47, 76]}\\

\bottomrule
\end{tabular}
\end{sc}
\end{small}
\end{center}
\vskip -0.1in
\end{table}

\subsection{Training and Hyperparameters}
\label{sec:expt-setup-training}
On each dataset, the prediction task is binary classification. Each model $\weight$ trained is a linear model, using \incsaga to optimize the L2-regularized logistic loss over the relevant stream segment. For a data point $(x, y)$, the corresponding loss function is $f_{(x, y)}(\weight) = \log(1 + \exp(-y \weight^T x)) + \frac{\mu}{2}||\weight||_2^2$.

There are two hyperparameters used by \incsaga, the regularization factor $\mu$ and the constant step size $\eta$. To set them, we first took each dataset in static form (opposed to streaming) and applied a random permutation, partitioning an 80\% split for training and 20\% for validation. (For the case of the semi-synthetic datasets where we introduced our own drift, the hyperparameter selection was done prior to modifying the data.) We used grid search to determine the values of $\mu$ and $\eta$ that optimized the validation set error after running \incsaga over the static training set for a number of iterations equal to two times the number of data points. We searched for $\mu$ of the form $10^{-a}$ for $1 \leq a \leq 7$ and $\eta$ of the form $b\times 10^{-c}$ for $b \in \{1, 2, 5\}$ and $1 \leq c \leq 5$. The parameters we chose are given in Table \ref{table:hyperparameters}. In experiments where we used SGD for training, we used the same constant step size $\eta$.

\begin{table}[h!]
\caption{Hyperparameters and batch sizes}
\label{table:hyperparameters}
\vskip 0.15in
\begin{center}
\begin{small}
\begin{sc}
\begin{tabular}{l*{3}c}
\toprule
 Dataset            & Regularization $\mu$  & Step size $\eta$ & Batch size $m$ \\
\midrule
SEA (all)   & $10^{-2}$ & $1 \times 10^{-3}$ & 1000\\
Hyper-slow  & $10^{-3}$ & $1 \times 10^{-1}$ & 1000\\
Hyper-fast  & $10^{-3}$ & $1 \times 10^{-2}$ & 1000\\
SINE1       & $10^{-3}$ & $2 \times 10^{-1}$ & 100\\
Mixed       & $10^{-3}$ & $1 \times 10^{-1}$ & 1000\\
Circles       & $10^{-3}$ & $1 \times 10^{-1}$ & 100\\
RCV1        & $10^{-5}$ & $5 \times 10^{-1}$ & 202\\
CoverType   & $10^{-4}$ & $5 \times 10^{-3}$ & 5810\\
Airline     & $10^{-3}$ & $2 \times 10^{-2}$ & 581\\
Electricity & $10^{-4}$ & $1 \times 10^{-1}$ & 1333\\
PowerSupply & $10^{-3}$ & $1 \times 10^{-1}$ & 299\\
\bottomrule
\end{tabular}
\end{sc}
\end{small}
\end{center}
\end{table}

In the streaming data setting studied in this paper
(Section~\ref{prelim}), the batch size is determined by the rate of
arrival of new data points, and hence not a hyperparameter to be
tuned.  For simplicity, we assume that data arrive over the course of
$b$ time steps in equally-sized batches containing $m = (\text{dataset
  size})/b$ points, where $b=100$ for all datasets other than
Electricity.  For the case of Electricity, we defined the number of
time steps $b = 34$ so that one time step corresponds to 28 days of
the collected data, and was a scale where we could visually observe
drift in the results.  The resulting batch sizes are shown in the
last column of Table~\ref{table:hyperparameters}.

\section{Additional Experimental Results}
\label{sec:expt-results}

This section contains experimental results under both training
strategies of equal computational power for each model and equal
computational power for each algorithm, which is divided among its
models. Additionally, we report results for single-pass SGD and an
oblivious algorithm using \incsaga, results for \dsurf without the
greedy approach during the reactive state, and results for each
algorithm when SGD is used as the update process instead of \incsaga.

\subsection{Equal Computational Power for each Model}
\label{sec:expt-results-equal-model}

We present the misclassification rates at each time step over the new batch in Figure \ref{fig:Misclassification rate over time}, and the average misclassification rate over all time steps is summarized in Table \ref{table:ave-misclassification-app}. (These results are a superset of those presented in Figures \ref{fig:power} and \ref{fig:covtype} and Table \ref{table:ave-misclassification-compact} from Section \ref{sec:expt}). Here, we used the training strategy where at every time step, each algorithm uses $\rho=2m$ update steps for each of its models. Let us note a few general trends. The advantage of \dsurf over MDDM is most evident on the noisy versions of SEA (also shown in Figure \ref{fig:sea-noise}), and on CoverType and PowerSupply. The drift detection method MDDM encounters false positives that lead to unnecessary resetting of the predictive model, while \dsurf avoids the performance loss after most of the false positives by catching them via the reactive state. In particular, the CoverType dataset was especially problematic for MDDM, which continually signaled a drift. 

For true drifts when immediately switching to a new model is desirable, we observe, most evident on SINE1 and RCV1, that MDDM is the fastest to adapt, followed shortly by \dsurf, and with AUE lagging behind. CoverType also is a clear example where \dsurf adapts faster than AUE (but MDDM suffered as previously mentioned). For these drifts, MDDM naturally leads because it is using a new model when it accurately detects the drift, while \dsurf always takes at least one time step to switch because it waits until it sees a batch where the new (reactive) model outperforms the older (stable) model. Finally, AUE also takes at least one time step, because its ensemble members are weighted based on the previous performance, but it can take longer, because even if the older, inaccurate models are low-weighted, they are not weighted zero, and shortly after a drift, most of the models in the ensemble are trained on old data and can still negatively impact the predictions.

There are two major advantages of \dsurf and AUE not immediately switching to the latest model: (i) there are drifts for which switching to a new model is not desired because the older model can still provide good accuracy, and (ii) delaying the switch to a new model can be desired if the new model has poor accuracy immediately after the drift while it warms up. Regarding the first point, observe the drift in SEA10 at $t=25$ and the drift in Electricity. There is a notable degradation in accuracy of each algorithm at the time of the drift, but resetting the model as \aware does is a poor choice. We even observe that the oblivious algorithm (OBL) (which trains a model from the beginning of time and is not designed to adapt to drifts) outperforms \aware on these datasets. Despite the initial degradation in accuracy at the time of drift, we find that the older model is able to converge again after the drift, even while the older model is trained on data from both before and after the drift. Meanwhile, training a new model from scratch as \aware does is not worth the initial start-up cost when the older model performs well.

The reader may be skeptical specifically of \aware's reset to a random model for predictions at the time step drift occurs---practically, wouldn't it be preferable to use the previously-learned model for the first time step, and then switch to the new model? We considered this alternative implementation of \aware, and observed that across each dataset, the average misclassification rate of the alternative \aware was better by at most 1.1 percentage points than the version of \aware reported in Table \ref{table:ave-misclassification-app}, and was worse on SINE1 and RCV1. There was no case where the alternative \aware outperformed any algorithm in the table that \aware did not already outperform.

The second advantage previously mentioned, of delaying the switch to the new model, is best exemplified on Airline. Immediately after the two drifts, \dsurf is the best performer, followed by AUE, and then MDDM and \aware. Immediately after the drift, \dsurf continues to use the older, stable model, which outperforms a newly created model (compare \dsurf to \aware), because a new model needs a few time steps to train before it is a better choice, and then \dsurf switches later. AUE is of intermediate error in the time steps immediately after the drift, because it does place greater weight on the better performing, older models, but is still worse than placing unit weight on an old model.

Finally, the Hyperplane-slow and Hyperplane-fast warrant their own discussion. These two datasets represent a continually drifting concept throughout the entire stream. For Hyperplane-slow, AUE is the best performing algorithm, while for Hyperplane-fast, MDDM is the best performing. The advantage that AUE and MDDM have over \dsurf in these datasets is that AUE adds a new model at every time step, and MDDM has the capability of switching to a new model at any time step, and therefore, they can better fit the most recent data in the stream. On the other hand, \dsurf is only able to create a new model upon transitioning to the reactive state, so \dsurf does not have the capability of creating new models at time steps during its reactive state. \dsurf is not designed for the setting where creating a new model at every time step is desirable, but nonetheless, the accuracy of \dsurf is still comparable.
Furthermore, on the remaining datasets with gradual drift, SEA-gradual and Circles, that contain stationary periods and drift periods instead of the continual drift of Hyperplane, \dsurf is the best performer.

\begin{table}[ht!]
\caption{Total average of misclassification rate ($\rho=2m$ for each model)}
\label{table:ave-misclassification-app}
\begin{center}
\begin{small}
\begin{sc}
\begin{tabular}{l*{8}c}
\toprule
 Dataset    	& \aware & \dsurf& MDDM-G & MDDM-A & MDDM-E & AUE & 1PASS-SGD & OBL  \\
\midrule
SEA0        	& 0.137 & \textbf{0.088} & \textbf{0.088} & 0.090 & 0.087 & 0.094 & 0.131 & 0.110 \\
SEA10       	& 0.197 & \textbf{0.156} & 0.180 & 0.166 & 0.172 &  0.163 & 0.188 & 0.176 \\
SEA20       	& 0.264 & \textbf{0.246} & 0.293 & 0.278 & 0.289 & 0.247 & 0.267 & 0.254 \\
SEA30       	& 0.350 & \textbf{0.336} & 0.357 & 0.358 & 0.352 & 0.337 & 0.348 & 0.338 \\
SEA-gradual  	& 0.177 & \textbf{0.159} & 0.177 & 0.167 & 0.174 & 0.163 & 0.196 & 0.173 \\
Hyper-slow  	& 0.116 & 0.117 & 0.117 & 0.117 & 0.116 & \textbf{0.112} & 0.139 & 0.170 \\
Hyper-fast  	& 0.191 & 0.174 & \textbf{0.162} & 0.163 & 0.164 & 0.179 & 0.177 & 0.280 \\
SINE1       	& 0.171 & 0.197 & 0.179 & \textbf{0.175} & 0.178 & 0.211 & 0.223 & 0.477\\
Mixed       		& 0.192 & 0.204 & 0.204 & 0.204 & \textbf{0.203} & 0.211 & 0.208 & 0.455\\
Circles       	& 0.368 & \textbf{0.371} & 0.376 & 0.375 & 0.372 & 0.380 & 0.385 & 0.508\\
RCV1        	& 0.121 & 0.135 & \textbf{0.127} & 0.130 & 0.130 & 0.167 & 0.276 & 0.468 \\
CoverType   	& 0.267 & \textbf{0.268} & 0.313 & 0.311 & 0.313 & 0.278 & 0.298 & 0.321 \\
Airline     		& 0.338 & \textbf{0.332} & 0.348 & 0.346 & 0.348 & 0.333 & 0.340 & 0.359 \\
Electricity 		& 0.315 & 0.306 & 0.341 & 0.339 & 0.341 & 0.303 & 0.347 &  \textbf{0.302} \\
PowerSupply 	& 0.309 & 0.300 & 0.323 & 0.315 & 0.329 & \textbf{0.299} & 0.307 & 0.312  \\
\bottomrule
\end{tabular}
\end{sc}
\end{small}
\end{center}
\end{table}

\begin{figure*}[h!]
    \begin{center}
        \begin{subfigure}[t]{0.32\textwidth}
            \includegraphics[width=\columnwidth]{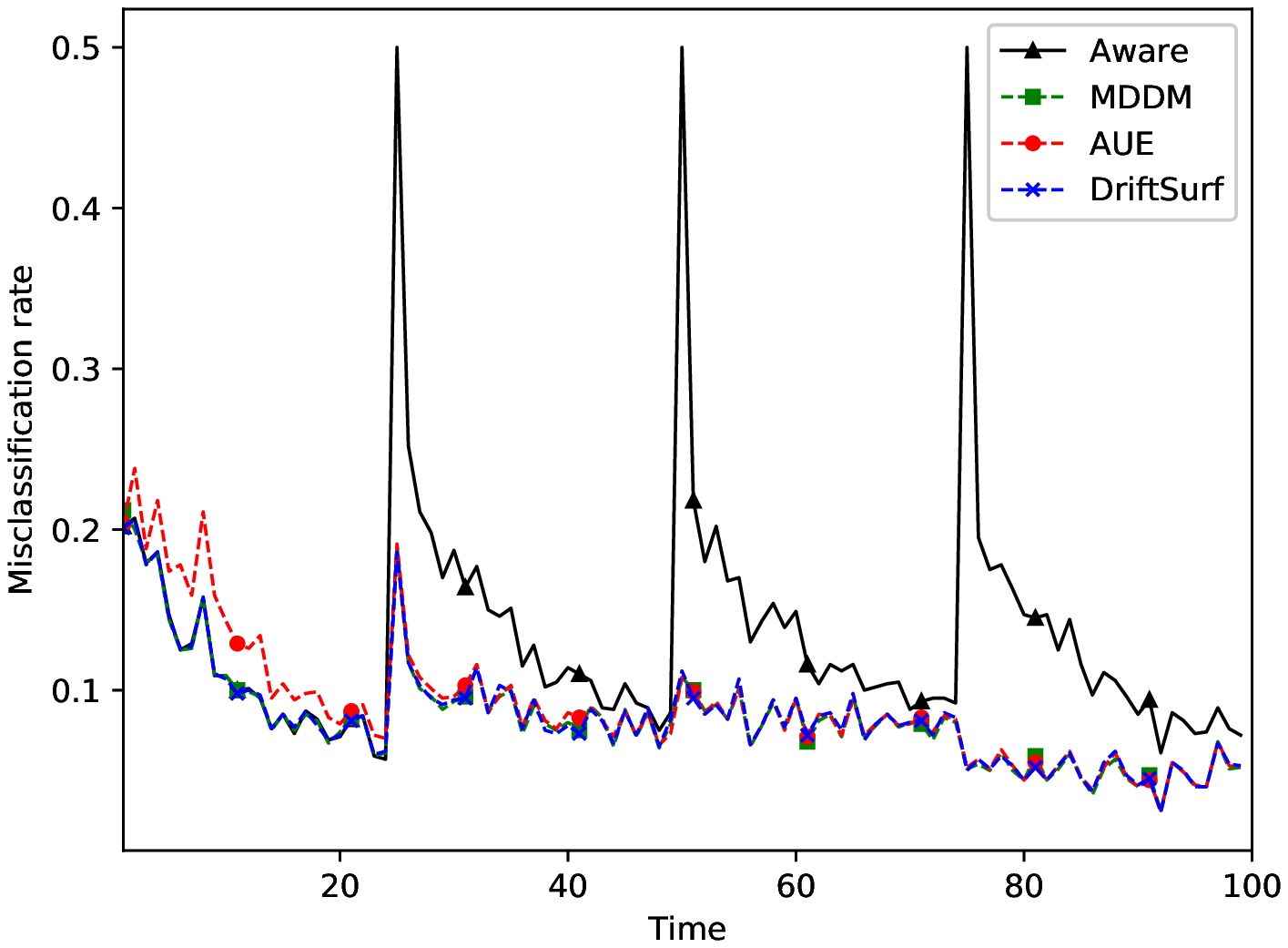}
            \caption{SEA0}
            \label{fig:sea0-unlimited}
        \end{subfigure}\hfill
        \begin{subfigure}[t]{0.32\textwidth}
            \includegraphics[width=\columnwidth]{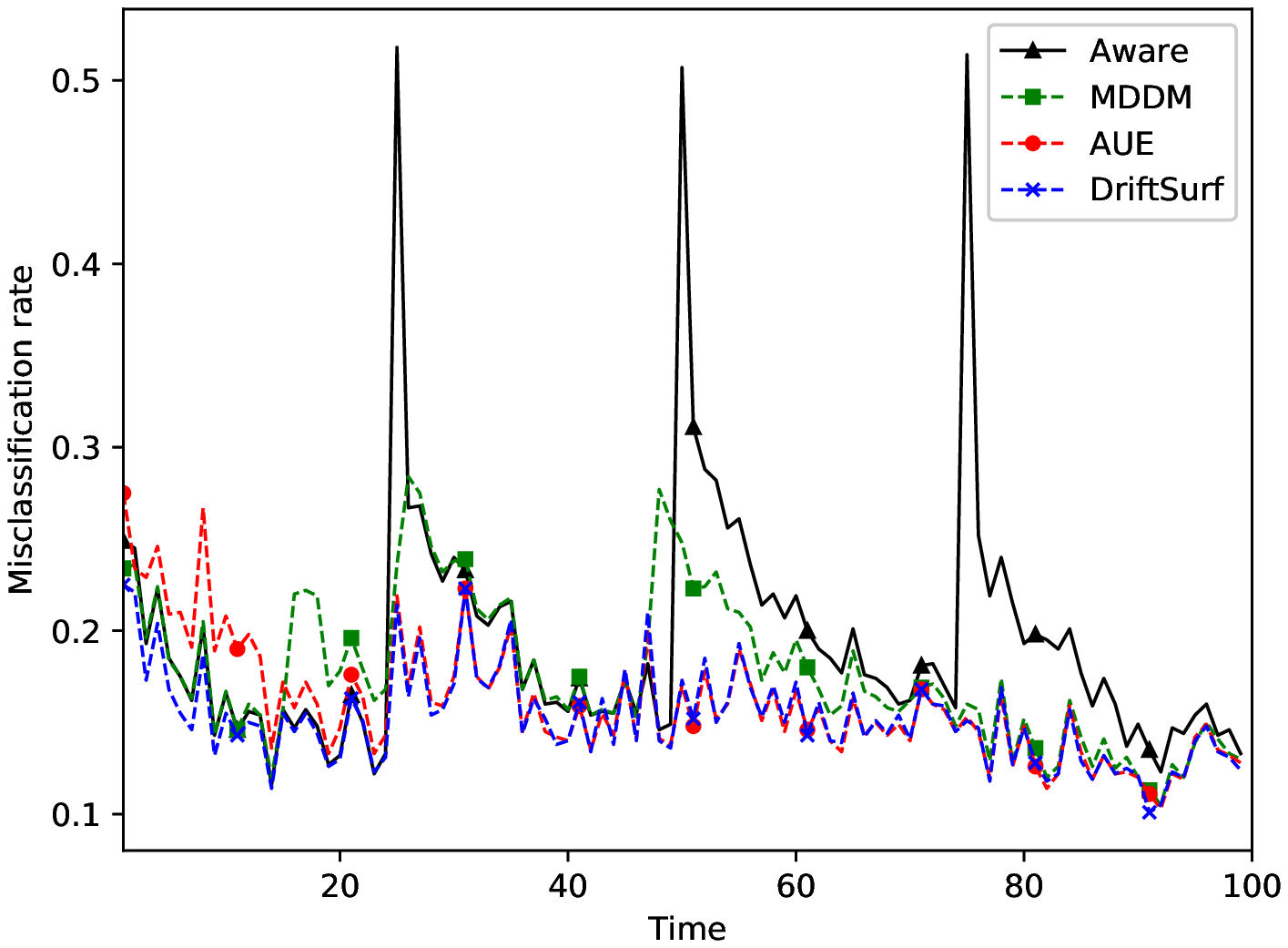}
            \caption{SEA10}
            \label{fig:sea10-unlimited}
        \end{subfigure}\hfill
        \begin{subfigure}[t]{0.32\textwidth}
            \includegraphics[width=\columnwidth]{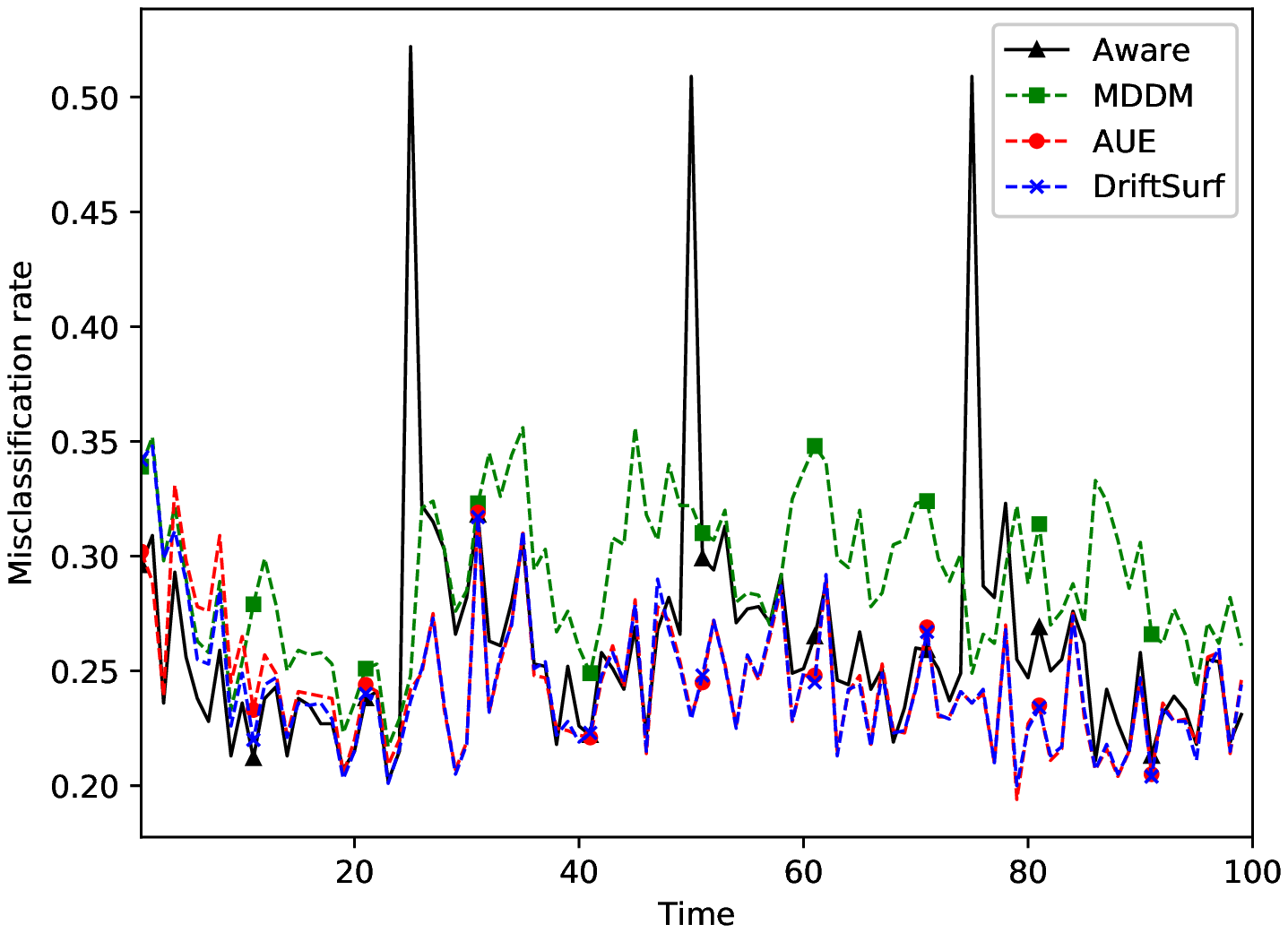}
            \caption{SEA20}
            \label{fig:sea20-unlimited}
        \end{subfigure}\hfill
        
        \begin{subfigure}[t]{0.32\textwidth}
            \includegraphics[width=\columnwidth]{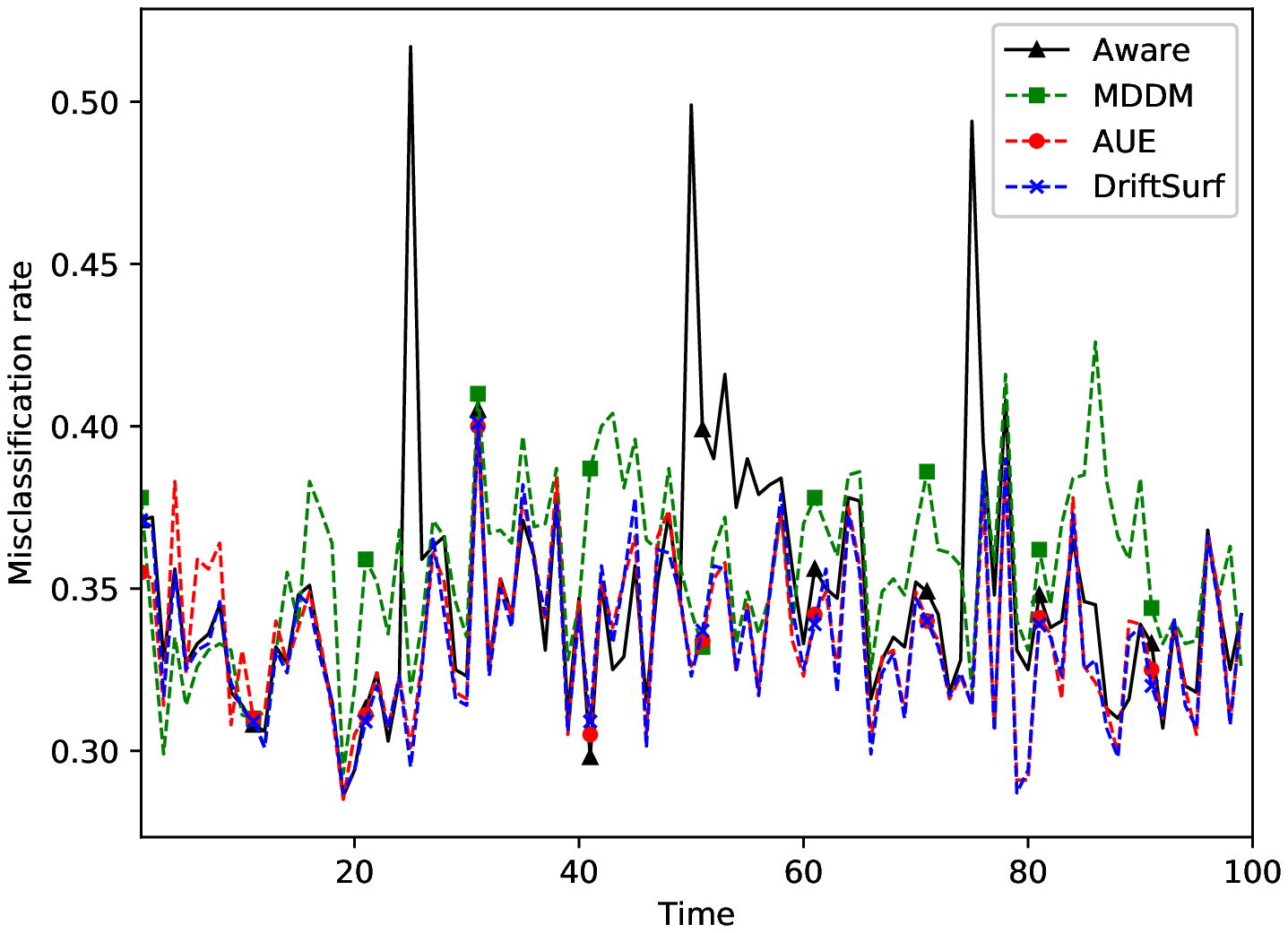}
            \caption{SEA30}
            \label{fig:sea30-unlimited}
        \end{subfigure}\hfill
        \begin{subfigure}[t]{0.32\textwidth}
            \includegraphics[width=\columnwidth]{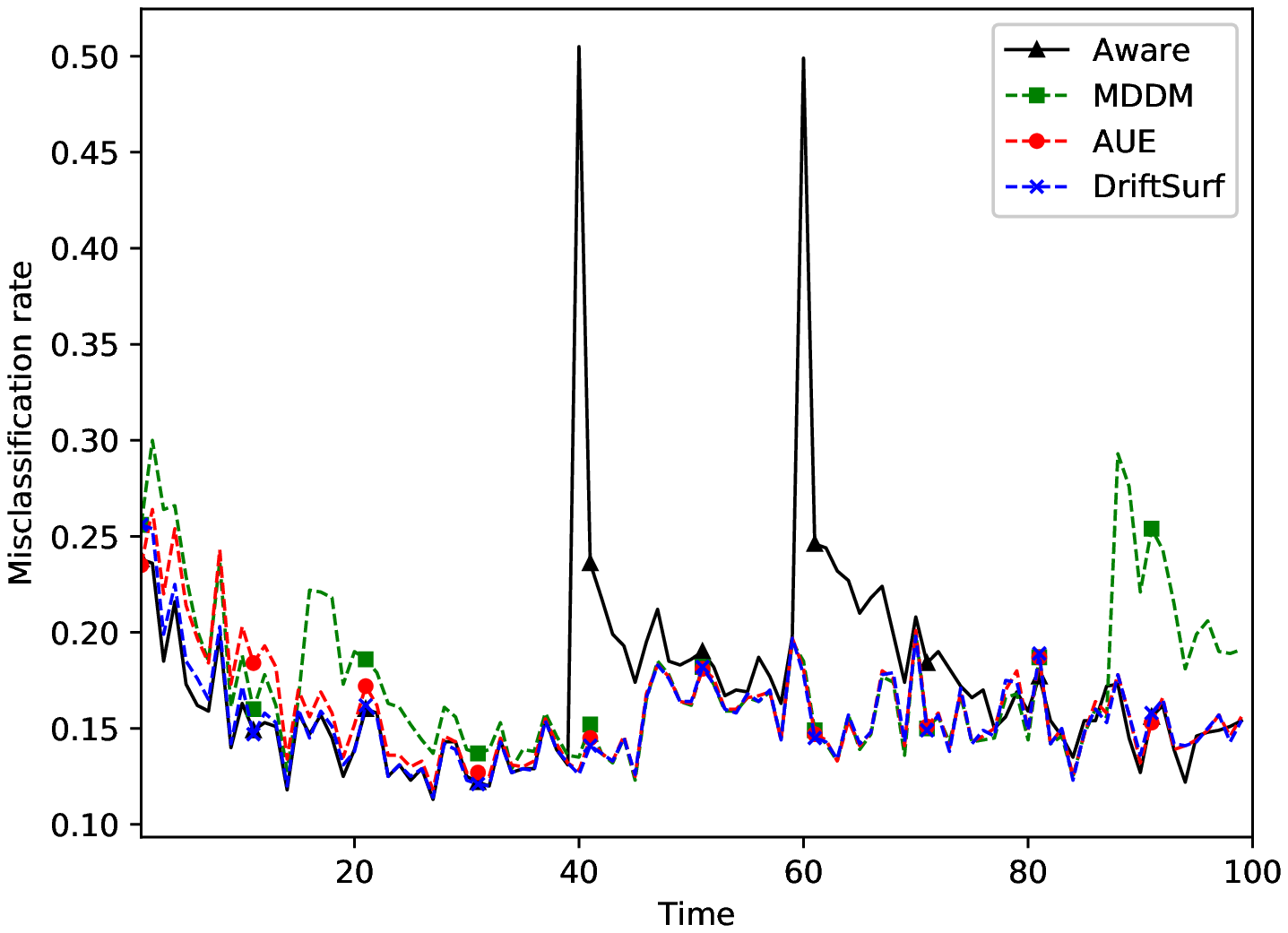}
            \caption{SEA-gradual}
            \label{fig:sea_gradual-unlimited}
        \end{subfigure}\hfill
	\begin{subfigure}[t]{0.32\textwidth}
            \includegraphics[width=\columnwidth]{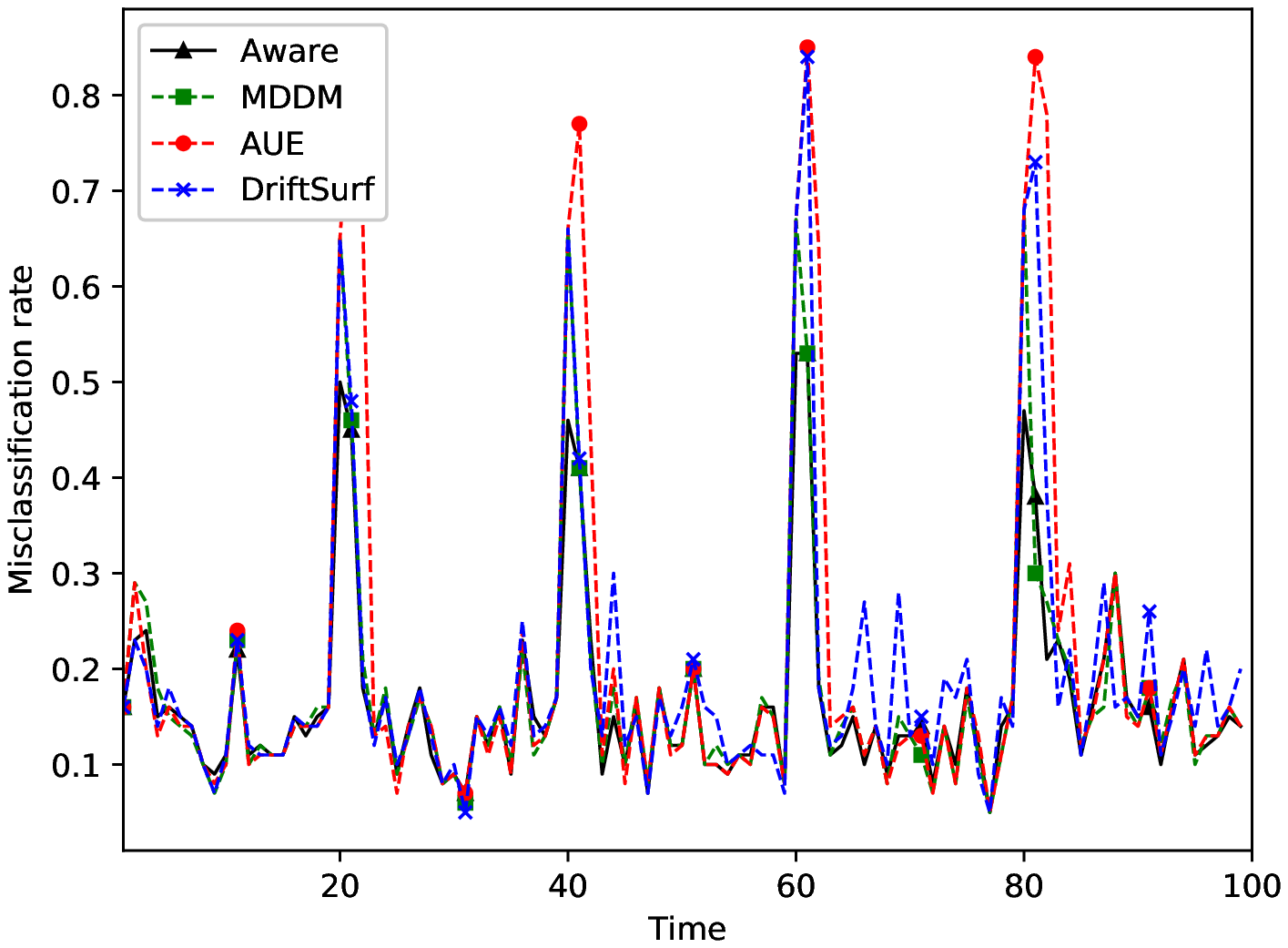}
            \caption{SINE1}
            \label{fig:sine1-unlimited}
        \end{subfigure}\hfill

        \begin{subfigure}[t]{0.32\textwidth}
            \includegraphics[width=\columnwidth]{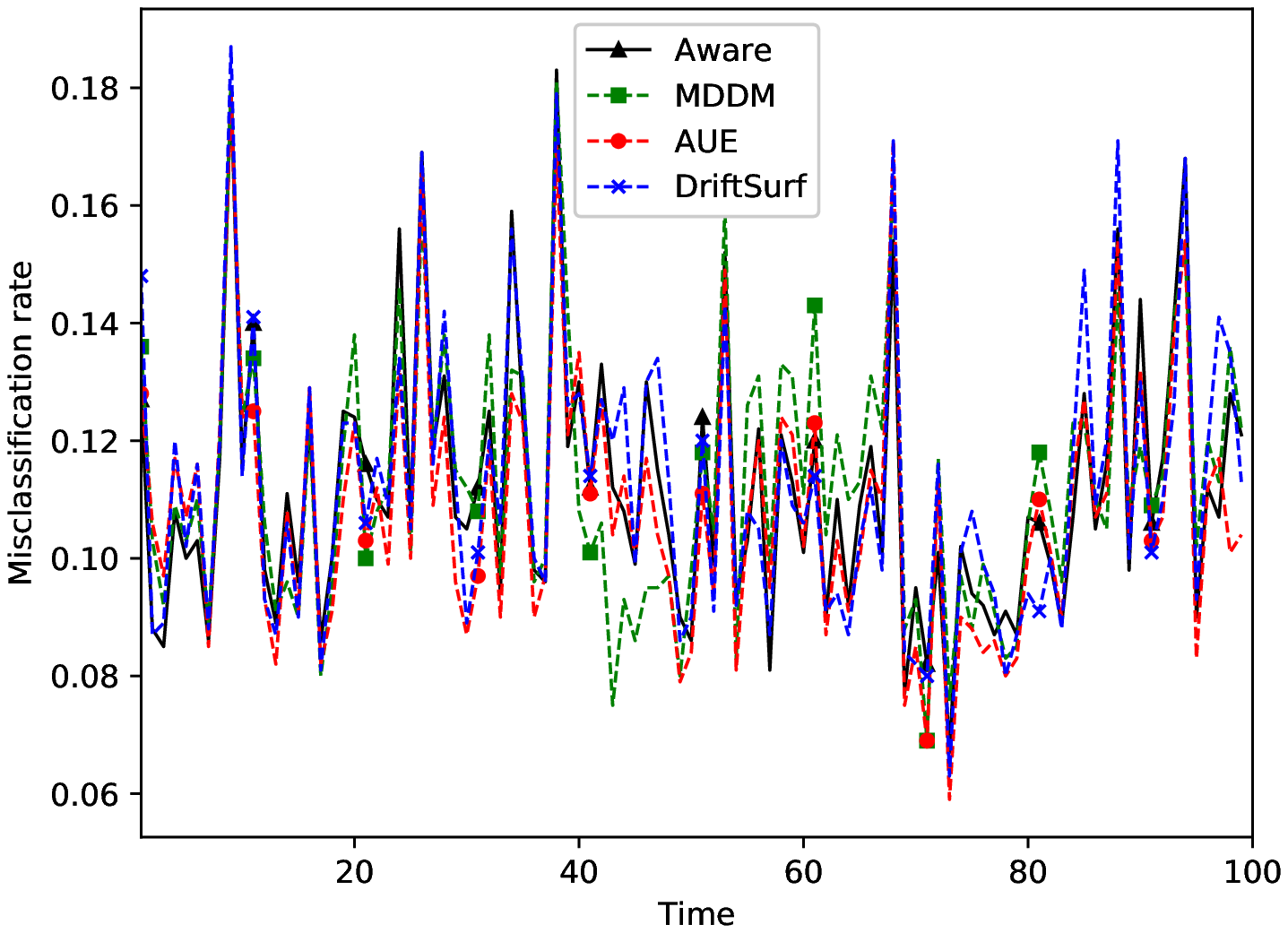}
            \caption{HyperPlane-slow}
            \label{fig:hyperplane-slow-unlimited}
        \end{subfigure}\hfill
        \begin{subfigure}[t]{0.32\textwidth}
            \includegraphics[width=\columnwidth]{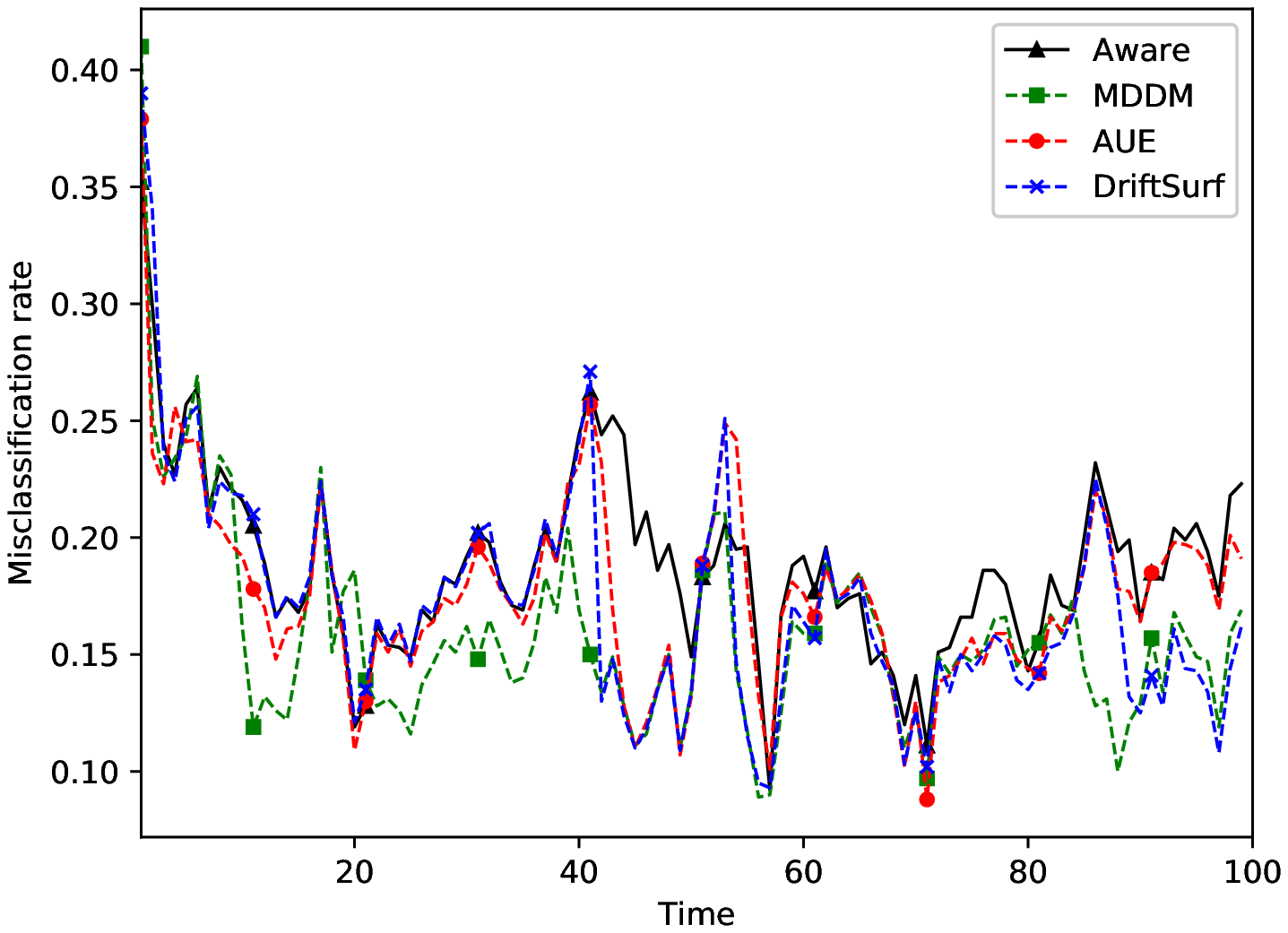}
            \caption{HyperPlane-fast}
            \label{fig:hyperplane-fast-unlimited}
        \end{subfigure}\hfill
        \begin{subfigure}[t]{0.32\textwidth}
            \includegraphics[width=\columnwidth]{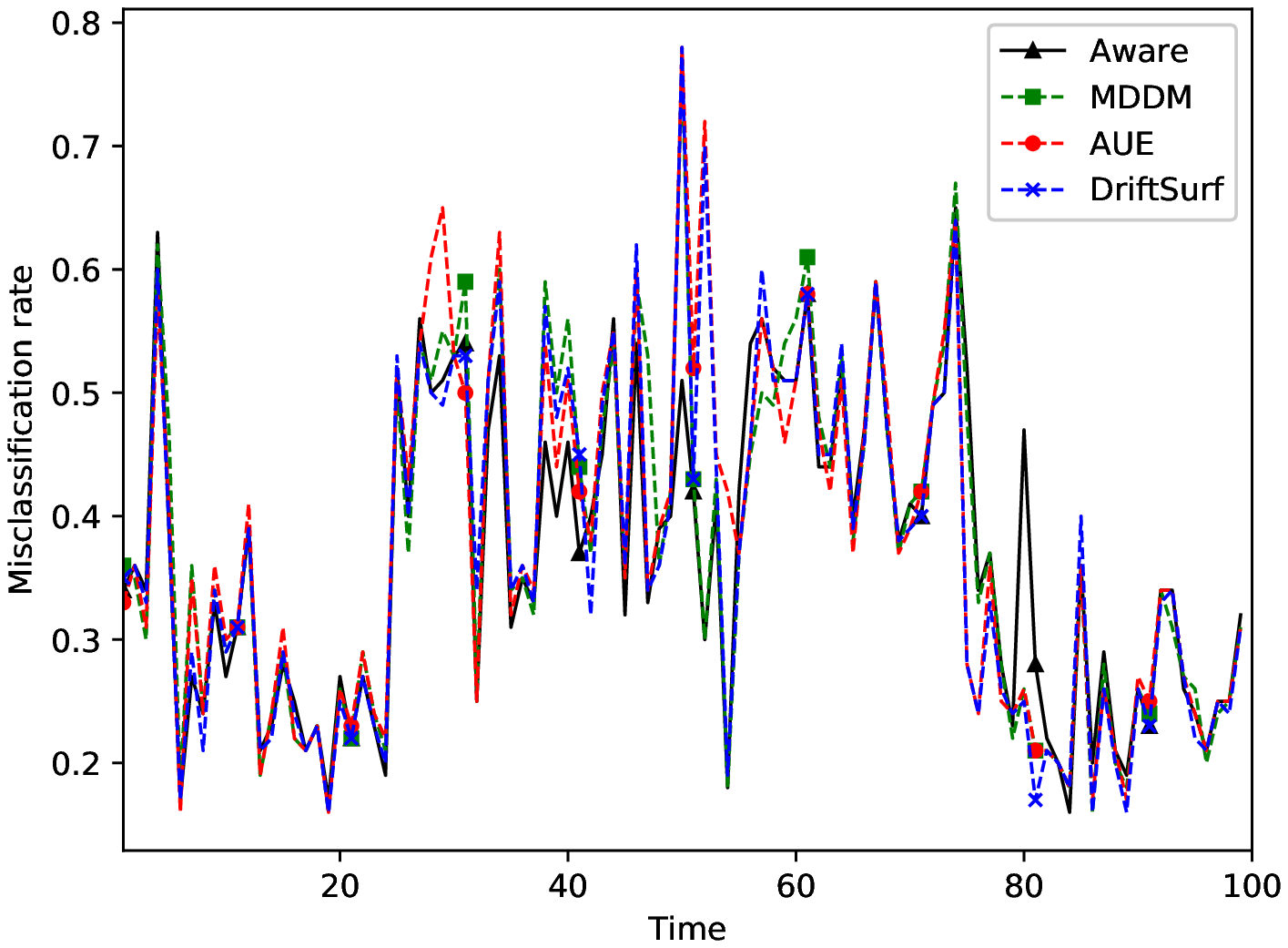}
            \caption{CIRCLES}
            \label{fig:circles-unlimited}
        \end{subfigure}\hfill
        
        \begin{subfigure}[t]{0.32\textwidth}
            \includegraphics[width=\columnwidth]{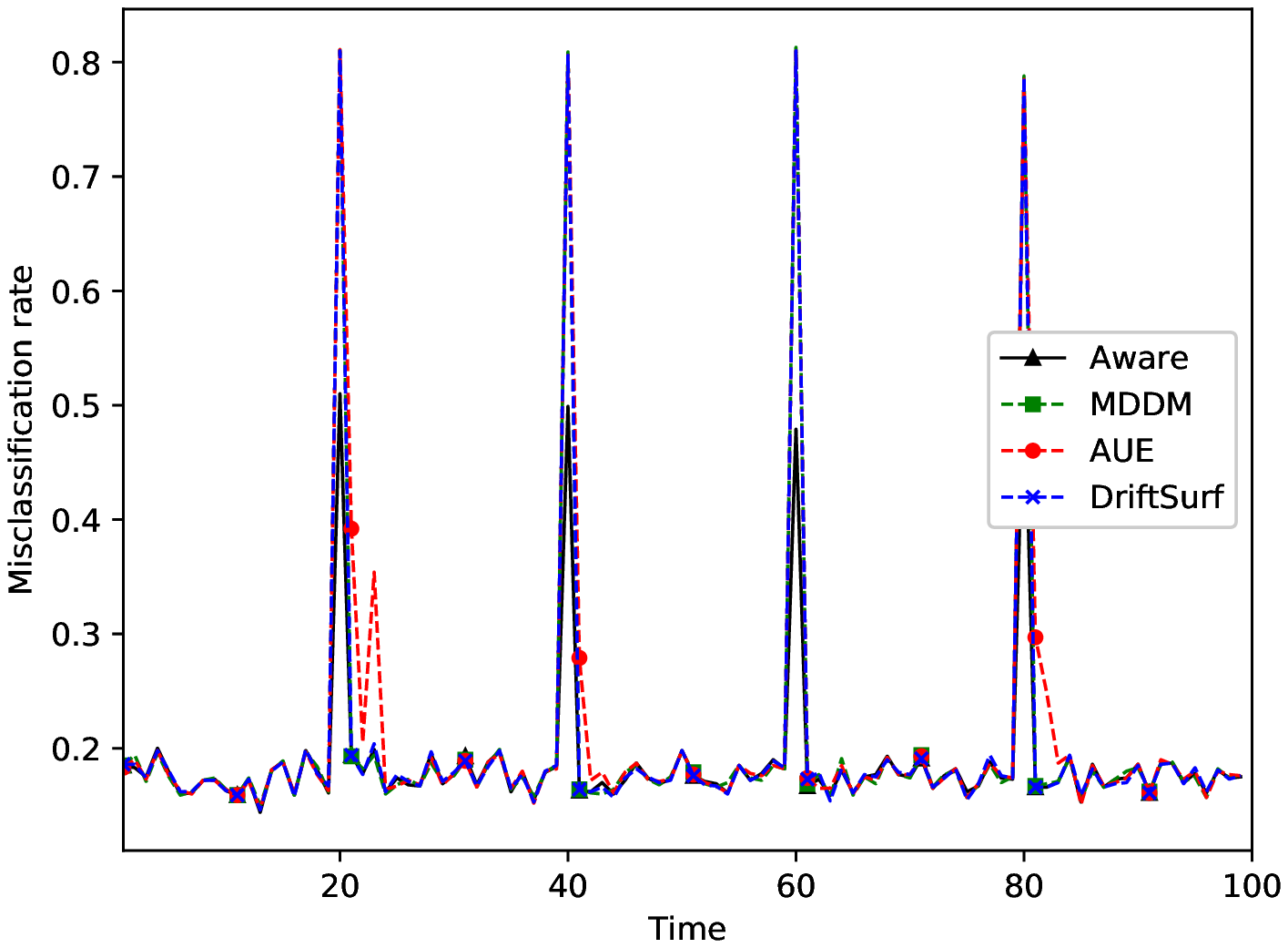}
            \caption{MIXED}
            \label{fig:mixed-unlimited}
        \end{subfigure}\hfill
        \begin{subfigure}[t]{0.32\textwidth}
            \includegraphics[width=\columnwidth]{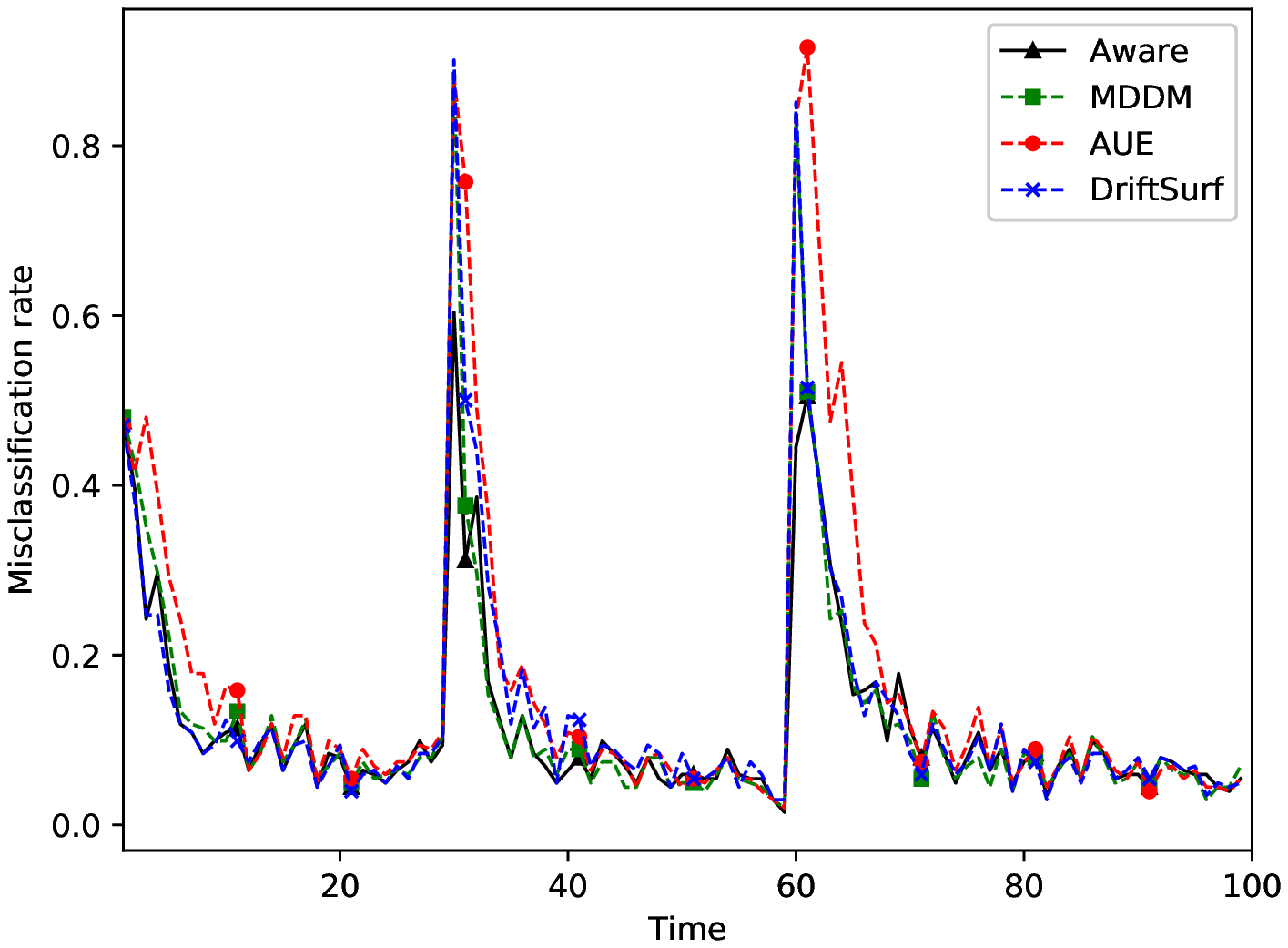}
            \caption{RCV1}
            \label{fig:rcv-unlimited}
        \end{subfigure}\hfill
        \begin{subfigure}[t]{0.32\textwidth}
            \includegraphics[width=\columnwidth]{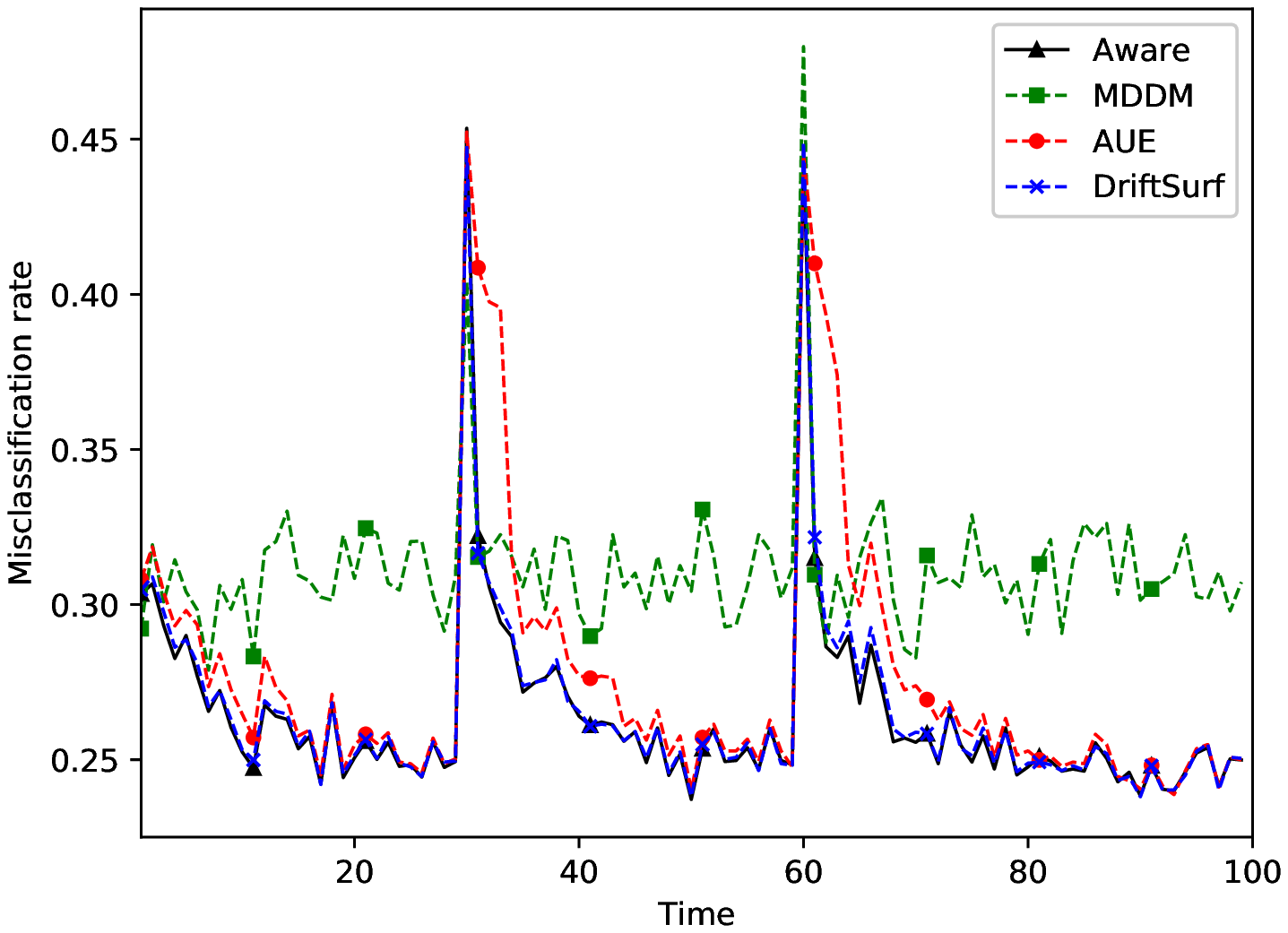}
            \caption{CoverType}
            \label{fig:covtype-unlimited}
        \end{subfigure}\hfill
        
        \begin{subfigure}[t]{0.32\textwidth}
            \includegraphics[width=\columnwidth]{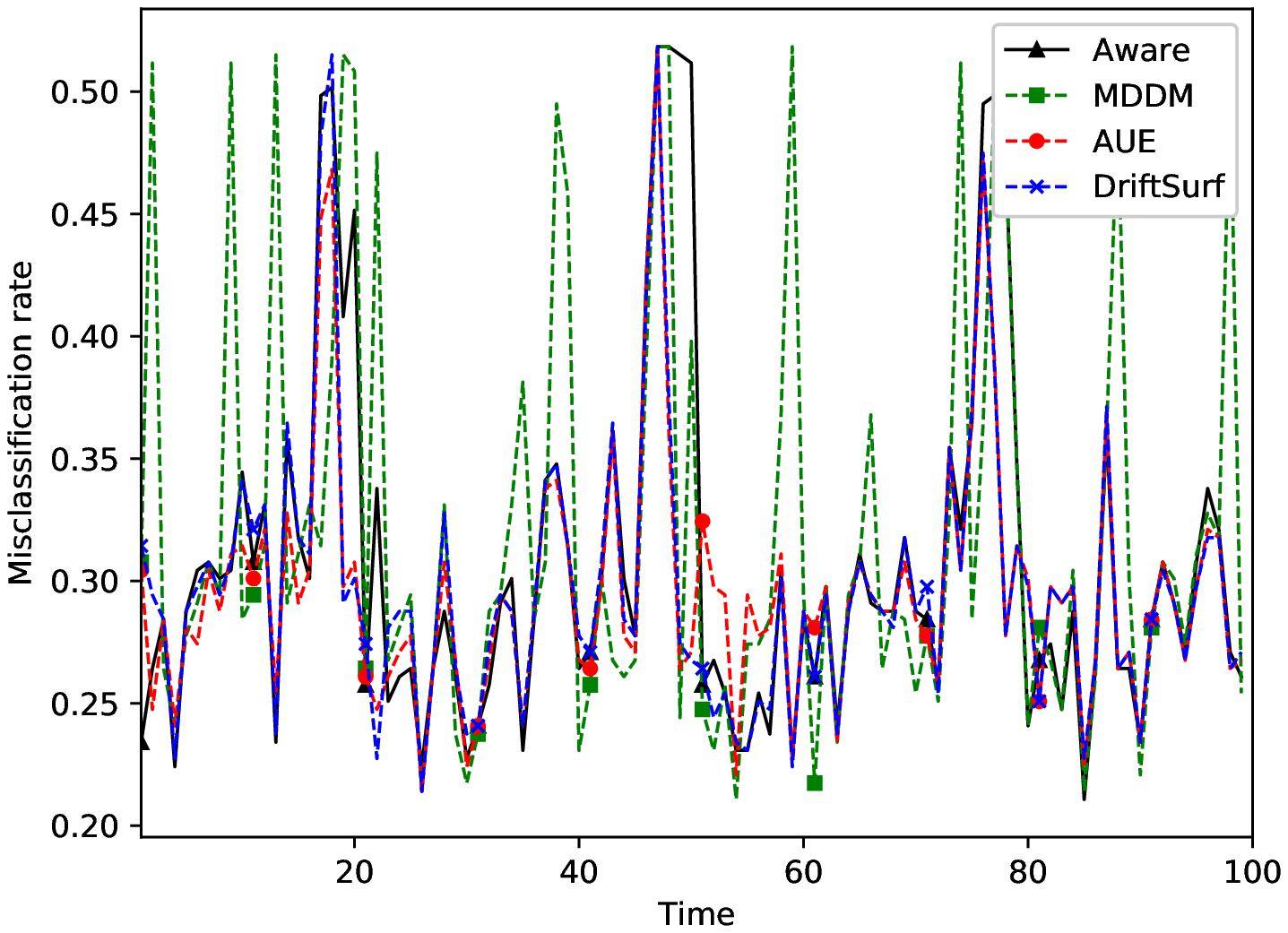}
            \caption{PowerSupply}
            \label{fig:powersupply-unlimited}
        \end{subfigure}\hfill
        \begin{subfigure}[t]{0.32\textwidth}
            \includegraphics[width=\columnwidth]{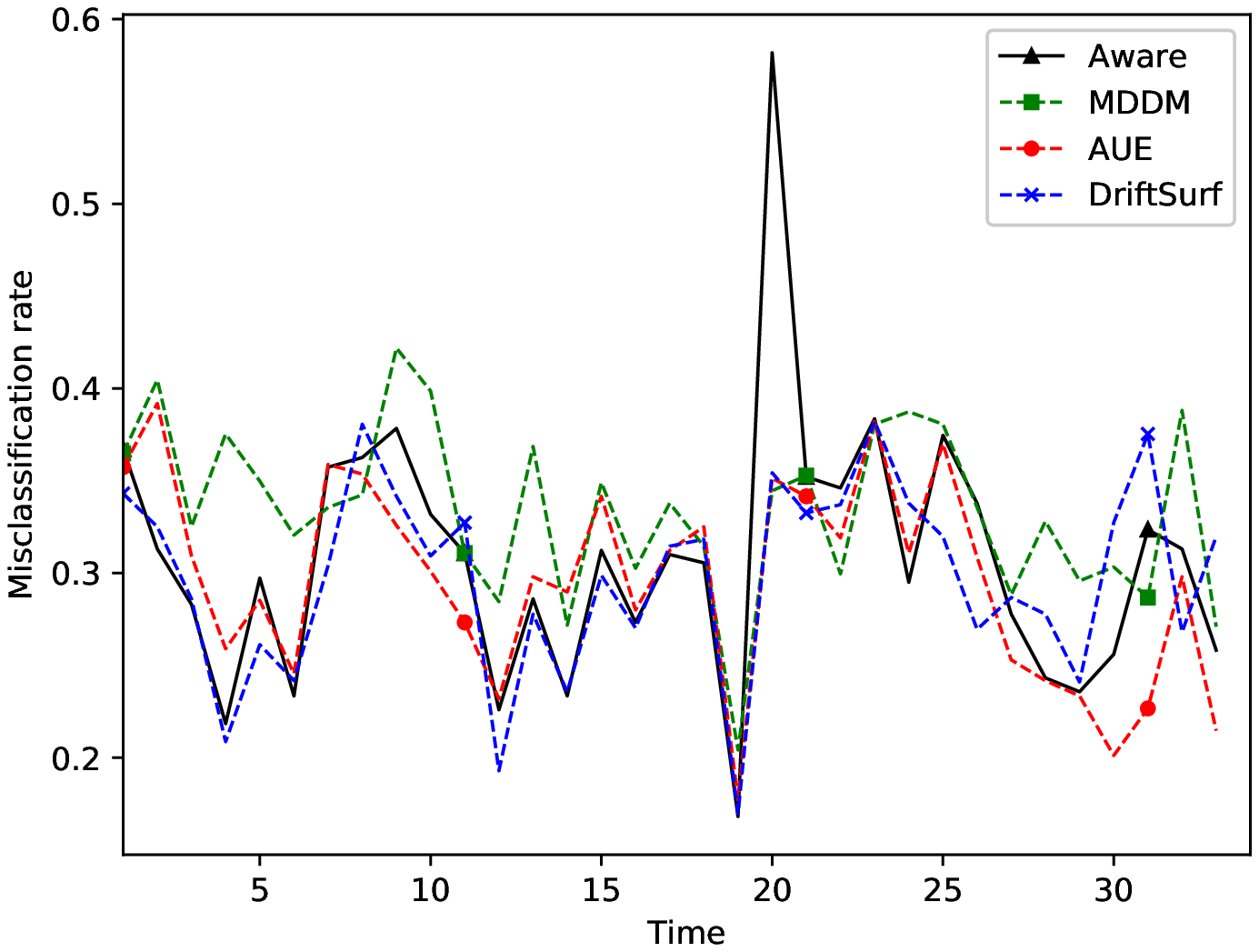}
            \caption{Electricity}
            \label{fig:elec-unlimited}
        \end{subfigure}\hfill
        \begin{subfigure}[t]{0.32\textwidth}
            \includegraphics[width=\columnwidth]{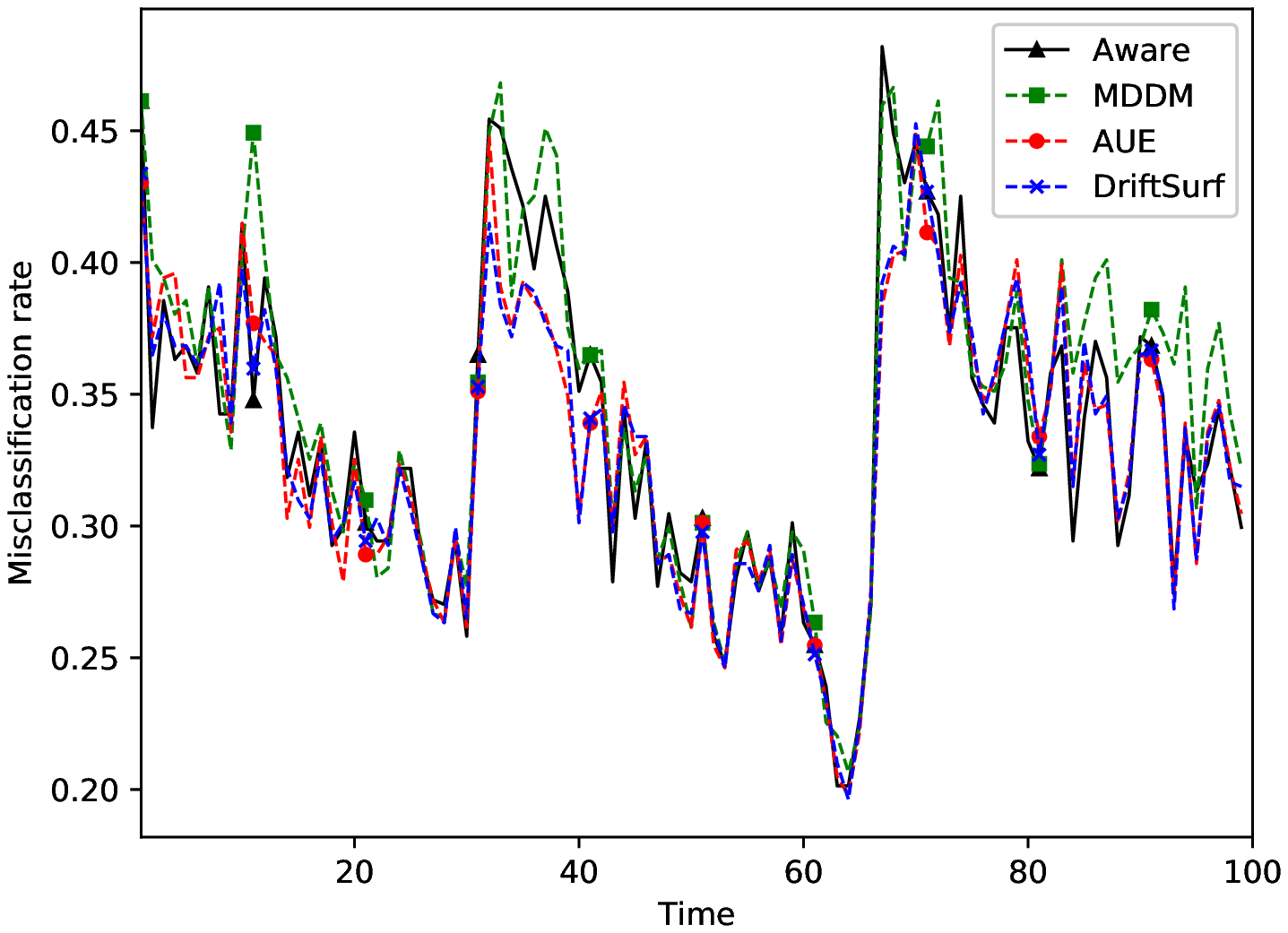}
            \caption{Airline}
            \label{fig:airline-unlimited}
        \end{subfigure}\hfill
    \end{center}
    \caption{Misclassification rate over time ($\rho=2m$ for each model)}
    \label{fig:Misclassification rate over time}
\end{figure*}

\begin{figure}[h!]
\begin{center}
\centerline{\includegraphics[width=0.32\columnwidth]{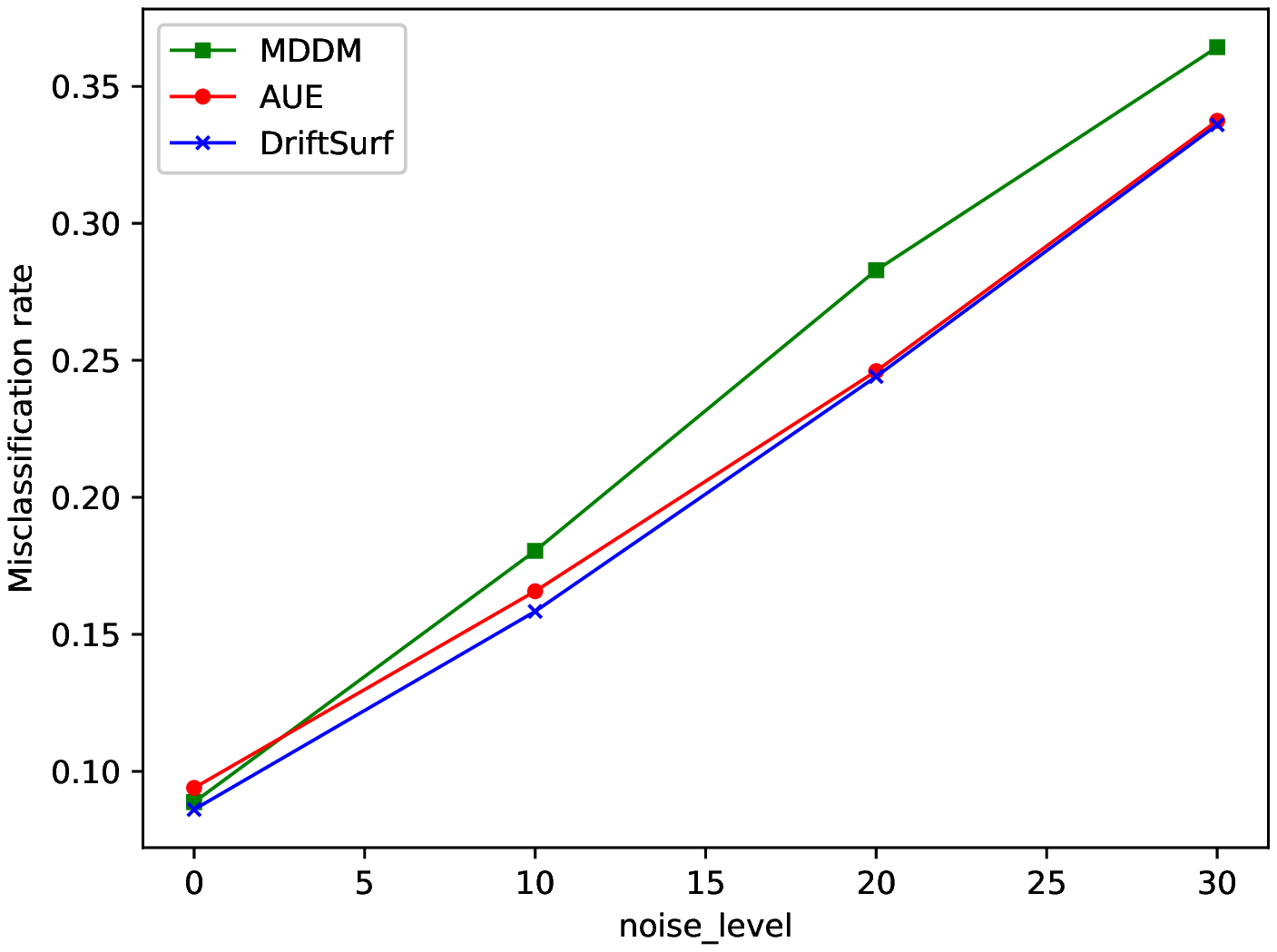}}
\caption{Total average of misclassification rate for SEA dataset with different levels of noise}
\label{fig:sea-noise}
\end{center}
\end{figure}

Table \ref{table:ave-misclassification-app} includes results for MDDM-G (what we use generally for MDDM), as well as two other MDDM variants, MDDM-A and MDDM-E, for a more thorough comparison. The average misclassification rates were similar across each dataset, with no single MDDM variant that consistently outperformed the others. Given the poor peformance of MDDM on CoverType, we re-did the experiment on CoverType with two other drift detection methods, DDM \cite{gama2004learning} and EDDM \cite{baena2006early} to investigate further. In Figure \ref{fig:covtype-MDDM-drift detectors}, we observed DDM accurately detected the two drifts, but EDDM also suffered with continual false positives.

\begin{figure*}[h!]
    \begin{center}
        \begin{subfigure}[t]{0.25\textwidth}
            \includegraphics[width=\columnwidth]{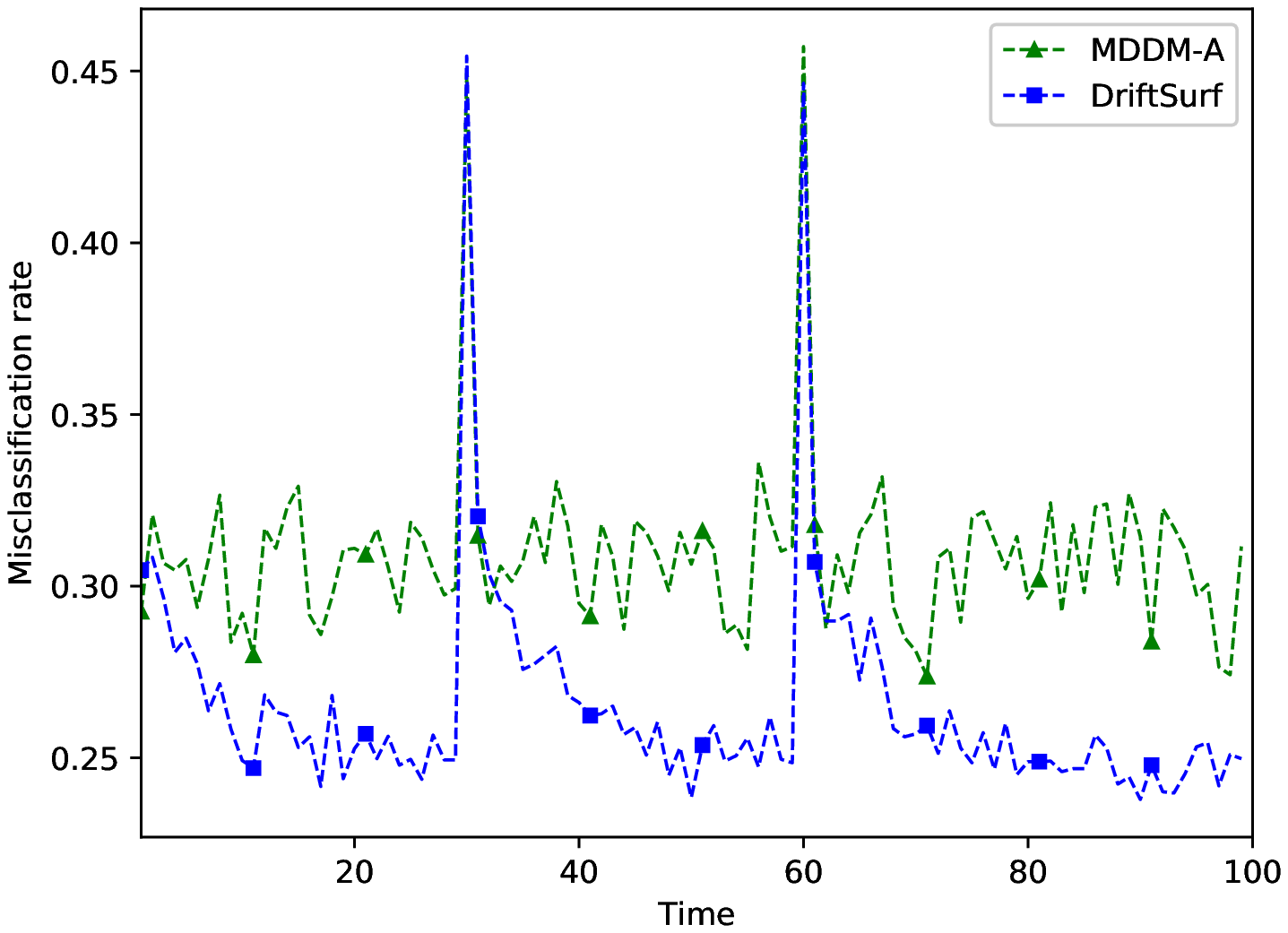}
            \caption{MDDM-A}
            \label{fig:MDDM-A}
        \end{subfigure}\hfill
        \begin{subfigure}[t]{0.25\textwidth}
            \includegraphics[width=\columnwidth]{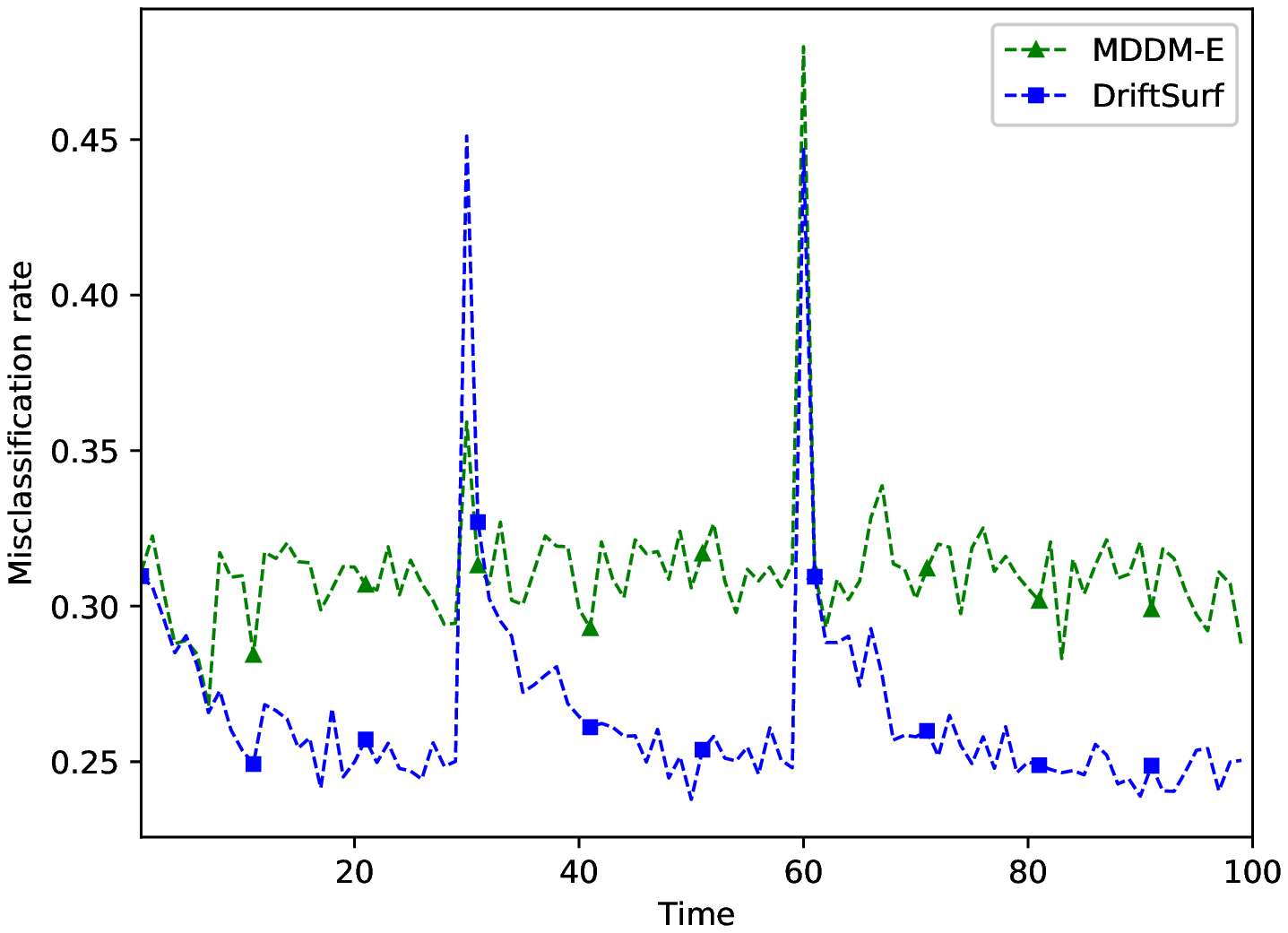}
            \caption{MDDM-E}
            \label{fig:MDDM-E}
        \end{subfigure}\hfill
        \begin{subfigure}[t]{0.25\textwidth}
            \includegraphics[width=\columnwidth]{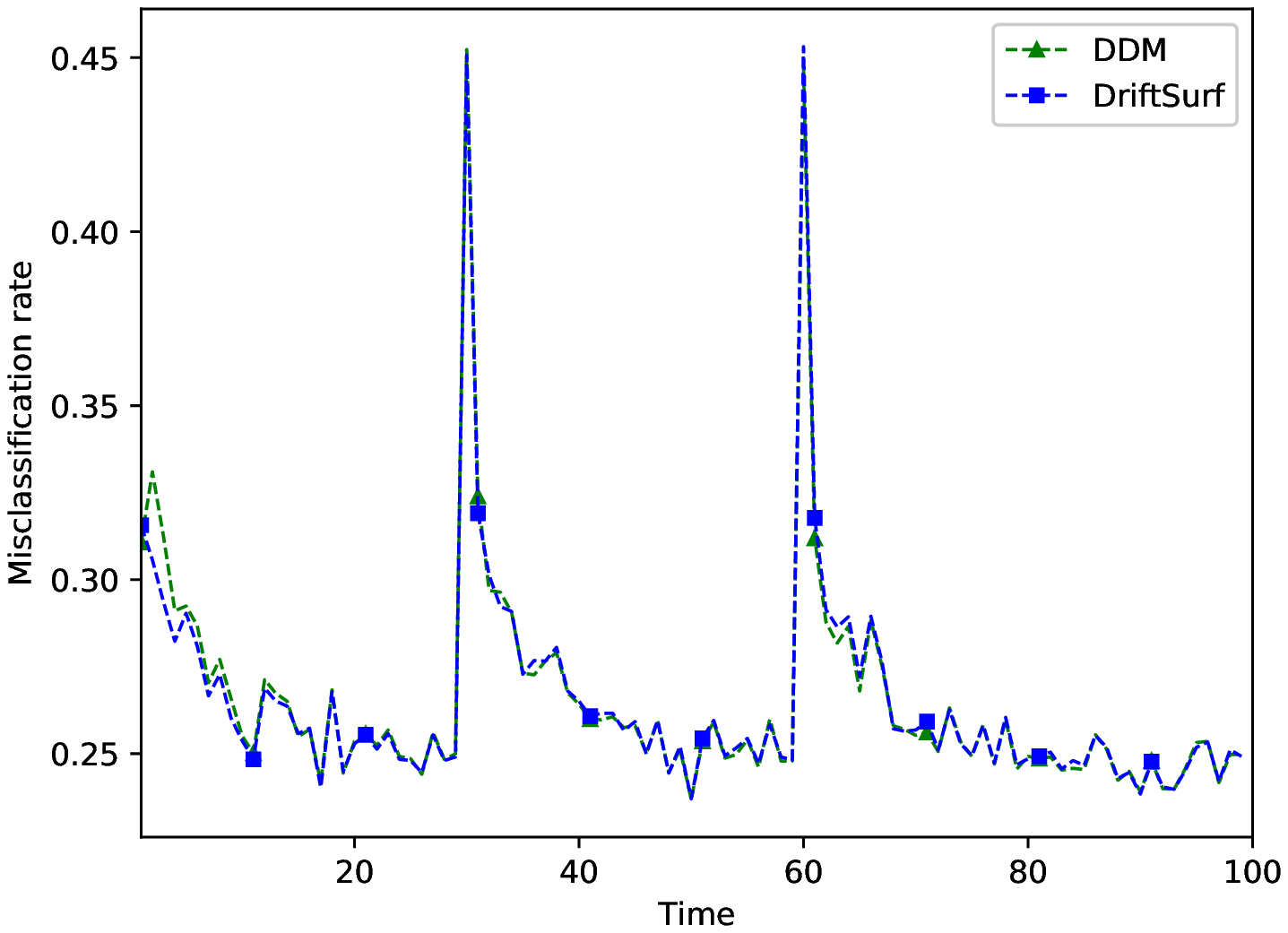}
            \caption{DDM}
            \label{DDM}
        \end{subfigure}\hfill
        \begin{subfigure}[t]{0.25\textwidth}
            \includegraphics[width=\columnwidth]{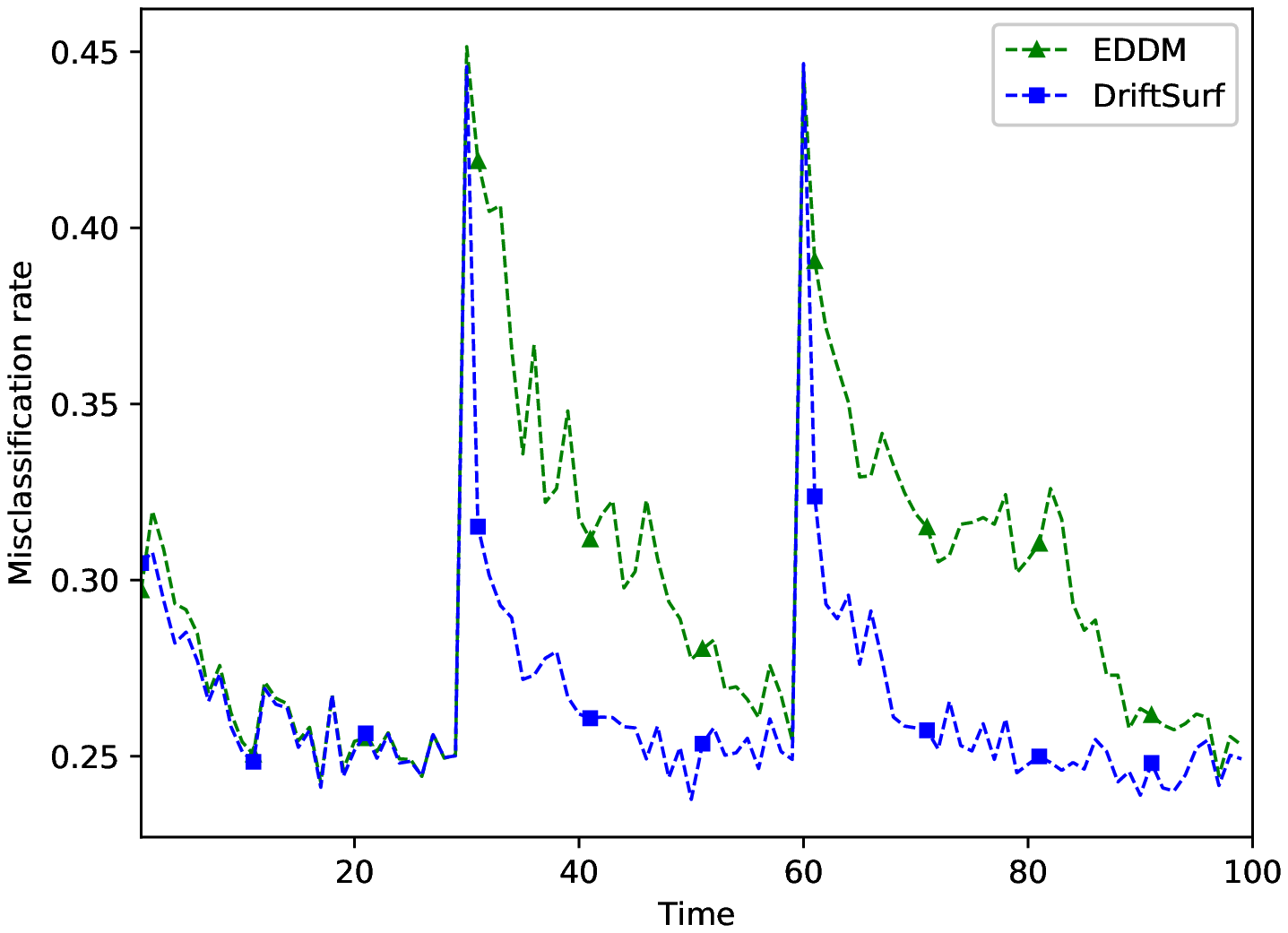}
            \caption{EDDM}
            \label{EDDM}
        \end{subfigure}\hfill
    \end{center}
    \caption{Covertype dataset, different drift detectors ($\rho=2m$ for each model)}
    \label{fig:covtype-MDDM-drift detectors}
\end{figure*}

\subsection{Equal Computational Power for each Algorithm}
\label{sec:expt-results-equal-alg}

Next, we present results for the training strategy where each algorithm has access to $\rho$ update steps in total that are divided among all its models so that the computation time of each algorithm is identical. For the case $\rho=4m$, the misclassification rate at each time step is shown in Figure \ref{fig:Misclassification rate over time-rho=4m} for the comparison of \dsurf, Aware, MDDM, and AUE and in Figure \ref{fig:Misclassification rate over time-rho=4m-candor} for the additional algorithmic comparisons against two ensemble methods, AUE ($k=2$) and Candor. The average over time is in Table \ref{table:ave-misclassification-rho=4m}. For the case $\rho=2m$, the misclassification rate at each time is shown in Figure \ref{fig:Misclassification rate over time-rho=2m}, and the average over time is in Table \ref{table:ave-misclassification-rho=2m}.

Let us discuss a few differences from the previous case where each model was trained with $\rho$ steps. We generally observe lower relative accuracy for AUE, and especially so after drifts. (The exceptions are on Circles and PowerSupply, where the extra training iterations do not matter as much; compare to the fast convergence of Aware after a reset.) This is because AUE is an ensemble of 10 models, and so each model is trained with only 1/5 of the steps that the models of \dsurf get, and only 1/10 of the models for MDDM and \aware. \dsurf now dominates AUE in average misclassification rate on each dataset except for PowerSupply.

We observe \dsurf compares favorably to MDDM on the same datasets as it did in the undivided $\rho$ case. However, MDDM's advantages are magnified on SINE1 and RCV1, the datasets with sharp drifts that were clear to detect, and when immediate switching to the new model was desired. On  PowerSupply, we observe that the false positives are not as punitive for MDDM as before, because its relative additional training per model means that its new models catch up faster. For Hyperplane, the relative additional training for MDDM was advantageous in the $\rho = 4m$ case, but in the $\rho=2m$ case, the advantage of MDDM on Hyperplane-slow vanished and it was comparable to \dsurf. We suspect that when fewer computational steps are available, it is no longer desirable to create new models (which take longer to warm up) so frequently as MDDM did in the $\rho=4m$ case where it outperformed \dsurf.

In Tables \ref{table:ave-misclassification-rho=4m} and \ref{table:ave-misclassification-rho=2m}, we present results for a variation on AUE that is limited to only two experts, which we refer to as AUE ($k=2$). In our comparison of each algorithm when enforcing equal computation time, dividing the $\rho$ steps equally among a total of ten experts in the original AUE is unsurprisingly detrimental to its performance. An alternative comparison is to reduce the total number of experts so that in AUE ($k=2$), each of the two experts is updated with $\rho=2m$ steps, identical to \dsurf. We observe that AUE ($k=2$) performs better than AUE on four datasets: Hyperplane-slow, Hyperplane-fast, SINE1, and Electricity. We previously mentioned that for Hyperplane, the continual drift means always using the latest available model works well, and we mentioned that for Electricity, the drift that does not require adaptation means always using the oldest available model works well. Therefore, on these datasets, the additional eight experts of the original AUE have little utility and AUE ($k=2$) performs better. The reason for improvement of AUE ($k=2$) on SINE1 is less clear, but we suspect that the additional experts of the original AUE penalize the accuracy immediately after the abrupt drifts when it is desirable to assign the most weight to the newest expert.

In Tables \ref{table:ave-misclassification-rho=4m} and \ref{table:ave-misclassification-rho=2m}, we present results for another ensemble method Candor, which is better suited for the setting studied in this section normalizing the computational power because it only requires training a single model at a time. Another distinctive feature of Candor is that it uses biased regularization during training to anchor the newest model closer to the weighted ensemble average from the previous time step. For these two factors, we expect that Candor is better at adapting to drift at the expense of stationary performance, which is exemplified by its high accuracy on the Mixed, powersupply and the continually drifting Hyperplane datasets and its relative improvement over AUE on some other datasets including Sine1, Circles, Airline, Electricity and Powersupply.

\begin{table}[ht!]
\caption{Total average of misclassification rate ($\rho=4m$ divided among all models of each algorithm)}
\label{table:ave-misclassification-rho=4m}
\begin{center}
\begin{small}
\begin{sc}
\begin{tabular}{l*{6}c}
\toprule
 Dataset    	& \aware  & \dsurf		& MDDM-G  	& AUE 		& AUE ($k$=2) & Candor\\
\midrule
SEA0        	& 0.120 	& \textbf{0.082} & 0.092 		& 0.179 		& 0.226 		& 0.192\\
SEA10       	& 0.179 	& 0.169 		& \textbf{0.160} & 0.218 		& 0.269 		& 0.234\\
SEA20       	& 0.256 	& \textbf{0.246} & 0.258 		& 0.280 		& 0.320  		& 0.283\\
SEA30       	& 0.334 	& \textbf{0.328} & 0.341 		& 0.342 		& 0.365  		& 0.338\\
SEA-gradual  	& 0.170 	& \textbf{0.157} & 0.160 		& 0.215 		& 0.267  		& 0.232\\
Hyper-slow  	& 0.145 	& 0.145 		& 0.132 		& 0.158 		& 0.120		& \textbf{0.103}\\
Hyper-fast  	& 0.222 	& 0.177 		& 0.154		 & 0.238  		& 0.154		& \textbf{0.144}\\
SINE1       	& 0.149 	& 0.194 		& \textbf{0.157} & 0.263  		& 0.181		& 0.159\\
Mixed 	      	& 0.188 	& 0.203 		& 0.200		 & 0.254  		& 0.203		& \textbf{0.182}\\
Circles       	& 0.345 	& 0.369 		& \textbf{0.341} & 0.372  		& 0.424		& 0.360\\
RCV1        	& 0.101 	& 0.127 		& \textbf{0.113} & 0.310 		& 0.404 		& 0.341\\
CoverType   	& 0.260 	& \textbf{0.266} & 0.302 		& 0.301  		&  0.314		& 0.303\\
Airline     		& 0.335 	& \textbf{0.331} & 0.337 		& 0.360 		& 0.366 		& 0.353\\
Electricity 		& 0.310 	& \textbf{0.289} & 0.324 		& 0.348 		& 0.326 		& 0.300\\
PowerSupply 	& 0.303 	& 0.305 		& 0.292 		& 0.284		& 0.393 		& \textbf{0.282}\\
\bottomrule
\end{tabular}
\end{sc}
\end{small}
\end{center}
\end{table}

\begin{figure*}[h!]
    \begin{center}
        \begin{subfigure}[t]{0.32\textwidth}
            \includegraphics[width=\columnwidth]{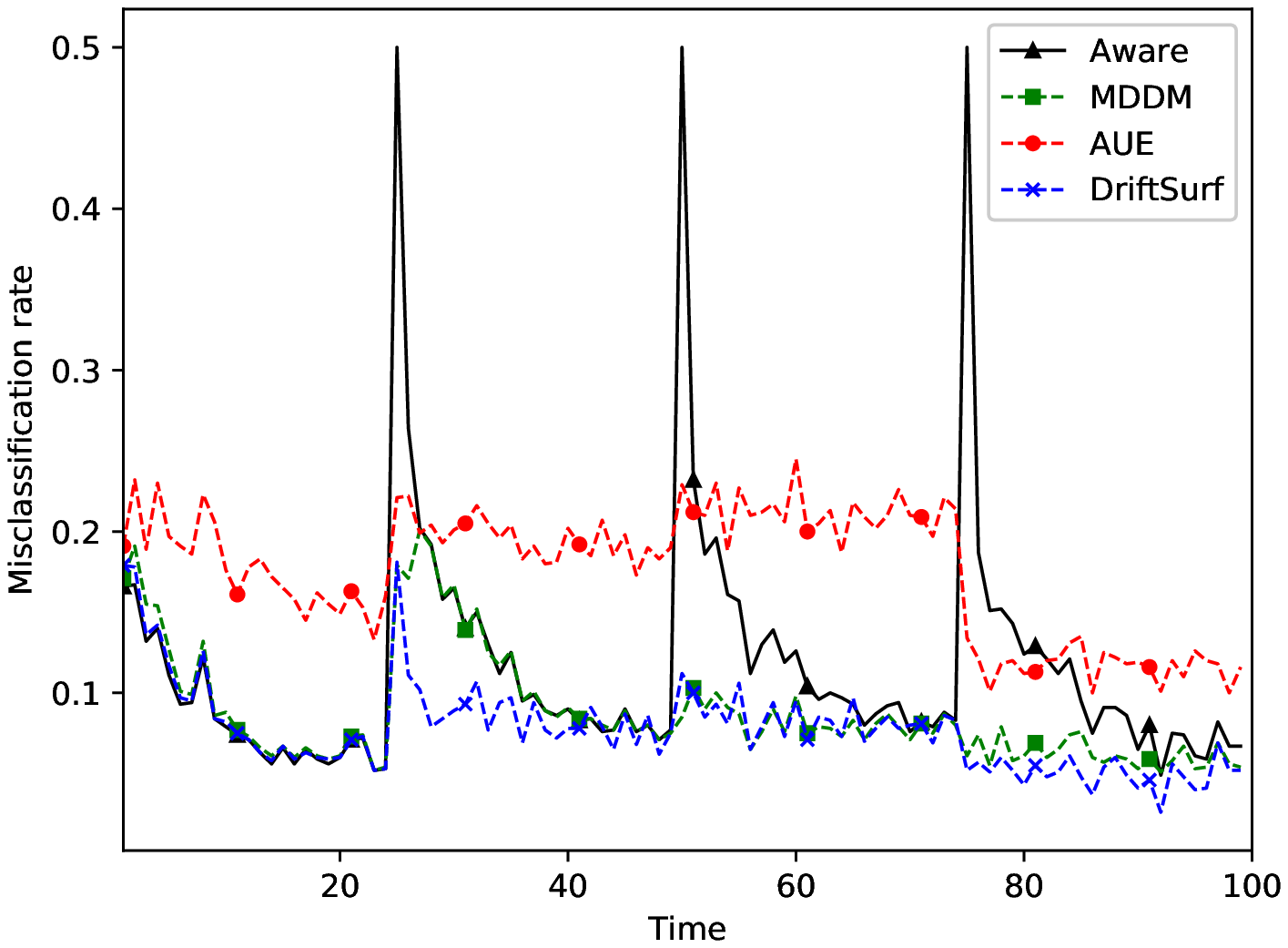}
            \caption{SEA0}
            \label{fig:sea0-limited-4}
        \end{subfigure}\hfill
        \begin{subfigure}[t]{0.32\textwidth}
            \includegraphics[width=\columnwidth]{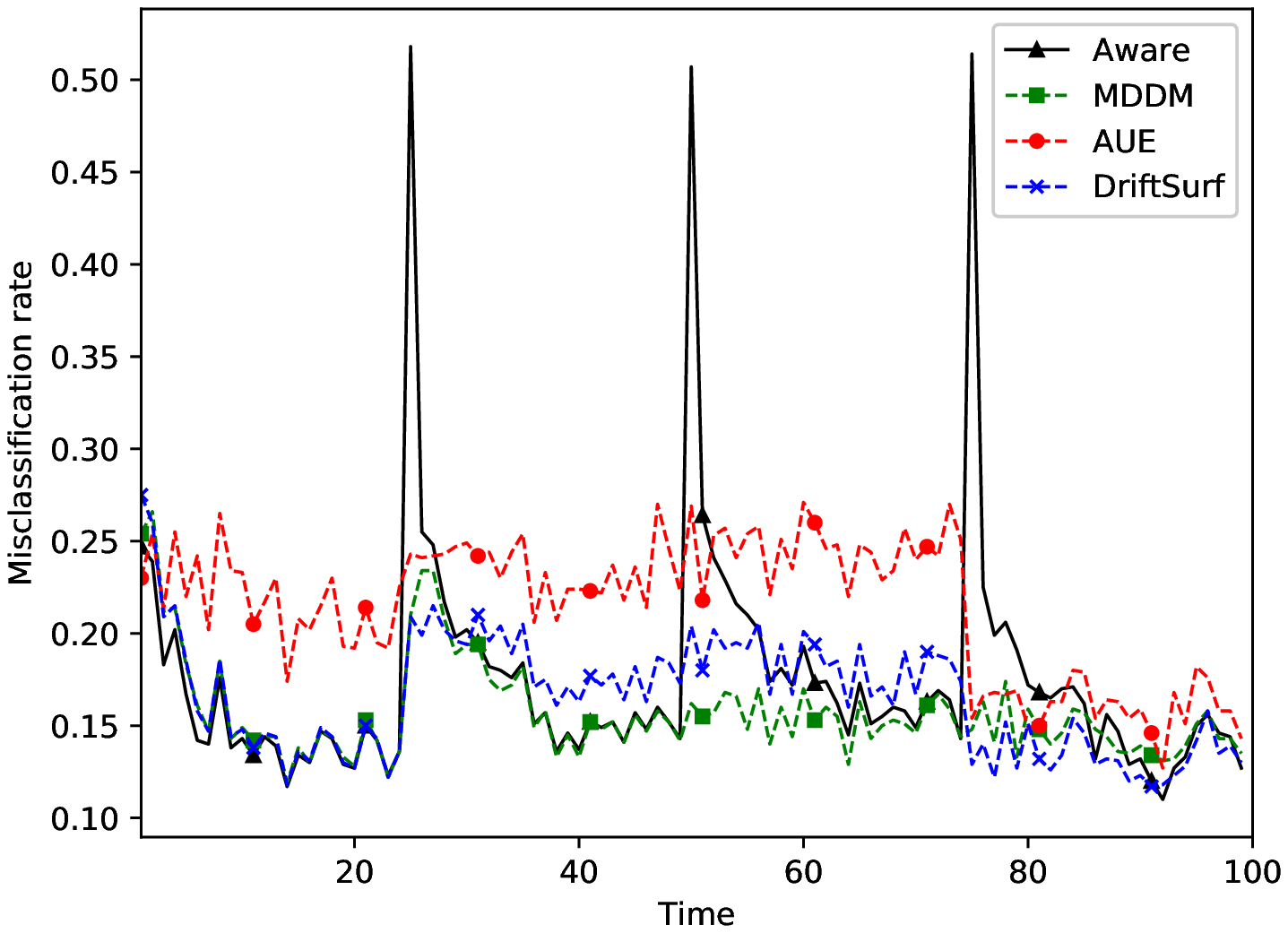}
            \caption{SEA10}
            \label{fig:sea10-limited-4}
        \end{subfigure}\hfill
        \begin{subfigure}[t]{0.32\textwidth}
            \includegraphics[width=\columnwidth]{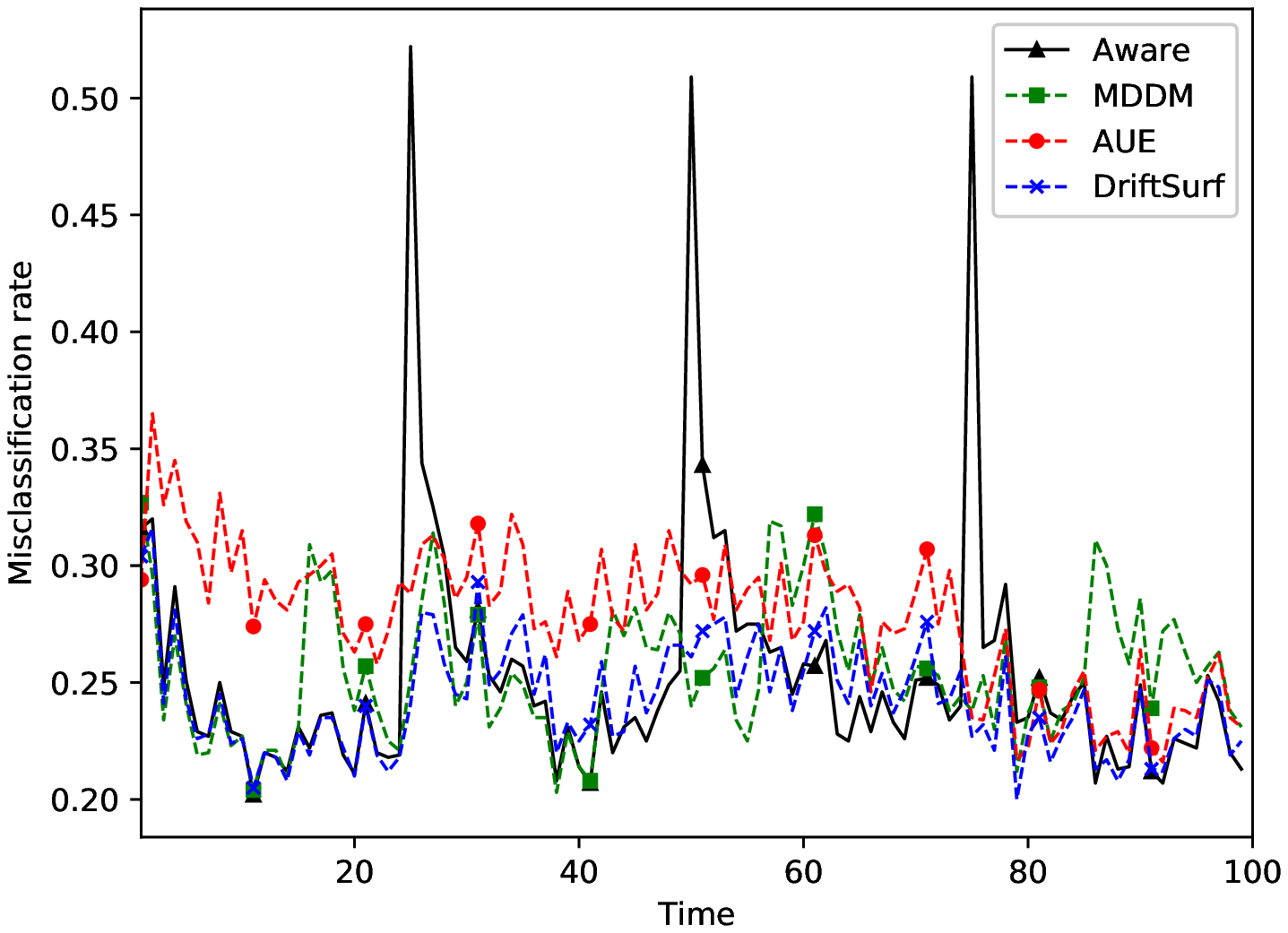}
            \caption{SEA20}
            \label{fig:sea20-limited-4}
        \end{subfigure}\hfill
        
        \begin{subfigure}[t]{0.32\textwidth}
            \includegraphics[width=\columnwidth]{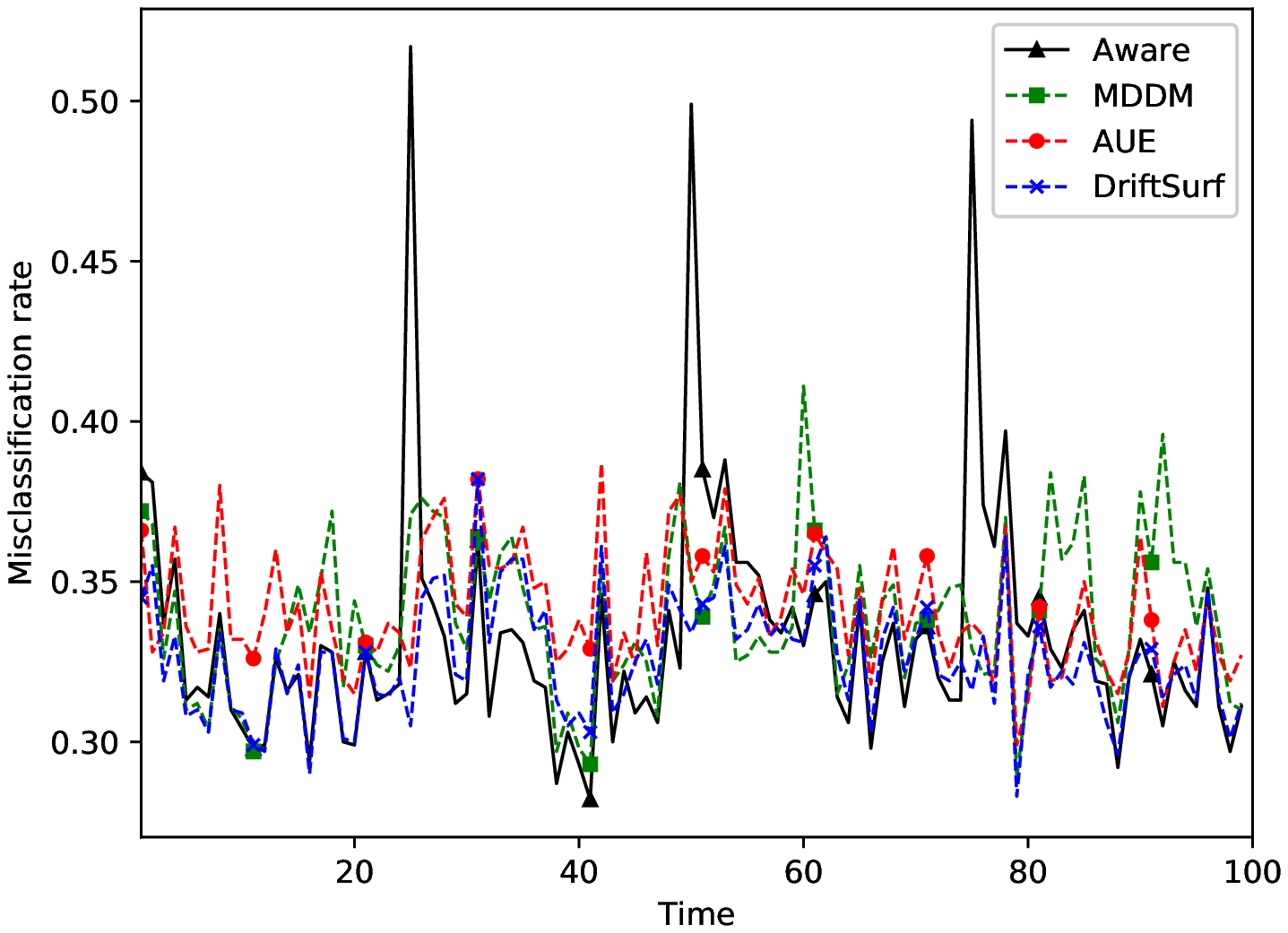}
            \caption{SEA30}
            \label{fig:sea30-limited-4}
        \end{subfigure}\hfill
        \begin{subfigure}[t]{0.32\textwidth}
            \includegraphics[width=\columnwidth]{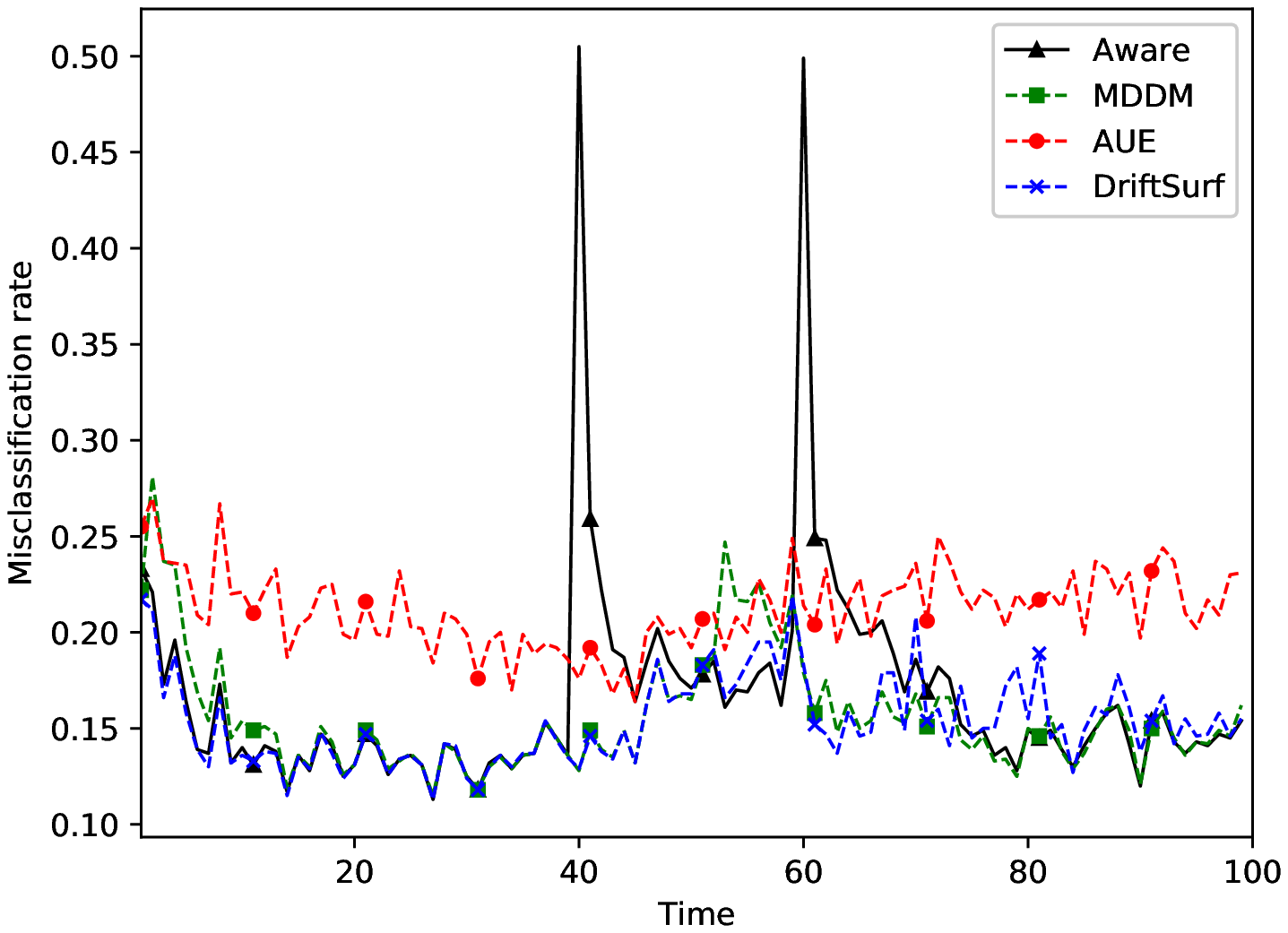}
            \caption{SEA-gradual}
            \label{fig:sea_gradual-limited-4}
        \end{subfigure}\hfill
        \begin{subfigure}[t]{0.32\textwidth}
            \includegraphics[width=\columnwidth]{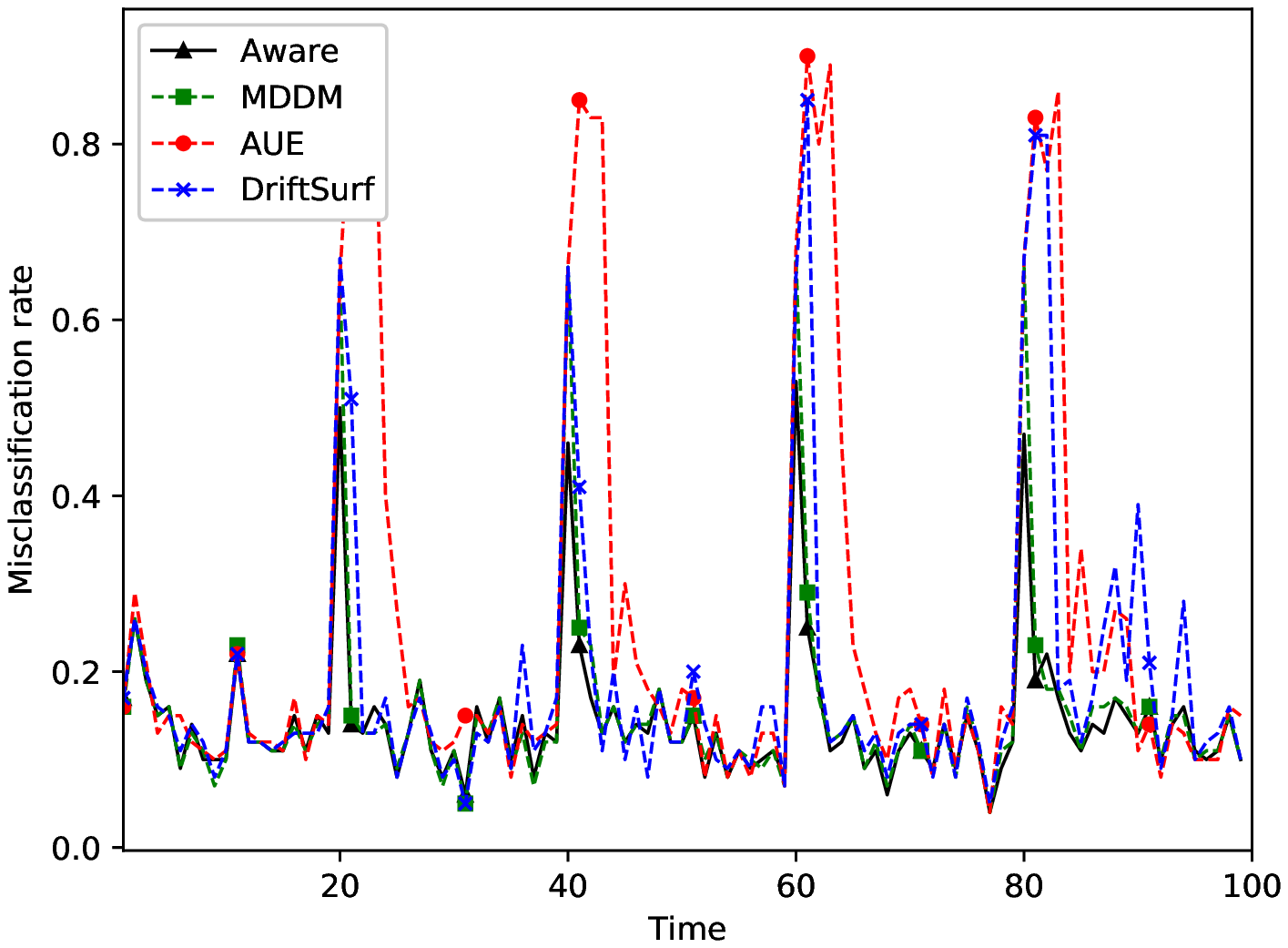}
            \caption{SINE1}
            \label{fig:sine1-limited-4}
        \end{subfigure}\hfill
        
        \begin{subfigure}[t]{0.32\textwidth}
            \includegraphics[width=\columnwidth]{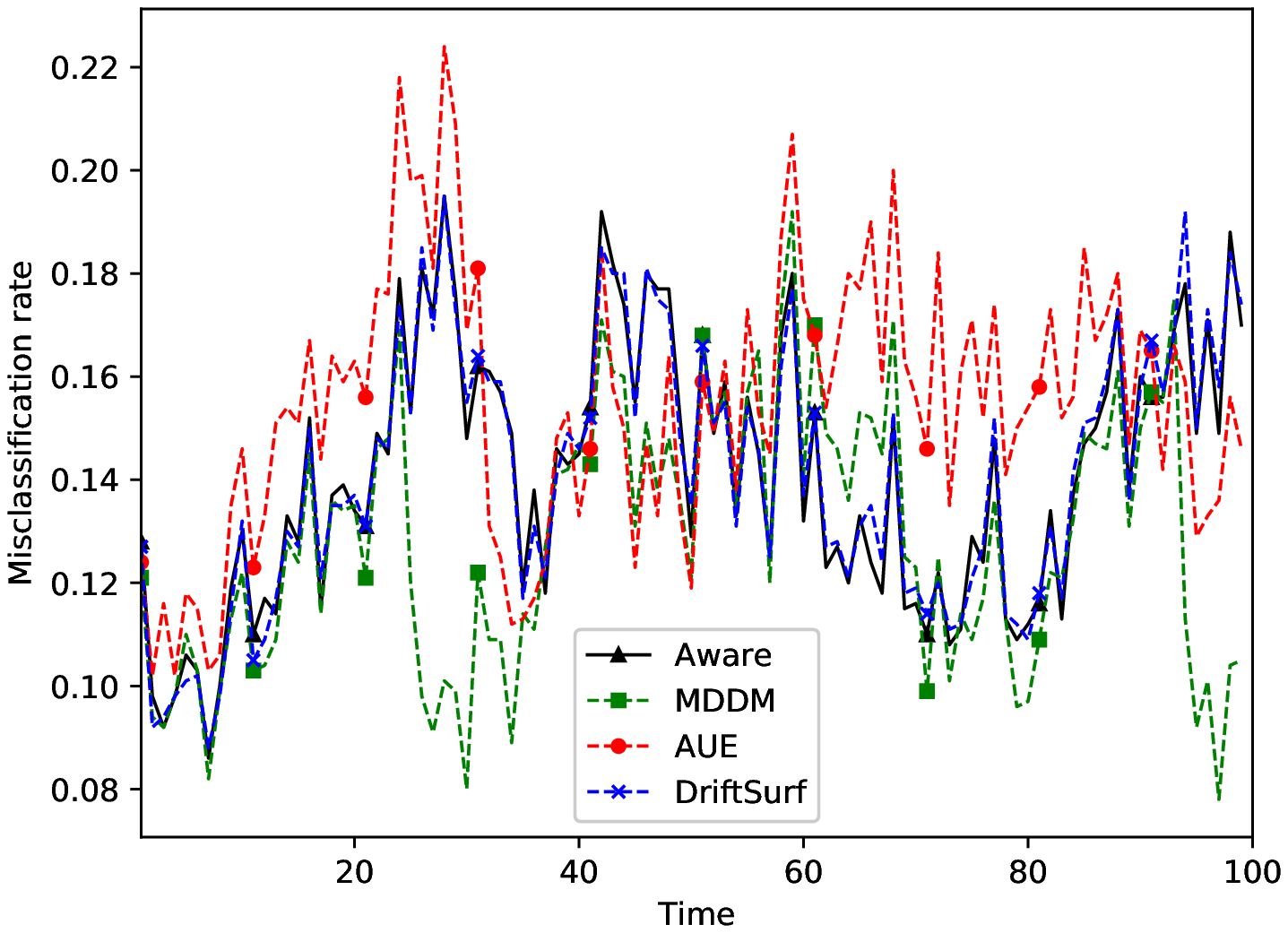}
            \caption{HyperPlane-slow}
            \label{fig:hyperplane-slow-limited-4}
        \end{subfigure}\hfill
        \begin{subfigure}[t]{0.32\textwidth}
            \includegraphics[width=\columnwidth]{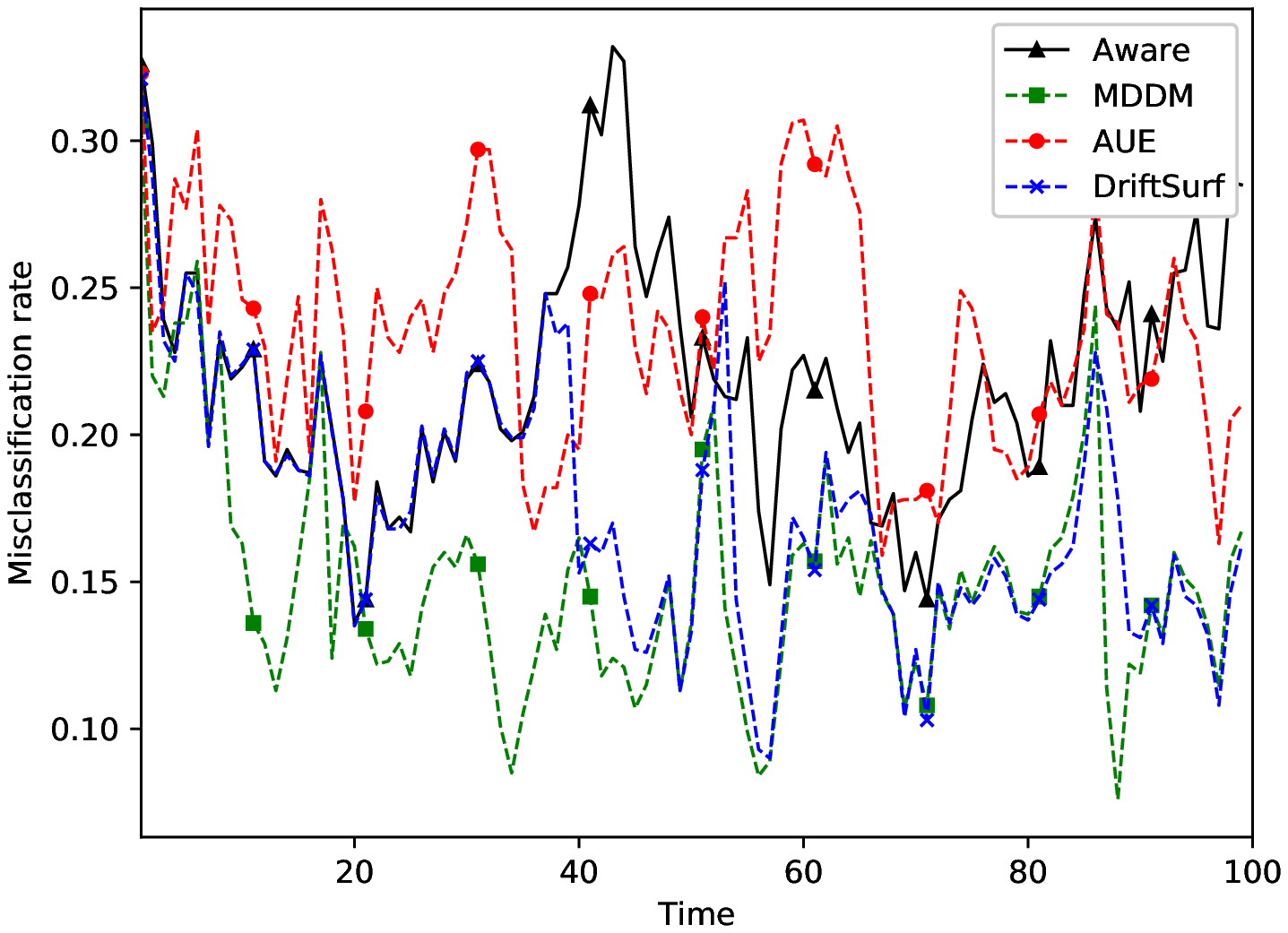}
            \caption{HyperPlane-fast}
            \label{fig:hyperplane-fast-limited-4}
        \end{subfigure}\hfill
        \begin{subfigure}[t]{0.32\textwidth}
            \includegraphics[width=\columnwidth]{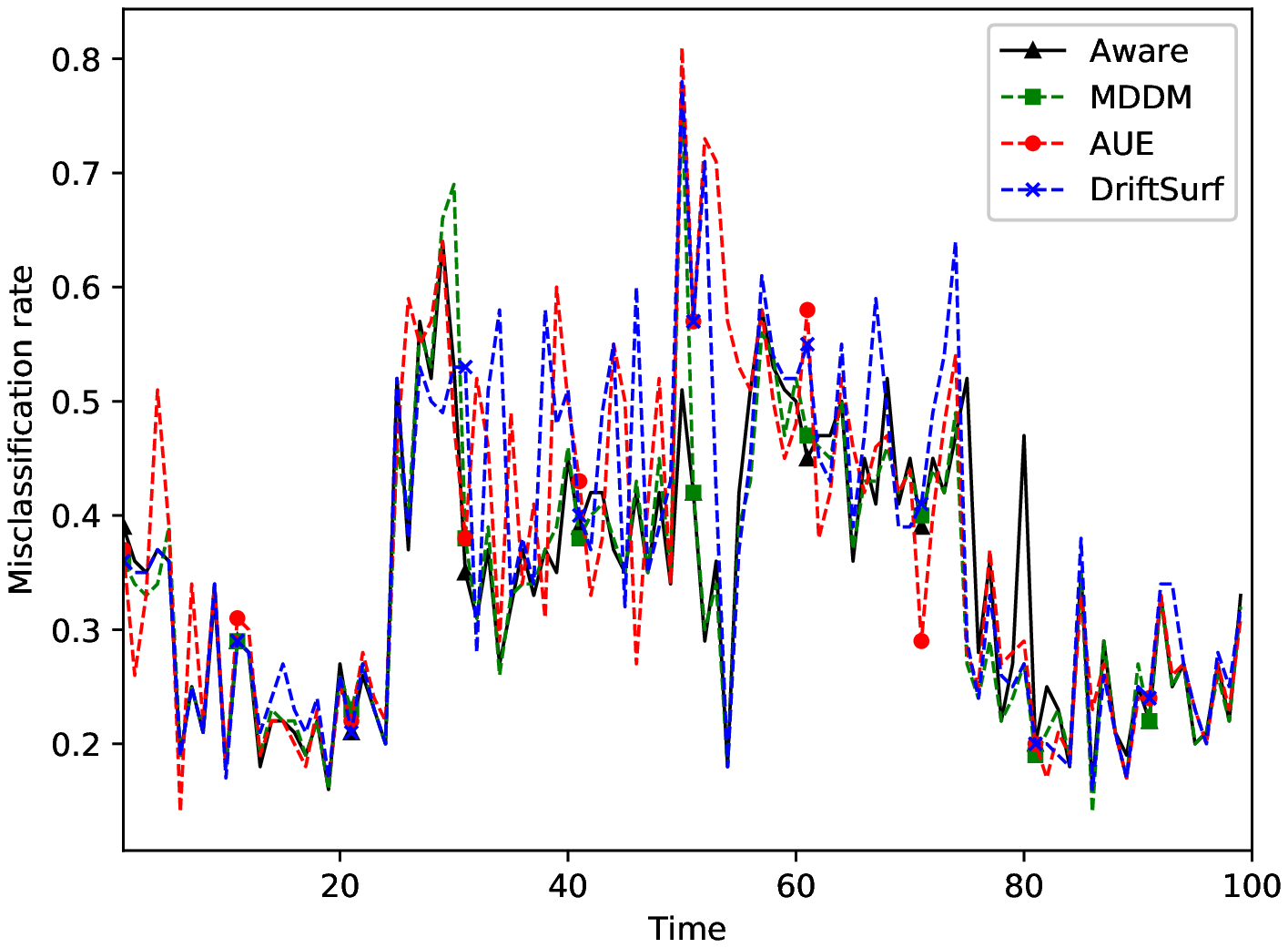}
            \caption{CIRCLES}
            \label{fig:circles-limited-4}
        \end{subfigure}\hfill
        
        \begin{subfigure}[t]{0.32\textwidth}
            \includegraphics[width=\columnwidth]{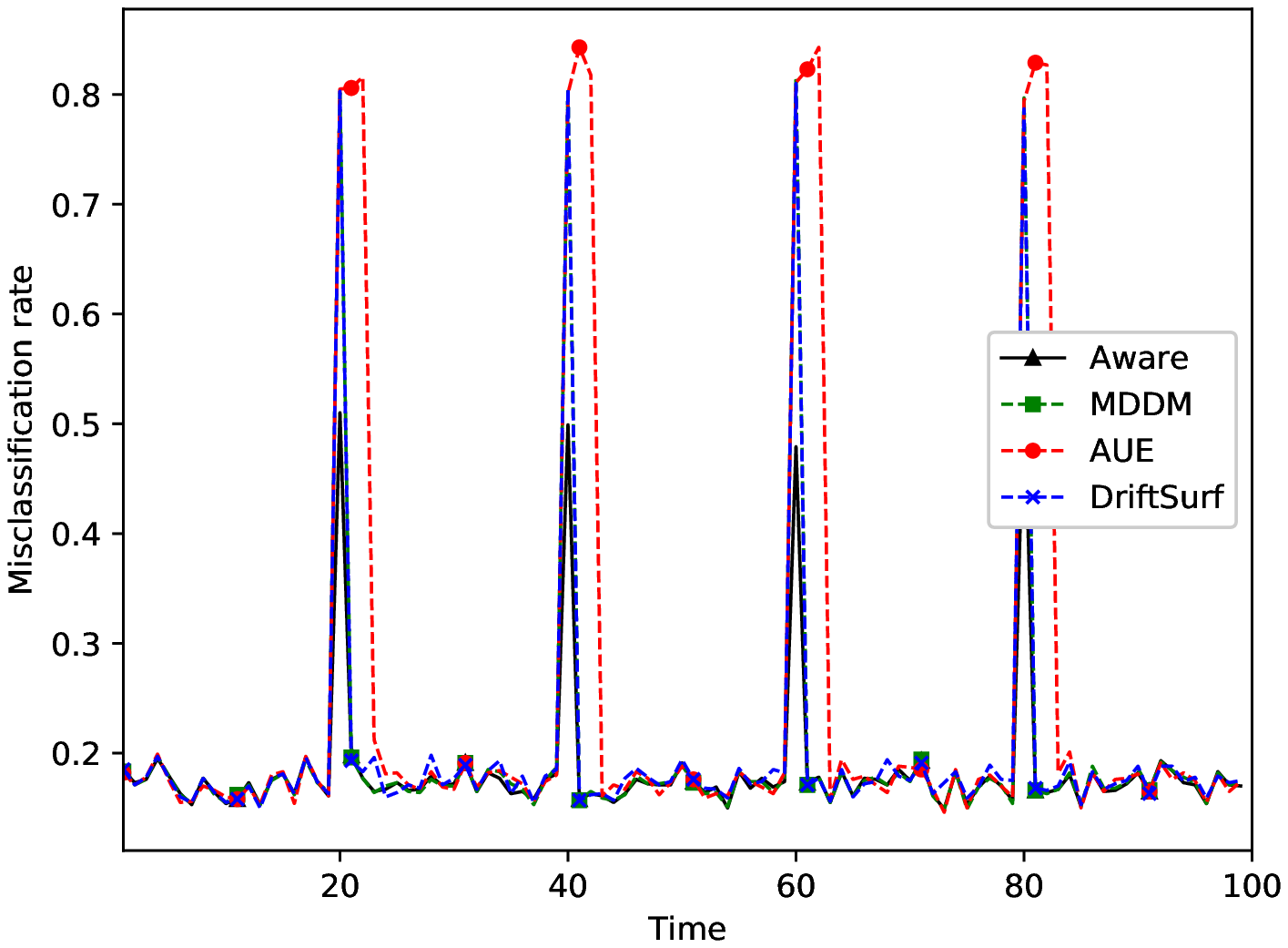}
            \caption{MIXED}
            \label{fig:mixed-limited-4}
        \end{subfigure}\hfill
        \begin{subfigure}[t]{0.32\textwidth}
            \includegraphics[width=\columnwidth]{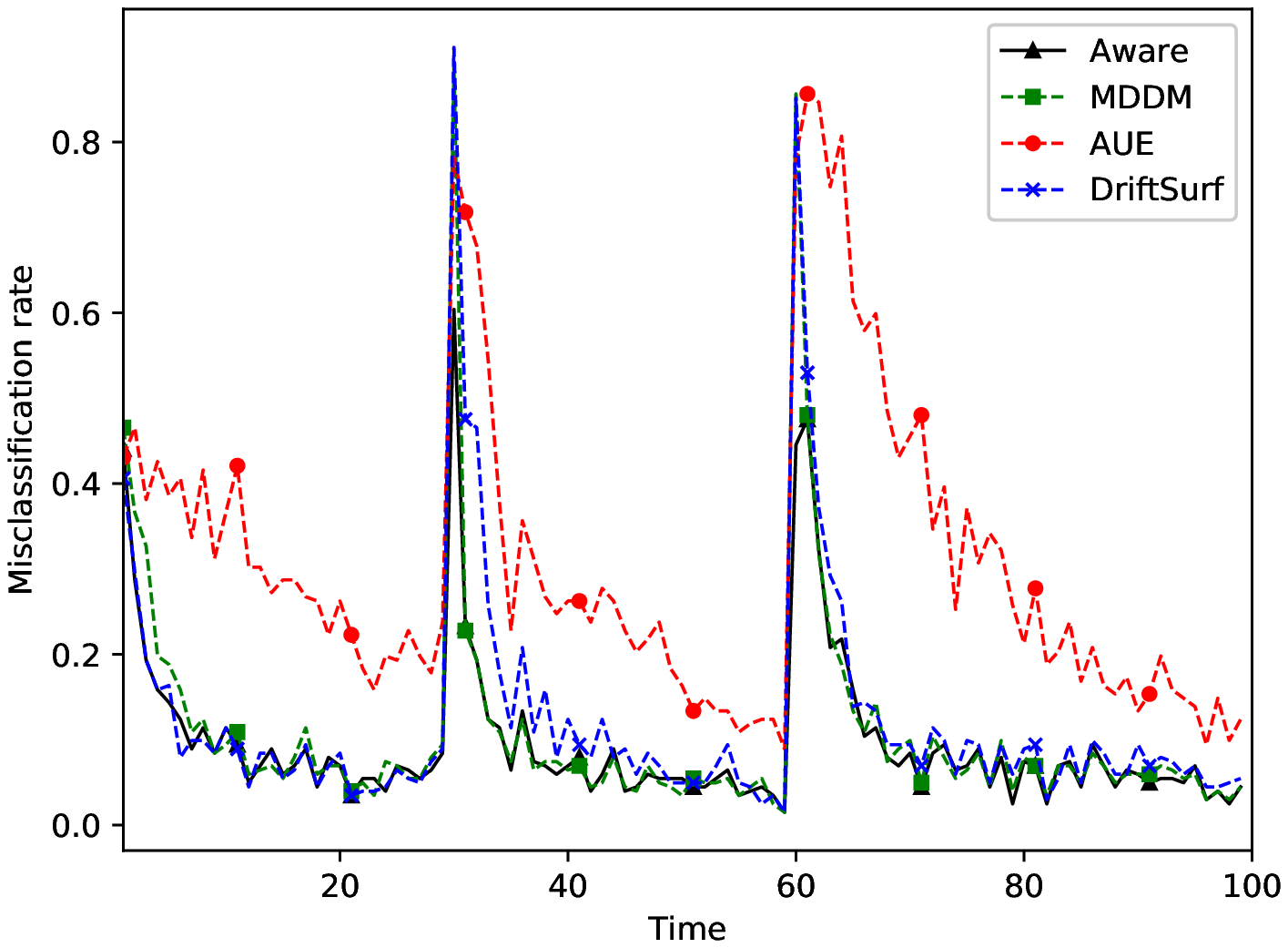}
            \caption{RCV1}
            \label{fig:rcv-limited-4}
        \end{subfigure}\hfill
        \begin{subfigure}[t]{0.32\textwidth}
            \includegraphics[width=\columnwidth]{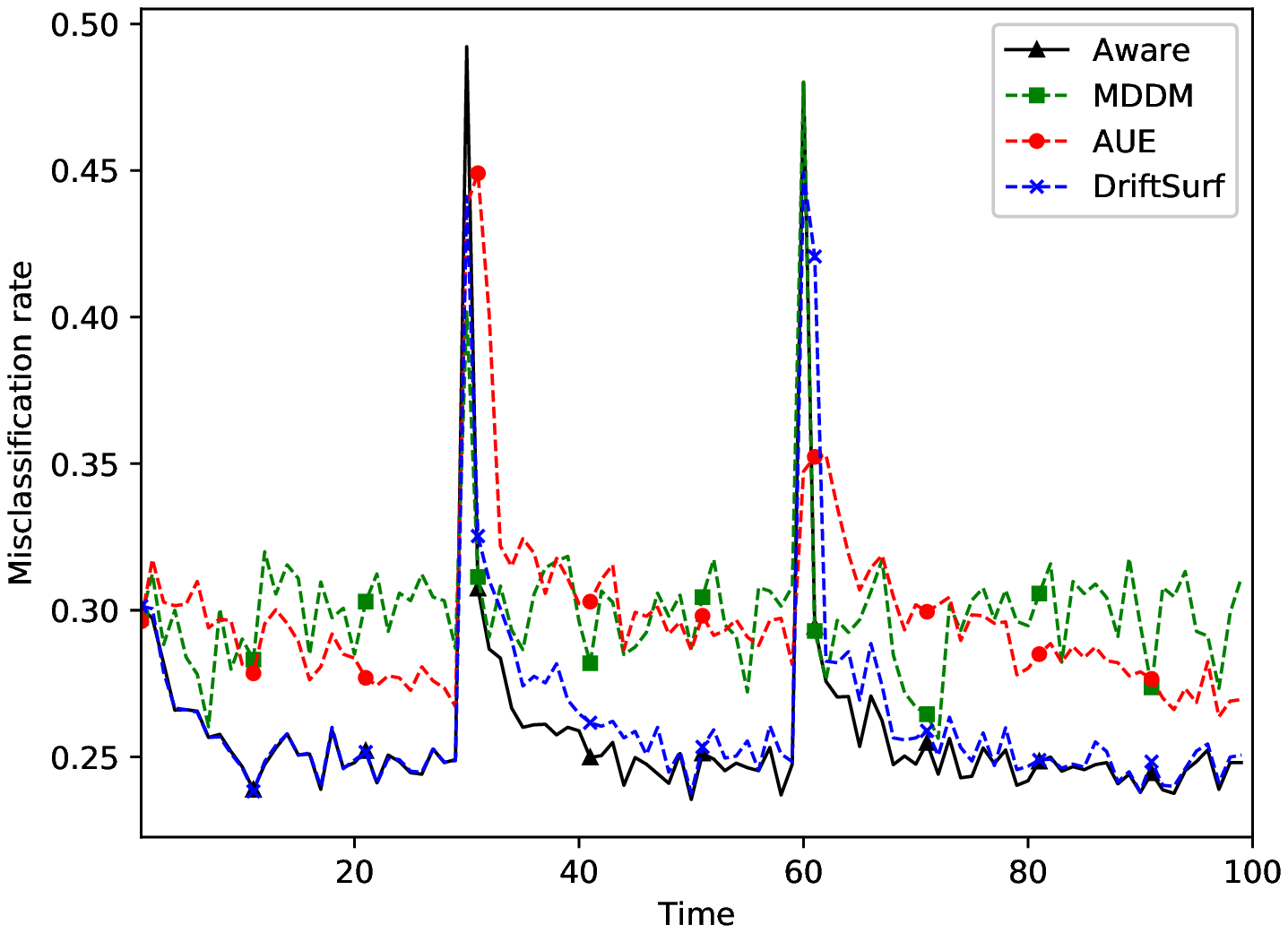}
            \caption{CoverType}
            \label{fig:covtype-limited-4}
        \end{subfigure}\hfill
        
        \begin{subfigure}[t]{0.32\textwidth}
            \includegraphics[width=\columnwidth]{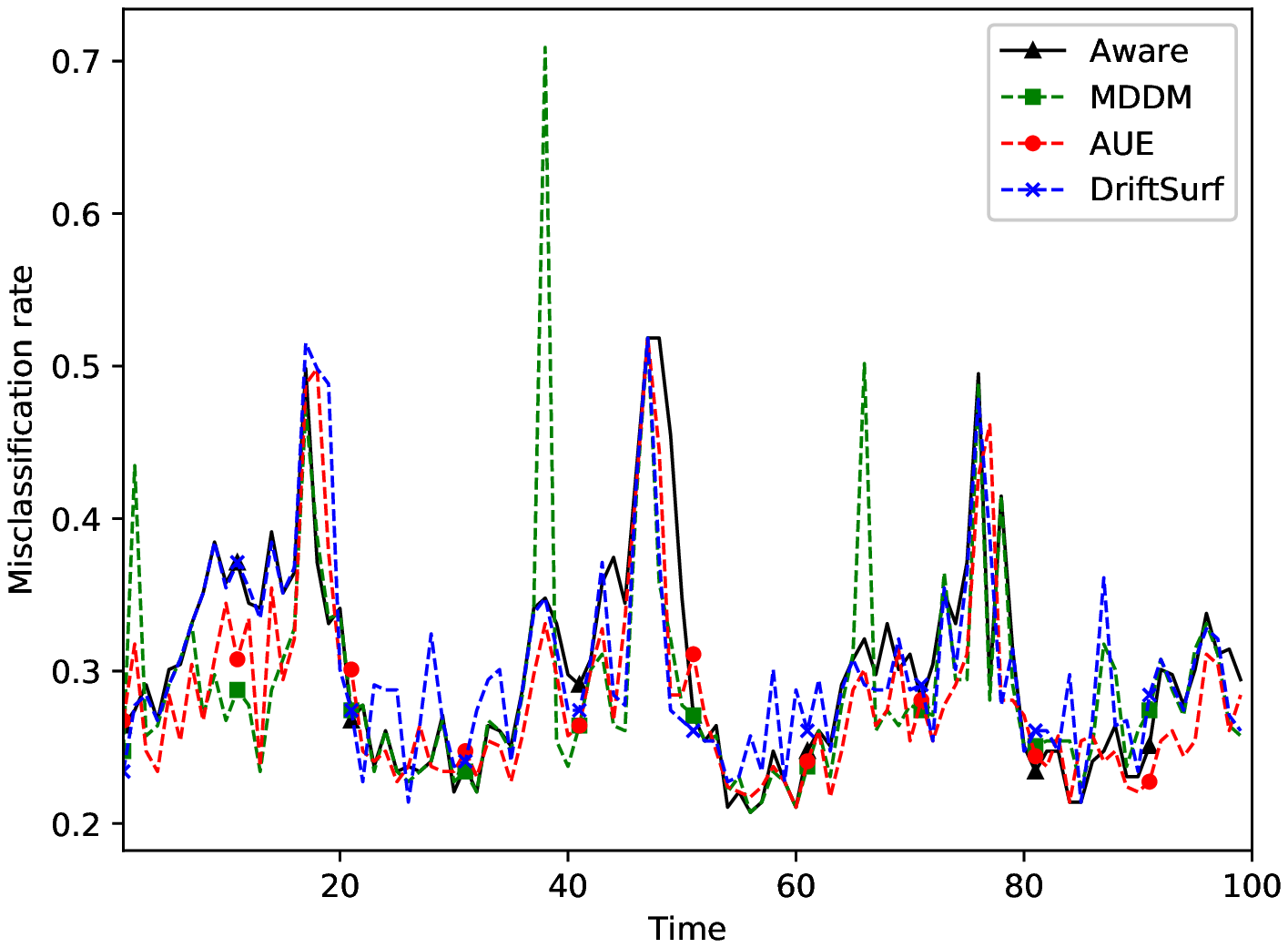}
            \caption{PowerSupply}
            \label{fig:powersupply-limited-4}
        \end{subfigure}\hfill
        \begin{subfigure}[t]{0.32\textwidth}
            \includegraphics[width=\columnwidth]{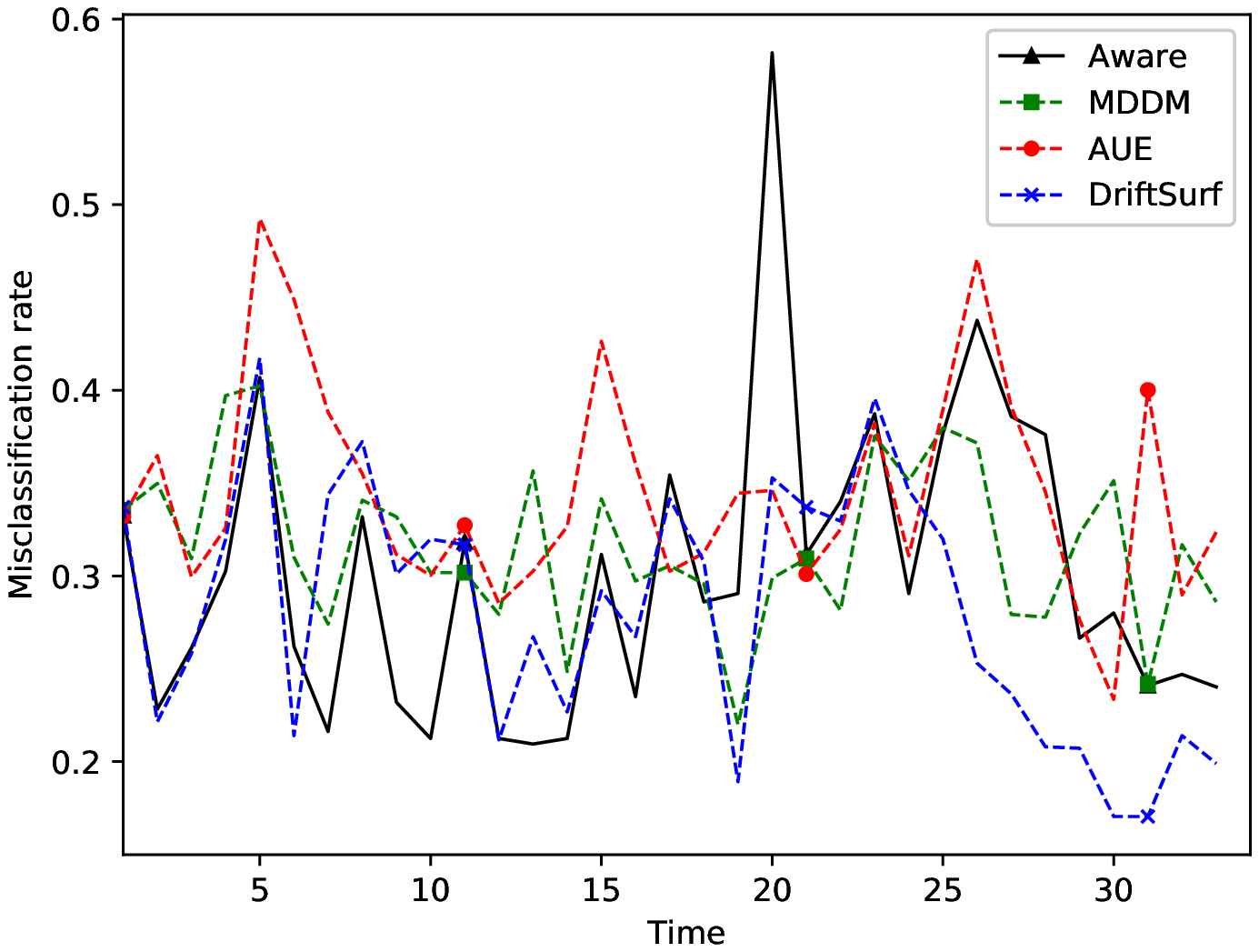}
            \caption{Electricity}
            \label{fig:elec-limited-4}
        \end{subfigure}\hfill
        \begin{subfigure}[t]{0.32\textwidth}
            \includegraphics[width=\columnwidth]{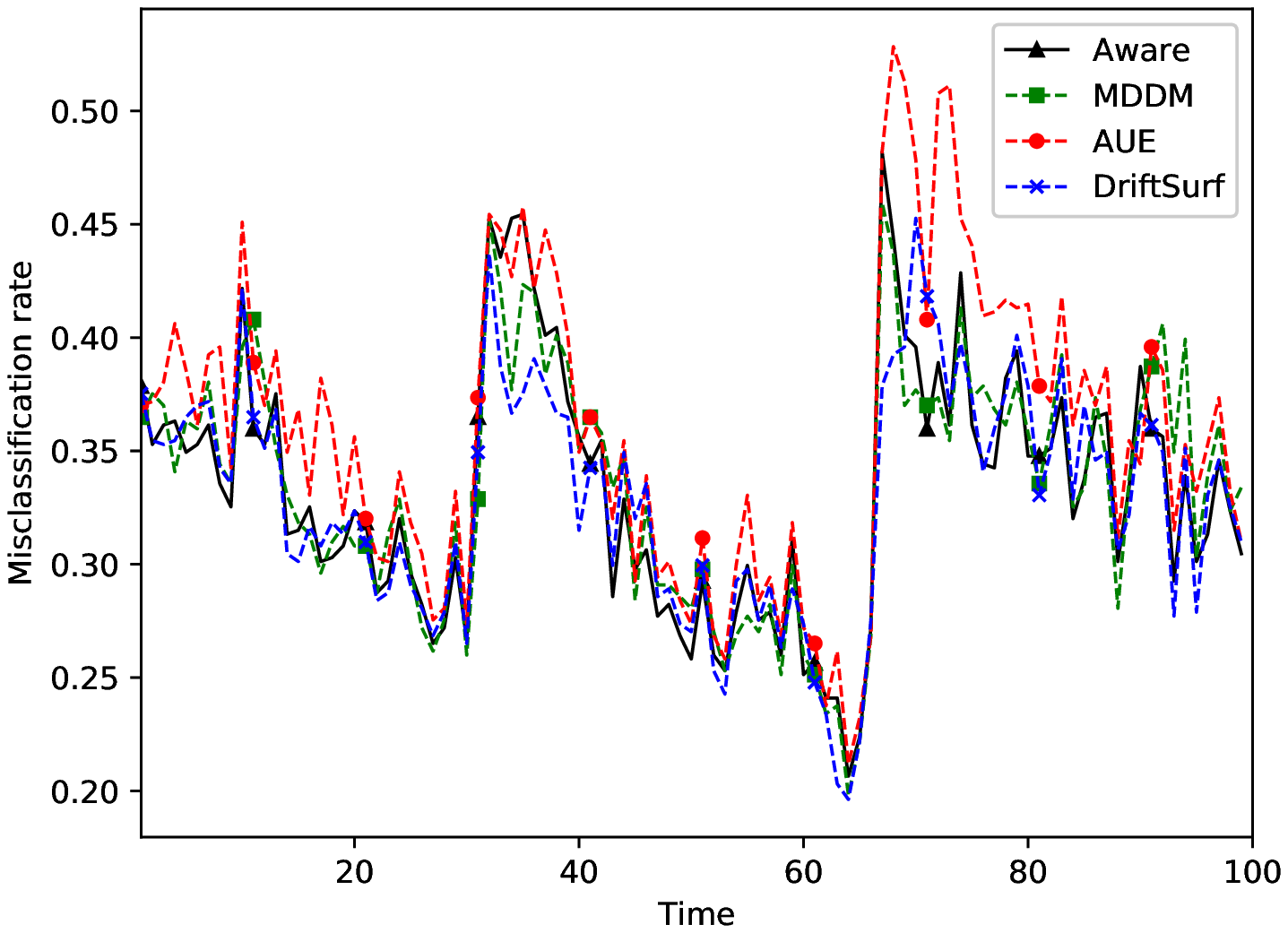}
            \caption{Airline}
            \label{fig:airline-limited-4}
        \end{subfigure}\hfill
    \end{center}
    \caption{Misclassification rate over time ($\rho=4m$ divided among all models of each algorithm) comparing $\aware, \dsurf$, AUE and MDDM}
    \label{fig:Misclassification rate over time-rho=4m}
\end{figure*}

\begin{figure*}[h!]
    \begin{center}
        \begin{subfigure}[t]{0.32\textwidth}
            \includegraphics[width=\columnwidth]{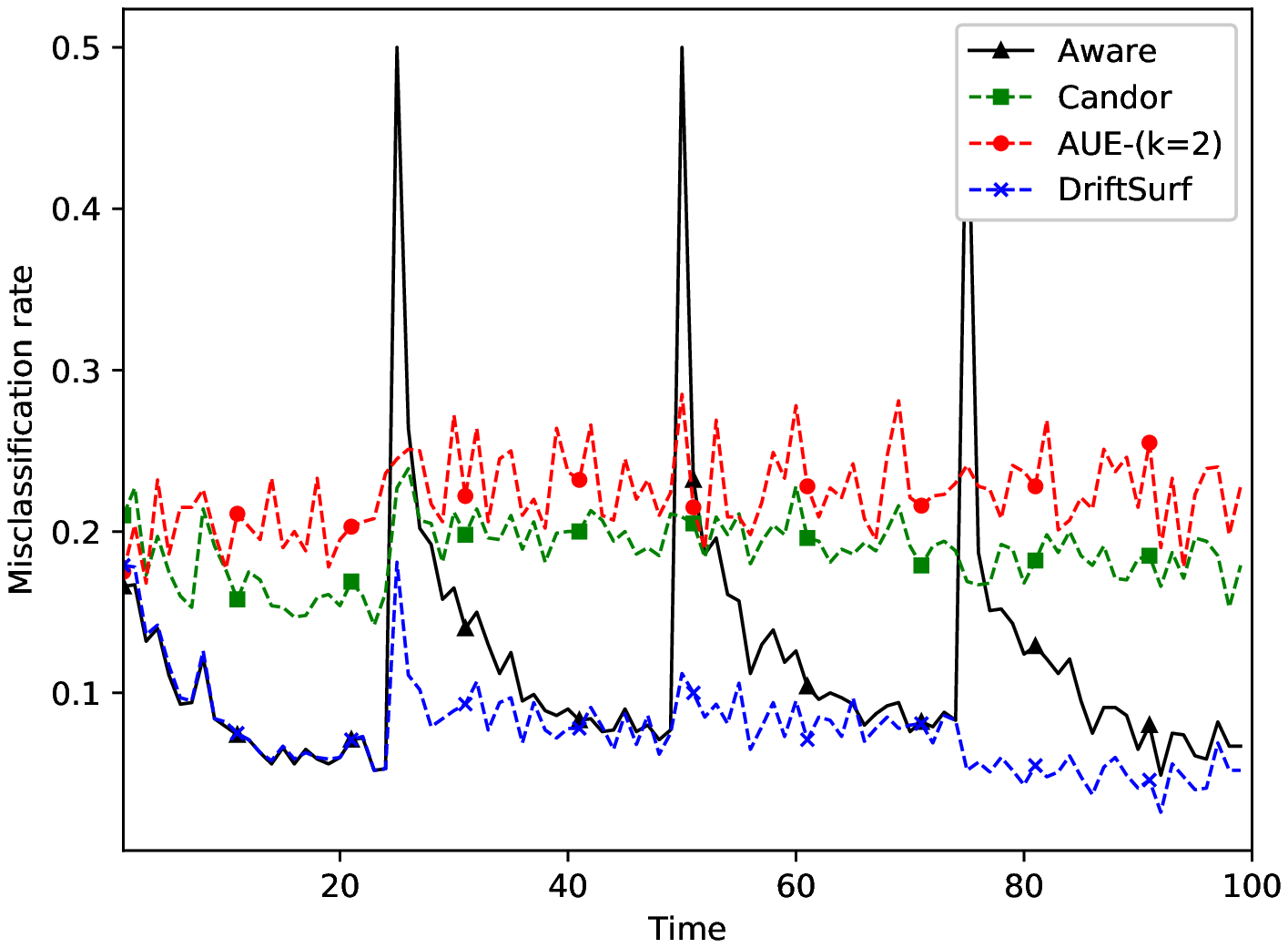}
            \caption{SEA0}
            \label{fig:sea0-limited-4-candor}
        \end{subfigure}\hfill
        \begin{subfigure}[t]{0.32\textwidth}
            \includegraphics[width=\columnwidth]{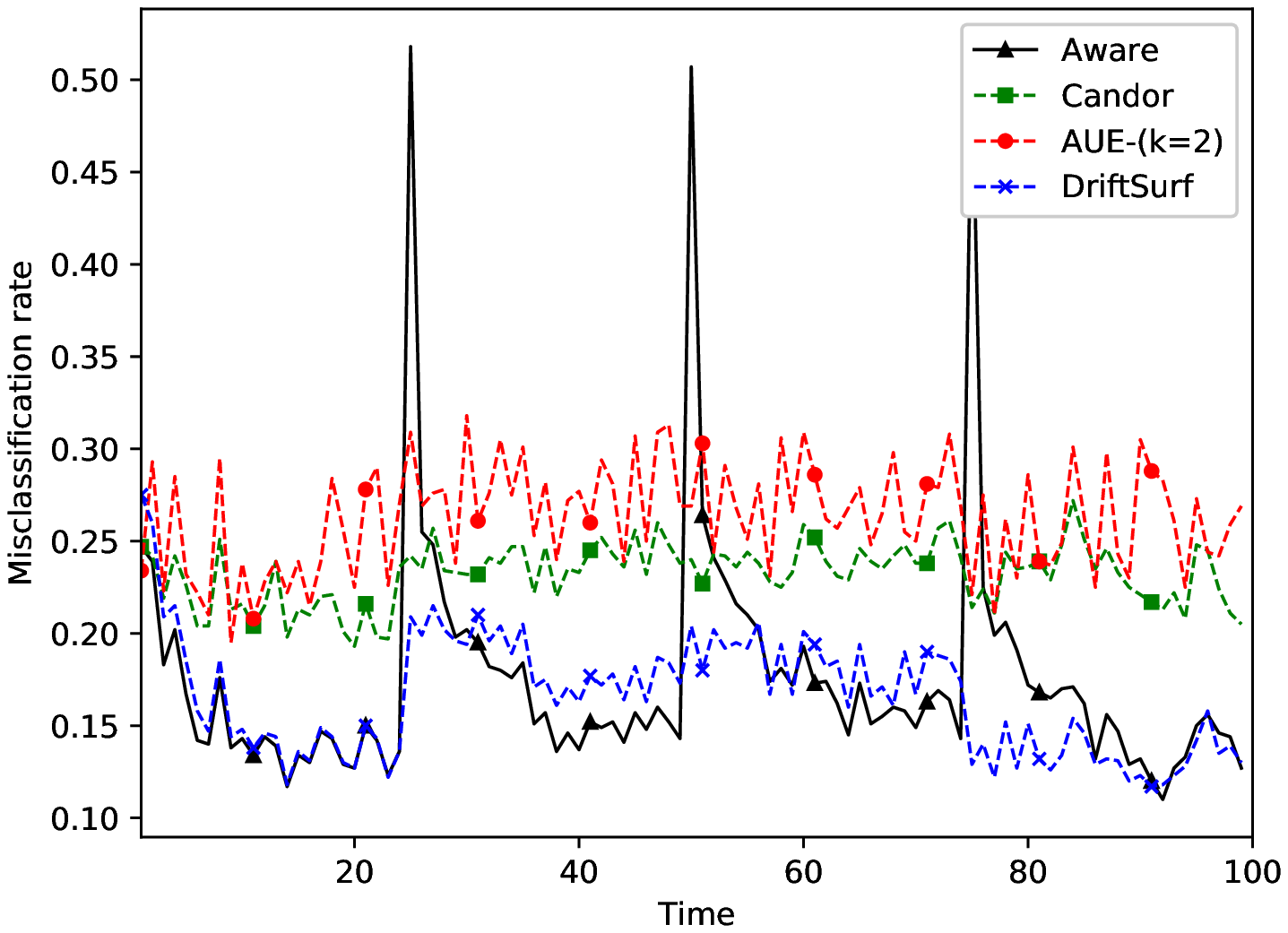}
            \caption{SEA10}
            \label{fig:sea10-limited-4-candor}
        \end{subfigure}\hfill
        \begin{subfigure}[t]{0.32\textwidth}
            \includegraphics[width=\columnwidth]{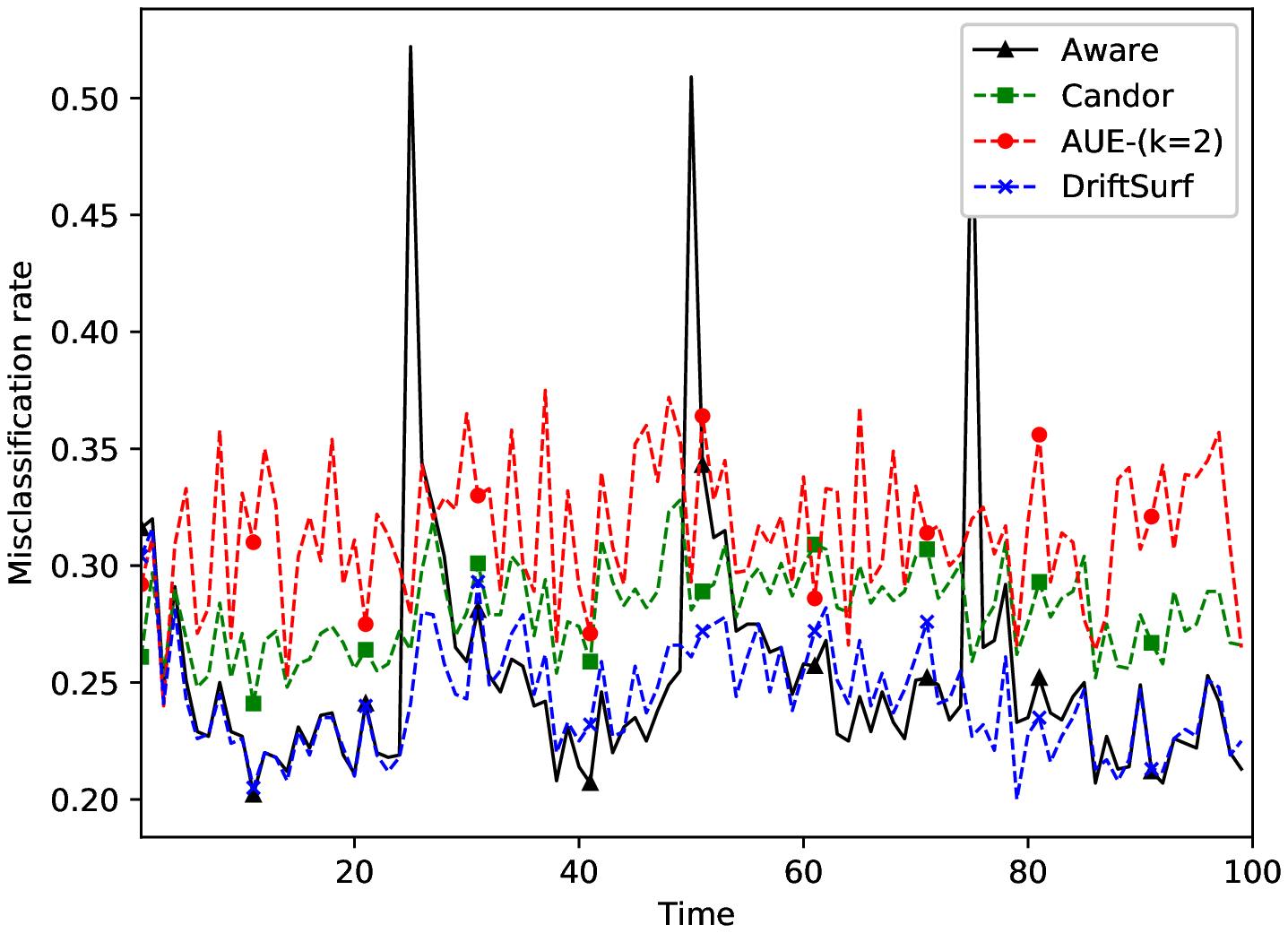}
            \caption{SEA20}
            \label{fig:sea20-limited-4-candor}
        \end{subfigure}\hfill
        
        \begin{subfigure}[t]{0.32\textwidth}
            \includegraphics[width=\columnwidth]{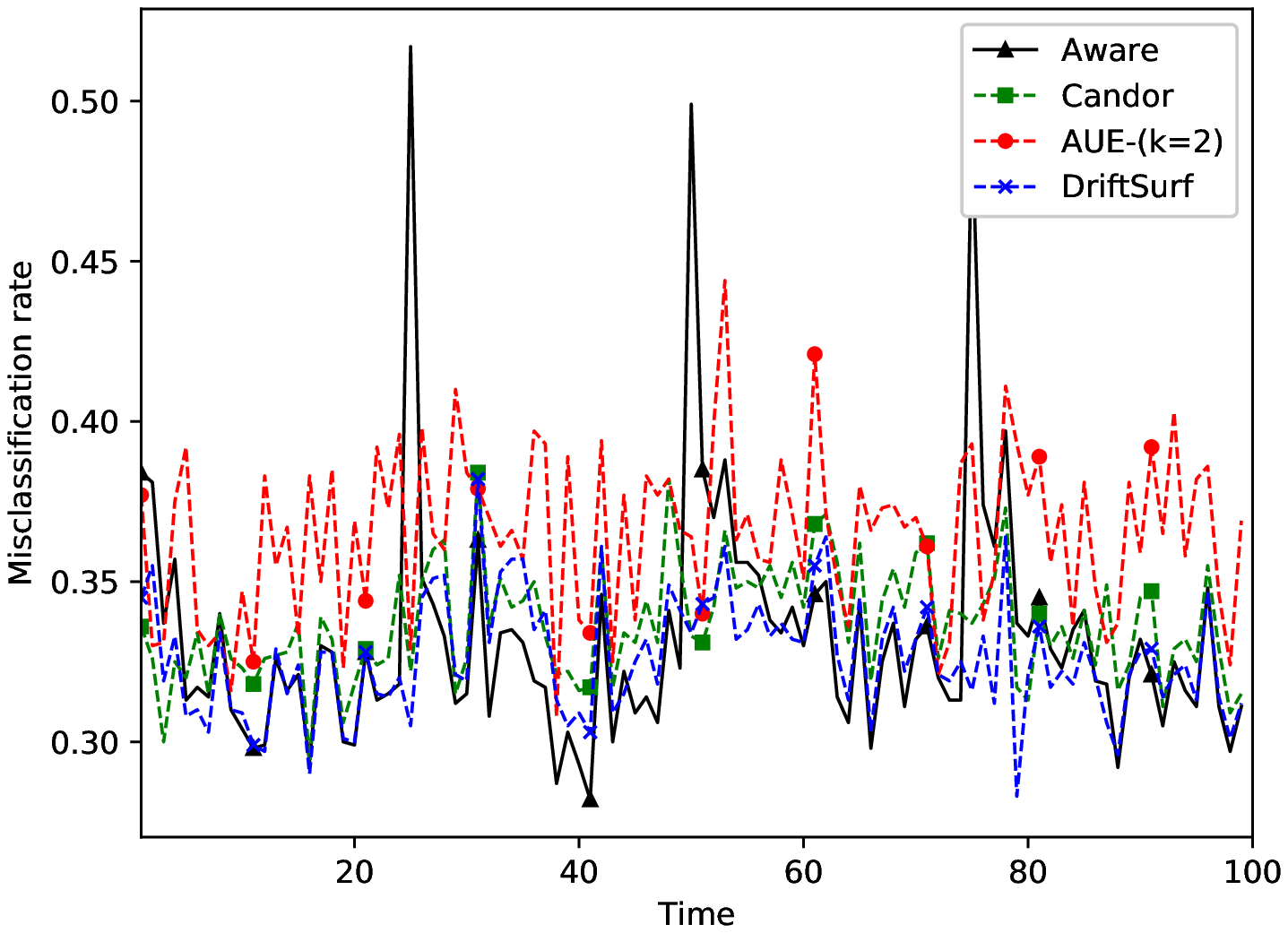}
            \caption{SEA30}
            \label{fig:sea30-limited-4-candor}
        \end{subfigure}\hfill
        \begin{subfigure}[t]{0.32\textwidth}
            \includegraphics[width=\columnwidth]{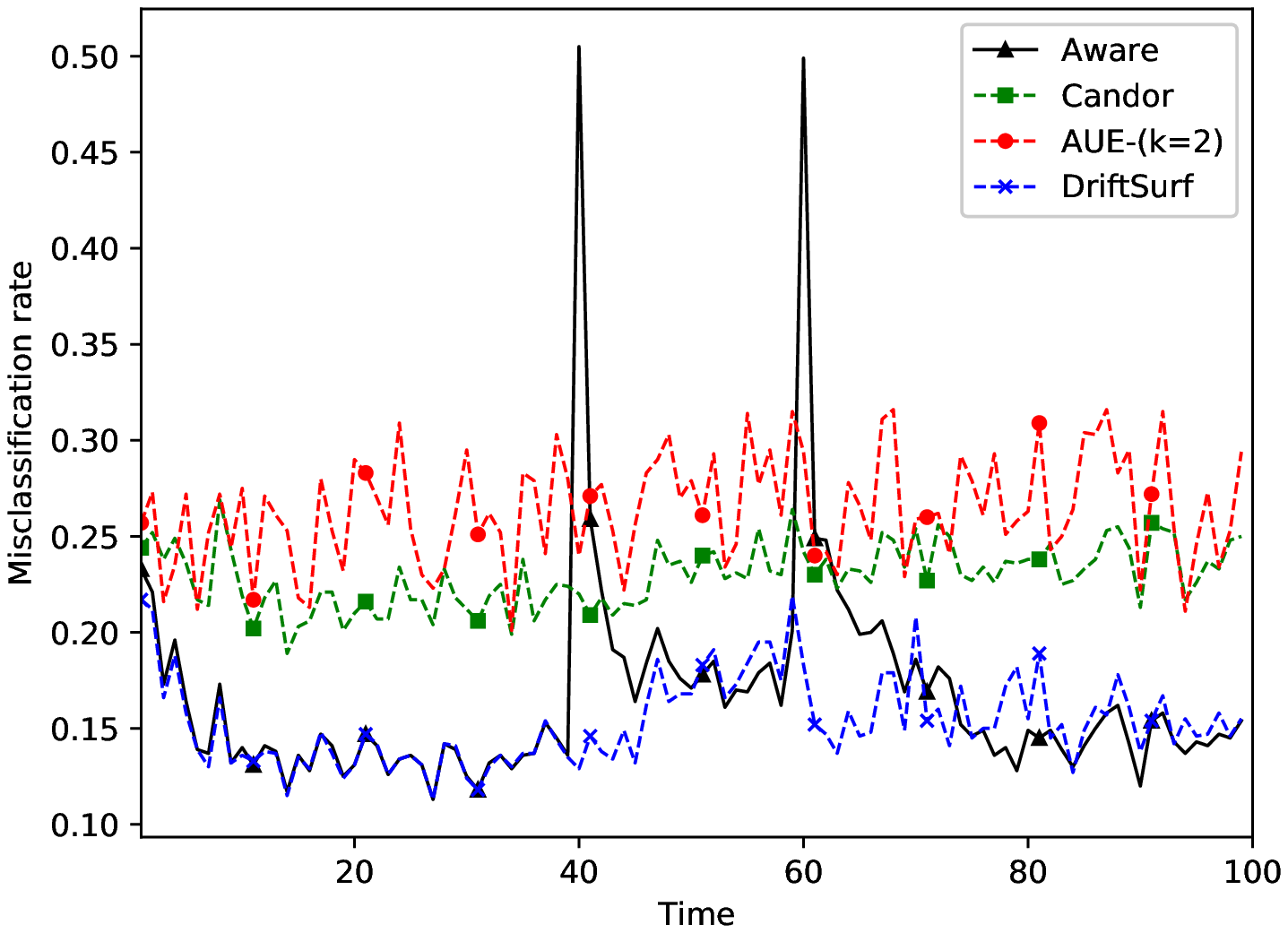}
            \caption{SEA-gradual}
            \label{fig:sea_gradual-limited-4-candor}
        \end{subfigure}\hfill
        \begin{subfigure}[t]{0.32\textwidth}
            \includegraphics[width=\columnwidth]{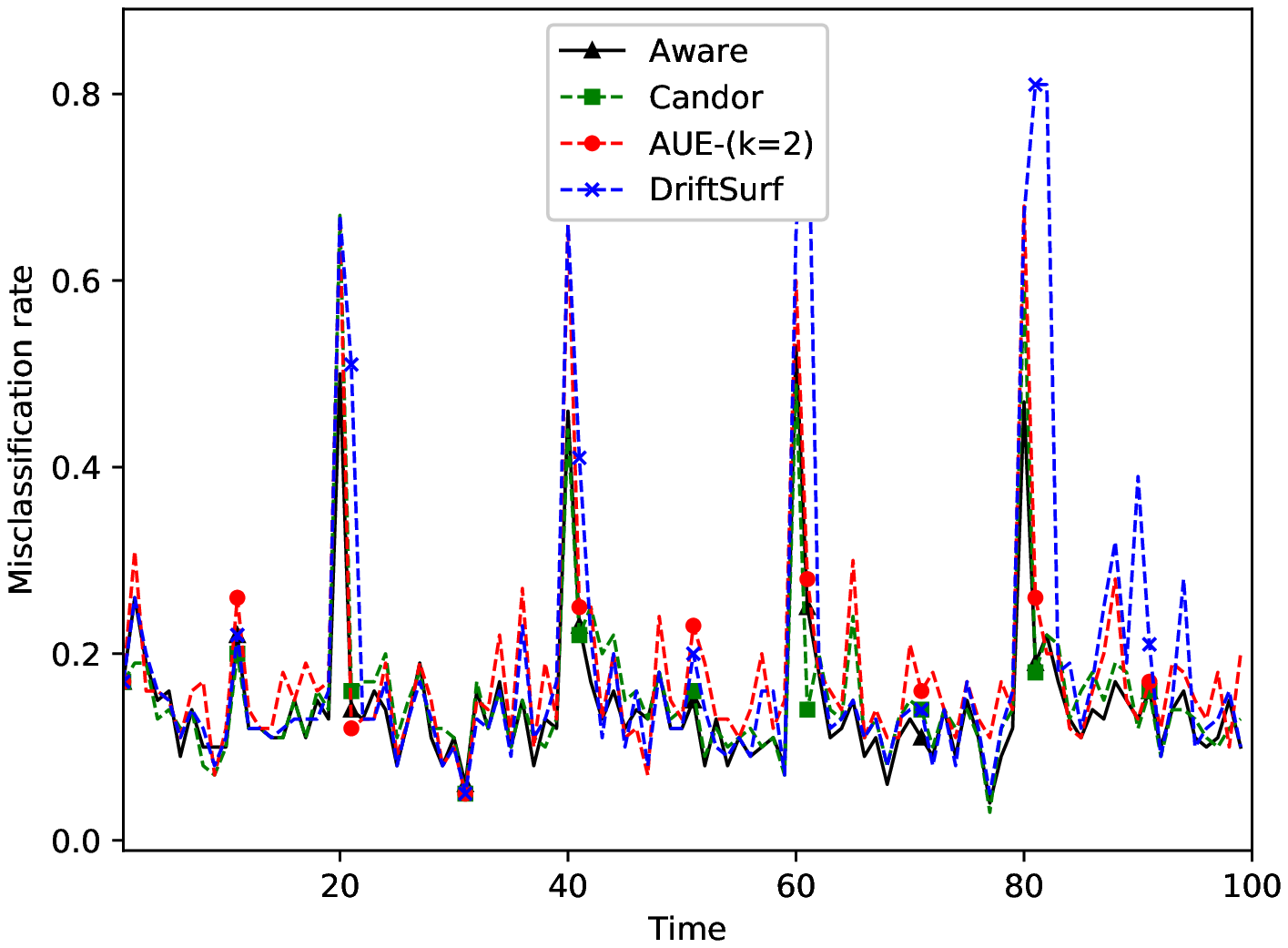}
            \caption{SINE1}
            \label{fig:sine1-limited-4-candor}
        \end{subfigure}\hfill
        
        \begin{subfigure}[t]{0.32\textwidth}
            \includegraphics[width=\columnwidth]{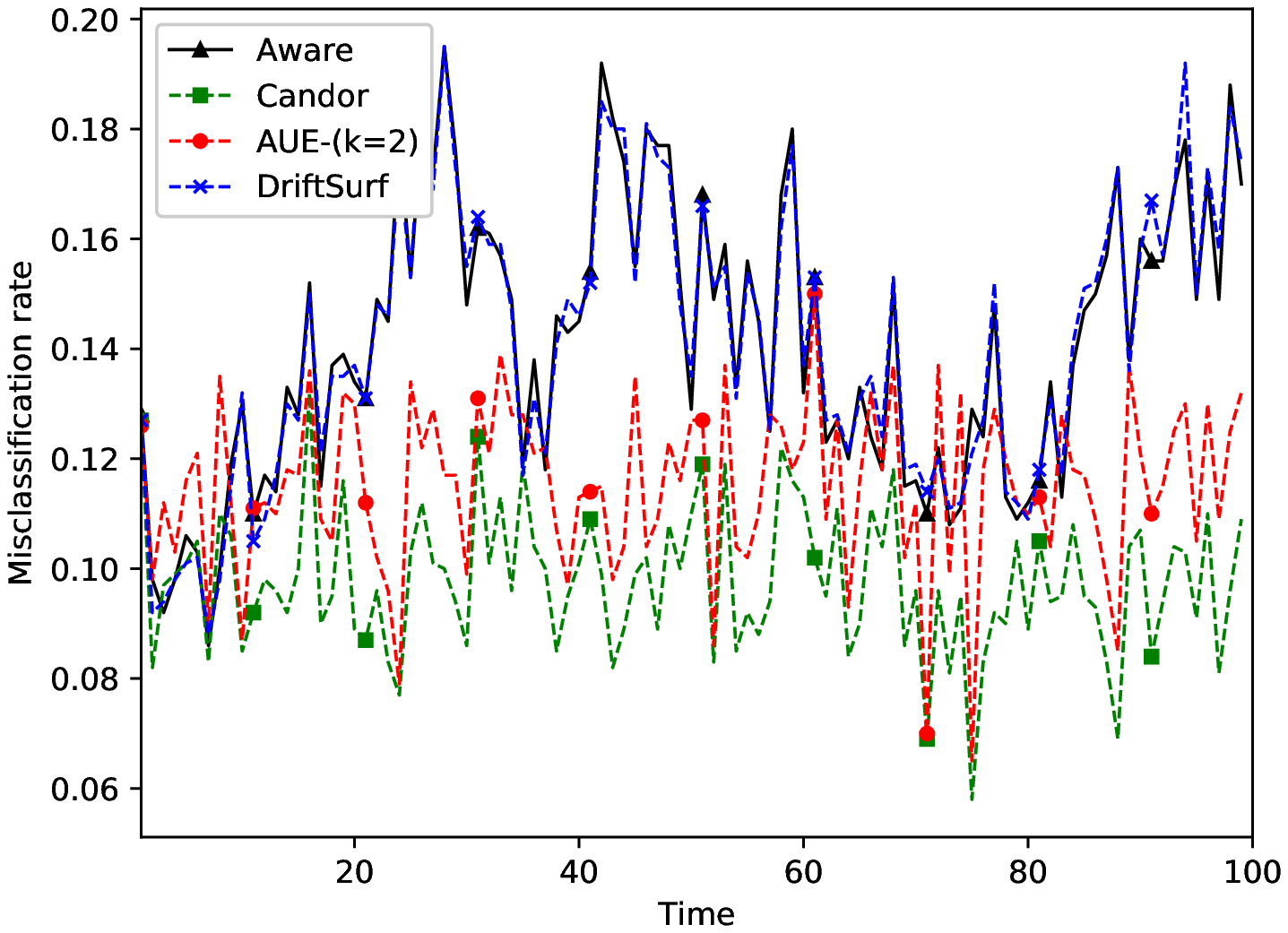}
            \caption{HyperPlane-slow}
            \label{fig:hyperplane-slow-limited-4-candor}
        \end{subfigure}\hfill
        \begin{subfigure}[t]{0.32\textwidth}
            \includegraphics[width=\columnwidth]{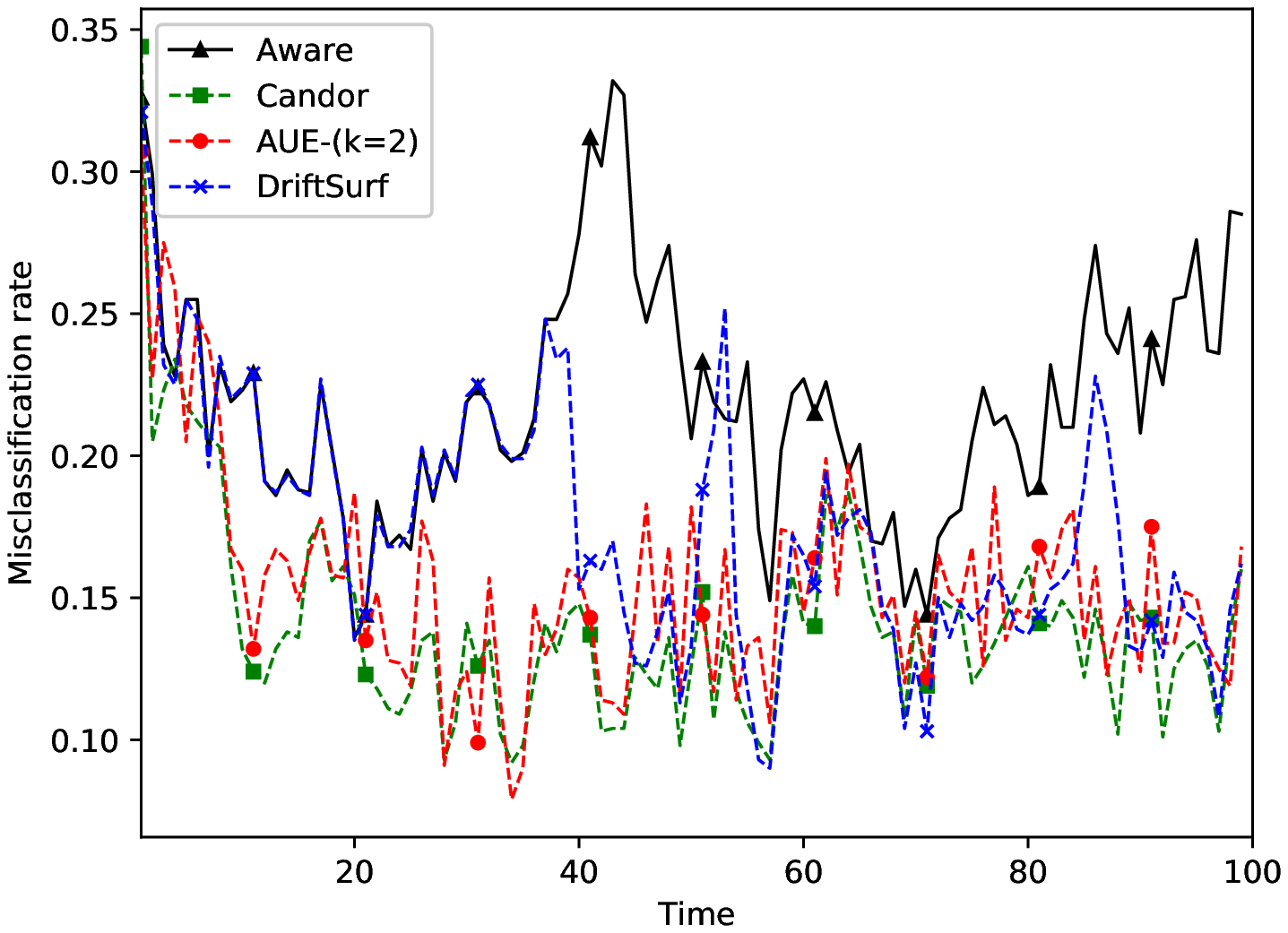}
            \caption{HyperPlane-fast}
            \label{fig:hyperplane-fast-limited-4-candor}
        \end{subfigure}\hfill
        \begin{subfigure}[t]{0.32\textwidth}
            \includegraphics[width=\columnwidth]{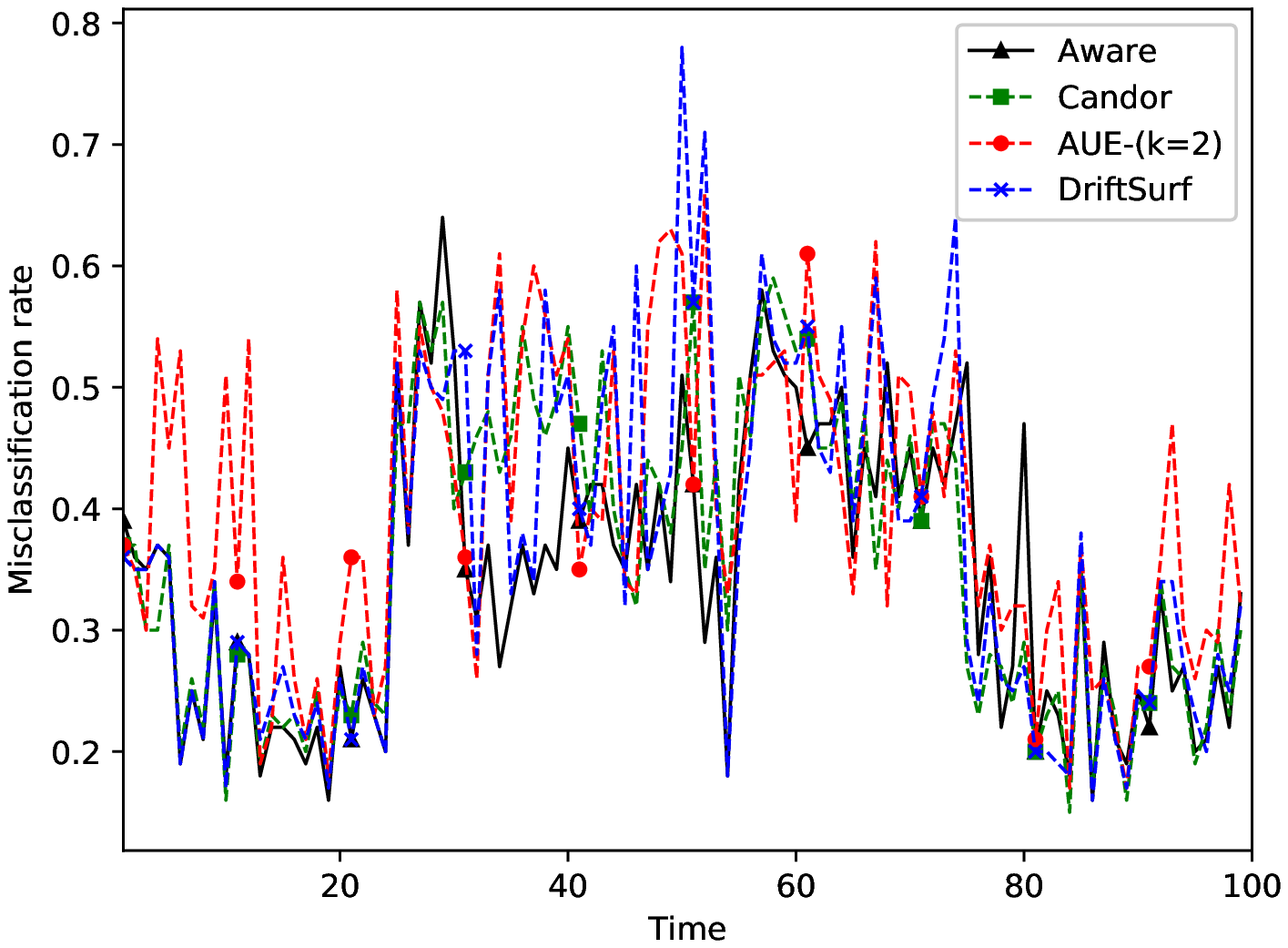}
            \caption{CIRCLES}
            \label{fig:circles-limited-4-candor}
        \end{subfigure}\hfill
        
        \begin{subfigure}[t]{0.32\textwidth}
            \includegraphics[width=\columnwidth]{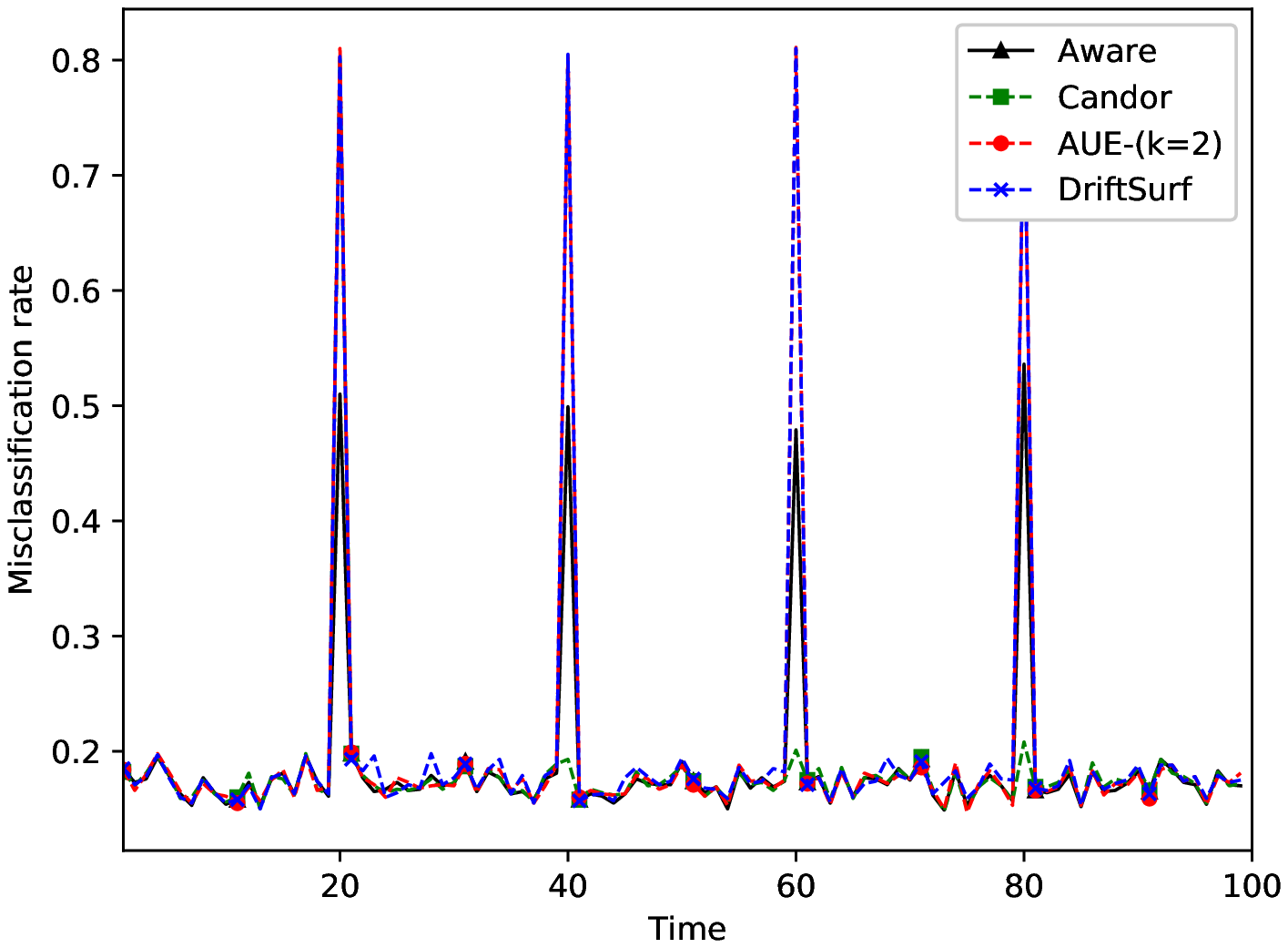}
            \caption{MIXED}
            \label{fig:mixed-limited-4-candor}
        \end{subfigure}\hfill
        \begin{subfigure}[t]{0.32\textwidth}
            \includegraphics[width=\columnwidth]{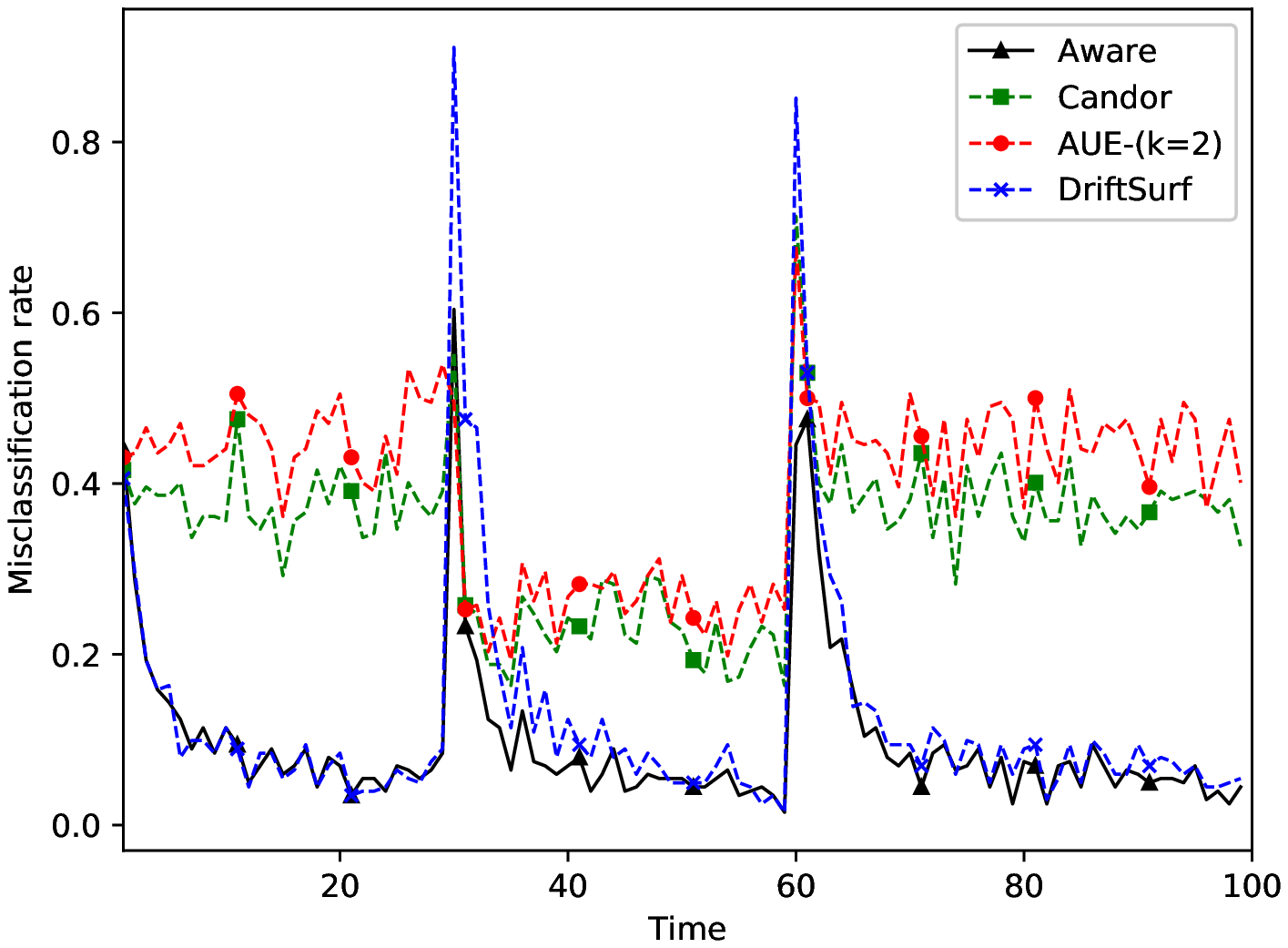}
            \caption{RCV1}
            \label{fig:rcv-limited-4-candor}
        \end{subfigure}\hfill
        \begin{subfigure}[t]{0.32\textwidth}
            \includegraphics[width=\columnwidth]{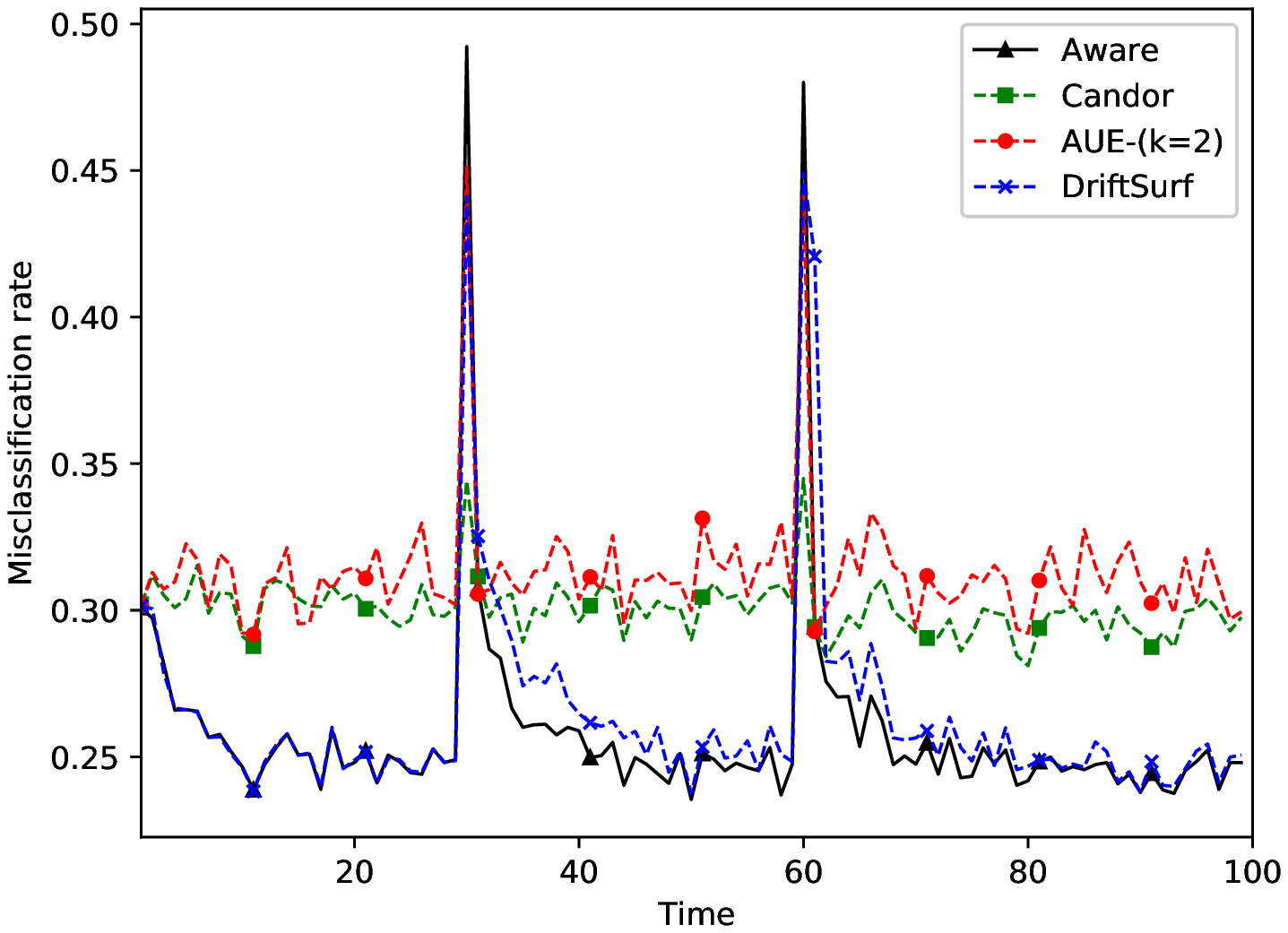}
            \caption{CoverType}
            \label{fig:covtype-limited-4-candor}
        \end{subfigure}\hfill
        
        \begin{subfigure}[t]{0.32\textwidth}
            \includegraphics[width=\columnwidth]{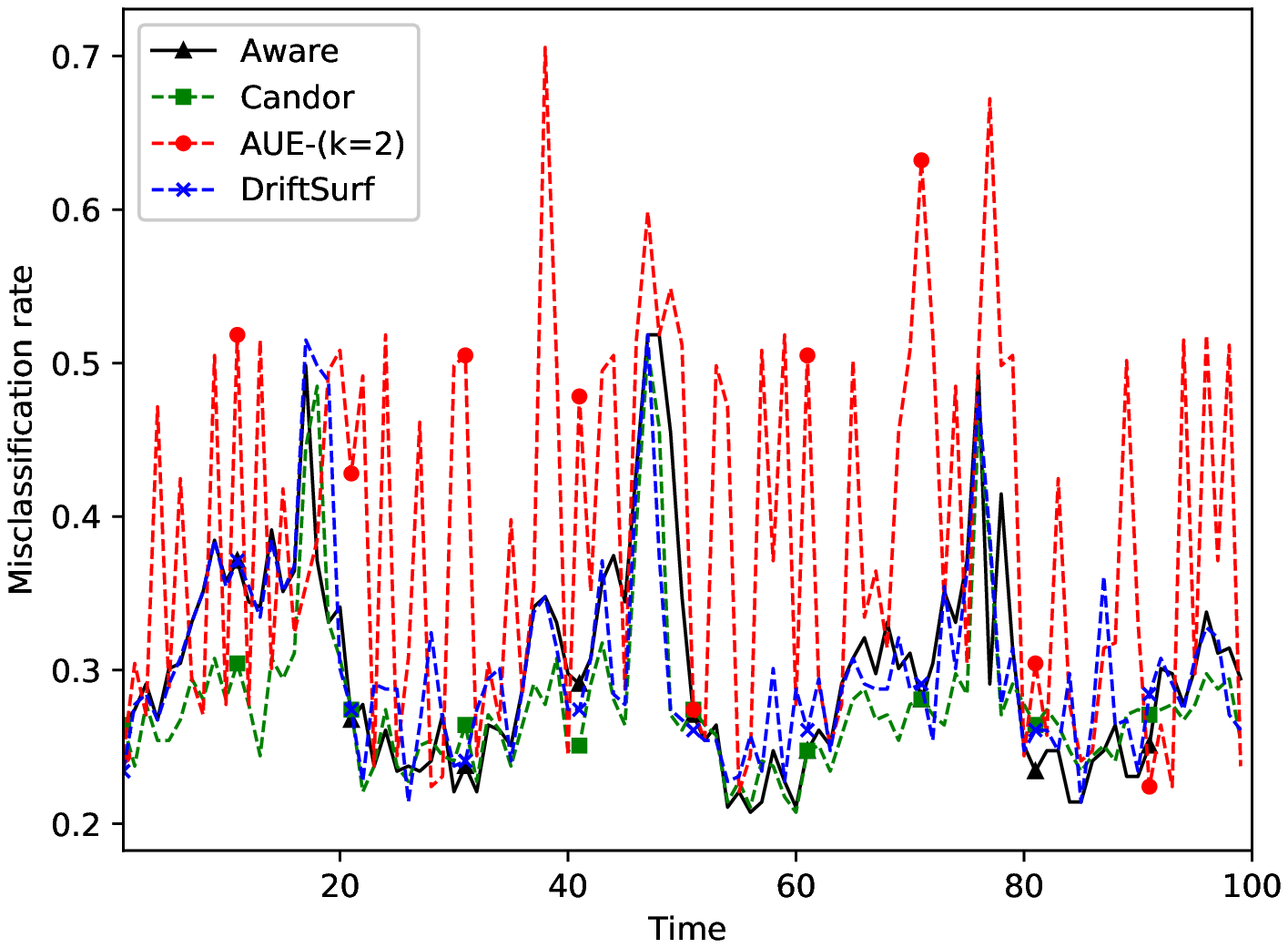}
            \caption{PowerSupply}
            \label{fig:powersupply-limited-4-candor}
        \end{subfigure}\hfill
        \begin{subfigure}[t]{0.32\textwidth}
            \includegraphics[width=\columnwidth]{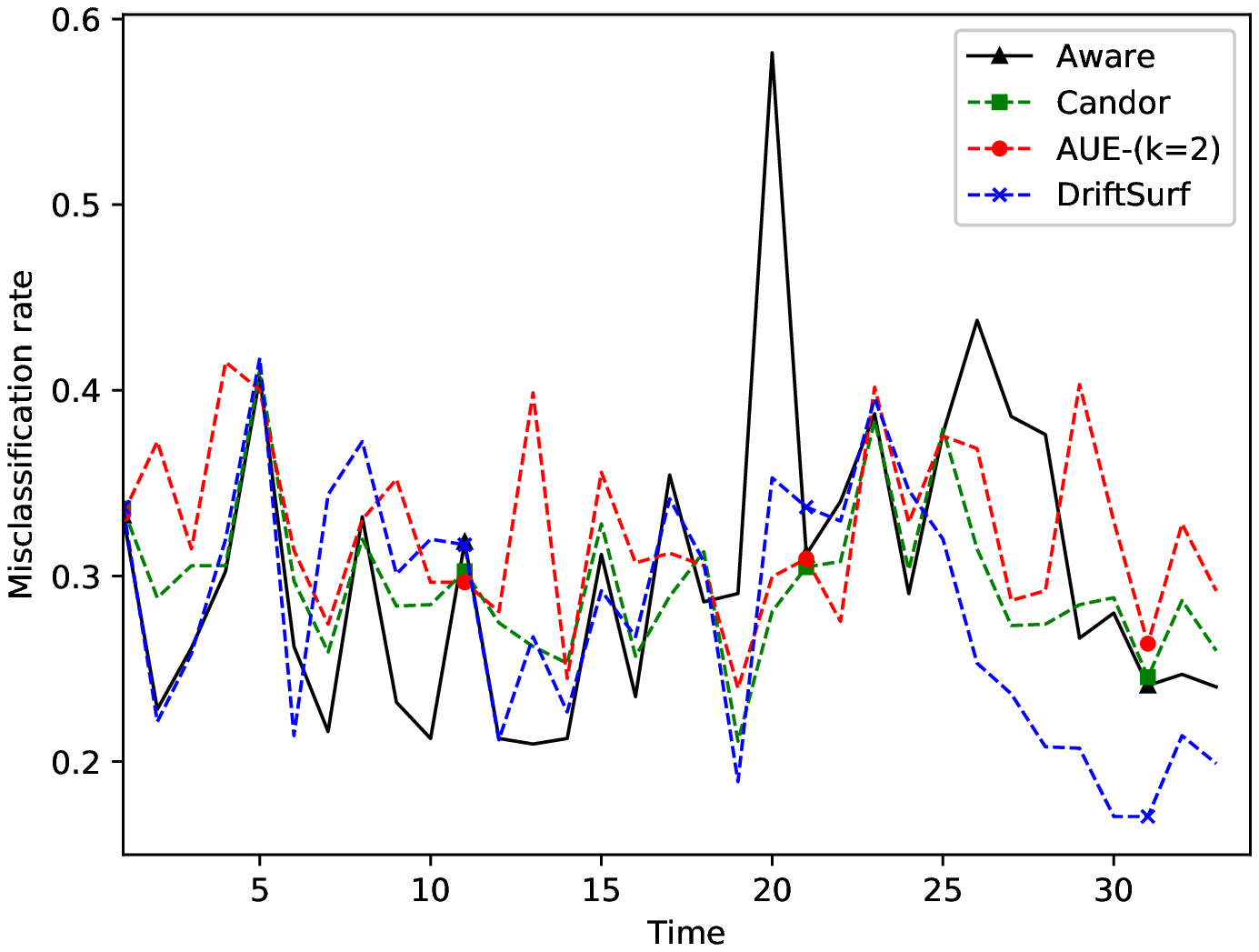}
            \caption{Electricity}
            \label{fig:elec-limited-4-candor}
        \end{subfigure}\hfill
        \begin{subfigure}[t]{0.32\textwidth}
            \includegraphics[width=\columnwidth]{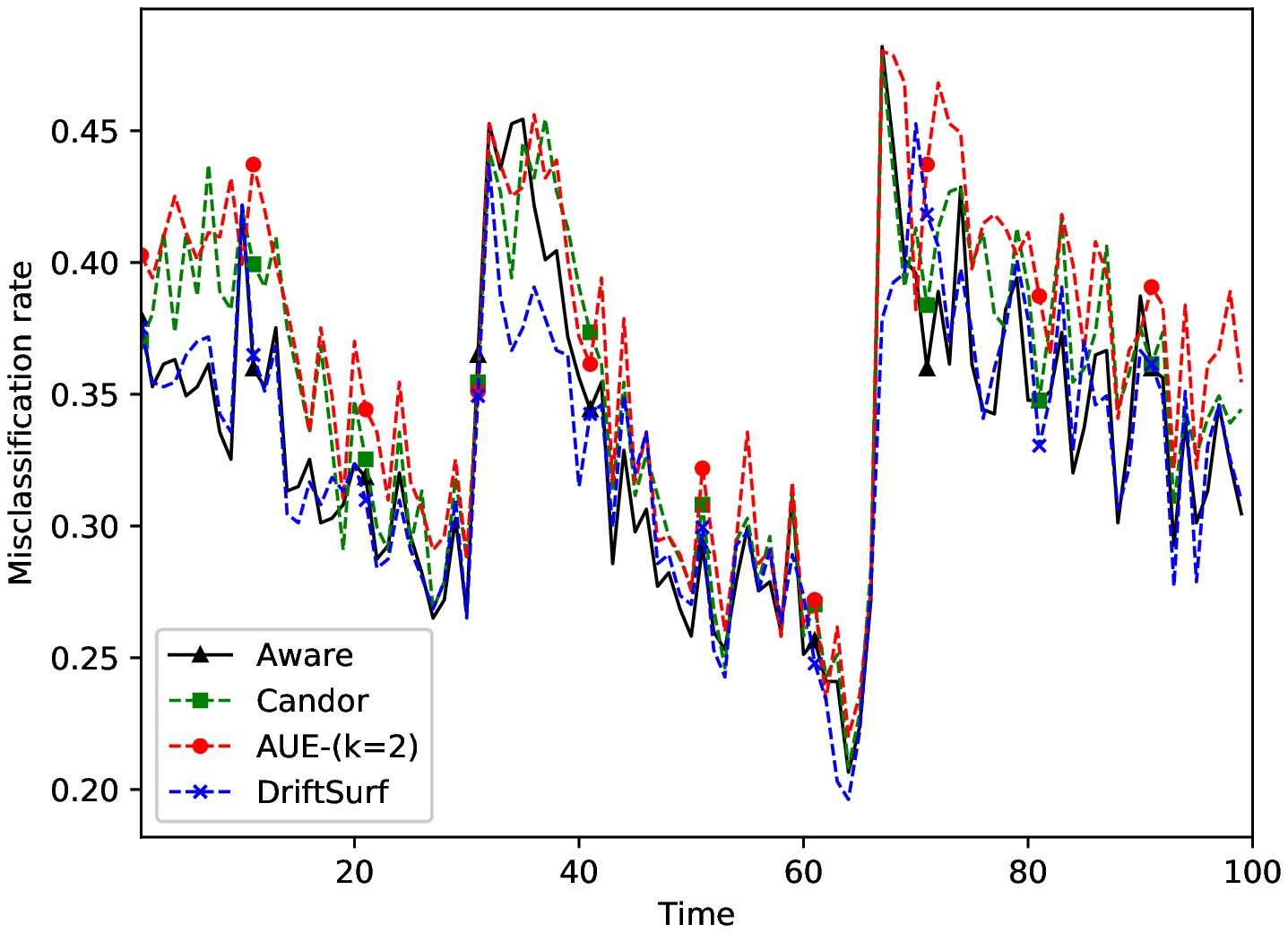}
            \caption{Airline}
            \label{fig:airline-limited-4-candor}
        \end{subfigure}\hfill
    \end{center}
    \caption{Misclassification rate over time ($\rho=4m$ divided among all models of each algorithm) comparing $\aware, \dsurf$, AUE with $k=2$ and Candor}
    \label{fig:Misclassification rate over time-rho=4m-candor}
\end{figure*}

\begin{table}[ht!]
\caption{Total average of misclassification rate ($\rho=2m$ divided among all models of each algorithm)}
\label{table:ave-misclassification-rho=2m}
\begin{center}
\begin{small}
\begin{sc}
\begin{tabular}{l*{6}c}
\toprule
 Dataset    	& \aware & \dsurf		& MDDM-G  	& AUE  	& AUE ($k$=2) 		& Candor\\
\midrule
SEA0        	& 0.133 & 0.098 		& \textbf{0.089} & 0.201  	& 0.230			& 0.200\\
SEA10       	& 0.197 & \textbf{0.161} 	& 0.183 		& 0.237  	& 0.275			& 0.238\\
SEA20       	& 0.266 & \textbf{0.246} 	& 0.283 		& 0.291 	& 0.327			& 0.292\\
SEA30       	& 0.352 & \textbf{0.337} 	& 0.360 		& 0.354 	& 0.381			& 0.345\\
SEA-gradual   	& 0.174 & \textbf{0.157} 	& 0.172 		& 0.24 	& 0.273			& 0.239\\
Hyper-slow  	& 0.117 & \textbf{0.116} 	& \textbf{0.116} & 0.191  	& 0.166			& 0.122\\
Hyper-fast  	& 0.191 & 0.199 		& \textbf{0.164} & 0.278 	& 0.211			& 0.166\\
SINE1       	& 0.168 & 0.220 		& \textbf{0.178} & 0.309  	& 0.246			& 0.179\\
Mixed	       	& 0.191 & {0.204} 	& {0.204} & 0.259    & {0.204}		& \textbf{0.182}\\
Circles       	& 0.368 & \textbf{0.372} 	& \textbf{0.372} & 0.401  	& 0.415			& 0.384\\
RCV1        	& 0.120 & 0.174 		& \textbf{0.131} & 0.403	& 0.467			& 0.401 \\
CoverType   	& 0.267 &\textbf{0.276} 	& 0.313 		& 0.317 	& 0.330			& 0.312 \\
Airline     		& 0.338 & \textbf{0.351} 	& \textbf{0.351} & 0.369  	& 0.380			& 0.365\\
Electricity 		& 0.311 & 0.349 		& 0.339		 & 0.364  	& 0.363			& \textbf{0.313}\\
PowerSupply 	& 0.311 & \textbf{0.305} 	& 0.309 		& 0.313	& 0.463			& 0.338\\
\bottomrule
\end{tabular}
\end{sc}
\end{small}
\end{center}
\end{table}

\begin{figure*}[h!]
    \begin{center}
        \begin{subfigure}[t]{0.32\textwidth}
            \includegraphics[width=\columnwidth]{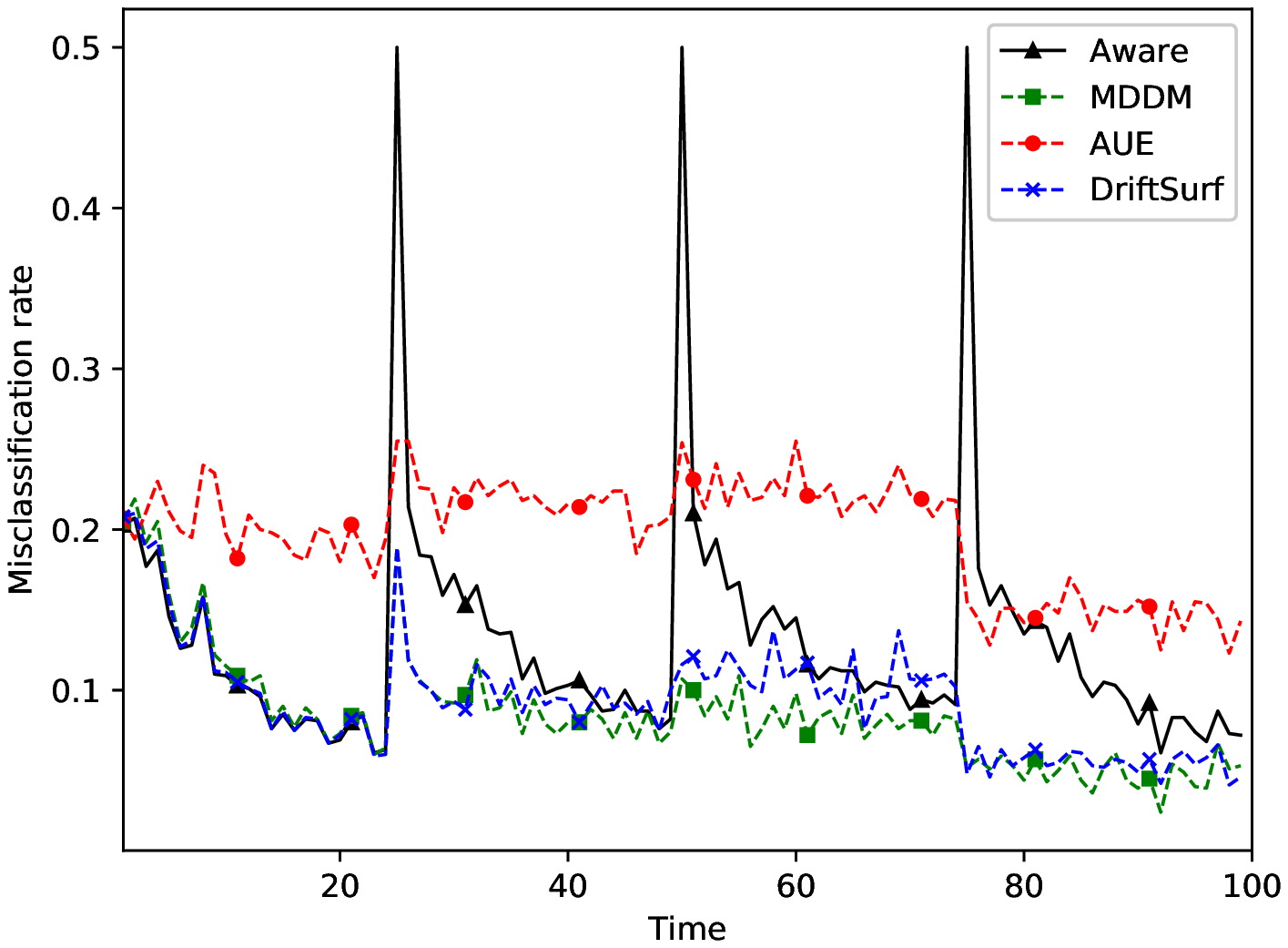}
            \caption{SEA0}
            \label{fig:sea0-limited-2}
        \end{subfigure}\hfill
        \begin{subfigure}[t]{0.32\textwidth}
            \includegraphics[width=\columnwidth]{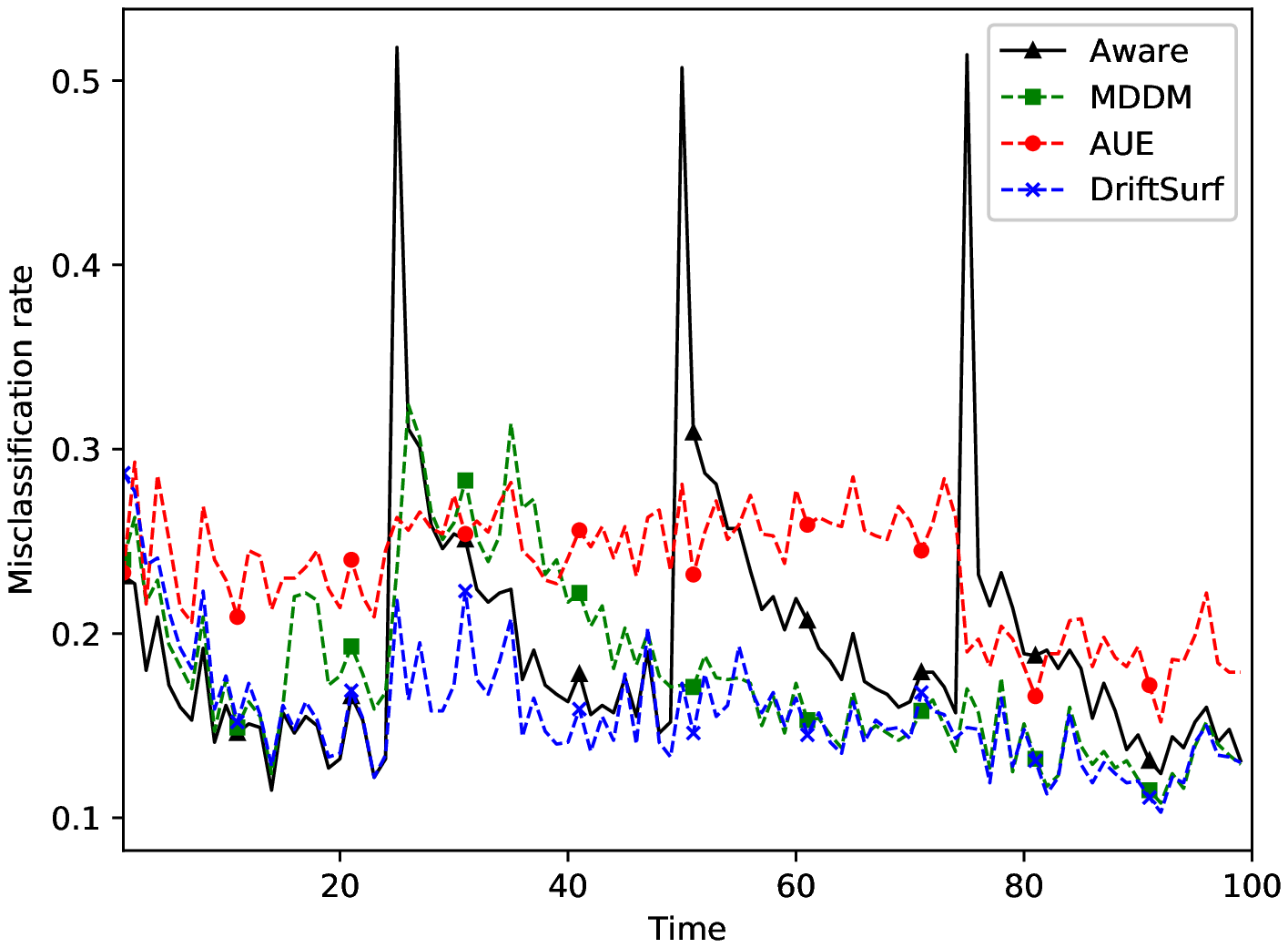}
            \caption{SEA10}
            \label{fig:sea10-limited-2}
        \end{subfigure}\hfill
        \begin{subfigure}[t]{0.32\textwidth}
            \includegraphics[width=\columnwidth]{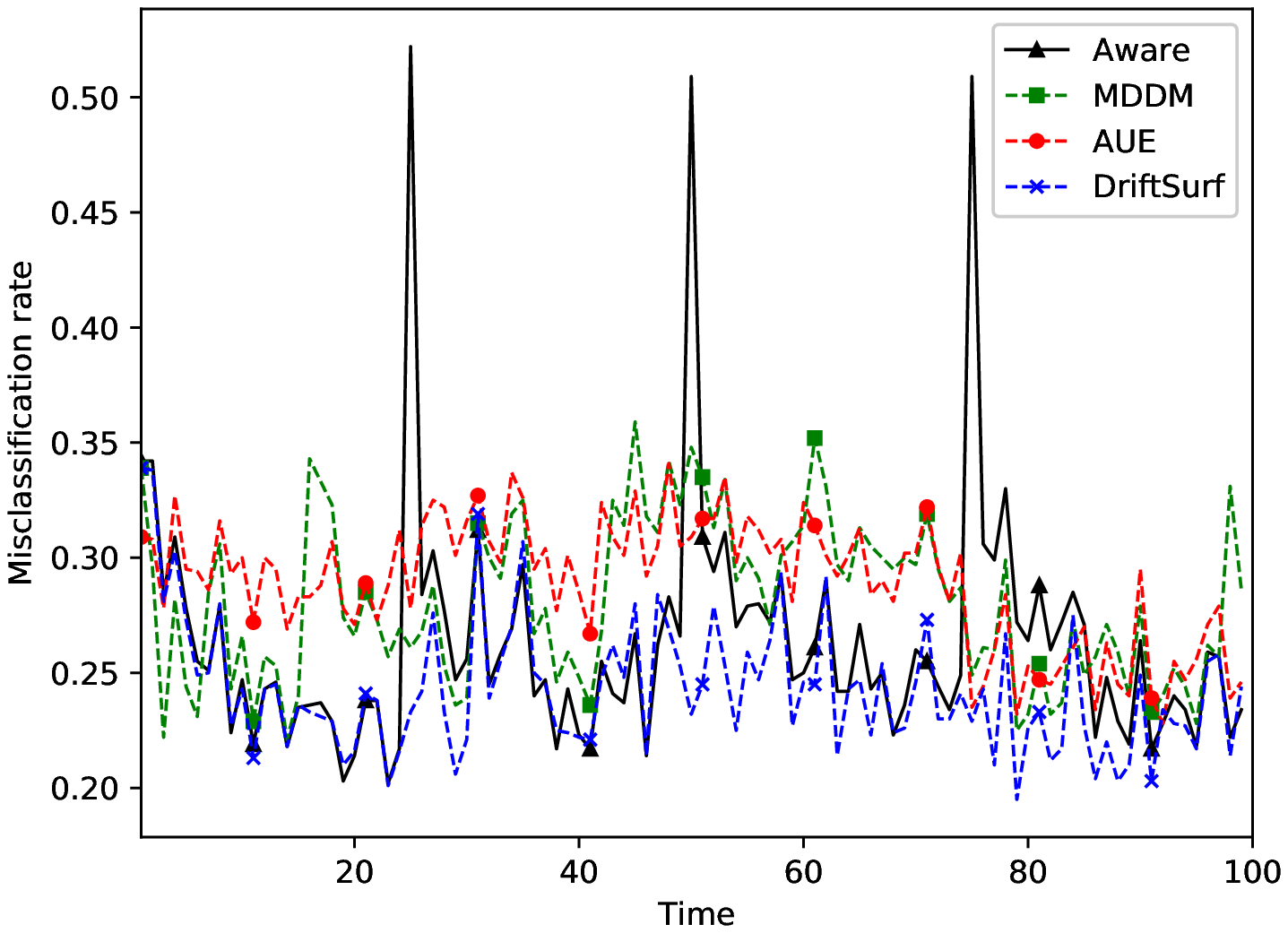}
            \caption{SEA20}
            \label{fig:sea20-limited-2}
        \end{subfigure}\hfill
        
        \begin{subfigure}[t]{0.32\textwidth}
            \includegraphics[width=\columnwidth]{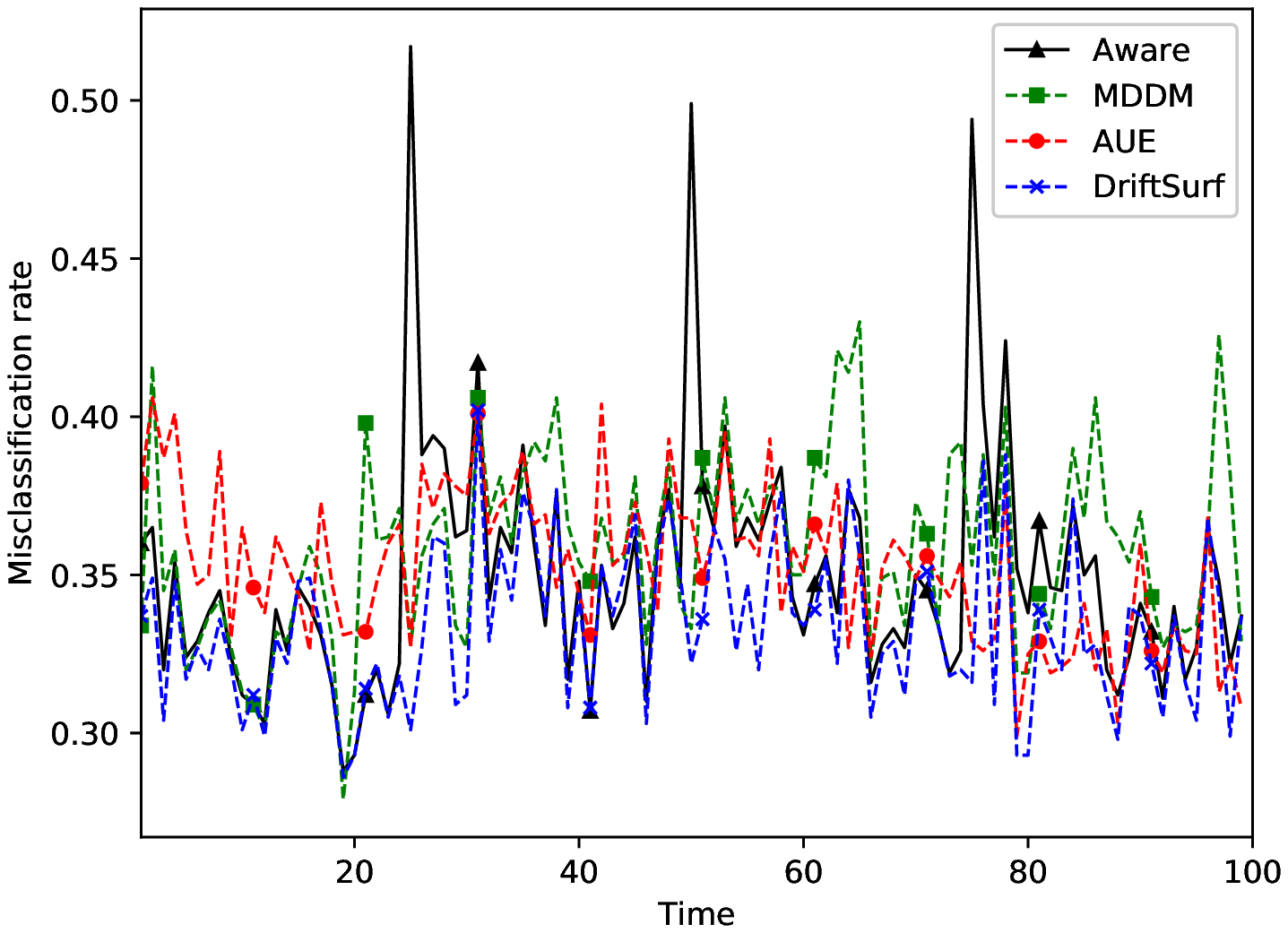}
            \caption{SEA30}
            \label{fig:sea30-limited-2}
        \end{subfigure}\hfill
        \begin{subfigure}[t]{0.32\textwidth}
            \includegraphics[width=\columnwidth]{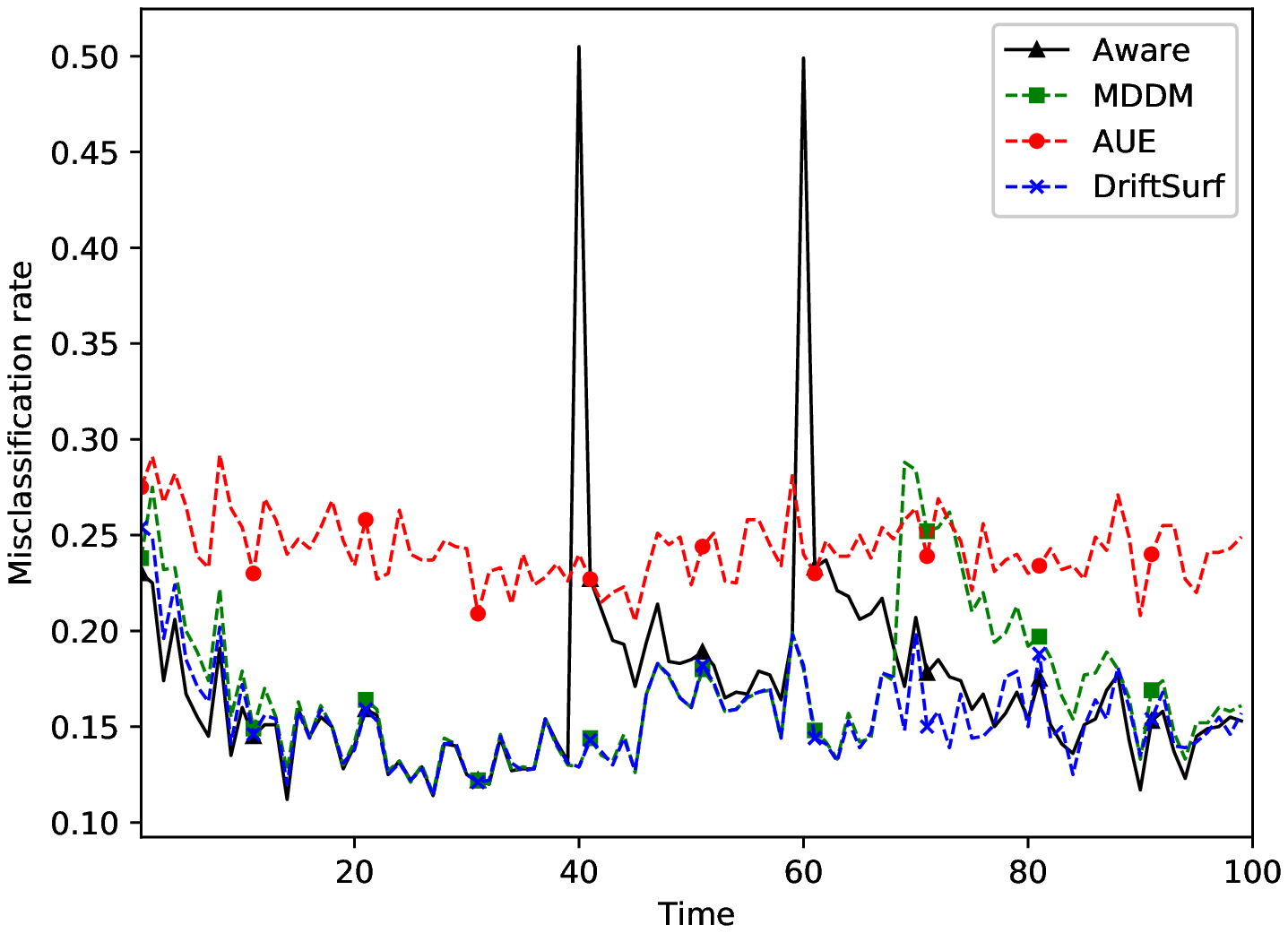}
            \caption{SEA-gradual}
            \label{fig:sea_gradual-limited-2}
        \end{subfigure}\hfill
        \begin{subfigure}[t]{0.32\textwidth}
            \includegraphics[width=\columnwidth]{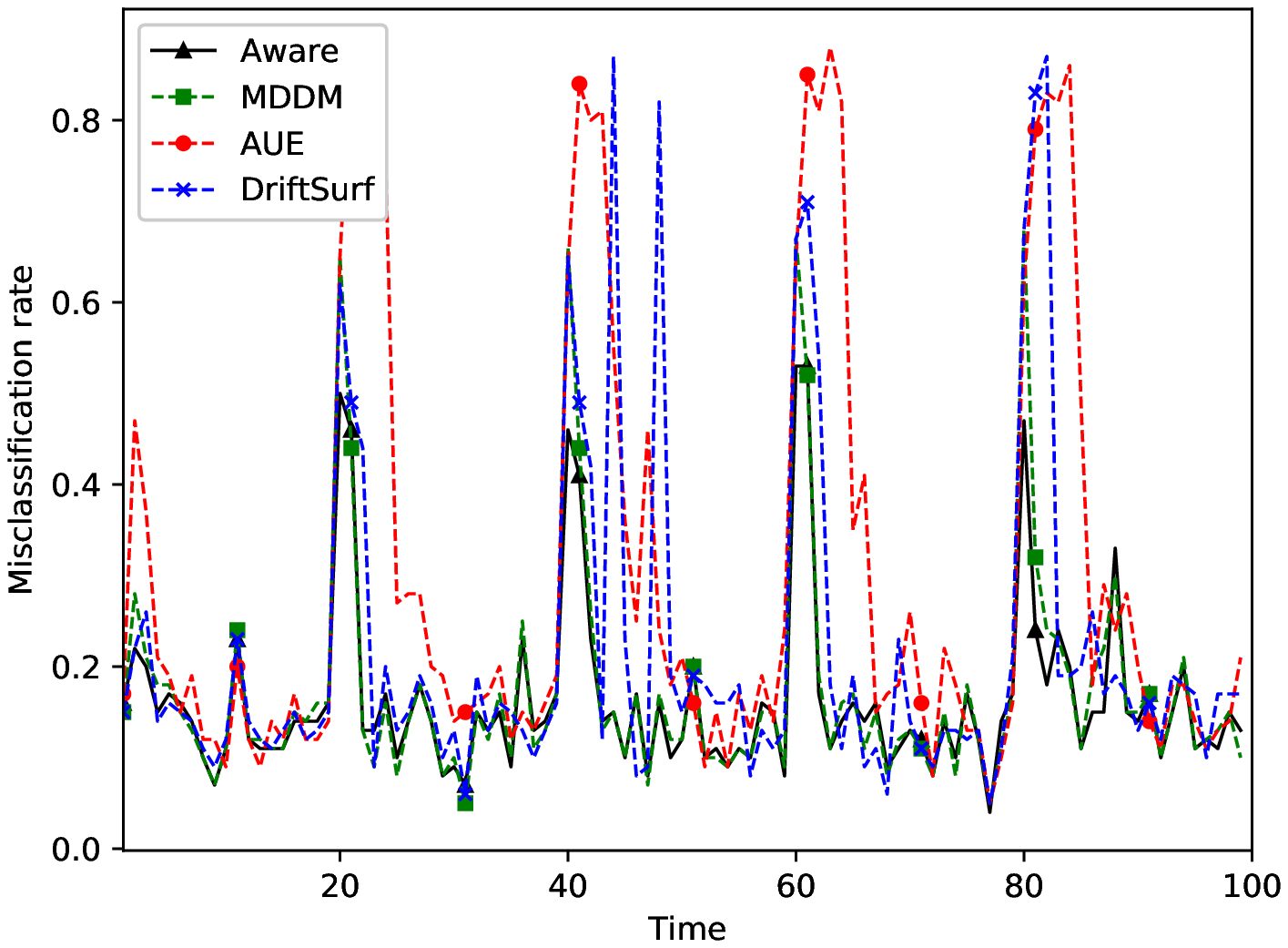}
            \caption{SINE1}
            \label{fig:sine1-limited-2}
        \end{subfigure}\hfill

        \begin{subfigure}[t]{0.32\textwidth}
            \includegraphics[width=\columnwidth]{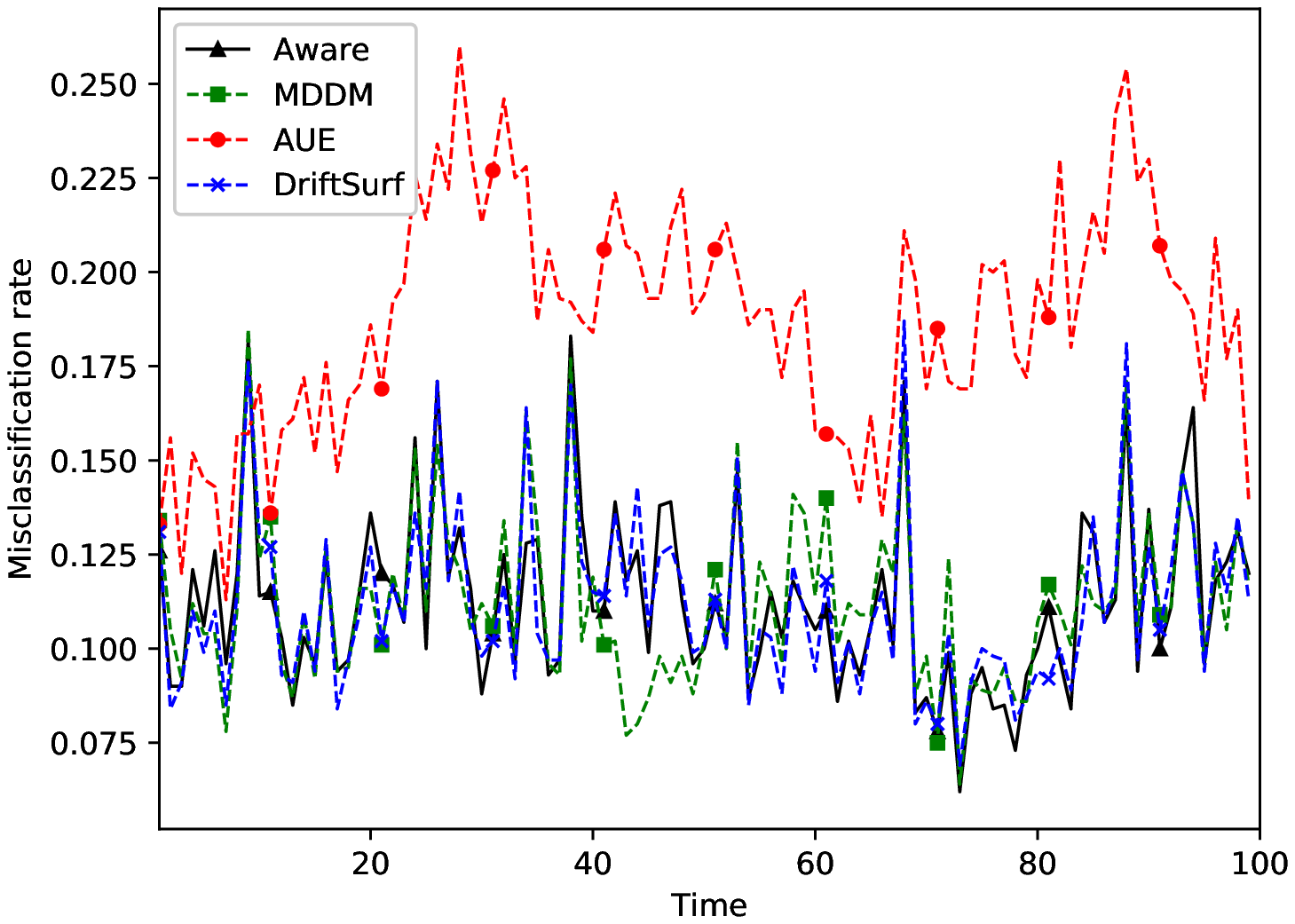}
            \caption{HyperPlane-slow}
            \label{fig:hyperplane-slow-limited-2}
        \end{subfigure}\hfill
        \begin{subfigure}[t]{0.32\textwidth}
            \includegraphics[width=\columnwidth]{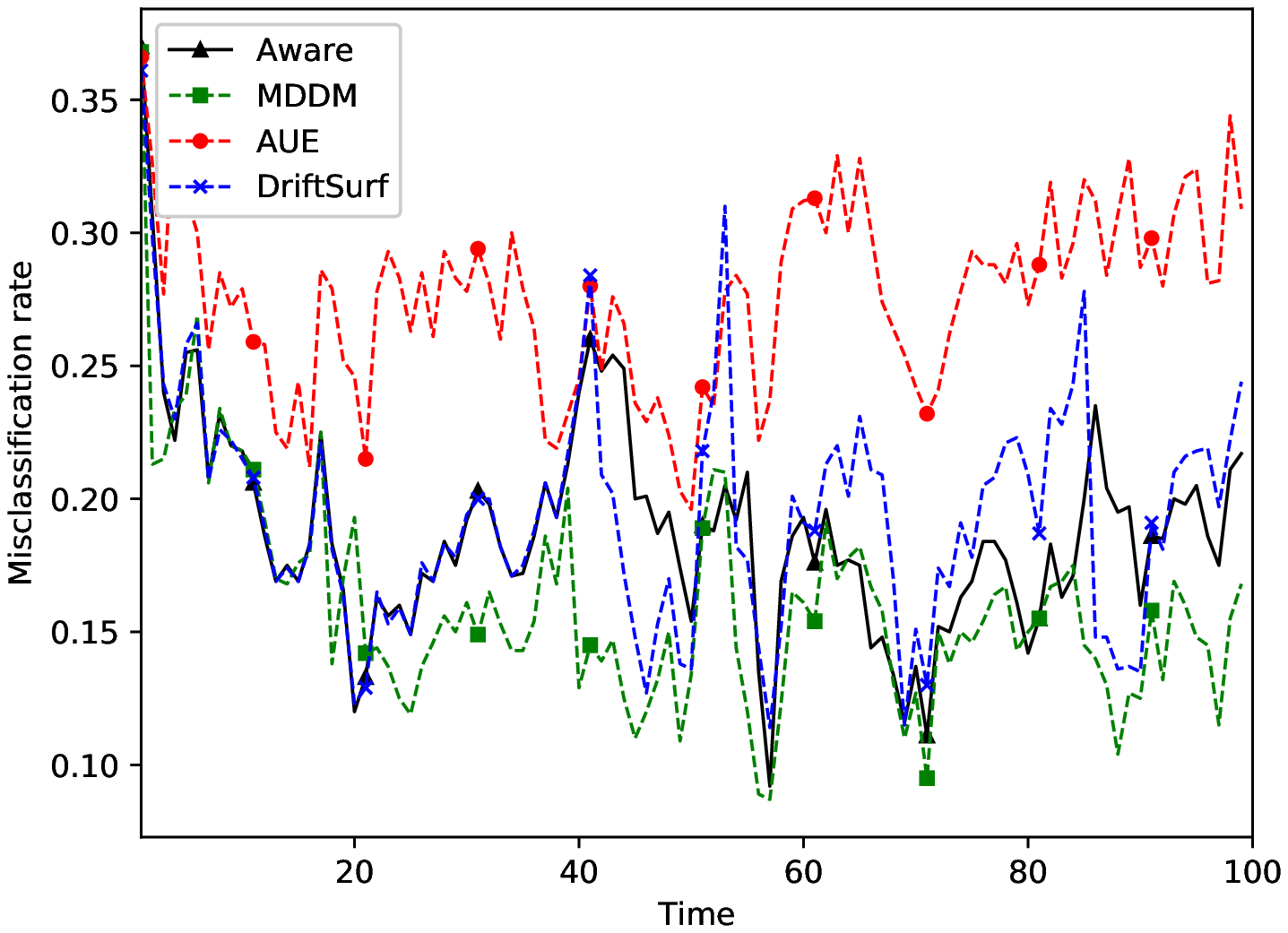}
            \caption{HyperPlane-fast}
            \label{fig:hyperplane-fast-limited-2}
        \end{subfigure}\hfill
        \begin{subfigure}[t]{0.32\textwidth}
            \includegraphics[width=\columnwidth]{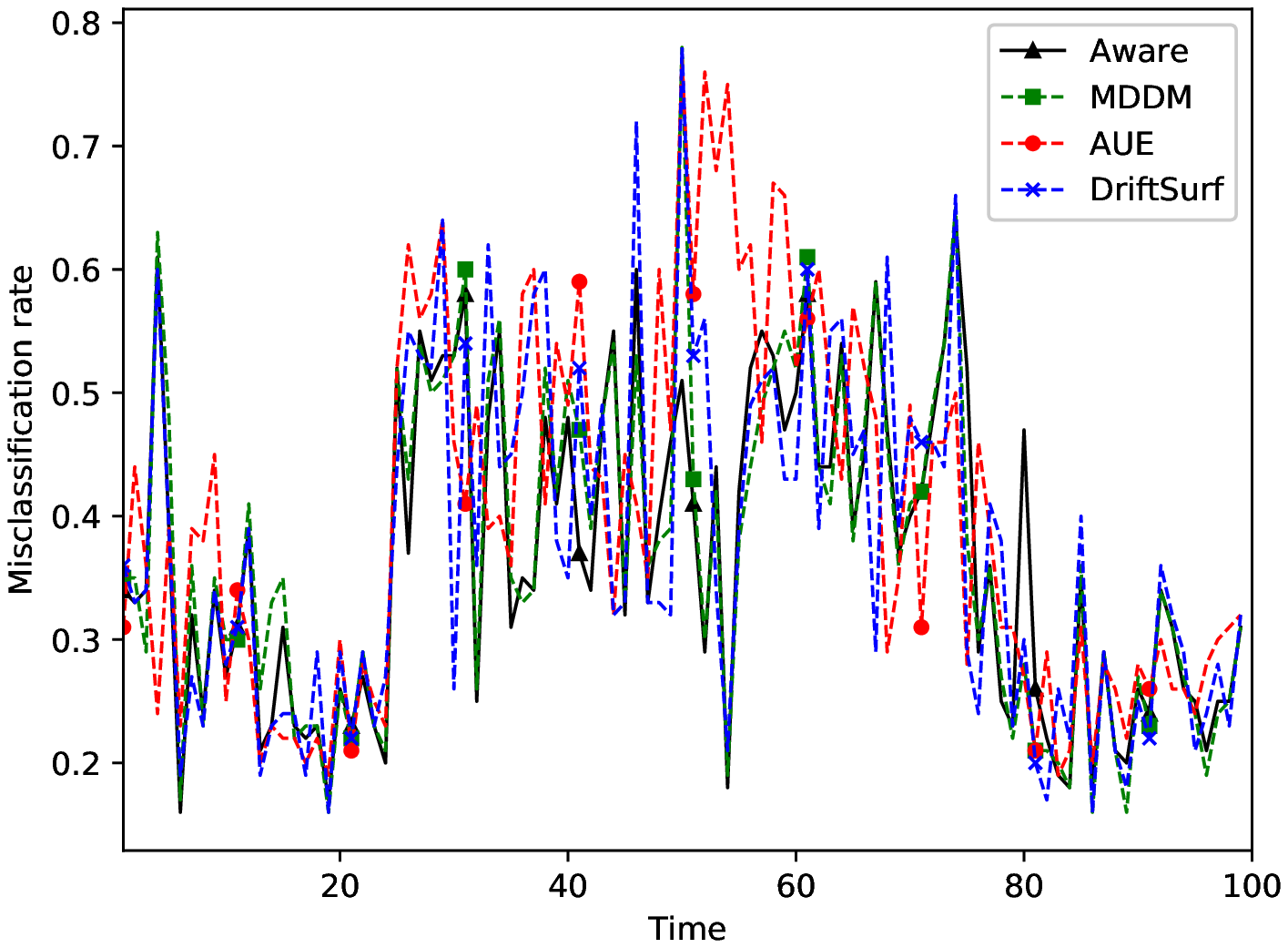}
            \caption{CIRCLES}
            \label{fig:circles-limited-2}
        \end{subfigure}\hfill
        
        \begin{subfigure}[t]{0.32\textwidth}
            \includegraphics[width=\columnwidth]{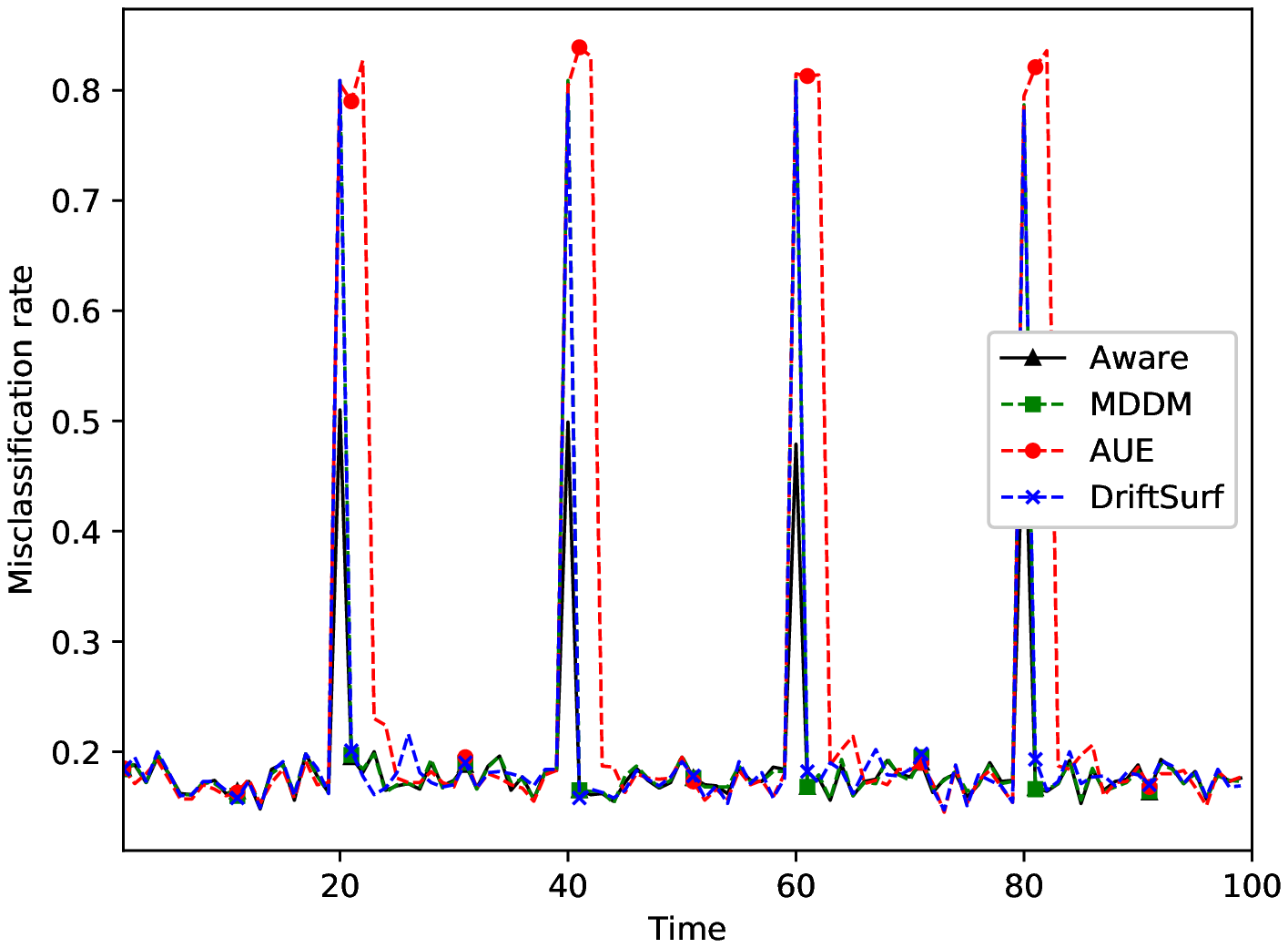}
            \caption{MIXED}
            \label{fig:mixed-limited-2}
        \end{subfigure}\hfill
        \begin{subfigure}[t]{0.32\textwidth}
            \includegraphics[width=\columnwidth]{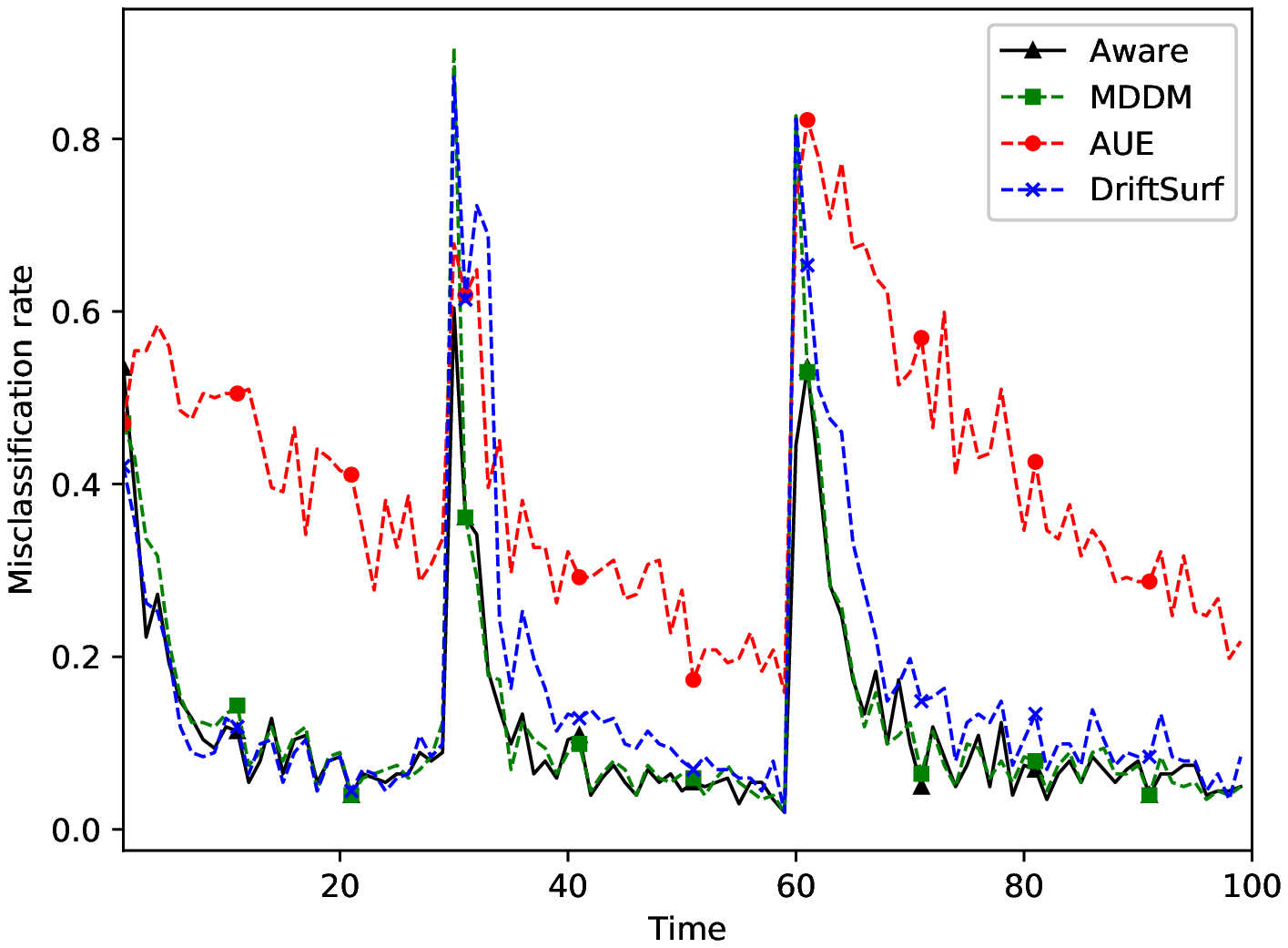}
            \caption{RCV1}
            \label{fig:rcv-limited-2}
        \end{subfigure}\hfill
        \begin{subfigure}[t]{0.32\textwidth}
            \includegraphics[width=\columnwidth]{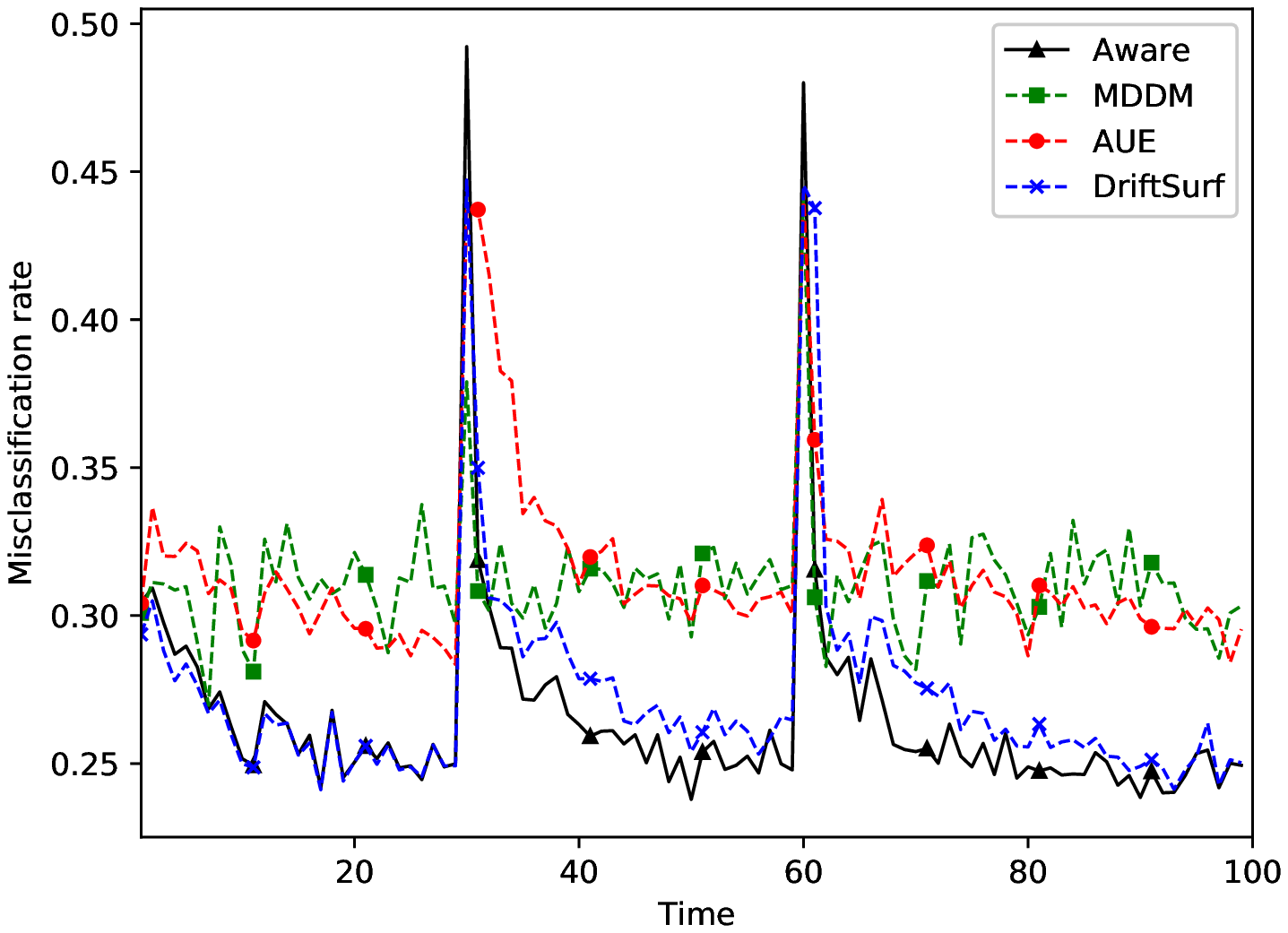}
            \caption{CoverType}
            \label{fig:covtype-limited-2}
        \end{subfigure}\hfill
        
        \begin{subfigure}[t]{0.32\textwidth}
            \includegraphics[width=\columnwidth]{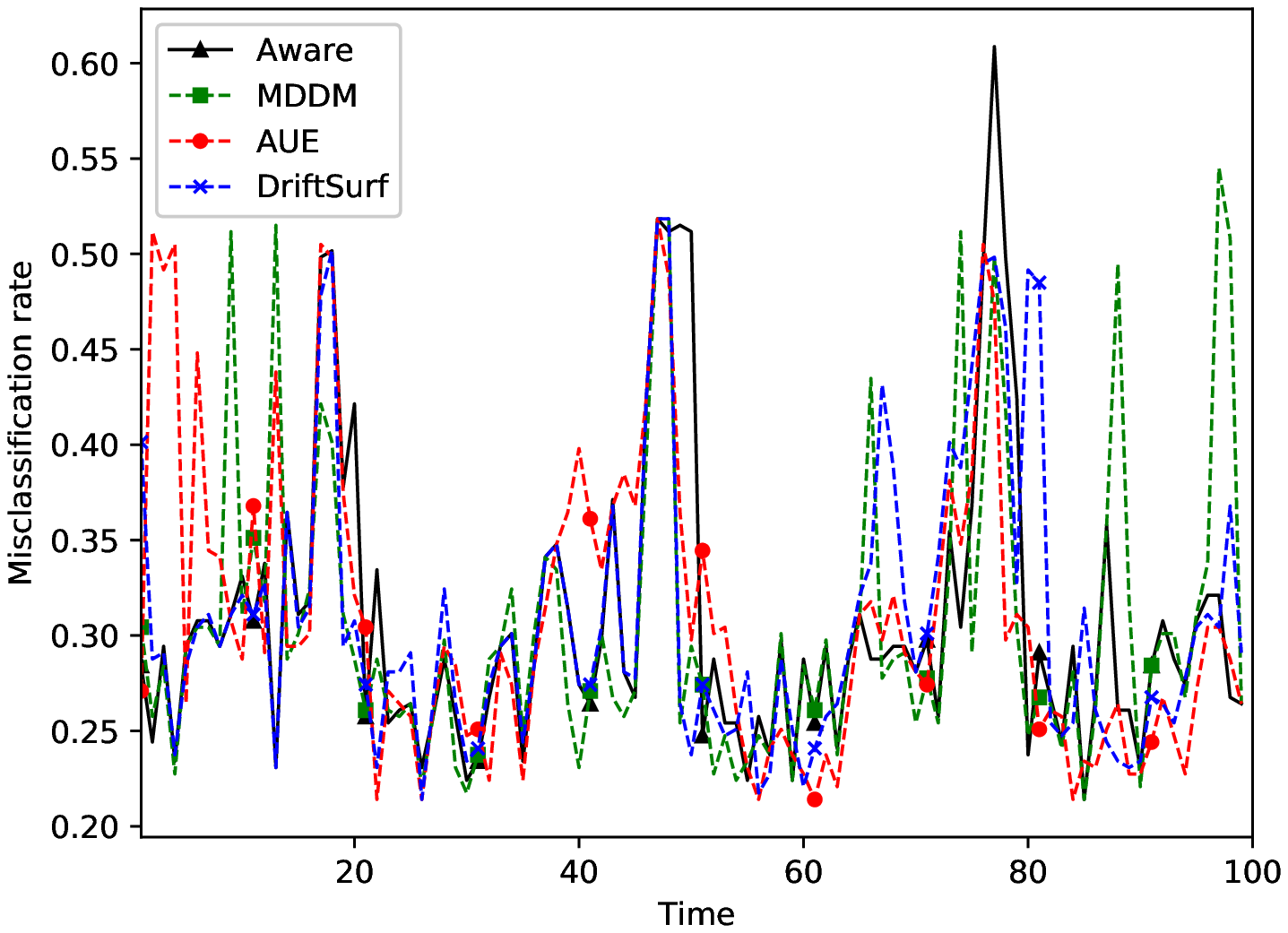}
            \caption{PowerSupply}
            \label{fig:powersupply-limited-2}
        \end{subfigure}\hfill
        \begin{subfigure}[t]{0.32\textwidth}
            \includegraphics[width=\columnwidth]{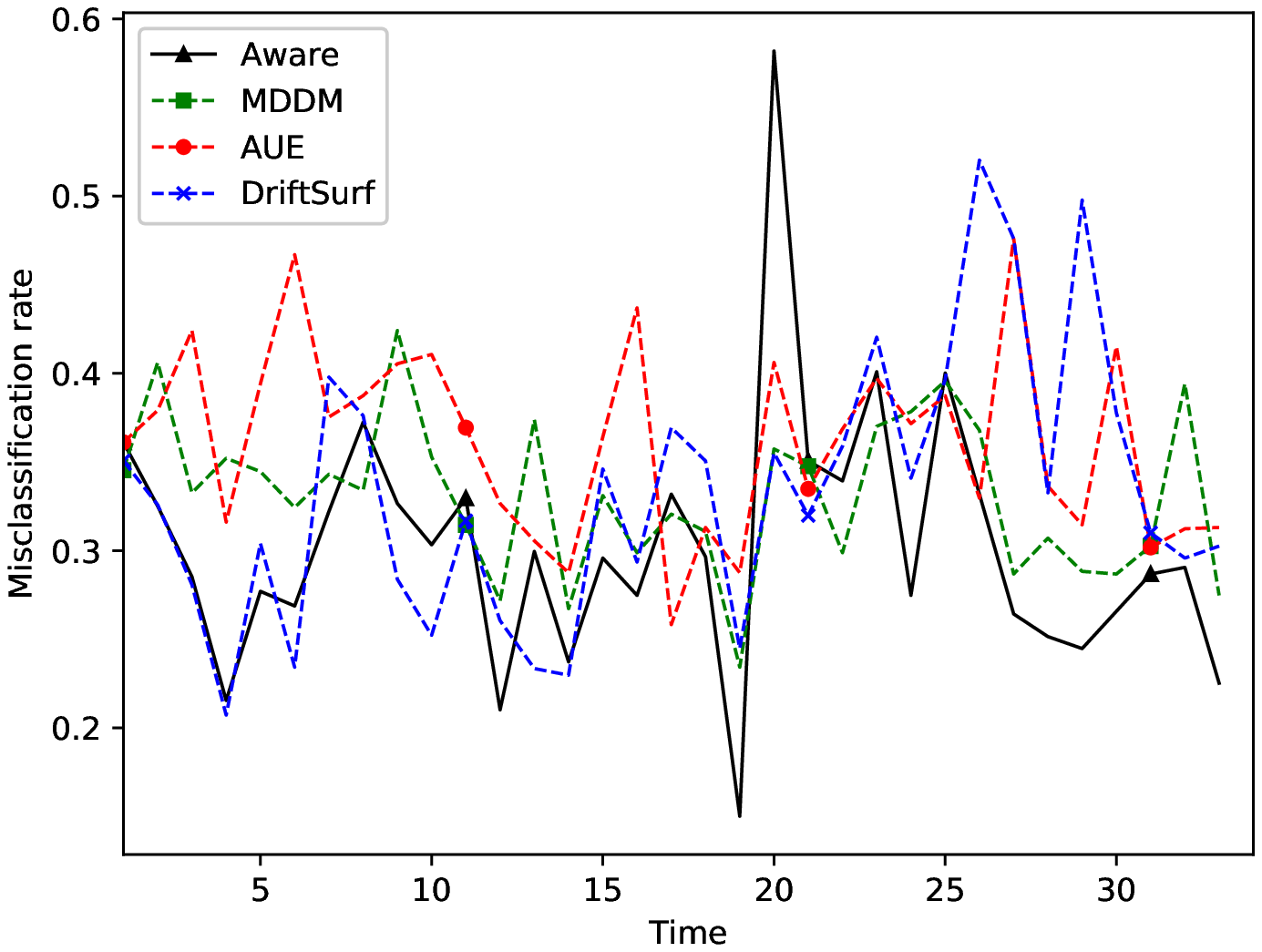}
            \caption{Electricity}
            \label{fig:elec-limited-2}
        \end{subfigure}\hfill
        \begin{subfigure}[t]{0.32\textwidth}
            \includegraphics[width=\columnwidth]{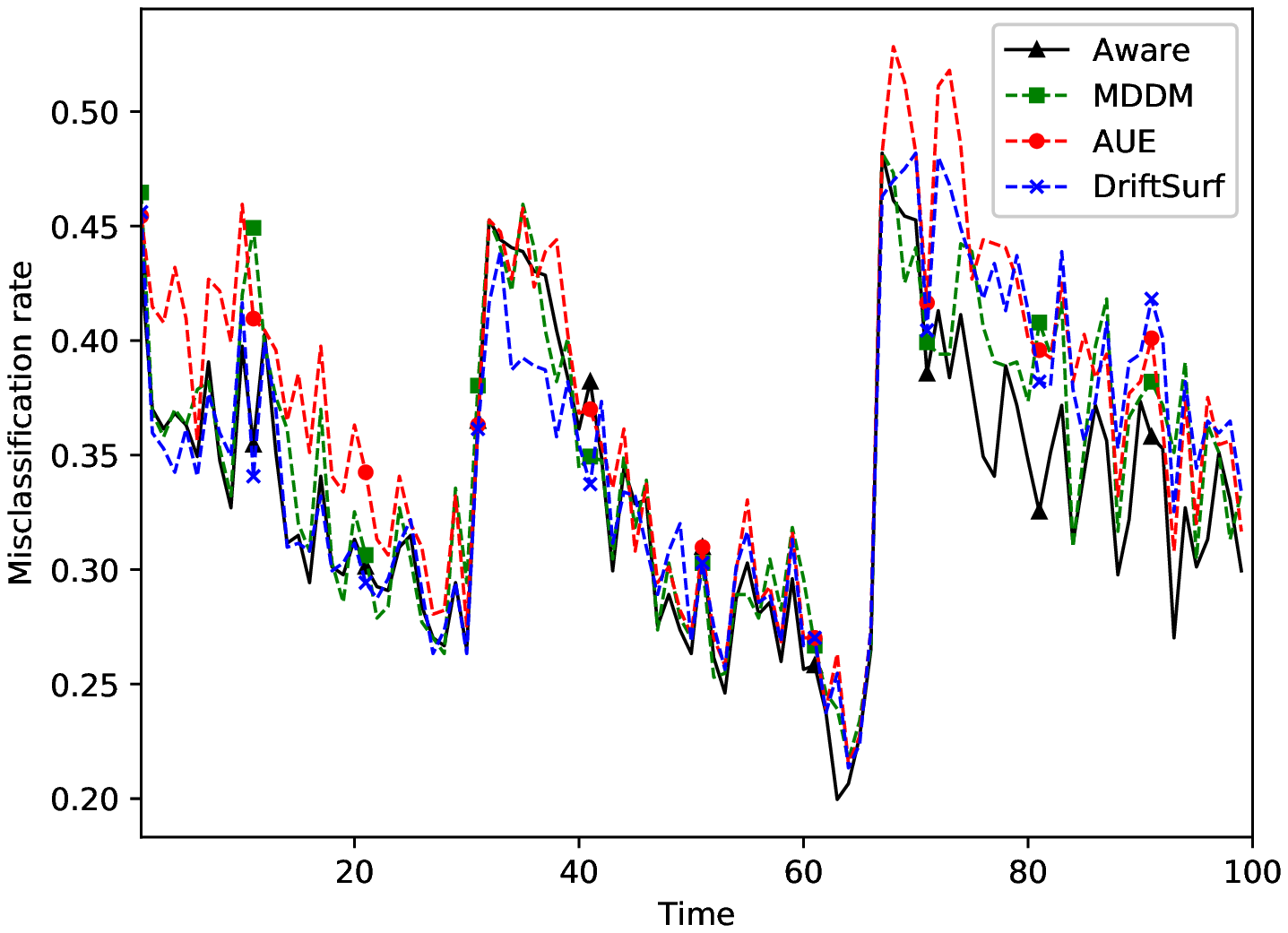}
            \caption{Airline}
            \label{fig:airline-limited-2}
        \end{subfigure}\hfill
    \end{center}
    \caption{Misclassification rate over time ($\rho=2m$ divided among all models of each algorithm)}
    \label{fig:Misclassification rate over time-rho=2m}
\end{figure*}

\subsection{Comparison to 1PASS-SGD and Oblivious}
\label{sec:expt-results-oblivious}

Figure \ref{fig:Misclassification rate over time-sgd_obl} shows the comparison to 1PASS-SGD and the oblivious algorithm (OBL) for the RCV1 and Electricity datasets at each time. The time average misclassification rate for each dataset are in Table \ref{table:ave-misclassification-app}. In the case of the large, abrupt drift in RCV1, we observe that 1PASS-SGD and especially oblivious have poor performance after drift. The oblivious algorithm continues to re-sample the data from the older distributions, and leads to a model with random, or worse than random, accuracy on the current distribution. Even for 1PASS-SGD, which only trains over data from the most recent time step, we observe its convergence rate is slow after a drift, where its previous training on the old data still hinders it. On the Electricity data with a more subtle drift, we observe that oblivious is actually the best performing algorithm, as discussed earlier, because data from all over time can be trained and fit by a single model. However, 1PASS-SGD still has lower accuracy because, as a single pass method, it uses only $m$ update steps at each time even when $\rho = 2m$ are available to the other algorithms, and also because SGD has a slower convergence rate than the variance-reduced method \incsaga.

\subsection{Evaluation of Greedy Reactive State}
\label{sec:expt-results-no-greedy}

One design choice in the \dsurf algorithm is that during the reactive state, the predictive model follows a greedy approach---the choice of the predictive model at the current time is the model that had the better performance in the previous time step---and then at the end of the reactive state, the decision is made whether or not to use the reactive model going forward. The natural alternative choice is that switching to the new reactive model can happen only at the end of the reactive state, and the stable model is the predictive model throughout the reactive state; we call this \dsurf (no-greedy). In Figure \ref{fig:rcv-greedy} and Table \ref{table:ave-misclassification-greedy} we show the comparison of \dsurf to \dsurf (no-greedy). We observe that \dsurf performs similar or better across each dataset, with the biggest improvements on the SINE1, RCV1, and Mixed datasets that we earlier observed MDDM and \aware perform well on because it is desirable to immediately switch to the new model after the large, abrupt drift. Figure \ref{fig:rcv-greedy} shows the delayed switch of \dsurf (no-greedy) to the new model in the presence of drift until only the end of the reactive state.

\begin{figure*}[h!]
    \begin{center}
        \minipage{0.64\textwidth}
            \begin{subfigure}[t]{0.5\textwidth}
                \includegraphics[width=\columnwidth]{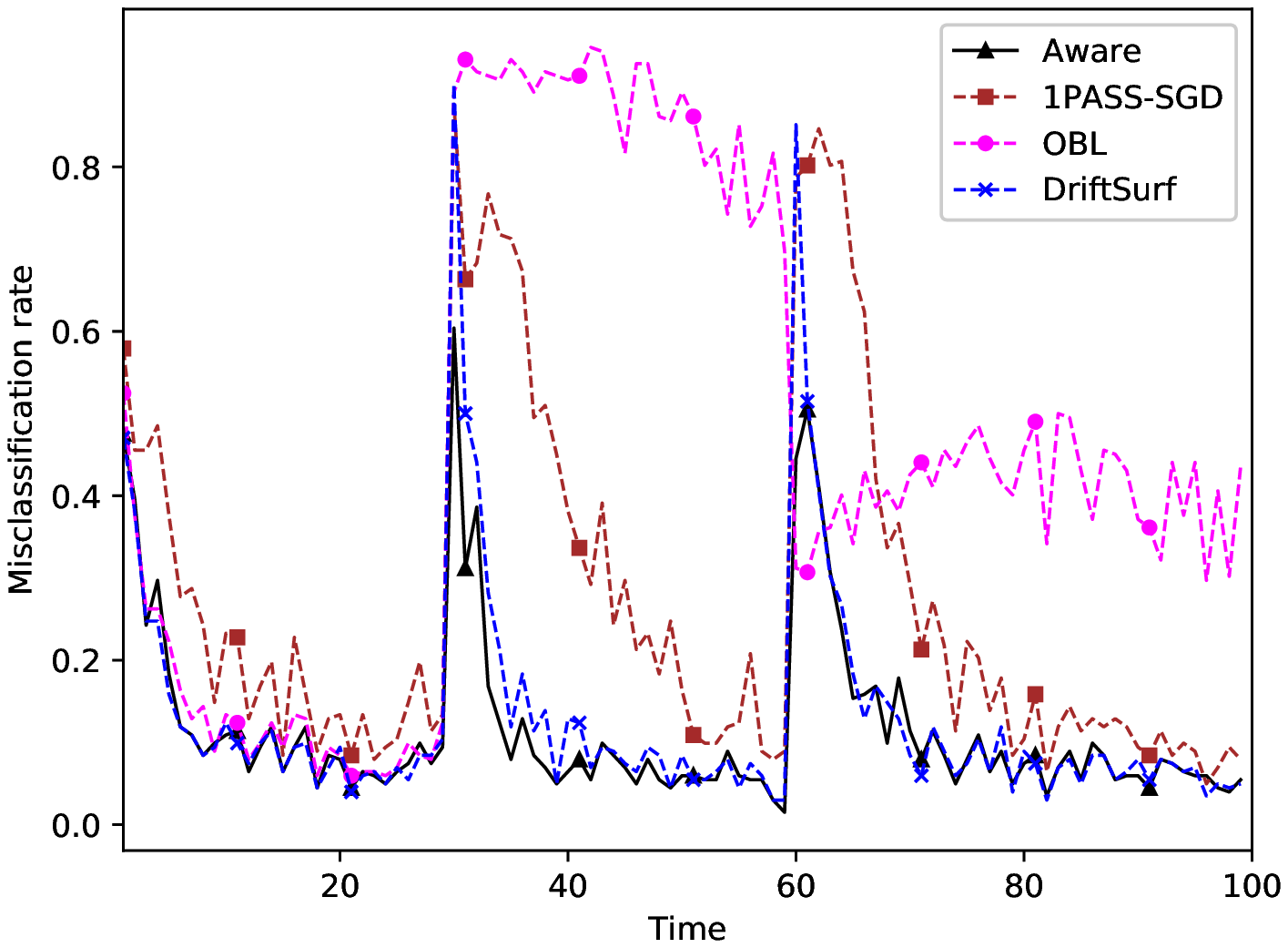}
                \caption{RCV1}
                \label{fig:rcv-sgd_obl}
            \end{subfigure}\hfill
            \begin{subfigure}[t]{0.5\textwidth}
                \includegraphics[width=\columnwidth]{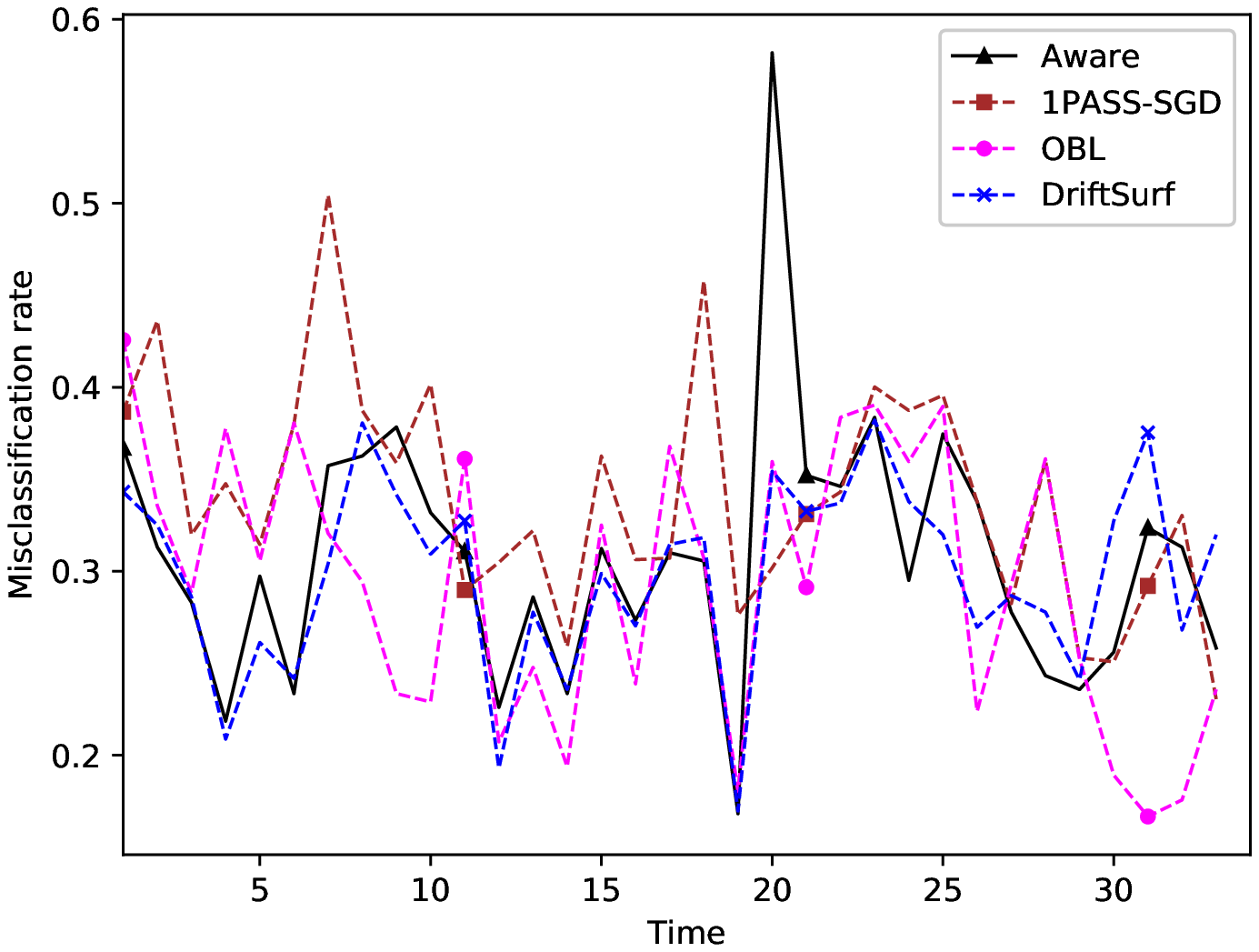}
                \caption{Electricity}
                \label{fig:elec-sgd_obl}
            \end{subfigure}\hfill
            \caption{Misclassification rate over time ($\rho=2m$ for each model)}
            \label{fig:Misclassification rate over time-sgd_obl}
        \endminipage\hfill
        \minipage{0.32\textwidth}
            \centerline{\includegraphics[width=\columnwidth]{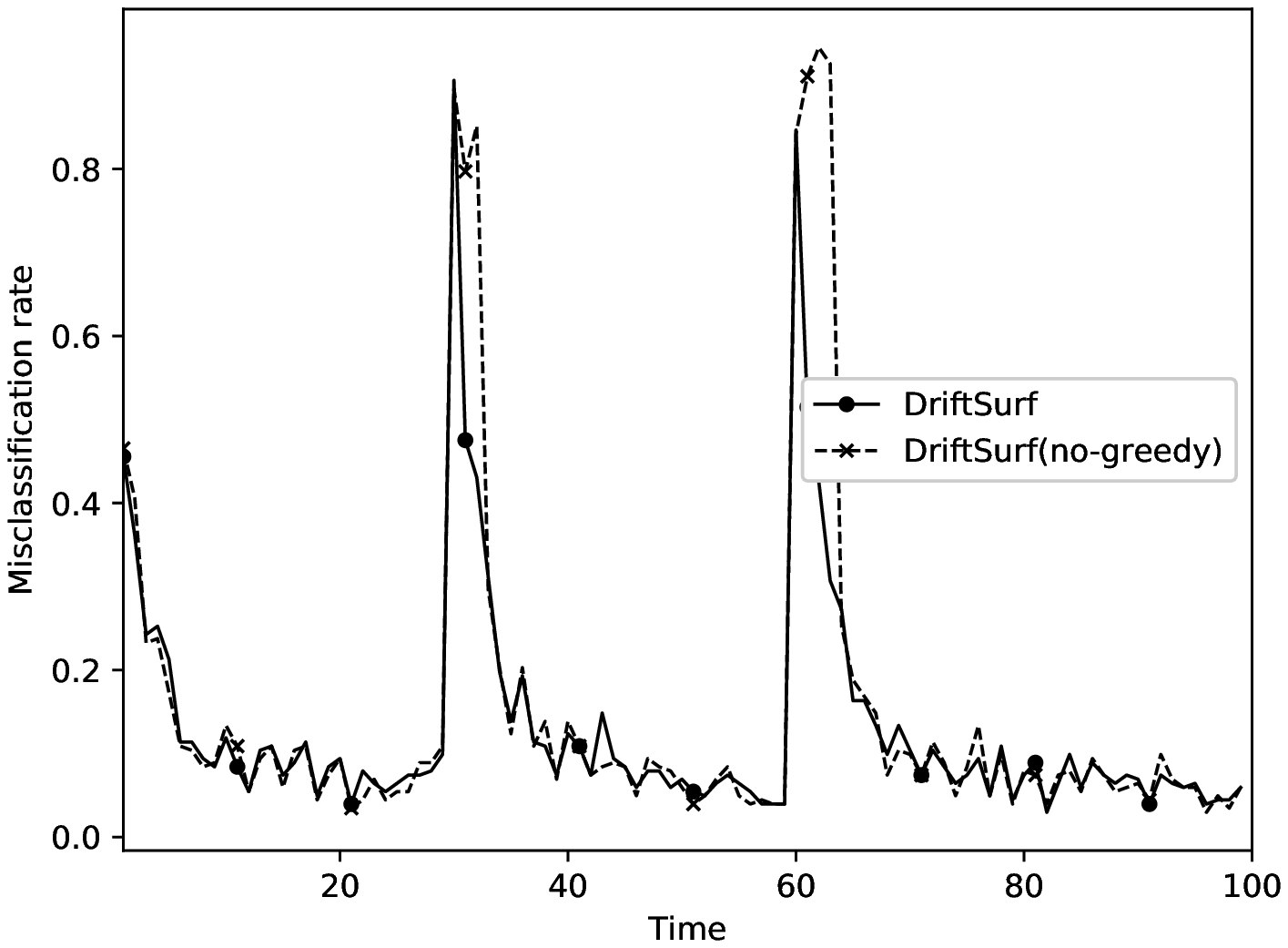}}
            \caption{Misclassification rate over time for RCV1 - \dsurf vs \dsurf (no-greedy) ($\rho=2m$ for each model)}
            \label{fig:rcv-greedy}
        \endminipage\hfill
    
    \end{center}
\end{figure*}

\begin{table}[ht!]
\caption{Total average of misclassification rate - \dsurf vs \dsurf (no-greedy) ($\rho=2m$ for each model)}
\label{table:ave-misclassification-greedy}
\begin{center}
\begin{small}
\begin{sc}
\begin{tabular}{l*{2}c}
\toprule
 Dataset    	& \dsurf & \dsurf(no-greedy)\\
\midrule
SEA0        	& 0.085 & 0.085 \\
SEA10       	& 0.160 & 0.158 \\
SEA20       	& 0.245 & 0.245 \\
SEA30       	& 0.336 & 0.335 \\
SEA-gradual     & 0.161 & 0.159 \\
Hyper-slow  	& 0.116 & 0.117  \\
Hyper-fast  	& 0.173 & 0.175 \\
Mixed       		& 0.204 & 0.232 \\
SINE1       	& 0.199 & 0.212 \\
Circles       	& 0.372 & 0.375 \\
RCV1        	& 0.136 & 0.263 \\
CoverType   	& 0.266 & 0.273 \\
Airline     		& 0.333 & 0.333 \\
Electricity 		& 0.290 & 0.291 \\
PowerSupply 	& 0.301 & 0.303 \\
\bottomrule
\end{tabular}
\end{sc}
\end{small}
\end{center}
\end{table}

\subsection{Using SGD as the Update Process}
\label{sec:expt-results-sgd}

As mentioned earlier we choose \incsaga as the update process because of two main reasons: (i) \incsaga is designed in a way that can handle different arrival distributions, and (ii)  it achieves a faster convergence rate because of using variance-reduced update step. We study the impact of the choice of the update process on the performance. We re-run the previous experiments using SGD instead of \incsaga. Table \ref{table:ave-misclassification-sgd} shows the average misclassification rate for the case where $\rho=2m$ update steps are used for each model.

As the results presented in Table~\ref{table:ave-misclassification-sgd} suggest, AUE, unlike the previous experiment, outperforms MDDM and \dsurf for the majority of the studied datasets. The reason is that AUE mitigates the high variance of SGD. MDDM and \dsurf both use performance-degradation for drift detection. Such drift detection is sensitive to the high variance during the training which may be mistaken for drift in the underlying distribution. However, comparing the results of \dsurf and MDDM shows the advantage of going though a reactive state before restarting the model in reducing the false positive rate of drift detection. AUE, on the other hand, overcomes the high variance of SGD by using a bag of experts and making ensemble based decisions. 

Similar to the previous experiments, to examine the accuracy achieved when enforcing equal processing time, we repeated the experiment for the case where $\rho=2m$ steps are used by each algorithm and divided among its models. Reported results in Table~\ref{table:ave-misclassification-sgd-rho/m=2} suggest that the variance-reduction effect of AUE is not able to overcome the limited training.

\incsaga because of its variance-reduced update step achieves a faster convergence rate in comparison to SGD. Difference between the reported results in Table~\ref{table:ave-misclassification-app} and Table~\ref{table:ave-misclassification-sgd} confirms the advantage of using \incsaga over SGD as the update process.

\begin{table}[ht!]
\caption{Total average of misclassification rate - update process: SGD ($\rho=2m$ for each model)}
\label{table:ave-misclassification-sgd}
\begin{center}
\begin{small}
\begin{sc}
\begin{tabular}{l*{9}c}
\toprule
 Dataset    	& \aware 		& \dsurf		& MDDM-G 	& AUE \\
\midrule
SEA0       		& 0.170 		& \textbf{0.118} & 0.127 		& 0.125  \\
SEA10       	& 0.217 		& 0.194 		& 0.197 		& \textbf{0.184}\\
SEA20       	& 0.279 		& \textbf{0.260} & 0.296 		& 0.263 \\
SEA30       	& 0.360 		& 0.346		 & 0.382 		& \textbf{0.340} \\
SEA-gradual    & 0.205 		& \textbf{0.184} & 0.216 		& 0.188 \\
Hyper-slow  	& 0.169 		& 0.151 		& 0.140 		& \textbf{0.124} \\
Hyper-fast  	& 0.272 		& 0.199 		& \textbf{0.179} & 0.204 \\
Mixed       		& 0.194 		& \textbf{0.206}	& 0.209 		& 0.242 \\
SINE1       	& 0.194 		& 0.255 		& \textbf{0.200} & 0.239 \\
Circles       	& 0.362 		& 0.375 		& 0.386		 & \textbf{0.362} \\
RCV1        	& 0.151 		& 0.170 		& \textbf{0.162} & 0.208\\
CoverType   	& 0.274 		& \textbf{0.275} & 0.326 		& 0.286 \\
Airline     		& 0.356 		& 0.365 		& 0.359 		& \textbf{0.343} \\
Electricity 		& 0.335 		& 0.314 		& 0.348 		& \textbf{0.299} \\
PowerSupply 	& 0.350 		& 0.318		& 0.365 		& \textbf{0.300}\\
\bottomrule
\end{tabular}
\end{sc}
\end{small}
\end{center}
\end{table}

\begin{table}[ht!]
\caption{Total average of misclassification rate - update process: SGD ($\rho=2m$ divided among all models of each algorithm)}
\label{table:ave-misclassification-sgd-rho/m=2}
\begin{center}
\begin{small}
\begin{sc}
\begin{tabular}{l*{9}c}
\toprule
 Dataset    	& \aware 	& \dsurf		& MDDM-G 	& AUE \\
\midrule
SEA0        	& 0.163 	& 0.125 		& \textbf{0.123} & 0.193 \\
SEA10       	& 0.229 	& \textbf{0.182} & 0.197 		& 0.238 \\
SEA20       	& 0.283 	& \textbf{0.268} & 0.311 		& 0.287 \\
SEA30      	& 0.366 	& \textbf{0.350} & 0.376 		& 0.358 \\
SEA-gradual    & 0.204 	& \textbf{0.185} & 0.196 		& 0.236 \\
Hyper-slow  	& 0.169 	& 0.158 		& \textbf{0.143} & 0.158 \\
Hyper-fast  	& 0.269 	& 0.211		& \textbf{0.185} & 0.274 \\
SINE1       	& 0.200 	& 0.341 		& \textbf{0.205} & 0.302  \\
Mixed       		& 0.195 	& \textbf{0.208}& 0.209		 & 0.262  \\
Circles       	& 0.306 	& 0.380 		& \textbf{0.367} & 0.429  \\
RCV1        	& 0.146 	& 0.204 		& \textbf{0.161} & 0.437 \\
CoverType   	& 0.275 	& \textbf{0.286} & 0.323 		& 0.316 \\
Airline     		& 0.354 	& \textbf{0.359} & 0.366 		& 0.370 \\
Electricity 		& 0.343 	& 0.356 		& \textbf{0.350}	& 0.354 \\
PowerSupply 	& 0.336 	& 0.318		 & 0.356 		& \textbf{0.316} \\
\bottomrule
\end{tabular}
\end{sc}
\end{small}
\end{center}
\end{table}

\section{Broader Impact}

There are many ethical and societal reasons to adapt to concept
drifts, because ML decisions should be based on the most relevant
data.  Consider, for example, using ML to recommend the terms for a
loan.  Here the data are individual profiles of employment status,
debt history, savings balance, etc., and the labels are whether the
individual repaid the loan (or more broadly, the history of
repayment).  The Covid-19 lockdown forced millions of people into
temporary unemployment, increased debt, and decreased
savings---features that would make the pre-lockdown ML model assess
them as big credit risks, when in fact, a new model is needed that
properly accounts for this temporary new reality.  Our algorithm would
seek to detect this concept drift, so that such people would not be
unjustly penalized for the lockdown.  In this setting, the
consequences of failure are: Failing to detect a real drift would
unduly penalize loan applicants for a lockdown beyond their control,
whereas falsely detecting a drift that did not exist would unfairly
evaluate applicants whose profiles best match individuals from before
the false detection.

Our implementation of \dsurf{} logs all its transitions between the
stable and reactive states, noting whenever the predictive model
changes.  This provides a measure of explanability/transparency:
Humans can review these logs to see when the model changed, and assess
whether the change was warranted (was there really a drift?) and
whether there are any biases in that decision.

\end{document}